%% file: main.tex
\documentclass{article}
\PassOptionsToPackage{numbers, compress}{natbib}

\usepackage{arxiv}
\usepackage{times}

\title{When Both Layers Learn: Training Dynamics of \\ Representing Linear Models via ReLU Networks}

\author{%
Berk Tinaz$^{1}$ \quad
Changzhi Xie$^{2}$ \quad
Mahdi Soltanolkotabi$^{1,2}$\\
$^{1}$Department of Electrical and Computer Engineering\\
$^{2}$Department of Computer Science\\
University of Southern California\\
Los Angeles, CA, USA\\
\texttt{\{tinaz, changzhi, soltanol\}@usc.edu}
}

\usepackage{natbib}
\usepackage{microtype}
\usepackage{booktabs} 
\usepackage{float}
\usepackage[hyperfootnotes=false]{hyperref}
\usepackage{amsmath}
\usepackage{amssymb}
\usepackage{mathtools}
\usepackage{amsthm}
\usepackage{graphicx, subcaption}
\usepackage[normalem]{ulem}
\usepackage[utf8]{inputenc} 
\usepackage[T1]{fontenc}    
\usepackage{booktabs}       
\usepackage{amsfonts}       
\usepackage{nicefrac}       
\usepackage{microtype}      
\usepackage{xcolor}         
\usepackage{bm}
\usepackage{pifont}
\usepackage{multirow}
\usepackage{footmisc}

\usepackage{caption}
\usepackage{bm}
\usepackage{bbm}
\usepackage{graphicx}
\usepackage{tcolorbox}

\input{macros}

\begin{document}
\maketitle

\input{sec/abstract}  
\input{sec/intro}
\input{sec/theoretical_results}
\input{sec/experiments}
\input{sec/related_work}
\input{sec/overview}
\input{sec/acknowledgments}
\newpage
\bibliographystyle{ieeenat_fullname}
\bibliography{references.bib,Bibfiles.bib,Bibfiles2.bib,literature.bib}

\newpage
\appendix
\input{sec/appendix}
\end{document}

%% file: macros.tex
\newtheorem{theorem}{Theorem}
\newtheorem{corollary}[theorem]{Corollary}
\newtheorem{lemma}[theorem]{Lemma}

\newtheorem{definition}[theorem]{Definition}

\newcommand{\calL}{{\mathcal{L}}}

\newcommand{\R}{\mathbb{R}}

\newcommand{\E}{\mathbb{E}}

\newcommand{\cbr}[1]{\left\{#1\right\}}
\newcommand{\rbr}[1]{\left(#1\right)}
\newcommand{\bbr}[1]{\left[#1\right]}
\newcommand{\inner}[2]{\left\langle #1,#2 \right\rangle}

\renewcommand{\E}{\mathbb{E}}
\newcommand{\x}{\bm{x}}
\renewcommand{\v}{\bm{v}}
\newcommand{\W}{\bm{W}}
\newcommand{\w}{\bm{w}}

\renewcommand{\a}{\bm{a}}
\renewcommand{\b}{\bm{b}}

\newcommand{\dirw}{\overline{\bm{w}}}

\newcommand{\1}{\mathbbm{1}}
\newcommand{\projw}[1]{\mtx{P}_{\vct{w_{#1}}^\perp}}
\newcommand{\vct}[1]{\bm{#1}}
\newcommand{\mtx}[1]{\bm{#1}}

\newcommand{\fnorm}[1]{\left\| #1\right\|_F}
\newcommand{\twonorm}[1]{\left\| #1\right\|}

\renewcommand{\v}{\bm{v}}

\newcommand{\h}{\bm{h}}

\renewcommand{\a}{\bm{a}}

\renewcommand{\b}{\bm{b}}

\newcommand{\Sph}{\mathbb{S}}
\newcommand{\Pbb}{\mathbb{P}}
\newcommand{\ip}[2]{\langle #1,#2\rangle}

\newcommand{\opnorm}[1]{\left\lVert #1\right\rVert}

%% file: sec/abstract.tex
\begin{abstract}%
In this paper, we study the gradient descent dynamics for jointly training \emph{both} layers of a one-hidden-layer ReLU network to fit a linear target function. Concretely, we consider a realizable setting where inputs are drawn i.i.d.~from a Gaussian distribution and labels follow a planted linear model. This stylized framework captures salient features of end-to-end training in inverse problems and certain auto-encoder models. Despite its apparent simplicity, the dynamics remain poorly understood, in part because the loss landscape contains multiple non-strict saddle points, making it unclear why gradient descent from random initialization reliably escapes bad stationary regions. We provide a detailed characterization of the optimization landscape and prove that gradient descent from a moderately small random initialization-\emph{simultaneously training both layers}-converges to a global minimizer at a linear rate with order-wise optimal sample complexity. Our analysis tracks the trajectory through three phases: an \emph{alignment phase} in which hidden weights progressively align with the planted direction while the output weights maintain the correct sign pattern; a \emph{growth phase} in which the norms of both layers increase while preserving alignment; and a \emph{local refinement phase} in which the aligned neurons rapidly converge to the planted direction, yielding fast local convergence. To rigorously show that GD avoids non-strict saddles, we develop trajectory-level control arguments for the end-to-end dynamics. In addition, we establish novel uniform concentration results that hold along the entire trajectory, and are essential for obtaining order-wise optimal sample complexity. We corroborate our theory with extensive experiments across a range of configurations.
\end{abstract}

%% file: sec/intro.tex
\section{Introduction}
\subsection{Motivation}
End-to-end training of neural networks (NNs) via Gradient Descent (GD) has recently achieved remarkable success on many tasks. Of particular interest, these models have been adopted to solve inverse problems by taking the measurements as input and mapping them directly to the desired signal with successful scientific applications in computer vision \citep{ledig2017photo, wang2018image}, MRI reconstruction \citep{sriram2020end, fabian2022humus}, sparse-view computed tomography (CT) \citep{jin2017deep}, and phase retrieval \citep{hand2018phase}. These models not only fit the training data but also appear to capture useful features and nuanced priors that enable them to generalize to unseen test examples. Despite this empirical success, the reasons behind the success of NNs for end-to-end training and how they can extract useful features from data remain unclear.

Perhaps the most classical form of end-to-end training is that arising in autoencoder type problems, where the goal is to teach a neural network to learn a linear mapping (e.g., identity for autoencoders). Surprisingly, the dynamics of training such a model are not well understood for nonlinear models. For linear networks, a classical result by \citet{baldi_hornik} provided a complete characterization, showing how gradient descent recovers the principal components of the data. In contrast, understanding the dynamics of non-linear encoders has remained an open and challenging problem, even for simple target functions.  In this paper, we aim to take a step towards a systematic understanding of the training dynamics of such problems by addressing the following question:

\begin{tcolorbox}
\noindent\textit{How do the dynamics of training ReLU neural networks with gradient descent starting from random initialization facilitate learning simple priors and structures such as linear target functions?}
\end{tcolorbox}

Understanding this question requires reasoning not only about the final solution reached by GD, but about the entire trajectory of the optimization process. Recent empirical work suggests that several phenomena observed during neural network training, including grokking (or delayed generalization) \citep{power2022grokkinggeneralizationoverfittingsmall}, are closely tied to the temporal evolution of gradient descent. In such settings, models may fit the training data well before exhibiting improved generalization, indicating that learning can unfold through distinct stages over the course of optimization. This perspective motivates a careful, trajectory-level analysis even in simple problem settings.

Despite significant recent progress in understanding neural networks (especially shallow networks) \citep{chizat2019lazy, soltanolkotabi2018theoretical, jacot2018neural, du2018gradient, ongie2019function} (See Section \ref{sec:related_work} for in-depth discussion on related work), many aspects of the dynamics of GD and how it facilitates learning remain mysterious even in seemingly simple settings. A particularly simple one involves learning linear target functions via GD, that is, teaching a one-hidden-layer network to mimic the output of a simple linear model. Surprisingly, understanding the dynamics of GD in this simple setting has remained elusive. Although there are many results on learning specific target functions such as ReLUs \citep{xu2023overparameterization, soltanolkotabi2017learning} and polynomials \citep{damian2022neural}, these results typically exclude linear function classes. In fact, many of the existing papers use a pre-processing step or alter the early optimization trajectory to avoid complications arising from the dynamics of learning linear functions or genuinely training both layers \citep{damian2022neural}. This is in part due to the fact that the optimization landscape of learning linear target functions contains multiple non-strict saddle points (i.e. where the gradient vanishes and the Hessian is PSD but has a $0$ eigenvalue) requiring a subtle trajectory analysis to ensure GD avoid these bad points (See Section \ref{linchallenge} for further details). We note that despite the simple formulation, quite a few interesting scenarios, including autoencoder training dynamics, are captured in this framework.
\\
\\
Our main contributions are as follows:
\begin{itemize}
    \item We present one of the first works that analyzes training dynamics of learning \emph{both} layers in a one-hidden-layer ReLU network in a practical regime. That is, we do not use pre-processing or alter the early optimization trajectory to avoid complications that arise from non-linear training dynamics of optimizing both layers.
    \item We develop a theory for running GD on the NN with moderately small initialization, demonstrating exact convergence to the ground truth at a linear rate and with an optimal sample complexity that scales linearly in the number of parameters.  That is, we show that the inner weights of the NN recover the target directions \textit{exactly}, while the outer layer maintains the correct sign pattern. 
    \item As detailed further in Section \ref{linchallenge} the training landscape studied in this paper contains multiple non-strict saddles. To prove that the trajectory of GD from moderately small random initialization avoids these bad stationary points, we develop new techniques to control the GD trajectory which we combine with intricate uniform concentration bounds. In particular, our refined analysis tracks the trajectory through three phases (\emph{alignment}, \emph{growth}, and \emph{local refinement} phases). We believe our refined trajectory analysis may have broader implications for the analysis of non-convex optimization problems involving non-strict saddles. 
    \item  Since gradient descent repeatedly reuses the same finite dataset across all phases, the iterates become statistically dependent on the samples. We address this by proving new uniform concentration bounds for the gradient along the entire optimization trajectory, holding simultaneously for all iterates encountered by GD. A key component of our uniform concentration result is that the accuracy of the concentration increases as we get closer and closer to the global optima. These refined bounds are a key technical ingredient for achieving order-optimal sample complexity.
    \item We further corroborate our results with various experimental investigations.
\end{itemize}

\subsection{Problem Formulation} \label{sec:problem_formulation}
We first state the general family of problems of interest in this paper. 
\\
\\
\noindent\textbf{Data Model} -- We assume there are $n$ pairs of training data consisting of input features $\vct{x}_i \in \R^d$ and corresponding targets $y_i \in \R$. As mentioned before, we consider the class of linear models where the relationship between $\vct{x}_i$ and $y_i$ is given by the equation: $y_i = \vct{a}^T \vct{x}_i$ where $\a \in \R^d$ is the labeling vector. Conceptually, $\a$ is the target \textit{direction} that our predictor should \textit{learn}. For our theoretical analysis we assume the data points $\vct{x}_i$ are drawn i.i.d.~according to a standard normal distribution $\mathcal{N}(\vct{0},\mtx{I}_d)$.
\\
\\
\noindent\textbf{Network Model} -- We consider one-hidden-layer neural networks of the form $f\rbr{\vct{v}, \vct{W}, \vct{x}} := \vct{v}^T \phi\rbr{\mtx{W}\vct{x}}$ as our predictor. Here $k$ denotes the number of hidden-units, $\mtx{v} \in \R^{k}$ is the outer layer of the neural network, $\mtx{W} \in \R^{k \times d}$ is the inner layer of the neural network, and $\phi\rbr{\vct{z}}$ is the activation function. We refer to individual rows of $\vct{v}$/$\mtx{W}$ as $v_i$/$\vct{w}_i$ respectively, In this paper, we specifically consider neural networks with ReLU activation functions i.e. $\phi\rbr{\vct{z}} = \text{ReLU}\rbr{\vct{z}} = \max\rbr{0, \vct{z}}$, where $\max$ is applied to the input vector $\vct{z}$ element-wise. Furthermore, we focus on the exact parametrized setting, i.e. $k=2$, as a step towards understanding the behavior of the over-specified/parameterized setting with $k>2$ neurons.
\\
\\
\noindent\textbf{Training Loss} -- We minimize the squared loss between the target and the prediction
\begin{equation}\label{eq:emp_loss}
    \widehat{\calL}\rbr{\v, \W} = \frac{1}{2n} \sum_{i=1}^n\rbr{\v^T\phi\rbr{\W\x_i} - y_i}^2
\end{equation}
using gradient descent. For part of our theoretical analysis of GD, we also consider the population loss (i.e.~infinite data asymptotics as $n\rightarrow \infty$) with $\vct{x}$ drawn randomly from an isotropic Gaussian distribution $\vct{x} \sim \mathcal{N}\rbr{\vct{0}, \mtx{I_d}}$. Concretely, the population loss is given by
\begin{equation}\label{eq:pop_loss}
    \calL\rbr{\v, \W} = \frac{1}{2}\E_{\vct{x}}\bbr{\rbr{\sum_{i=1}^k v_i \phi \rbr{\vct{w}_i^T \vct{x}} - \a^T\x}^2}.
\end{equation}
\section{Landscape Analysis: Why is learning linear functions with ReLUs challenging?}\label{linchallenge}
\begin{figure}[h]
    \centering
    \begin{minipage}[t]{0.32\textwidth}
        \centering
        \includegraphics[width=\linewidth]{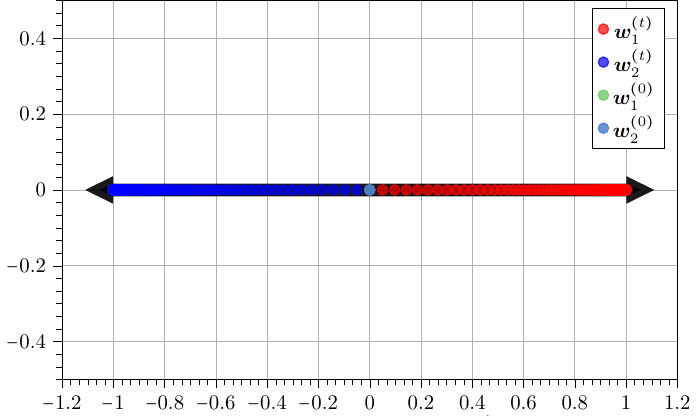}\\
        \textbf{(a)} Trajectory of convergence to the global optimum.
    \end{minipage}
    \hfill
    \begin{minipage}[t]{0.31\textwidth}
        \centering
        \includegraphics[width=\linewidth]{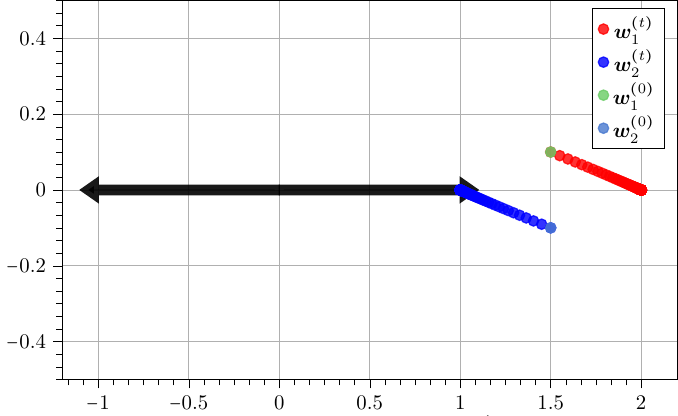}\\
        \textbf{(b)} Trajectory of convergence to one of the non-strict saddle points.
    \end{minipage}
    \hfill
    \begin{minipage}[t]{0.33\textwidth} 
        \centering    
        \includegraphics[width=\linewidth]{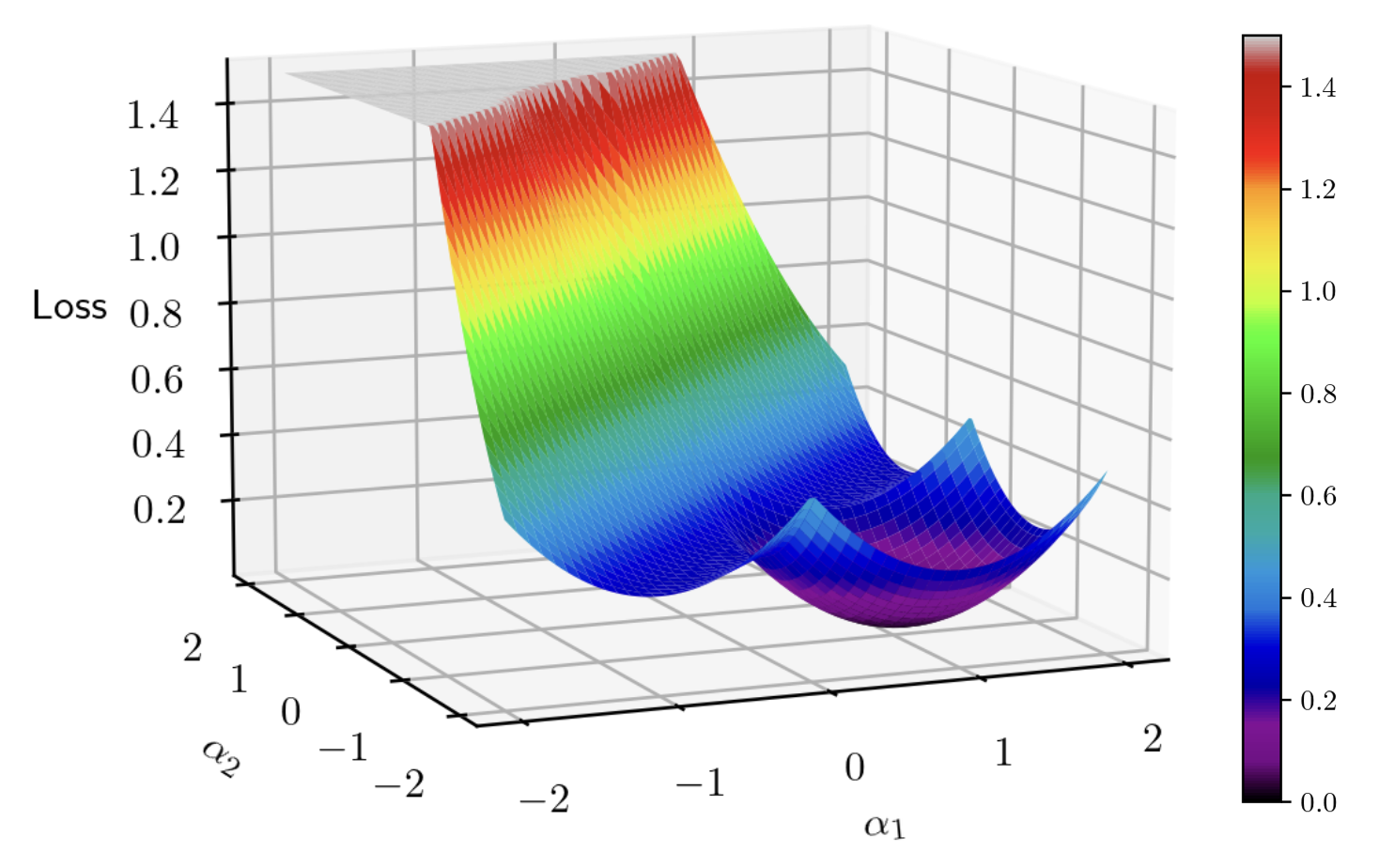}\\
        \textbf{(c)} 3D visualization of the loss landscape.
    \end{minipage}
    \centering
    \caption{\textbf{GD trajectories and population loss landscape.}
We run gradient descent on the population loss for a one-hidden-layer ReLU network with two hidden units and fixed output weights $\vct{v}=\begin{bmatrix}1,-1\end{bmatrix}^T$.
Panels (a) and (b) use two different initializations of $\vct{w}_1^{(0)}$ and $\vct{w}_2^{(0)}$.
To visualize the dynamics in 2D, we plot each neuron in the plane spanned by $\vct{a}$ (black arrows indicate $\pm\vct{a}$) and a randomly chosen direction orthogonal to $\vct{a}$ (y-axis).
In (a), a small initialization near the origin converges to the global optimum.
In (b), initializing near $1.5\vct{a}$ leads GD to $(\vct{w}_1,\vct{w}_2)=(\vct{a},2\vct{a})$, a non-strict saddle of~\eqref{eq:pop_loss}.
Finally, (c) plots the population landscape in the reduced slice $\w_1=\alpha_1\a$, $\w_2=\alpha_2\a$.}
    \label{fig:trajectories_global_and_local_optima}
\end{figure}

Despite the simplicity of the target function, the gradient descent dynamics in this setting are surprisingly subtle. The difficulty is that the loss landscape is riddled with non-strict saddle points. Indeed, infinitely many of them—creating large flat directions where naive intuition about descent can fail. The next theorem makes this phenomenon precise for the population loss.
\begin{theorem}[Landscape Characterization] \label{thm:landscape} For $v_1, v_2 > 0$, the stationary points of the population loss \eqref{eq:pop_loss} are either
\begin{enumerate}
    \item \underline{global optima}: $v_1 \w_1 = \a, \quad v_2 \w_2 = -\a$,
    \item or \underline{non-strict saddles}: $ v_1 \w_1 = \rbr{c+1} \a, \ v_2 \w_2 = c \a, \quad \text{where} \ c > 0 \ \text{or} \ c < -1$.
\end{enumerate}
\end{theorem}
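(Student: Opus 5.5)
The plan is to compute the population gradient in closed form using the standard Gaussian arc-cosine kernel identities. Positive homogeneity of the ReLU lets us absorb the magnitude of the output weights into the hidden weights, $\vct{u}_i := |v_i|\w_i$, so only the signs of the $v_i$ and the vectors $\vct{u}_i$ matter.

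\textbf{Sign convention.} The stated global optimum needs opposite effective signs on the two neurons, because $\a^T\x = \phi(\a^T\x) - \phi(-\a^T\x)$. This matches the figure's $\vct{v} = [1,-1]^T$. I therefore read $v_2\w_2$ in the statement as the signed contribution of the second neuron. I work throughout with the residual $r(\x) = v_1\phi(\w_1^T\x) + v_2\phi(\w_2^T\x) - \a^T\x$.

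\textbf{Gradient formulas.} For nonzero $\w,\vct{u}$ at angle $\theta$, and $\dirw = \w/\|\w\|$, the identities I need are
\[
\E\bbr{\x\x^T\1\{\w^T\x>0\}} = \tfrac12 \mtx{I},
\qquad
\E\bbr{\x\,\1\{\w^T\x>0\}\,\phi(\vct{u}^T\x)} = \tfrac{1}{2\pi}\bbr{(\pi-\theta)\vct{u} + \|\vct{u}\|\sin\theta\,\dirw}.
\]
Since $v_1 \neq 0$, stationarity in $\w_1$ reads
\[
\tfrac12\rbr{v_1\w_1 - \a} + \tfrac{v_2}{2\pi}\bbr{(\pi-\theta)\w_2 + \|\w_2\|\sin\theta\,\dirw_1} = 0,
\]
and there is a symmetric equation for $\w_2$. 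Stationarity in $v_i$ adds $\E\bbr{r(\x)\phi(\w_i^T\x)} = 0$, which is $\w_i^T$ applied to the $\w_i$-equation. So it is implied by the latter whenever $\w_i \neq 0$.

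\textbf{Case analysis.} I would proceed in three steps.
\begin{enumerate}
\item Each $\w$-equation expresses $\a$ as a combination of $\w_1$ and $\w_2$. Projecting onto the orthogonal complement of $\mathrm{span}(\w_1,\w_2)$ therefore shows that all three vectors are coplanar.
\item Rule out $0 < \theta < \pi$. I would subtract the two equations so that $\a$ cancels. This leaves a linear relation between $\w_1$ and $\w_2$, with coefficients built from $v_i$, $\pi-\theta$ and $\sin\theta$. For non-parallel $\w_1,\w_2$, both coefficients must then vanish, and I expect this to force $\sin\theta = 0$, a contradiction.
\item Treat the boundary angles directly. If $\theta = \pi$, the cross terms vanish, and the equations decouple into $v_1\w_1 = \a$ and the opposite-sign equation for neuron 2: the global optimum. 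If $\theta = 0$, both neurons share a half-space. The equations reduce to $v_1\w_1 + v_2\w_2 = \a$ with both vectors parallel to $\a$, giving the one-parameter family $v_1\w_1 = (c+1)\a$, $v_2\w_2 = c\a$. The requirement that both neurons point the same way (neither zero) is exactly $c > 0$ or $c < -1$.
\end{enumerate}
The degenerate case $\w_i = 0$ sits at the ReLU kink and must be handled separately, either by excluding it or by using one-sided derivatives.

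\textbf{Non-strict saddle property.} For the family in step 3, I would show three things.
\begin{itemize}
\item The Hessian is PSD with a zero eigenvalue. Rescaling $(v_i,\w_i) \mapsto (v_i/t,\, t\w_i)$ leaves the loss invariant, which gives a flat direction.
\item The Hessian has no negative eigenvalue. I would restrict the second-order expansion to perturbations of the directions $\w_i$ and check nonnegativity, using that on the shared half-space the model is locally linear.
\item The point is nevertheless not a local minimum. I would exhibit a path that rotates one neuron slightly away from $\a$ toward $-\a$; the loss should decrease only at higher order, for instance through the $\sin\theta$ term, which is of cubic order in $\theta$.
\end{itemize}

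\textbf{Main obstacle.} I expect step 2 (excluding intermediate angles) and this last saddle verification to be the hardest parts. The former needs a careful sign argument on the coefficients. The latter needs an explicit third-order descent direction, because the Hessian alone cannot certify that these points are saddles.
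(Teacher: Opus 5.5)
Your derivation of the list of stationary points is correct and is essentially the paper's argument. The paper uses $v_1,v_2>0$ with network $v_1\phi(\w_1^T\x)-v_2\phi(\w_2^T\x)$. In that convention its combination $[1/v_1,\ 1/v_2]\,\nabla_{\W}\calL=\mtx{0}$ is your subtraction after the sign flip, and gives $\theta\,(v_1\w_1+v_2\w_2)=\sin\theta\,(v_1\twonorm{\w_1}\dirw_2+v_2\twonorm{\w_2}\dirw_1)$. The paper then notes that both sides have equal norms, so $\theta=\sin\theta$ unless $v_1\w_1+v_2\w_2=\mathbf{0}$.

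Your linear-independence route reaches the same relation, not $\sin\theta=0$ as you anticipate. With $A=v_1\twonorm{\w_1}$ and $B=v_2\twonorm{\w_2}$, the two coefficients give $\theta A=\sin\theta\,B$ and $\theta B=\sin\theta\,A$, so $(\theta-\sin\theta)(A+B)=0$. In step 3, keep one convention: with your signed $v_2$, the relation $v_1\w_1+v_2\w_2=\a$ forces $v_2\w_2=-c\a$; the statement's $c\a$ is the paper's unsigned version. Your caveat about $\w_i=\mathbf{0}$ is fair, since the paper divides by $\twonorm{\w_i}$ without comment.

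The one step that would actually fail is the descent path in your not-a-local-minimum argument. In the paper's convention, the closed-form population loss can be rewritten as
\[
\calL=\tfrac14\twonorm{\a}^2+\tfrac14\twonorm{v_1\w_1-v_2\w_2-\a}^2-\tfrac{v_1v_2\twonorm{\w_1}\twonorm{\w_2}}{2\pi}\rbr{\sin\theta-\theta\cos\theta},
\]
where $\sin\theta-\theta\cos\theta=\theta^3/3+O(\theta^5)$. Rotating a single neuron by an angle $\epsilon$ moves $v_1\w_1-v_2\w_2$ by order $\epsilon$. The middle term then grows like $\epsilon^2$ and dominates the $O(\epsilon^3)$ gain, so the loss increases along your path.

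A path that works keeps $v_1\w_1-v_2\w_2=\a$ fixed, e.g.\ $\w_i\mapsto\w_i+\frac{\epsilon}{v_i}\vct{e}$ with $\vct{e}\perp\a$. This makes $\theta>0$ and strictly lowers the loss.

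The same identity also gives the PSD claim directly. The last term is third order, so the Hessian is $\frac12J^TJ$ with $J=[\w_1,\,-\w_2,\,v_1\mtx{I},\,-v_2\mtx{I}]$, which is exactly the matrix the paper computes and diagonalizes. Checking only perturbations of the directions would not suffice for PSD.

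Note that the paper only establishes zero gradient and a PSD Hessian with zero eigenvalues, which is its stated definition of a non-strict saddle. The not-a-local-minimum claim is therefore your addition, not something the paper's proof supplies.
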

We provide the proof of Theorem \ref{thm:landscape} in Appendix \ref{seq:landscape_calc}.

Theorem \ref{thm:landscape} above shows that, beyond the global minima, the population loss contains a continuum of stationary points forming non-strict saddle manifolds parameterized by $c$. In particular, for every $c>0$ and every $c<-1$, the equations $v_1 \w_1 = (c+1)\a$ and $v_2 \w_2 = c \a$ define a stationary point with flat directions in the loss. Thus the landscape is highly degenerate: instead of isolated critical points, there are infinitely many saddle regions that gradient descent can enter and move along without encountering negative curvature. This proliferation of flat saddles is the primary geometric obstruction to analyzing the global behavior of gradient descent.

In Figure~\ref{fig:trajectories_global_and_local_optima}, we illustrate how the initialization determines whether GD converges to a global optimum or drifts toward a non-strict saddle. Figure~\ref{fig:trajectories_global_and_local_optima}(a) shows a trajectory that converging to the global optimum, while Figure~\ref{fig:trajectories_global_and_local_optima}(b) shows a trajectory that stalls near a saddle. To further visualize the landscape, Figure~\ref{fig:trajectories_global_and_local_optima}(c) fixes $v_1=v_2=1$ and plots the loss in the reduced two-dimensional slice $\w_1=\alpha_1\a$, $\w_2=\alpha_2\a$. In this slice, the gradient vanishes along the $\alpha_1-\alpha_2=1$ \textit{valley}, even though the loss remains strictly positive.

%% file: sec/theoretical_results.tex
\section{Main Result: Convergence of the Gradient Descent Trajectory} \label{sec:theory_empirical}
We now present our main result, which characterizes the training dynamics when \textit{both} layers of a ReLU network are trained in the practical empirical regime.
\begin{theorem}[Convergence of GD Trajectory] \label{thm:main}
Suppose we have $n$ feature vectors $\cbr{\x_1, \cdots, \x_n}$ that are sampled i.i.d. according to a Gaussian distribution $\vct{x}_i\sim\mathcal{N}(\vct{0},\mtx{I}_d)$. We assume the corresponding outputs are generated according to a linear target function of the form $y_i=\vct{a}^T\vct{x}_i$, where  $\vct{a}\in\R^d$ is an arbitrary weight vector. To learn this linear function, we fit a one-hidden-layer ReLU network with two hidden nodes
\begin{align*}
\vct{x}\mapsto \vct{v}^T\text{ReLU}(\mtx{W}\vct{x})=v_1\text{ReLU}(\vct{w}_1^T\vct{x})-v_2\text{ReLU}(\vct{w}_2^T\vct{x}).
\end{align*}
by minimizing the empirical loss
\begin{align*}
    \widehat{\calL}\rbr{\v, \W} = \frac{1}{2n} \sum_{i=1}^n \left(\v^T\text{ReLU}\rbr{\mtx{W}\vct{x}_i} - \a^T\vct{x}_i\right)^2.
\end{align*}
over $\vct{v}=\begin{bmatrix} v_1, -v_2\end{bmatrix}^T$ and $\mtx{W}=\begin{bmatrix} \vct{w}_1, \vct{w}_2\end{bmatrix}^T\in\R^{2\times d}$ using gradient descent with step size $\mu:=\frac{\bar{\mu}}{\|\a\|}$ with $\bar{\mu}\le \frac{\mu_0}{\ln\rbr{\frac{\sqrt{\twonorm{\a}}}{\sigma}}}$:
\begin{align*}
    \W^{(\tau+1)}=\W^{(\tau)}-\mu \nabla_{\W}\widehat{\mathcal{L}}(\v^{(\tau)},\W^{(\tau)}), \quad \v^{(\tau+1)}=\v^{(\tau)}-\mu \nabla_{\v}\widehat{\mathcal{L}}(\v^{(\tau)},\W^{(\tau)})
\end{align*} 
Assume the initialization
\[
\w_1^{(0)},\w_2^{(0)}\sim\mathcal N\!\Bigl(0,\tfrac{\sigma^2}{d}\mtx{I}_d\Bigr),
\qquad
v_1^{(0)},v_2^{(0)}\sim \frac{\sigma}{\sqrt{d}}\xi,\quad
\xi^2\sim\chi_d^2,
\]
with $\sigma\le\sigma_0\sqrt{\|\a\|}$, $\chi_d^2$ a chi-squared distribution with $d$ degrees of freedom, and define $\W^*=[\a,-\a]^T$.
As long as the number of training samples satisfies $n \geq Cd$, then with probability at least $1-Ce^{-cd}$ there exists 
$T\ge c'\frac{1}{\bar{\mu}}\ln\rbr{\frac{\sqrt{\twonorm{\a}}}{\sigma}}$
such that for all iterations $\tau > T$,
\begin{align*}
    \twonorm{\text{diag}\rbr{\begin{bmatrix}
        v_1^{(\tau)} \\ v_2^{(\tau)}
    \end{bmatrix}}\W^{\rbr{\tau}}-\W^*}_F^2 \leq \tilde{c} \rbr{1-c\bar{\mu}}^{\rbr{\tau - T}} \twonorm{\text{diag}\rbr{\begin{bmatrix}
        v_1^{(T)} \\ v_2^{(T)}
    \end{bmatrix}}\W^{\rbr{T}}-\W^*}_F^2.
\end{align*}
Here, $\mu_0, \sigma_0, c, \tilde{c}, c', C$ are fixed numerical constants independent of any problem dimensions. 
\end{theorem}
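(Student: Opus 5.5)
The plan is to first analyze population gradient descent on \eqref{eq:pop_loss} and then transfer the analysis to the empirical loss with a uniform concentration bound along the trajectory. For Gaussian inputs the population gradient has a closed form in terms of the norms $\|\w_j\|$ and the angles $\theta_j=\angle(\w_j,\a)$ and $\theta_{12}=\angle(\w_1,\w_2)$. This uses the standard identities
\[
\E[\x\x^T\1\{\w_j^T\x>0\}]=\tfrac12\mtx{I},\qquad
\E[\phi(\w_1^T\x)\x\1\{\w_2^T\x>0\}]=\tfrac{1}{2\pi}\big((\pi-\theta_{12})\w_1+\|\w_1\|\sin\theta_{12}\,\dirw_2\big).
\]
We then track a few scalar quantities: $\alpha_j=\ip{\w_j}{\a}$ (taking $\w_2$ against $-\a$), the orthogonal components $\|\mtx{P}_{\a^\perp}\w_j\|$, and the output weights $v_j$. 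The balancedness quantity $v_j^2-\|\w_j\|^2$ is conserved under gradient flow and drifts only by $O(\mu^2)$ per step under GD. At the chosen initialization $v_j^{(0)}$ and $\|\w_j^{(0)}\|$ have the same $\chi_d$ law, so $|v_j|\approx\|\w_j\|$ throughout, and the outer signs cannot flip as long as the norms stay bounded away from zero.

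\textbf{Phase 1 (alignment).} While $\|\w_j\|,|v_j|=O(\sigma)$ with $\sigma\ll\sqrt{\|\a\|}$, the interaction terms are of order $\sigma^2$. The dominant term is $\nabla_{\w_j}\calL\approx \mp\tfrac12 v_j\a$, with the sign fixed by $v_1>0$ and $-v_2<0$. So $\w_1$ moves toward $\a$ and $\w_2$ toward $-\a$, while the orthogonal parts change only at second order. After $O(1/\bar\mu)$ steps the angles satisfy $\cos\theta_j\ge 1-\epsilon$ and the norms are still small. This takes care of the random initial angle, which is near $\pi/2$ in high dimension: the $\a$-component grows linearly in $\mu v_j\|\a\|$ while the orthogonal part stays at scale $\sigma$.

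\textbf{Phase 2 (growth).} Once the neurons are aligned, the pair $(v_j,\alpha_j)$ follows a near-scalar dynamics like $\dot u = u(\|\a\|-u^2)$, which is the product-parametrization growth. The norms grow geometrically at rate $1+c\bar\mu$. Escaping from scale $\sigma$ to scale $\sqrt{\|\a\|}$ therefore takes $\Theta(\bar\mu^{-1}\ln(\sqrt{\|\a\|}/\sigma))$ steps, which matches the lower bound on $T$. The step-size condition $\bar\mu\lesssim 1/\ln(\cdot)$ is there so that the accumulated $O(\mu^2)$ errors in balancedness and alignment over this horizon stay $O(\epsilon)$.

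The key point is excluding the saddle manifold of Theorem~\ref{thm:landscape}. Those saddles need both neurons collinear with $\a$ on the \emph{same} side ($c>0$ or $c<-1$ corresponds to $v_1\w_1$ and $v_2\w_2$ both positive multiples of $\a$). We maintain the invariant $\ip{\w_1}{\a}>0>\ip{\w_2}{\a}$ with angles near $0$ and $\pi$. The two ReLUs then have nearly complementary supports, so $v_1\phi(\w_1^T\x)-v_2\phi(\w_2^T\x)$ decouples. Each neuron then solves $v_j\w_j\to\pm\a$ on its own half-space, and the saddles are never approached.

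\textbf{Phase 3 (local refinement).} Near $\W^*$ with the correct signs, we show a local PL/strong-convexity property of the population loss in $\text{diag}(v)\W$. This gives the $(1-c\bar\mu)$ contraction in the stated Frobenius error.

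\textbf{Empirical transfer.} The final step, which I expect to be the main obstacle, is passing from population to empirical gradients with $n\ge Cd$. We need a bound of the form
\[
\|\nabla\widehat\calL-\nabla\calL\|\le \delta\,\|\text{diag}(v)\W-\W^*\|_F+\delta'\,(\text{small-scale terms})
\]
holding uniformly over all parameters in the relevant region, so that it applies to the data-dependent iterates. The error must shrink proportionally to the distance from the optimum; otherwise Phase 3 only reaches an error floor. The plan is to write the gradient as sums of $\x_i\x_i^T\1\{\w^T\x_i>0\}$-type matrices and use uniform concentration over half-spaces (VC dimension $O(d)$), plus an $\epsilon$-net over directions. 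Matrix concentration gives $\sup_{\w}\|\frac1n\sum\x_i\x_i^T\1\{\w^T\x_i>0\}-\tfrac12\mtx{I}\|\lesssim\sqrt{d/n}$ up to logarithms. For the last part I would try bounding the difference of indicators $\1\{\w_1^T\x>0\}-\1\{\a^T\x>0\}$ by the mass of a thin wedge, of order the angle, which makes the error multiplicative in the distance. Removing the logarithmic factors to get exactly $n\ge Cd$ is the delicate point; I would use a chaining argument for it.
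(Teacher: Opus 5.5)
Your three-phase architecture is the paper's. It has an $O(1/\bar\mu)$-step alignment phase. It then has a growth phase of $\Theta(\bar\mu^{-1}\ln(\sqrt{\twonorm{\a}}/\sigma))$ steps, with $\bar\mu\lesssim 1/\ln(\cdot)$ keeping the accumulated imbalance drift small. The invariant $\inner{\w_1}{\a}>0>\inner{\w_2}{\a}$ keeps the iterates off the $\theta=0$ saddle family; note that for $c<-1$ both $v_j\w_j$ are negative, not positive, multiples of $\a$. Finally, a PL contraction is fed by a concentration bound that is multiplicative in the error.

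Two components differ.

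\emph{Concentration.} The paper does not get multiplicativity from a wedge estimate. It uses the exact identity $\phi(\tilde{\w}^T\x)-\phi(\a^T\x)=\int_0^1\1\{(\a+t(\tilde{\w}-\a))^T\x\ge0\}\,dt\,(\tilde{\w}-\a)^T\x$. This writes $\nabla_{\w_1}\widehat{\mathcal{L}}-\nabla_{\w_1}\mathcal{L}$ as $v_1\int_0^1(M-\E M)\,dt\,\h_1$ plus a similar $\h_2$ term, where $M(\w,\w^*)=\frac1n\sum_i\1\{\x_i^T\w\ge0\}\1\{\x_i^T\w^*\ge0\}\x_i\x_i^T$. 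A single bound $\sup_{\w,\w^*}\opnorm{M-\E M}\le\delta$ (truncation, VC-subgraph entropy with Dudley, Bousquet, a net) holds for $n\gtrsim d\log^2(1/\delta)/\delta^2$, so there is no logarithmic loss for fixed $\delta$. It gives $\twonorm{\nabla_{\w_j}\widehat{\mathcal{L}}-\nabla_{\w_j}\mathcal{L}}\le v_j\delta(\twonorm{\h_1}+\twonorm{\h_2})$ in every phase. Your wedge term is controlled by geometry, at best of order angle times $\twonorm{\h}$, not by $\delta$. It therefore meets the small threshold $\delta^2\le 9\alpha c_{\min}^2/(1280c_{\max}^2)$ of the descent step only for angles far below the $c_4=\pi/20$ the paper works with.

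\emph{Phase 3 inequality.} The paper's PL inequality is not local. Via $\tilde{\w}_j=v_j\w_j$ it reduces to $v_1=v_2=1$, minimizes over $\a$, and holds on all of $\{\theta>\pi/2\}$ with constant $\alpha\min(v_1^2,v_2^2)$. Combined with a Hessian bound on the norm-bounded region and $\twonorm{\h_1}^2+\twonorm{\h_2}^2\le 20\mathcal{L}$, Phase 3 starts right after growth with only constant-level control. Your local strong convexity, and your wedge estimate, would additionally require showing that Phase 2 drives $\text{diag}(\v)\W$ into a small neighborhood of $\W^*$.

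Two further ingredients need to be made explicit.

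First, the $\w_j$-block deviation must carry the prefactor $v_j$. In Phase 1, $v_j\sim\sigma$ and the noise must be small relative to the signal $\frac{v_j}{2}\a$. A bound without that factor would force $n\gtrsim d\twonorm{\a}/\sigma^2$.

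Second, and more importantly, Phase 3 has an infinite horizon, so an $O(\mu^2)$-per-step balancedness drift does not suffice. Summed over infinitely many steps it gives no bound. Then $v_j$ and $\twonorm{\w_j}$ could separate while $v_j\w_j$ converges, destroying both the smoothness constant and the PL constant $\propto\min(v_1^2,v_2^2)$. The paper proves $|b_j^{(\tau+1)}-b_j^{(\tau)}|\le 6\mu^2(v_j^2+\twonorm{\w_j}^2)\fnorm{\text{diag}(\v)\W-\W^*}^2$, using $\w_j^T\nabla_{\w_j}\widehat{\mathcal{L}}=v_j\nabla_{v_j}\widehat{\mathcal{L}}$ and the multiplicative concentration. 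This drift is summable along the geometric decay. Likewise, the angles stay below $c_4$ for all time through a per-step cone-invariance argument, not by accumulating errors.
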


This theorem shows that gradient descent can provably train a fully end-to-end one-hidden-layer ReLU network to learn a linear target from finitely many samples, despite the highly degenerate and saddle-rich optimization landscape. In particular, the result gives a global convergence guarantee for simultaneous optimization of the hidden and output weights; going beyond analyses that rely on effectively fixed features or only local perturbations around initialization. Starting from a small random initialization with the standard $\sigma/\sqrt d$ scaling---consistent with common ``default'' initializations used in practice---the two student neurons $\w_1$ and $\w_2$ rapidly align with the ground-truth direction $\a$, after which the \emph{effective parameters} $\text{diag}\rbr{\begin{bmatrix} v_1 \\ v_2 \end{bmatrix}}\W$ converge geometrically to the planted solution. This linear-rate convergence kicks in after only a short burn-in period of $T\ge c'\frac{1}{\bar{\mu}}\ln\rbr{\frac{\sqrt{\twonorm{\a}}}{\sigma}}$ iterations and requires only $n\ge Cd$ samples, which is information-theoretically optimal.

We note that the \emph{same} finite dataset is reused across all iterations of gradient descent. Controlling the resulting dependence between the iterates and the samples requires uniform concentration arguments that hold \emph{along the entire trajectory}, i.e., controlling population--empirical deviations simultaneously for all iterates visited by GD rather than at a fixed parameter value. Finally, the $\chi^2$-based initialization for the output weights is used for technical convenience. One can alternatively initialize $v_1$ and $v_2$ as Gaussians with variance $\sigma^2/2$; the same qualitative convergence behavior persists, but the success probability degrades to a fixed constant, rather than $1-Ce^{-cd}$. This constant failure with Gaussian initialization is unavoidable for $k=2$. For large $k$, scaling the initialization with $\frac{\sigma^2}{k}$ yields failure probability decaying as $e^{-k}$. We therefore  use a slight non-Gaussian modification in the initialization to demonstrate that except for this $k=2$ artifact our result holds with much higher probability.

%% file: sec/experiments.tex
\section{Experiments} \label{sec:experiments}
We run experiments on various output dimension (denoted with $r=1$ vs. $r>1$), and initialization scale (small vs large). In this section we show experimental results for single output $r=1$ case and refer the reader to Appendix \ref{apx:multidimension} for multi-output $r>1$ results. We use PyTorch for experiments and unless mentioned otherwise, network weights are initialized with Xavier Normal initialization (for a matrix $\mtx{W} \in \R^{r \times d}$,$\mtx{W}_{ij} \sim \mathcal{N}\rbr{0, \frac{2}{r+d}}$). 

In order to change the initialization scale, we multiply the default initialization with a positive scalar $\sigma$. For \textit{small} initialization experiments, we use $\sigma=10^{-8}$, otherwise it is set to $\sigma=1$. We set $d=100$ and $\mu=0.1$. All experiments are run on a server with an Intel Xeon Gold 5220R CPU. We would like to stress that even though the visualizations in this paper are based on a single trial, we ran these experiments for different random seeds and the behavior of the visualizations did not change.

In experiments w.l.o.g.~we choose $\vct{a} = \vct{e}_1$ where $\vct{e}_1$ is the first standard basis in $\R^d$. This does not effect the results due to the rotational symmetry of isotropic Gaussian distribution of which $\vct{x}$ are drawn from. Note that this implies $\twonorm{\vct{a}}=1$ in our experiments. Finally, in this section we focus our experiments on the population loss. Similar results continue to hold in the empirical case with moderate sample sizes i.e. when $n\ge crd$ with $c$ a sufficiently large constant.

When the model is exactly parameterized with two hidden nodes ($k=2$), we empirically see that the model cannot converge to the global optima consistently. When it does, $\vct{v}^{\rbr{\infty}}$ indeed becomes $\pm 1$ and $\vct{w}_1^{\rbr{\infty}}$ and $\vct{w}_2^{\rbr{\infty}}$ recover $\pm \vct{a}$ exactly. For the remaining time, the GD iterates converge to one of the many stationary points of this problem similar to the depiction in Figure \ref{fig:trajectories_global_and_local_optima} (part b). We further observe that iterates get stuck only when $\vct{v}_1^{(0)}$ and $\vct{v}_2^{(0)}$  both have the same signs which happens with probability $\frac{1}{2}$. This is also the reason for why we fix the correct sign pattern at initialization in Theorem \ref{thm:main}.

When $k>2$, the probability of \textit{all}
$\vct{v}_i$'s having the same sign decreases rapidly. Therefore, iterates typically converge to the global optima. However, in this case global minima is not unique anymore. To demonstrate this, consider the case where there are four hidden units ($k=4$) instead of two. The trajectory of the inner weights across GD iterations is depicted in Figure \ref{fig:trajectories}. 

\begin{figure}[h]
    \centering
    \begin{minipage}[t]{0.45\textwidth}
        \centering
        \includegraphics[width=0.95\linewidth]{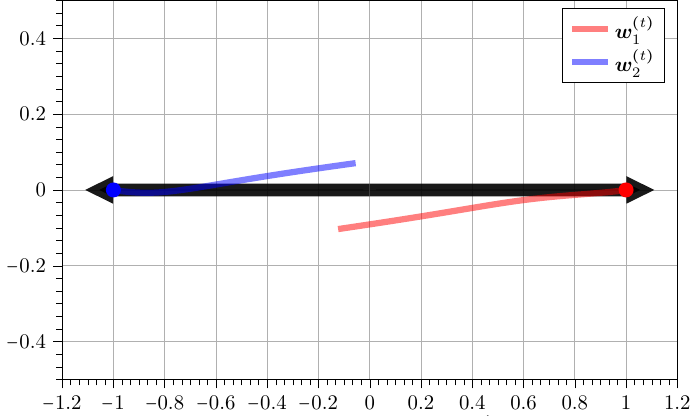}\\
        \textbf{(a)} Trajectory of neurons when $k=2$.
    \end{minipage}
    \hfill
    \begin{minipage}[t]{0.45\textwidth}
        \centering
        \includegraphics[width=0.95\linewidth]{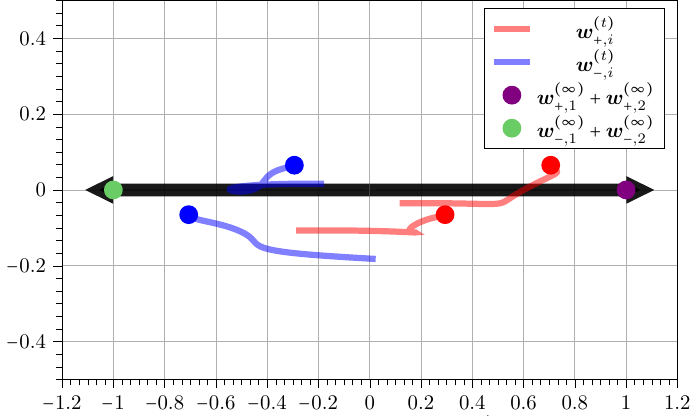}\\
        \textbf{(b)} Trajectory of neurons when $k>2$.
    \end{minipage}
    \centering
    \caption{\textbf{Trajectory of neurons for different values of} $k$. We run gradient descent updates on the population loss. A randomly selected orthogonal direction to $\vct{a}$ is shown for the y-axis in order to visualize the neurons in 2D. Black arrows indicate $\pm \vct{a}$ direction. We use colors \textcolor{red}{red} and \textcolor{blue}{blue} to indicate whether $\vct{v}_i$ corresponding to $\vct{w}_i$ is positive or negative respectively. Points at the end of each trajectory denotes the final weight GD converges to.}
    \label{fig:trajectories}
\end{figure}

We observe that while no individual $\vct{w}_i$ align itself with $\pm \vct{a}$ direction, grouping hidden units based on their corresponding signs in $\vct{v}$ and summing them recovers $\pm\vct{a}$ \textit{exactly} (\textcolor{purple}{purple} and \textcolor{green}{green} points in Figure \ref{fig:trajectories}). Although not depicted here, we have tried various values for $k>2$ and the observation that grouping weights recover $\pm \vct{a}$ was consistent. This suggests that combining node aggregation technique from \citep{li2024feature} with our proof strategy may extend our results for the $k > 2$ setting. We leave this to future work.

%% file: sec/related_work.tex
\section{Related Work} \label{sec:related_work}
There is a large body of work on developing global convergence guarantees for nonconvex problems. We review this literature and compare the differences with the setting discussed in this paper.
\\
\\
\noindent\textbf{Nonconvex low-rank matrix recovery}: In low-rank matrix recovery, numerous studies have shown that nonconvex gradient descent, when initiated with spectral initialization, can effectively solve low-rank reconstruction problems across various domains. This includes phase retrieval \citep{wirtinger_flow,truncated_wirtinger_flow,chen_implicit_regularization}, matrix sensing \cite{tu2016low}, blind deconvolution \cite{ling_blind_deconv,ling_demixing}, and matrix completion \cite{xiaodong_matrixcompletion}. In practice, random initialization is frequently employed instead of specialized spectral initialization methods. As a result, more recent literature \cite{landscape_phaseretrieval,ge2016matrix_spurious,zhang2019sharp}, have turned to analyzing the loss landscape. These studies demonstrate that, despite their non-convex nature, these loss landscapes remain well-behaved under certain assumptions. Specifically, they contain no spurious local minima (i.e., all minimizers are global minima), and saddle points exhibit a strict direction of negative curvature (also known as strict saddle points) \cite{sun2015nonconvex}. Then specialized truncation or saddle escaping algorithms such as trust region, cubic regularization \cite{nesterov2006cubic, nocedal2006trust} or noisy (stochastic) gradient-based methods \cite{jin2017escape, ge2015escaping, raginsky2017non, zhang2017hitting} are deployed to provably find a global optimum. In contrast to the above literature, the landscape of our loss contain non-strict saddle points. Furthermore, we do not seek any modification to the initialization or the GD updates. Indeed, our result holds with moderately small initialization. As mentioned  earlier, we are able to establish this result by developing intricate control of the GD updates throughout the trajectory.
This trajectory-level perspective (i.e. multi-phase analysis) is also explored in recent works on gradient descent dynamics and implicit bias under large learning rates \citep{wang2022largelearningratetames, wang2023goodregularitycreateslarge}, see also additional prior work \citep{stoger2021, soltanolkotabi23a-implicit-balancing} on this topic. However, these works focus on matrix factorization and more general nonconvex objectives rather than neural network training.
\\
\\
\noindent\textbf{Gradient-based analysis for neural networks:} A recent line of work is concerned with connecting the analysis of neural network training with the so-called neural tangent kernel (NTK) \cite{jacot2018neural,oymak2019overparameterized,oymak2020towards,du2019gradient,arora2019fine}. The core idea is that with sufficiently large initialization, a neural network can be approximated by its linearization around the origin. This approximation facilitates linking neural network analysis to the well-established theory of kernel methods. This approach is sometimes referred to as lazy training since, under such initialization, the network parameters remain close to their initial values throughout training. However, some research suggests that NTK-based analysis alone may not fully account for the practical success of neural networks. For instance, \cite{chizat2019lazy} presents empirical evidence indicating that reducing the initialization size can lead to lower test error. Similarly, \cite{ghorbani2020neural} observes a performance gap between neural networks and their NTK counterparts, with the gap widening when the covariance matrix is isotropic. We note that in an NTK analysis the parameters stay close to the initialization which is not the case in our setting. Furthermore, an NTK analysis that relies on linearization can not deal with trajectory analysis that avoids local optima. Indeed, an NTK analysis will not yield the directional convergence established in this paper. So in this sense our result can be viewed as going beyond the lazy training in NTK theory. 
\\
\\
\noindent\textbf{Beyond NTK and learning of specific target functions.} Recent work carries out analysis of neural networks beyond NTK regime including \cite{damian2022neural,ba2022high, lee2024neural, xu2023overparameterization}. Many of these results also focus on learning specific target functions such as ReLUs \citep{xu2023overparameterization}, \citep{soltanolkotabi2017learning} and polynomials \citep{damian2022neural}. These results however typically exclude linear function classes and do not directly involve analysis that requires avoiding bad stationary points explicitly. In fact, many of the existing papers use a pre-processing step or alter the early optimization trajectory to avoid complications arising from the dynamics of learning linear functions \citep{damian2022neural}. In contrast, our focus is directly dealing with such intricacies. 

Among these papers, perhaps the closest to ours in spirit is \citep{xu2023overparameterization} which studies the problem of fitting an overparameterized ReLU network to a single ReLU target function with a one dimensional output. Our one-dimensional result can be viewed as a generalization of this work (in particular their exact parametrization result) where the target function has two ReLUs with a particular pattern. This is due to the fact that any linear function of the form $\vct{a}^T\vct{x}$ can also be written as a difference of two ReLUs: $v_1\text{ReLU}\rbr{\frac{1}{v_1}\vct{a^T}\vct{x}} - v_2\text{ReLU}\rbr{\frac{-1}{v_2}\vct{a^T}\vct{x}}$ for any $v_1,v_2 >0$. The addition of this new ReLU with a negative sign introduces non-strict saddle points and various intricacies in the landscape necessitating a completely different analysis. However, compared to \citep{xu2023overparameterization} we do not study the effect of overparameterization theoretically. Our empirical results in Section \ref{sec:experiments} suggest that such an extension may be possible.

We highlight that besides \citet{xu2023overparameterization}, there are several other works on learning a single neuron \cite{yehudai2022learningsingleneurongradient, vardi2022learningsingleneuronbias, chistikov2023learningneuronshallowrelu} and variants \cite{brutzkus2017globallyoptimalgradientdescent}. As explained before, such results cannot be used to analyze linear targets due to the interaction terms between positive and negative ReLU neurons. Furthermore, we note that the landscape for fitting a single ReLU is fundamentally different as it contains only a single basin of attraction (albeitt a non-convex one). In contrast, as discussed earlier the landscape in our problem include non-strict saddle points significantly complicating gradient descent analysis.

We would also like to discuss the difference between our work and a few other papers \citet{zhong2017recoveryguaranteesonehiddenlayerneural,zhang2018learningonehiddenlayerrelunetworks, zhu2025how, ren2025emergencescalinglawssgd} that have planted one-hidden layer models. These papers differ in at least one of three ways focusing on (1) local analysis, (2) have sub-optimal sample complexity, and/or (3) assume non-negative outer layer weights. For instance, \citet{zhong2017recoveryguaranteesonehiddenlayerneural} utilize tensor initialization, performing a local analysis rather than a global GD analysis. This local analysis however can not be used to analyze the linear target setting. Indeed, as noted in Remark 4.3 of their work, their analysis requires $\W^*$ to be full-rank which does not hold in the linear setting (where the rows of the weight matrix are negatives of each other leading to a minimum singular value is zero). Furthermore, this result also requires resampling the data points at each iteration to ensure convergence of gradient descent where as we use the same samples across all iterations. On a related note, their sample complexity has polynomial dependency on many problem parameters (Theorem 4.2) whereas our proof only requires sample size linear in input dimension $d$. 

Similarly, \citet{zhang2018learningonehiddenlayerrelunetworks} provide a local analysis of GD when the outer layer weights are fixed to be all ones. They also utilize results of \citet{zhong2017recoveryguaranteesonehiddenlayerneural} and share similar limitations in terms of the rank requirement on $\W^*$. Thus this result can not be used in the linear target setting even for a local analysis. While they improve the sample complexity of \citep{zhang2018learningonehiddenlayerrelunetworks} by getting rid of the resampling trick, they still end up with a sample complexity polynomial in width of the network. 

\citet{wu2018no} consider the setting when student and teacher networks both have 2 neurons. In particular, when the teachers are \textit{orthogonal}, and the outer weights are all ones; they demonstrated an interesting result that the landscape is benign and all saddles are strict. In contrast, the landscape in our problem include non-strict saddle points significantly complicating gradient descent analysis. In more recent work, \citet{ren2025emergencescalinglawssgd} study the complexity of learning a planted model with \textit{orthogonal} planted directions, quadratic activations, and non-negative outer weights. They obtain interesting results on the scaling laws of the MSE loss via a multi-phase analysis. However, this problem setting is substantially different due to the difference between the activation and the orthogonal weights in the planted model that makes the landscape benign per above discussion. More recently, \citet{zhu2025how} also consider learning multiple \textit{orthogonal} ReLU neurons in a teacher-student framework with outer layer weights fixed to all ones. As just discussed, having orthogonal teacher weights leads to a much more benign landscape. Moreover, assumptions in the aforementioned works strictly exclude the linear target setting, where the outer layer must contain negative coefficients. Furthermore, they impose strong restrictions on the initialization. Specifically, they look at the convergence after ``weak alignment" where for each student neuron there exists only one teacher neuron that is not near perpendicular. Our results on the other hand can handle random initializations where student neurons \textit{could} be perpendicular to the target direction. That said, their analysis can handle over-parametrization ($k \gg k^*$) and teacher networks with more than $2$ neurons. 

In recent and independent work, \citet{boursier2025simplicitybiasoptimizationthreshold} also consider the problem of learning linear target functions. The authors demonstrate an interesting result: despite over-parametrization, the sum of positive (resp. negative) neurons aligns with the OLS estimator obtained from the ``positive'' (resp. negative) subset of the data. To prove this, the authors impose heavy restrictions on the data distribution (in particular, Conditions 3 and 4 in their paper) to essentially align the data with the target direction and avoid changes in the activation cone. We quote the authors:

“However, item 3 is quite restrictive: it is needed to ensure that the volume of the activation cone containing $\beta^*$ does not vanish when $n \rightarrow \infty$. A similar assumption is considered by Chistikov et al. (2023); Tsoy and Konstantinov (2024), for similar reasons. Additionally, Condition 4 ensures that $\mathbb{E}_x[xx^T]\beta^*$ and $\beta^*$ are in the same activation cone. This assumption allows the training dynamics to remain within a single cone after the early alignment phase, significantly simplifying our analysis.”

In contrast, we demonstrate feature learning in the linear target setting by performing a full characterization of GD dynamics with a generic data distribution and initialization without any of the restrictive assumptions mentioned above.

%% file: sec/overview.tex
\section{Overview and Key Ideas of the Proof} \label{sec:proof_overview}
In this section, we outline the main ideas underlying our analysis. As mentioned previously, a major challenge is that the optimization landscape is riddled with non-strict saddle points that gradient descent can get stuck in. Thus, our analysis requires a very refined control of the trajectory to guarantee that the iterates escape these saddle regions. We will show that the trajectory of full-batch gradient descent partitions into three distinct phases discussed below. Figure~\ref{fig:pop_3_phase} illustrates the three phases and their interaction.

\begin{figure}[h]
    \centering
    \includegraphics[width=0.9\linewidth]{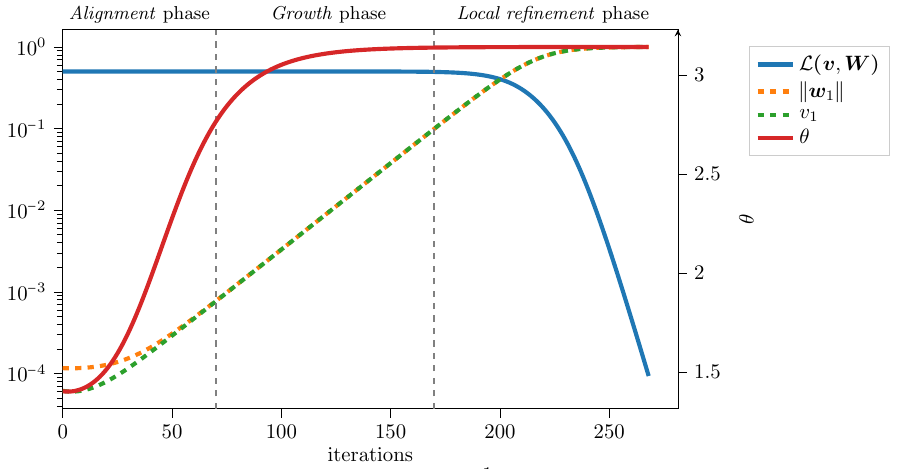}
    \caption{\textbf{Phases of GD Trajectory}. We run gradient descent updates on the population loss with small initialization $\sigma=10^{-4}$. We track the population loss $\calL$ (\textcolor{blue}{blue}), norms $v_1, \twonorm{\w_1}$ (\textcolor{green}{green} and \textcolor{yellow}{yellow}), and the $\theta$ -- i.e. angle between $\w_1$, $\w_2$ -- (\textcolor{red}{red}). For visualization purposes $\theta$ uses the right vertical axis. $v_2$ and $\twonorm{\w_2}$ behave similarly but omitted for clarity.}
    \label{fig:pop_3_phase}
\end{figure}

\begin{description}
    \item[(1) Alignment phase (Section \ref{phaseI}).] Starting from a small random initialization, we show that the hidden weights progressively align with the planted direction while the output weights maintain the correct sign pattern. 

    \item[(2) Growth phase (Section \ref{phaseII}).] Once sufficient alignment has been established, we prove that the norms of both the hidden and output layers grow in a coordinated fashion while preserving this alignment. This phase drives the effective parameters toward the correct scale and pushes the iterates away from flat saddle regions of the loss landscape. A key technical challenge here is to show that gradient descent does not drift into spurious stationary points despite the non-strict nature of these saddles.

    \item[(3) Local refinement phase (Section \ref{phaseIII}).] After the alignment and growth phases we enter a well-behaved region of the planted solution, where the dynamics become locally well-conditioned. In this phase, We show that the aligned neurons then converge rapidly to the ground-truth direction, and the effective parameters enjoy a linear rate of convergence to the global minimizer.
\end{description}

Throughout all three phases, the same finite dataset is reused across iterations. To control the resulting dependence between the iterates and the samples, we establish new trajectory-level uniform concentration bounds that hold simultaneously for all points visited by gradient descent. These results are crucial for obtaining order-wise optimal sample complexity. We give an overview of these uniform concentration results in Section \ref{unifconcen}. Before we detail the specific phases of the trajectory, we also need to establish two sets of key identities. The first set demonstrates a specific property of balancedness between the inner and outer weights (Section \ref{imbalance}). The second set concerns the stability of our training dynamics, ensuring that the evolution is monotonic in the sense that once the iterates enter a new phase, they do not revert to a previous one (Section \ref{stability}). We begin with some quick notation used throughout our proofs.

\subsection{Notation}
In this section we gather some simple notation used in our proofs. As a reminder we use $\widehat{\mathcal{L}}$ and $\mathcal{L}$ to denote the empirical and population losses, respectively. We use 
\begin{align*}
 \Delta\mathcal{G}_1:=\frac{2}{v_1}\left(\nabla_{\w_1} \widehat{\mathcal{L}}-\nabla_{\w_1}\mathcal{L}\right)\quad\text{and}\quad  \Delta\mathcal{G}_2:=\frac{2}{v_2}\left(\nabla_{\w_2} \widehat{\mathcal{L}}-\nabla_{\w_2}\mathcal{L}\right) 
\end{align*}
to denote the scaled difference between the empirical and population gradients. Finally, we use $\theta$ to denote the angle between $\w_1$ and $\w_2$. We also define $\theta_1$ to be the angle between $\w_1$ and $\a$, $\theta_2$ to be the angle between $\w_2$ and $-\a$. We note that all lemmas stated in this proof overview our under the assumptions of the main theorem, we avoid repeating these assumptions repeatedly for readability.

\subsection{Controlling the imbalance term}\label{imbalance}
A crucial identity used throughout our proofs is that from moderately small initialization the norms of the inner and outer weights remain close to each other. Concretely, we define the imbalance term as $b_{1}^{(\tau)} := \twonorm{\w_1^{(\tau)}}^2 - \left(v_1^{(\tau)}\right)^2$ and $b_{2}^{(\tau)} := \twonorm{\w_2^{(\tau)}}^2 - \left(v_2^{(\tau)}\right)^2$. A constant bound for the absolute value of these terms is required to prove that the norms remain bounded throughout the training process (see Lemma \ref{lemma:norm_control}).

While the imbalance is invariant in gradient flow \citep{ji2019gradientdescentalignslayers}, the discretization in gradient descent introduces a small drift given by:
\begin{align*}
    b_{1}^{(\tau+1)} = & b_{1}^{(\tau)} + \mu^2\rbr{\twonorm{\nabla_{\w_1}\widehat{\mathcal{L}}}^2 - \left(\nabla_{v_1}\widehat{\mathcal{L}}\right)^2}.
\end{align*}
A simple constant bound on this drift is insufficient for our analysis, as the errors could accumulate to infinity over an infinite number of iterations. 

To address this, we prove a stronger result: the drift in each step is bounded by the distance between the effective weights $\text{diag}\rbr{\begin{bmatrix} v_1 \\ v_2 \end{bmatrix}}\W$ and the planted solution. Since $\text{diag}\rbr{\begin{bmatrix} v_1 \\ v_2 \end{bmatrix}}\W$ converges to the planted solution exponentially fast in Phase 3, the total accumulated drift remains finite even as $\tau \to \infty$. Concretely, we prove the following lemma. 
\begin{lemma}[Imbalance bound] \label{lemma:imbalance_bound}
    Assume that $v_1^{(\tau)}, v_2^{(\tau)} > 0$. For any $i\in \{1,2\}$, we have
    \begin{align*}
    \left|b_{i}^{(\tau+1)} - b_{i}^{(\tau)}\right| & \le c_6\mu^2\rbr{\rbr{v_i^{(\tau)}}^2 + \twonorm{\w_i^{(\tau)}}^2}\rbr{\twonorm{v_1^{(\tau)}\w_1^{(\tau)} -\a}^2 + \twonorm{v_2^{(\tau)}\w_2^{(\tau)} +\a}^2}.
    \end{align*}
    Here, we set the constant as $c_6 =6$.
\end{lemma}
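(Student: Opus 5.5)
The approach is to reduce the drift to the squared norm of one common vector that multiplies both gradients, and then bound that vector by Lipschitz continuity of the ReLU together with a spectral bound on the empirical covariance. I treat $i=1$; the case $i=2$ is identical after tracking signs, and all iterates are at time $\tau$, so I drop the superscript. Write $\vct{u}_1=v_1\w_1$, $\vct{u}_2=v_2\w_2$ and $r_j=v_1\phi(\w_1^T\x_j)-v_2\phi(\w_2^T\x_j)-\a^T\x_j$. Also set
\[
\vct{g}_1:=\frac{1}{n}\sum_{j=1}^n r_j\,\1\{\w_1^T\x_j>0\}\,\x_j .
\]

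\textbf{Step 1 (factor out a common vector).} Direct differentiation gives $\nabla_{\w_1}\widehat{\calL}=v_1\vct{g}_1$. By positive homogeneity of ReLU, $\phi(\w_1^T\x)=\1\{\w_1^T\x>0\}\w_1^T\x$, so $\nabla_{v_1}\widehat{\calL}=\frac1n\sum_j r_j\phi(\w_1^T\x_j)=\inner{\w_1}{\vct{g}_1}$. Using the drift identity from Section~\ref{imbalance} and Cauchy--Schwarz,
\[
|b_1^{(\tau+1)}-b_1^{(\tau)}|\le \mu^2\rbr{v_1^2\twonorm{\vct{g}_1}^2+\inner{\w_1}{\vct{g}_1}^2}\le \mu^2\rbr{v_1^2+\twonorm{\w_1}^2}\twonorm{\vct{g}_1}^2 .
\]
It therefore suffices to show $\twonorm{\vct{g}_1}^2\le 6\rbr{\twonorm{\vct{u}_1-\a}^2+\twonorm{\vct{u}_2+\a}^2}$.

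\textbf{Step 2 (residual as a Lipschitz difference).} Since $v_1,v_2>0$, homogeneity gives $v_i\phi(\w_i^T\x)=\phi(\vct{u}_i^T\x)$. We also have $\a^T\x=\phi(\a^T\x)-\phi(-\a^T\x)$. Hence
\[
r_j=\bbr{\phi(\vct{u}_1^T\x_j)-\phi(\a^T\x_j)}-\bbr{\phi(\vct{u}_2^T\x_j)-\phi(-\a^T\x_j)} .
\]
Because $\phi$ is $1$-Lipschitz, $|r_j|\le |(\vct{u}_1-\a)^T\x_j|+|(\vct{u}_2+\a)^T\x_j|$.

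\textbf{Step 3 (variational bound).} For any unit vector $\vct{z}$, Cauchy--Schwarz gives
\[
\inner{\vct{g}_1}{\vct{z}}\le \Big(\tfrac1n\textstyle\sum_j r_j^2\Big)^{1/2}\Big(\tfrac1n\textstyle\sum_j(\x_j^T\vct{z})^2\Big)^{1/2}\le \sqrt{\lambda}\,\Big(\tfrac1n\textstyle\sum_j r_j^2\Big)^{1/2},
\]
where $\lambda=\opnorm{\frac1n\sum_j\x_j\x_j^T}$. Step 2 together with $(p+q)^2\le 2p^2+2q^2$ gives $\frac1n\sum_j r_j^2\le 2\lambda\rbr{\twonorm{\vct{u}_1-\a}^2+\twonorm{\vct{u}_2+\a}^2}$. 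Combining, $\twonorm{\vct{g}_1}^2\le 2\lambda^2\rbr{\twonorm{\vct{u}_1-\a}^2+\twonorm{\vct{u}_2+\a}^2}$.

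\textbf{Step 4 (covariance bound; the only probabilistic input).} Standard Gaussian covariance concentration shows that when $n\ge Cd$ with $C$ large enough, $\lambda\le\sqrt3$ with probability at least $1-e^{-cd}$. Then $2\lambda^2\le 6$, which gives $c_6=6$. This event does not depend on the iterates, so the bound holds uniformly along the whole trajectory. In the population case $\lambda=1$, and the constant improves to $2$.

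This step is the only delicate one: the lemma should be read as holding on this high-probability event, which is implicitly assumed to be part of the main theorem's good event. Otherwise the argument is fully deterministic. The homogeneity trick in Step 1 is what turns the difference of two gradient norms into a single quantity $\twonorm{\vct{g}_1}^2$ that is controlled by the distance of the effective weights to the planted solution.
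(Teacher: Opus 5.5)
Your proof is correct, and it reaches the bound by a different route from the paper's.

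\textbf{Common starting point.} Both arguments begin from the drift identity and the estimate $|b_1^{(\tau+1)}-b_1^{(\tau)}|\le\mu^2\big(\|\nabla_{\w_1}\widehat{\calL}\|^2+(\nabla_{v_1}\widehat{\calL})^2\big)$.

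\textbf{The paper's route.} It splits each empirical gradient into its population counterpart plus a sampling error.
\begin{itemize}
\item The population part is controlled by the deterministic directional-smoothness inequality of Lemma~\ref{lemma:popgrad_smooth}, which has constant $5/2$ and is proved by a case analysis in $\theta$. It is applied to the reparametrized weights $v_i\w_i$.
\item The error part is controlled by the gradient-deviation bound of Lemma~\ref{lem:component_deviation}. That bound rests on the VC/Bousquet uniform concentration of $M(\w,\w^*)$ over all activation patterns (Lemma~\ref{lem:unif_concentration}).
\item With $\delta\le 1/2$ the two pieces combine to $5+4\delta^2\le 6$.
\end{itemize}

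\textbf{Your route.} You never pass through the population gradient. You factor $\nabla_{\w_1}\widehat{\calL}=v_1\vct{g}_1$ and $\nabla_{v_1}\widehat{\calL}=\langle \w_1,\vct{g}_1\rangle$, then write the residual as a difference of $1$-Lipschitz ReLU terms and apply Cauchy--Schwarz. The only probabilistic input is then $\lambda_{\max}\big(\frac1n\sum_j \x_j\x_j^T\big)\le\sqrt3$. This event does not depend on the iterates and holds with probability $1-e^{-cd}$ once $n\ge Cd$.

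\textbf{What each buys.}
\begin{itemize}
\item Your argument is more elementary and self-contained. It avoids both Lemma~\ref{lemma:popgrad_smooth} and the empirical-process machinery.
\item It makes explicit the high-probability event on which the lemma holds; the lemma is stated without one, but the paper's proof also relies on a random event.
\item It gives a better limiting constant: $2\lambda^2\to 2$ versus $5+4\delta^2\to 5$.
\item The paper's decomposition is the same population/deviation split it needs anyway in the local refinement phase. There the deviation bound scaling with $\delta(\|\mathbf{h}_1\|+\|\mathbf{h}_2\|)$ is essential for the linear rate. For this lemma in isolation, that machinery is not needed.
\end{itemize}
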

This lemma is proven in Section \ref{apx:imbalance_bound}.
\subsection{Stability of Training Dynamics}\label{stability}
In this section, we establish two key stability properties that serve as the foundation for our proof. These results ensuring that the evolution is monotonic in the sense that once the iterates enter a new phase, they do not revert to a previous phase. The first lemma ensures that once the angle becomes small (at the end of the first phase) it continues to remain sufficiently small.
\begin{lemma}[Angle stays small] \label{lemma:angle_stay_small}
Assume that $\mu \le \frac{c_0}{\twonorm{\a}}$. For any iteration $\tau$ such that $0< v_1^{(\tau)}, v_2^{(\tau)} \le c_2\sqrt{\twonorm{\a}}$, $\theta_1^{(\tau)},\theta_2^{(\tau)} \le c_4$, $\left|b_1^{(\tau)}\right|, \left|b_2^{(\tau)}\right| \le \gamma\twonorm{\a}$ and $\twonorm{\Delta\mathcal{G}_1^{(\tau)}}, \twonorm{\Delta\mathcal{G}_2^{(\tau)}} \le c_5\twonorm{\a}$, we have 
\begin{align*}
    \theta_1^{(\tau+1)},\theta_2^{(\tau+1)} \le c_4.
\end{align*}
Here, we set the constants as $c_0 \le \frac{1}{2}, c_2 = 2, c_5 = \frac{1}{50}, c_4 = \frac{\pi}{20}, \gamma = \frac{1}{4}$.
\end{lemma}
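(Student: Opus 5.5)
We argue one neuron at a time, say $\w_1$; the case of $\w_2$ is symmetric under $\a\mapsto-\a$ and $v\mapsto -v$. Write the population gradient in closed form using the standard Gaussian ReLU identities $\E[\x\1\{\w^T\x>0\}\x^T]\w'$. This gives
\begin{align*}
\nabla_{\w_1}\mathcal L=v_1\Big(\tfrac{v_1}{2}\w_1-\tfrac12\a+\tfrac{v_2}{2\pi}\big((\pi-\theta)\w_2+\twonorm{\w_2}\sin\theta\,\dirw_1\big)\Big)
\end{align*}
up to the exact form of the cross term, where $\dirw_1=\w_1/\twonorm{\w_1}$. Here we use $\E[\phi(\w_1^T\x)\x]=\w_1/2$ and $\E[\1\{\w_1^T\x>0\}\x\x^T]\a=\a/2+\frac{1}{2\pi}(\ldots)$. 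We then write the update as
\begin{align*}
\w_1^+=\w_1-\tfrac{\mu v_1}{2}\big(g_1+\Delta\mathcal G_1\big),
\end{align*}
with $g_1=\frac{2}{v_1}\nabla_{\w_1}\mathcal L$.

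The key quantity is the component of $g_1$ orthogonal to $\w_1$, since only this part rotates $\w_1$. Apply $\projw{1}$ to $g_1$. The $\w_1$ terms vanish, the $\sin\theta\,\dirw_1$ term vanishes, and the target term contributes $-\projw{1}\a$, whose norm is $\twonorm{\a}\sin\theta_1$. The cross term contributes roughly $\frac{v_2(\pi-\theta)}{\pi}\projw{1}\w_2$. Because $\theta_1,\theta_2\le c_4$, the vector $\w_2$ is nearly $-\a$ direction. We have $\theta\ge\pi-\theta_1-\theta_2$, so $\pi-\theta\le 2c_4$, and therefore this cross term has norm at most $\frac{2c_4}{\pi}v_2\twonorm{\w_2}\sin(\theta_1+\theta_2)$, which is small.

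Using $|b_2|\le\gamma\twonorm{\a}$ and $v_2\le c_2\sqrt{\twonorm{\a}}$, we bound $v_2\twonorm{\w_2}\lesssim(c_2^2+\gamma)\twonorm{\a}$. The cross term is then a small multiple ($\lesssim c_4$) of $\twonorm{\a}$ times angles.

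Next we measure $\theta_1^+$ through the tangent. Decompose $\w_1^+$ into a component along $\a$ and a component orthogonal to $\a$. Work in the plane of $\w_1,\a$, together with the out-of-plane error from the cross term and from $\Delta\mathcal G_1$. The $\a$ term pulls $\w_1$ toward $\a$: its orthogonal component is scaled by $(1-\frac{\mu v_1^2}{2}\cdot(\ldots))$, and it gains a positive $\a$-component $\frac{\mu v_1}{2}\twonorm{\a}$.

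We split into two cases.

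\emph{Case 1: $\theta_1$ bounded away from $c_4$, say $\theta_1\le c_4/2$.} The one-step perturbation of the angle is at most $\mu v_1(\twonorm{\a}+\twonorm{\Delta\mathcal G_1}+\text{cross})/\twonorm{\w_1}$. This requires a lower bound on $\twonorm{\w_1}$ relative to $v_1$, which is where the imbalance enters: $\twonorm{\w_1}^2\ge v_1^2-\gamma\twonorm{\a}$. That lower bound fails when $v_1$ is tiny, so instead we compare $\w_1^+$ with $\w_1$ directly. When $\twonorm{\w_1}$ is small, the added vector $\frac{\mu v_1}{2}(\a+\text{errors})$ itself has angle at most about $\arcsin(2c_5+\ldots)<c_4$ to $\a$. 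Since $\w_1$ also lies in the cone of angle $c_4$ around $\a$, and this cone is convex, the sum $\w_1^+$ stays in the cone.

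This convexity argument is the cleanest route and the one we adopt in general. We write
\begin{align*}
\w_1^+=\Big(1-\tfrac{\mu v_1^2}{2}-\tfrac{\mu v_1 v_2\twonorm{\w_2}\sin\theta}{2\pi\twonorm{\w_1}}\Big)\w_1+\tfrac{\mu v_1}{2}\Big(\a-\tfrac{v_2(\pi-\theta)}{\pi}\w_2-\Delta\mathcal G_1\Big).
\end{align*}
The first coefficient is positive because $\mu\le c_0/\twonorm{\a}$ and $v_1^2\le 4\twonorm{\a}$. The sign of the $\sin\theta$ term needs care. When $\twonorm{\w_1}$ is tiny this coefficient could become negative, so we must check that $\mu v_2\twonorm{\w_2}\sin\theta/\twonorm{\w_1}$ stays controlled. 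Since $\sin\theta\le\sin(\theta_1+\theta_2)$ is small, a possible fix is to absorb this term into the second vector via $\twonorm{\w_2}\sin\theta\,\dirw_1$; that vector is aligned with $\w_1$, hence lies in the cone. We then need to show the second vector lies in the cone of half-angle $c_4$ about $\a$. Its deviation from $\a$ is at most
\begin{align*}
\tfrac{2c_4}{\pi}v_2\twonorm{\w_2}+\twonorm{\Delta\mathcal G_1}\le\big(\tfrac{2c_4}{\pi}(4+\tfrac14)+\tfrac1{50}\big)\twonorm{\a}\approx 0.2\twonorm{\a}.
\end{align*}
However, $-\w_2$ is itself nearly parallel to $\a$, so only its orthogonal part counts. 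That part has size at most $\frac{2c_4}{\pi}v_2\twonorm{\w_2}\sin c_4$, which is tiny. The total deviation ratio is then about $0.02+0.02<\tan c_4\approx0.158$. Convexity of the cone finishes the argument.

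The main obstacle is the $\dirw_1$ cross-term coefficient and its sign when $\twonorm{\w_1}$ is small. The other delicate point is the parallel component $-\frac{v_2(\pi-\theta)}{\pi}\w_2$: it adds to $\a$ constructively, so we must check that its coefficient keeps the combined vector's $\a$-component positive. The claim for $\theta_2$ follows identically.
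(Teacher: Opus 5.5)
Your overall route is the same as the paper's: write $\w_1^{(\tau+1)}$ as a nonnegative combination of $\dirw_1$ and a fixed vector, then use convexity of the cone of half-angle $c_4$ around $\a$. However, the cross term in your gradient has the wrong sign, and this invalidates the decisive estimate. The network is $v_1\phi(\w_1^T\x)-v_2\phi(\w_2^T\x)$, so the $\w_2$-contribution to $\nabla_{\w_1}\mathcal{L}$ is $-\frac{v_1v_2}{2\pi}\big((\pi-\theta)\w_2+\twonorm{\w_2}\sin\theta\,\dirw_1\big)$. Hence
\[
\w_1^{(\tau+1)}=\Big(\big(1-\tfrac{\mu v_1^2}{2}\big)\twonorm{\w_1}+\tfrac{\mu v_1v_2}{2\pi}\twonorm{\w_2}\sin\theta\Big)\dirw_1+\tfrac{\mu v_1}{2}\,\vct{q},\qquad \vct{q}:=\a+\tfrac{\pi-\theta}{\pi}v_2\w_2-\Delta\mathcal{G}_1 .
\]
So what you call the main obstacle does not exist. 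The $\sin\theta$ term enters with a plus sign, and the $\dirw_1$ coefficient is nonnegative once $\mu v_1^2\le 2$. Your proposed repair would not have worked under your own convention anyway: a vector along $\dirw_1$ that is \emph{subtracted} is not in the cone. Its size, $\frac{\sin 2c_4}{\pi}v_2\twonorm{\w_2}\approx 0.4\twonorm{\a}$, is also not negligible.

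The real difficulty is the term you dismissed as constructive. Because $\w_2$ lies within $c_4$ of $-\a$, the term $\frac{\pi-\theta}{\pi}v_2\w_2$ points \emph{against} $\a$. It can lower the $\a$-component of $\vct{q}$ by up to $\frac{2c_4}{\pi}c_2\sqrt{c_2^2+\gamma}\,\twonorm{\a}\approx 0.41\twonorm{\a}$, so you cannot keep only its orthogonal part.

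Your arithmetic also understates the sizes. $\frac{2c_4}{\pi}\cdot 4.25\approx 0.43$, not $0.2$, and the orthogonal part is about $0.065\twonorm{\a}$, not $0.02\twonorm{\a}$.

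The correct check runs as follows. The $\a$-component of $\vct{q}$ is at least $(1-0.413-0.02)\twonorm{\a}\approx0.57\twonorm{\a}$. Its component orthogonal to $\a$ is at most $(0.413\sin c_4+0.02)\twonorm{\a}\approx0.085\twonorm{\a}$. Together these give $\tan\angle(\vct{q},\a)\lesssim 0.149<\tan(\pi/20)\approx0.158$. The margin is thin, which is exactly why the paper's sufficient condition subtracts $\frac{2c_4c_2\sqrt{c_2^2+\gamma}}{\pi}\twonorm{\a}$ from the $\a$-part.

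As written, your cone-membership check is done on the wrong vector and relies on a margin that is not there. The step that actually carries the lemma is therefore missing until it is redone with the correct sign.
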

This lemma is proven in Section \ref{apx:angle_stay_small}.
The second result ensures the norms remain bounded.
\begin{lemma}[Norms remain bounded] \label{lemma:norm_control}
Assume that $\mu \le \frac{c_0}{\twonorm{\a}}$. For any $0\le \beta \le \frac{1}{4}$ and iteration $\tau$ such that $\beta \sqrt{\twonorm{\a}}< v_1^{(\tau)}, v_2^{(\tau)} \le c_2\sqrt{\twonorm{\a}}$, $\theta_1^{(\tau)},\theta_2^{(\tau)} \le c_4$, $\left|b_1^{(\tau)}\right|, \left|b_2^{(\tau)}\right| \le \gamma\twonorm{\a}$ and $\twonorm{\Delta\mathcal{G}_1^{(\tau)}}, \twonorm{\Delta\mathcal{G}_2^{(\tau)}} \le c_5\twonorm{\a}$, we have 
\begin{align*}
    \beta \sqrt{\twonorm{\a}} < v_1^{(\tau+1)}, v_2^{(\tau+1)} \le c_2\sqrt{\twonorm{\a}}.
\end{align*}
Here, we set the constants as $c_0 \le \frac{4}{25}, c_2 = 2, c_5 = \frac{1}{50}, c_4 = \frac{\pi}{20}, \gamma = \frac{1}{4}$.
\end{lemma}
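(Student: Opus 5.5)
Our approach is to write out the one-step update for $v_1$ (the case of $v_2$ is symmetric under $\a\mapsto-\a$, $\w_1\leftrightarrow\w_2$) and control it using the explicit population gradient.

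\textbf{Population gradient.} For ReLU with Gaussian inputs, $\E[\phi(\w^T\x)\x^T]\a=\frac12\w^T\a$. Writing $\w_1=r_1\dirw_1$ with $\dirw_1$ the unit direction of $\w_1$,
\begin{align*}
\nabla_{v_1}\calL=v_1\E[\phi(\w_1^T\x)^2]-v_2\E[\phi(\w_1^T\x)\phi(\w_2^T\x)]-\tfrac12\w_1^T\a .
\end{align*}
Here $\E[\phi(\w_1^T\x)^2]=r_1^2/2$. The cross term is $\frac{r_1r_2}{2\pi}(\sin\theta+(\pi-\theta)\cos\theta)$. Since $\theta_1,\theta_2\le c_4=\pi/20$, the angle $\theta$ is within $\pi/10$ of $\pi$, so this cross term is tiny: $O((\pi-\theta)^3)\,r_1r_2$. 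The empirical derivative $\nabla_{v_1}\widehat\calL$ differs from it by $\frac{1}{2}\w_1^T\Delta\mathcal G_1$, because $\nabla_{\w_1}\widehat\calL=v_1\cdot(\text{same residual-weighted sum})$. So the deviation is at most $\frac{1}{2}r_1c_5\twonorm{\a}$ in magnitude. Hence
\begin{align*}
v_1^{+}=v_1-\mu\Big(\tfrac12 v_1r_1^2-\tfrac12 r_1\twonorm{\a}\cos\theta_1-\epsilon\Big),\qquad |\epsilon|\le \tfrac{c_5}{2}r_1\twonorm{\a}+O((\pi-\theta)^3)v_2r_1r_2 .
\end{align*}

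\textbf{Converting $r_i$ to $v_i$.} We use the imbalance bound $r_i^2=v_i^2+b_i$, with $|b_i|\le\frac14\twonorm{\a}$. Together with $v_i\le 2\sqrt{\twonorm{\a}}$ this gives $r_i\le \frac{\sqrt{17}}{2}\sqrt{\twonorm{\a}}$. When $v_1>\beta\sqrt{\twonorm{\a}}$ we also get a lower bound $r_1\ge\sqrt{(v_1^2-\frac14\twonorm{\a})_+}$. That lower bound may vanish, so near the bottom we argue differently, as in the lower-bound step below.

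\textbf{Upper bound.} Consider two cases.
\begin{itemize}
\item If $v_1\ge\frac32\sqrt{\twonorm{\a}}$, then $r_1^2\ge v_1^2-\frac14\twonorm{\a}\ge 2\twonorm{\a}$. The bracket becomes
\[
\tfrac12 r_1\big(v_1r_1-\twonorm{\a}\cos\theta_1\big)-\epsilon\ \ge\ \tfrac12 r_1\twonorm{\a}\big(\tfrac32\sqrt2-1-c_5\big)-(\text{tiny})>0,
\]
so $v_1^+\le v_1\le 2\sqrt{\twonorm{\a}}$.
\item If $v_1<\frac32\sqrt{\twonorm{\a}}$, then $v_1^+\le v_1+\mu\big(\frac12 r_1\twonorm{\a}(1+c_5)+|\text{cross}|\big)$. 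Using $\mu\le\frac{c_0}{\twonorm{\a}}$ with $c_0\le 4/25$, the increment is at most about $0.16\cdot\frac12\cdot 2.1\cdot1.02\sqrt{\twonorm{\a}}<\frac12\sqrt{\twonorm{\a}}$, so again $v_1^+\le 2\sqrt{\twonorm{\a}}$.
\end{itemize}

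\textbf{Lower bound.} Factor the update as
\[
v_1^+=v_1\big(1-\tfrac{\mu}{2}r_1^2\big)+\mu\big(\tfrac12 r_1\twonorm{\a}\cos\theta_1+\epsilon\big).
\]
The multiplicative factor is at least $1-\frac{c_0}{2}\cdot\frac{17}{4}>0$. The additive term is nonnegative because $\cos\theta_1\ge\cos(\pi/20)\approx0.988>c_5+\text{tiny}$. We then split into two cases.
\begin{itemize}
\item If $v_1\le\frac12\sqrt{\twonorm{\a}}$, then $v_1r_1\le\frac12\twonorm{\a}\cdot 1.03<\twonorm{\a}(\cos\theta_1-c_5)$. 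The bracket is therefore negative, so $v_1^+> v_1>\beta\sqrt{\twonorm{\a}}$.
\item If $v_1>\frac12\sqrt{\twonorm{\a}}$, the factor bound gives $v_1^+\ge v_1(1-\frac{17c_0}{8})\ge \frac12\cdot 0.66\sqrt{\twonorm{\a}}>\frac14\sqrt{\twonorm{\a}}\ge\beta\sqrt{\twonorm{\a}}$.
\end{itemize}

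\textbf{Main obstacle.} The delicate step is bounding the empirical error and the cross term uniformly in the regime where $r_1$ is small relative to $v_1$, where the imbalance is negative. We handle this through the factorization above, in which every error term carries a factor $r_1$. The cross term needs $v_2r_2\cdot(\pi-\theta)^3$ small; with $\pi-\theta\le\pi/10$ the factor $(\pi-\theta)^3/(2\pi)\approx 5\times10^{-3}$ makes it negligible. Checking that the numerical constants close in each case is routine but tedious.
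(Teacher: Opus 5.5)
Your proof is correct and follows essentially the same route as the paper. Both use the explicit formula for $\nabla_{v_1}\widehat{\calL}$, with the empirical error entering as $\frac12\w_1^T\Delta\mathcal{G}_1$ via $\w_1^T\nabla_{\w_1}\widehat{\calL}=v_1\nabla_{v_1}\widehat{\calL}$, the imbalance bound to trade $\twonorm{\w_1}$ for $v_1$, and a two-case split for each of the upper and lower bounds; only your case thresholds and your handling of the small cross term differ.

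In the large-$v_1$ lower-bound case, your bound $v_1^{(\tau+1)}\ge v_1\rbr{1-\frac{\mu}{2}\twonorm{\w_1}^2}$ with $\twonorm{\w_1}^2\le\frac{17}{4}\twonorm{\a}$ is in fact the right way to close the argument. The paper instead asserts $\twonorm{\w_1}<\sqrt{\twonorm{\a}}$ there, which does not follow from $v_1\twonorm{\w_1}>\frac12\twonorm{\a}$ and the imbalance bound.
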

This lemma is proven in Section \ref{apx:norm_control}.

\subsection{Overview of \emph{Alignment} Phase}\label{phaseI}
The primary objective of Phase 1 is to demonstrate that $\w_1$ becomes approximately aligned with $\a$ (and $\w_2$  with $-\a$) within a constant number of steps, which is crucial for Phases 2 and 3. By symmetry, we focus on $\w_1$.

Our key observation is that the gradient update is dominated by the signal direction. Specifically, the update can be decomposed as:
\begin{align*}
    \w_1^{(\tau +1)} = \w_1^{(\tau)} + \mu \left(\frac{v_1}{2}\boldsymbol{a} + \vct{\zeta}^{(\tau)}\right).
\end{align*}
where the remainder term $\vct{\zeta}^{(\tau)}$ consists of terms involving the weights and the empirical noise. Since we use small initialization, these weight-dependent terms are much smaller than the signal term. This implies that the projection of $\w_1$ onto the signal direction $\a$ grows much faster than its projection onto the orthogonal subspace. Specifically, we have the following lemma proven in Section \ref{apx:angle_alignment}:
\begin{lemma}[Angle alignment] \label{lemma:angle_alignment} 
Assume that $\mu \le \frac{c_0}{\twonorm{\a}}$, $\sigma\le\sigma_0 \sqrt{\twonorm{\a}}$. After $T_1 = \lceil\frac{c_9}{\mu \twonorm{\a}}\rceil$ iterations, it holds that
\begin{align*}
    \theta_1^{(T_1)},\theta_2^{(T_1)} \le c_4
\end{align*}
with probability at least $1- Ce^{-cd}$.
Moreover, this alignment is achieved while maintaining that:
\begin{align*}
    c_3 \sigma \le v_1^{(T_1)},v_2^{(T_1)} \le & c_2 \sqrt{\twonorm{\a}}, \quad \left|b_1^{(T_1)}\right|, \left|b_2^{(T_1)}\right| \le c_{10} \twonorm{\a}.   
\end{align*}
Here, we set the constants as $c_0 \le 1, c_4= \frac{\pi}{20}, c_9 = \frac{64}{\tan c_4}, c_2 = 2, c_3 = \frac{1}{4}$. With $\alpha = \frac{65}{\tan c_4}$, we have $c_{10} = \frac{1}{4e^{2\alpha}}, \sigma_0 = \frac{1}{8e^{2\alpha}}$.
\end{lemma}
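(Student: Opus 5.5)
We track the two neurons for $T_1=\lceil c_9/(\mu\|\a\|)\rceil$ steps, during which every weight stays at scale $O(\sigma)$ up to a constant factor $e^{O(\alpha)}$. In that regime the update reads $\w_1^{(\tau+1)}=\w_1^{(\tau)}+\mu(\tfrac{v_1}{2}\a+\vct{\zeta}^{(\tau)})$. By symmetry we treat $\w_1$ only.

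\textbf{Step 1: explicit population gradient.} For Gaussian inputs, $\E[\x\,\1\{\w^T\x>0\}\,\a^T\x]=\tfrac12\a$. The ReLU--ReLU cross terms are given by the arc-cosine kernel formulas. This yields
\[
\nabla_{\w_1}\calL=-\tfrac{v_1}{2}\a+v_1\big(\tfrac{v_1}{2}\w_1-v_2\,g(\w_1,\w_2)\big),
\]
where $\|g\|\lesssim\|\w_2\|$. Similarly, $-\nabla_{v_1}\calL=\tfrac{1}{2}\a^T\w_1+O(\|\w\|^2 v)$.

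The bookkeeping is then as follows.
\begin{itemize}
\item $\vct{\zeta}^{(\tau)}$ collects the quadratic terms, of size $O(v\|\w\|)$, together with the concentration error $\tfrac{v_1}{2}\Delta\mathcal{G}_1$.
\item I invoke the uniform concentration of Section~\ref{unifconcen}: with probability $1-Ce^{-cd}$, for $n\ge Cd$, $\|\Delta\mathcal{G}_i\|\le\epsilon\|\a\|$ uniformly over the bounded region visited.
\item The noise is therefore at most $\epsilon v_1\|\a\|$, and can be absorbed relative to the signal $\tfrac{v_1}{2}\|\a\|$.
\end{itemize}

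\textbf{Step 2: keep the scales small and $v_i$ positive.} I set up an induction hypothesis $H_\tau$ for $\tau\le T_1$:
\[
v_i^{(\tau)},\ \|\w_i^{(\tau)}\|\le 2e^{\alpha}\sigma\sqrt{\|\a\|}/\sqrt{\|\a\|}\cdot(\text{appropriate power}),
\]
i.e.\ the weights grow by at most a factor $e^{\alpha}$, together with $v_i^{(\tau)}\ge c_3\sigma$.

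\emph{Upper bound.} The joint update satisfies $\|\w_1^{(\tau+1)}\|+v_1^{(\tau+1)}\le(1+\mu\|\a\|(\tfrac12+\epsilon)+O(\mu\sigma^2))(\|\w_1\|+v_1)$. Over $T_1$ steps the growth factor is at most $e^{c_9(1/2+\epsilon)+o(1)}\le e^{\alpha}$. Since $\sigma\le\sigma_0\sqrt{\|\a\|}$ with $\sigma_0=1/(8e^{2\alpha})$, everything stays far below $c_2\sqrt{\|\a\|}$ and the quadratic terms stay negligible.

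\emph{Lower bound on $v_1$.} The update of $v_1$ is $\tfrac{\mu}{2}\a^T\w_1$ plus smaller terms, so $v_1$ can decrease only while $\a^T\w_1<0$. The same computation shows $\a^T\w_1$ increases by at least $\mu v_1\|\a\|^2(\tfrac12-\epsilon)-O(\mu\sigma^3)$ per step. Hence the total decrease of $v_1$ is bounded by $O(\mu\|\a\|\sigma\cdot\text{(steps until }\a^T\w_1\ge0))$. At initialization, with probability $1-e^{-cd}$, the $\chi_d$ law gives $v_1^{(0)}\ge\sigma/2$ and $|\a^T\w_1^{(0)}|\lesssim\sigma\|\a\|$. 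Only $O(1)$ rescaled time is spent with $\a^T\w_1<0$, so $v_1\ge\sigma/4=c_3\sigma$.

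\textbf{Step 3: angle.} Write $\w_1=p\,\a/\|\a\|+\vct{q}$.
\begin{itemize}
\item The orthogonal part satisfies $\|\vct{q}^{(\tau)}\|\le\|\vct{q}^{(0)}\|+\mu\sum_\tau O(\epsilon\|\a\|+\sigma^2)v_1\le(1+\delta)\sigma$ for a small $\delta$.
\item The parallel part satisfies $p^{(\tau+1)}\ge p^{(\tau)}+\mu\|\a\|v_1(\tfrac12-2\epsilon)$, with $v_1\ge c_3\sigma$. After $T_1$ steps, $p\ge -\sigma+c_9c_3\sigma/3\ge \sigma(1+\delta)/\tan c_4$, using $c_9=64/\tan c_4$.
\end{itemize}
Therefore $\tan\theta_1\le\tan c_4$.

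\textbf{Step 4: imbalance.} At initialization, $b_i^{(0)}=\|\w_i^{(0)}\|^2-(v_i^{(0)})^2$ is a difference of two independent $\sigma^2\chi^2_d/d$ variables, hence $O(\sigma^2)$. I would apply Lemma~\ref{lemma:imbalance_bound} rather than crude bounds; the resulting drift $c_6\mu^2\,O(\sigma^2)\|\a\|^2$ summed over $T_1$ steps is $O(\mu\|\a\|\sigma^2)$. Therefore $|b_i^{(T_1)}|\le O(\sigma^2)\le c_{10}\|\a\|$, given $\sigma_0^2\le c_{10}$ with $c_{10}=1/(4e^{2\alpha})$.

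\textbf{Main obstacle.} I expect Step 2's lower bound on $v_1$ to be the hardest part. It needs a coupled argument showing that $\a^T\w_1$ turns positive before $v_1$ shrinks too much, with the empirical error handled uniformly. That error must be of relative size $\epsilon$, scaling with $v_1$, rather than an additive error, since additive error could flip signs at scale $\sigma$. If this coupling is too tight, I would first try a two-dimensional comparison system $(p,v_1)$ with linearized dynamics $\dot p=\tfrac{v_1}{2}\|\a\|$, $\dot v_1=\tfrac{p}{2}\|\a\|$. This is a hyperbolic rotation, so $v_1^2-p^2$ is preserved. Since $v_1^{(0)}\ge\sigma/2$ dominates $|p^{(0)}|$ up to constants only with some probability, I would use $\|\w_1\|\approx\sigma$ from $\chi_d$ concentration and the fact that $|p^{(0)}|\le\sigma/4$ except with probability $e^{-cd}$, which makes $v_1^2-p^2$ bounded below.
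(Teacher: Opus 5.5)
Your plan follows the same scheme as the paper's proof. The paper normalizes $\|\a\|=1$ and inducts over $\tau\le T_1$ on $v_i^{(\tau)},\|\w_i^{(\tau)}\|\le 2(1+\mu)^{\tau}v_i^{(0)}\le 2e^{\alpha}v_i^{(0)}$. It shows that $\a^T\w_1$ gains between $\frac{\mu}{4}v_1$ and $\frac{3\mu}{4}v_1$ per step, while the part orthogonal to $\a$ gains at most $\frac{\mu v_1}{2e^{2\alpha}}$. It then reads off $\theta_1^{(T_1)}\le c_4$ from the resulting linear-in-$T_1$ bounds, as in your Step~3.

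\textbf{Where Step~2 fails as written.} The lower bound on $v_1$ does not close. With only $|\a^T\w_1^{(0)}|\lesssim\sigma\|\a\|$, the claim that $\a^T\w_1<0$ for only $O(1)$ rescaled time is false. In your linearized system, $p^{(0)}=-v_1^{(0)}$ gives $v_1\propto e^{-s}$ with $p<0$ for all time. If $p^{(0)}<-v_1^{(0)}$, then $v_1$ is driven through zero and the sign pattern is lost.

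So the fact in your last paragraph is not a fallback but the essential input. Since $\a^T\w_1^{(0)}\sim\mathcal{N}(0,\sigma^2\|\a\|^2/d)$, we have $|\a^T\w_1^{(0)}|\le\eta\sigma\|\a\|$ for any fixed $\eta>0$ with probability $1-2e^{-\eta^2 d/2}$. The paper uses exactly this with $\eta=\hat c_4=\frac{1}{2\alpha}$.

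\textbf{The noise term in the $v_1$-update.} The claim that $v_1$ can decrease only while $\a^T\w_1<0$ is not literally true. The $v_1$-update contains $-\frac{\mu}{2}\w_1^T\Delta\mathcal{G}_1$, of size up to $\frac{\mu}{2}\epsilon\|\a\|\|\w_1\|$. This is \emph{not} small relative to the signal $\frac{\mu}{2}\a^T\w_1$ while $\w_1$ is nearly orthogonal to $\a$, as it is at initialization. Its cumulative effect must be bounded separately, e.g.\ crudely by roughly $\frac{c_9}{2}\epsilon\max_\tau\|\w_1^{(\tau)}\|$.

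That bound forces $\epsilon$ to be exponentially small in $\alpha$. The same holds for your additive bound $\|\vct{q}^{(\tau)}\|\le(1+\delta)\sigma$ in Step~3, because $\sum_\tau\mu\|\a\|v_1^{(\tau)}$ can be of order $e^{c_9/2}\sigma$. The paper indeed takes $\|\Delta\mathcal{G}_1\|\le\frac{\|\a\|}{4e^{2\alpha}}$.

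\textbf{Comparison of the two routes to $v_1\ge c_3\sigma$.} With the sharper initialization in hand, the paper's route is simpler than yours. The increment of $\a^T\w_1$ is at least $\frac{\mu}{4}v_1>0$, so $\a^T\w_1$ is nondecreasing and stays above $-\hat c_4 v_1^{(0)}$. Hence the per-step loss of $v_1$ is uniformly at most $\frac{\mu}{2}\bigl(\hat c_4+4\hat c_3/\hat c_5^2\bigr)v_1^{(0)}\le\frac{v_1^{(0)}}{2T_1}$, with $\hat c_3=\hat c_5=2e^{\alpha}$. This gives $v_1\ge\frac12 v_1^{(0)}\ge\frac{\sigma}{4}$ on the whole window, without locating the sign change of $\a^T\w_1$. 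It only needs $\hat c_4$ small relative to $1/(\mu T_1)\approx\tan c_4/64$, which the Gaussian smallness of $\a^T\w_1^{(0)}$ provides.

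Your hyperbolic-rotation argument is finer and tolerates a constant threshold such as $|p^{(0)}|\le\sigma/4$. Note, however, that in discrete time $v_1^2-p^2$ is multiplied by $1-h^2$ at every step, with $h=\mu\|\a\|/2\le\frac12$. It is therefore preserved only up to a constant factor over the $O(1/h)$ steps of the negative phase. That still leaves $v_1\gtrsim 0.37\sigma$ when $v_1^{(0)}\ge\sigma/2$, so the argument closes.

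\textbf{Step~4.} Your $O(\sigma^2)$ hides the factor $e^{c_9}$ coming from the exponential growth of $v_1$ and $\|\w_1\|$. This is absorbed because $\sigma_0^2=\frac{1}{64e^{4\alpha}}$. In fact the crude bound $|b_1|\le\max\bigl(v_1^2,\|\w_1\|^2\bigr)\le(4e^{\alpha}\sigma)^2\le c_{10}\|\a\|$, which is what the paper uses, already suffices without Lemma~\ref{lemma:imbalance_bound}.
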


\subsection{Overview of \emph{Growth} Phase}\label{phaseII}
In Phase~3, we show that the effective weights $\text{diag}\rbr{\begin{bmatrix} v_1 \\ v_2 \end{bmatrix}}\W$ converge to the planted solution at an exponential rate. A key ingredient is a Polyak--Lojasiewicz (PL) inequality for the population loss (Lemma~\ref{lemma:pl_ineq}), which lower-bounds the squared gradient norm in terms of the suboptimality gap. Importantly, the PL constant depends on the magnitudes of $v_1$ and $v_2$.

At the end of Phase 1, $|v_1|$ and $|v_2|$ remain at their initialization scale, so the PL inequality only yields a weak contraction and therefore a slow convergence rate. The main goal of Phase~2 is to grow $v_1$ and $v_2$ to a sufficiently large scale, thereby strengthening the PL constant and enabling fast linear convergence in Phase~3.

\begin{lemma}[Norm growth] \label{lemma:norm_growth} 
Assume that $\mu \le \frac{c_0}{\twonorm{\a}\ln\rbr{\frac{\sqrt{\twonorm{\a}}}{\sigma}}}$. After $T_2 = \lceil\frac{c_8}{\mu \twonorm{\a}}\ln\rbr{\frac{\sqrt{\twonorm{\a}}}{\sigma}}\rceil$ iterations, we have
\begin{align*}
    v_1^{(T_1 + T_2)},v_2^{(T_1 + T_2)} \ge & c_7 \sqrt{\twonorm{\a}}, \quad \left|b_1^{(T_1+T_2)}\right|, \left|b_2^{(T_1+T_2)}\right| \le c_{11} \twonorm{\a}.
\end{align*}
Here, we set the constants as $c_0 \le \frac{1}{10^8}, c_7= \frac{1}{4}, c_8 = 32, c_{11} = \frac{1}{50}$.
\end{lemma}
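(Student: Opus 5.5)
We will run an induction over the window $\tau\in[T_1,T_1+T_2]$. The starting point is the output of Lemma~\ref{lemma:angle_alignment}: $\theta_1^{(T_1)},\theta_2^{(T_1)}\le c_4$, $c_3\sigma\le v_i^{(T_1)}\le c_2\sqrt{\twonorm{\a}}$ and $|b_i^{(T_1)}|\le c_{10}\twonorm{\a}$.

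The inductive invariant has four parts:
\begin{enumerate}
\item the angles satisfy $\theta_i^{(\tau)}\le c_4$;
\item the output weights satisfy $0<v_i^{(\tau)}\le c_2\sqrt{\twonorm{\a}}$;
\item the imbalance satisfies $|b_i^{(\tau)}|\le c_{11}\twonorm{\a}$;
\item the gradient noise satisfies $\twonorm{\Delta\mathcal{G}_i^{(\tau)}}\le c_5\twonorm{\a}$.
\end{enumerate}
Item 4 comes from the trajectory-level uniform concentration of Section~\ref{unifconcen}, valid on the good event once $n\ge Cd$. Given the invariant at step $\tau$:
\begin{itemize}
\item Lemma~\ref{lemma:angle_stay_small} propagates the angle bound.
\item Lemma~\ref{lemma:norm_control}, applied with $\beta=0$ (or any $\beta\le v_i^{(\tau)}/\sqrt{\twonorm{\a}}$), propagates the upper and positivity bounds on $v_i$.
\item For the imbalance, Lemma~\ref{lemma:imbalance_bound} gives a per-step drift of at most $c_6\mu^2(v_i^2+\twonorm{\w_i}^2)(\twonorm{v_1\w_1-\a}^2+\twonorm{v_2\w_2+\a}^2)$. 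Under the invariant this is $O(\mu^2\twonorm{\a}^3)$, since $\twonorm{\w_i}^2\le v_i^2+|b_i|=O(\twonorm{\a})$. Summed over $T_2$ steps, the accumulated drift is $O(\mu\twonorm{\a}^2\cdot c_8\ln(\sqrt{\twonorm{\a}}/\sigma))$. The hypothesis $\mu\le c_0/(\twonorm{\a}\ln(\cdot))$ with $c_0\le 10^{-8}$ makes this at most $(c_{11}-c_{10})\twonorm{\a}$, and $c_{10}$ is tiny, so item 3 survives. This is exactly why the log factor appears in the step size.
\end{itemize}

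The core is a lower-bound recursion showing multiplicative growth of $v_i$ while it is below $c_7\sqrt{\twonorm{\a}}$. The output-weight gradient is $\nabla_{v_1}\mathcal{L}=\E[\phi(\w_1^T\x)(v_1\phi(\w_1^T\x)-v_2\phi(\w_2^T\x)-\a^T\x)]$. We will use the Gaussian identities $\E[\phi(\w^T\x)\a^T\x]=\tfrac12\w^T\a$, $\E[\phi(\w^T\x)^2]=\tfrac12\twonorm{\w}^2$, and the arc-cosine kernel for the cross term. With $\theta\ge\pi-2c_4$ (the two neurons nearly antipodal), the cross term is $O(c_4^3)\twonorm{\w_1}\twonorm{\w_2}$. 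This gives
\[
v_1^{(\tau+1)}\ge v_1^{(\tau)}+\tfrac{\mu}{2}\twonorm{\w_1}\Bigl(\twonorm{\a}\cos c_4 - v_1\twonorm{\w_1}-O(c_4^3)v_2\twonorm{\w_2}\Bigr)-\mu\,\tfrac{v_1}{2}\twonorm{\Delta\mathcal{G}}\text{-type error},
\]
where the last term is controlled by the analogous empirical/population bound for the $v$-gradient. Using the imbalance to replace $\twonorm{\w_1}$ by $\sqrt{v_1^2+b_1}$, the bracket is $\ge\twonorm{\a}/4$ as long as both $v_i\le c_7\sqrt{\twonorm{\a}}$. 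If $v_1^2\gtrsim|b_1|$, then $\twonorm{\w_1}\ge v_1/2$, giving $v_1^{(\tau+1)}\ge(1+\mu\twonorm{\a}/16)v_1^{(\tau)}$.

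The main obstacle is the regime where $v_i$ is still of order $\sigma$, which may be much smaller than $\sqrt{|b_i|}$, so the multiplicative form is not obviously available. There we instead argue jointly on the pair $(v_1,\ \w_1^T\a/\twonorm{\a})$. The signal parts of the updates are roughly $v\leftarrow v+\tfrac{\mu}{2}\twonorm{\a}u$ and $u\leftarrow u+\tfrac{\mu}{2}\twonorm{\a}v$, a $2\times2$ linear system with top eigenvalue $1+\mu\twonorm{\a}/2$. We will show that the quantity $v_1+u_1$ grows geometrically at rate $(1+c\mu\twonorm{\a})$, and that both coordinates are positive, using the angle bound for $u_1$.

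After at most $\frac{c_8}{\mu\twonorm{\a}}\ln(\sqrt{\twonorm{\a}}/\sigma)$ steps this exceeds $c_7\sqrt{\twonorm{\a}}$, starting from $v_i^{(T_1)}\ge c_3\sigma$. Once a $v_i$ crosses $c_7\sqrt{\twonorm{\a}}$, we show it stays above $c_7\sqrt{\twonorm{\a}}$ by applying Lemma~\ref{lemma:norm_control} with $\beta=c_7$. This requires that the crossing step itself does not overshoot downward, which follows from $\mu$ small. The same argument handles $v_2$ by symmetry, completing the proof.
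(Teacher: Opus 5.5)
Your proposal is correct and follows essentially the paper's proof. It uses the same invariant induction via Lemmas~\ref{lemma:angle_stay_small}, \ref{lemma:norm_control} (with $\beta=0$) and \ref{lemma:imbalance_bound}, and the same key idea of geometric growth of a sum of the two layers' magnitudes. The paper differs in two small ways: it tracks $v_1+\twonorm{\w_1}$ and lower-bounds $\twonorm{\w_1^{(\tau+1)}}$ through the identity $\w_1^T\nabla_{\w_1}\widehat{\mathcal{L}}=v_1\nabla_{v_1}\widehat{\mathcal{L}}$, so the single scalar $\nabla_{v_1}\widehat{\mathcal{L}}$ drives both updates and no $\a$-projection of cross-neuron terms is needed. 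It also places the threshold on the product $v_1\twonorm{\w_1}\ge\frac14\twonorm{\a}$, whereas you track $v_1+\w_1^T\a/\twonorm{\a}$ and get persistence from Lemma~\ref{lemma:norm_control} with $\beta=c_7$.

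One step you leave implicit: growth of $v_1+u_1$ alone does not yield $v_1\ge c_7\sqrt{\twonorm{\a}}$, so you must invoke the imbalance invariant. While $v_1<c_7\sqrt{\twonorm{\a}}$ you have $u_1\le\twonorm{\w_1}\le\sqrt{c_7^2+c_{11}}\sqrt{\twonorm{\a}}$, so the sum stays bounded and the geometric growth forces $v_1$ to cross within the time budget. The paper makes the analogous conversion from its product bound.
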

This lemma is proven in Section \ref{apx:norm_growth}.

\subsection{Overview of the local \emph{Refinement} Phase}\label{phaseIII}
In the final \emph{local refinement} phase, we show that the effective weights
$\text{diag}\rbr{\begin{bmatrix}
        v_1^{(\tau)} \\ v_2^{(\tau)}
    \end{bmatrix}}\W^{\rbr{\tau}}$ converge to the planted solution $\W^*=[\a,-\a]^T$.
Rather than tracking parameters directly, we first prove that along empirical gradient descent
the \emph{population} loss decreases rapidly, and then convert this decay into the stated
parameter convergence.
The full argument (Proof of Theorem~\ref{thm:main} in Section \ref{pfmain}) is technical: it couples a
population-level gradient-descent analysis with trajectory-uniform concentration bounds (next section).
For clarity, we sketch only the population argument here.
This population reduction is essential because the empirical loss is not smooth and as discussed
below, even the population loss is not uniformly smooth.
The proof proceeds in two parts.
\\
\\
\noindent \textbf{Part 1 (PL inequality)} 
In this step we will show the following PL inequality proven in Section \ref{apx:proof_pl_ineq}.
\begin{lemma}[PL Inequality for the population loss] \label{lemma:pl_ineq}
 For $v_1, v_2 > 0 $,
 \begin{align*}
     \twonorm{\nabla_{\w_1} \calL\rbr{\v,\W}}^2 + \twonorm{\nabla_{\w_2} \calL\rbr{\v,\W}}^2 \geq \alpha \min\rbr{v_1^2, v_2^2} \calL\rbr{\v,\W}
 \end{align*} holds with $\alpha=0.05$ as long as $\theta > \frac{\pi}{2}$.
\end{lemma}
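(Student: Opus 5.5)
The plan is to pass to the effective weights $\vct{u}_1:=v_1\w_1$ and $\vct{u}_2:=v_2\w_2$, and to work with the residual
\[
r(\x):=\phi(\vct{u}_1^T\x)-\phi(\vct{u}_2^T\x)-\a^T\x .
\]
Since $v_i>0$, we have $1\{\w_i^T\x>0\}=1\{\vct{u}_i^T\x>0\}$. Writing $S_i=\{\x:\vct{u}_i^T\x>0\}$, the gradients take the form
\[
\nabla_{\w_1}\calL=v_1\vct{g}_1,\quad \nabla_{\w_2}\calL=-v_2\vct{g}_2,\quad \vct{g}_i:=\E[r(\x)1_{S_i}(\x)\x].
\]
Hence the left-hand side is at least $\min(v_1^2,v_2^2)\rbr{\twonorm{\vct{g}_1}^2+\twonorm{\vct{g}_2}^2}$. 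It therefore suffices to prove the $v$-free inequality
\[
\twonorm{\vct{g}_1}^2+\twonorm{\vct{g}_2}^2\ge \alpha\,\calL,\qquad \calL=\tfrac12\E[r^2],
\]
whenever the angle between $\vct{u}_1$ and $\vct{u}_2$ exceeds $\pi/2$. Everything depends only on the at most three-dimensional span of $\a,\vct{u}_1,\vct{u}_2$, so all expectations reduce to Gaussian integrals over a 3D subspace.

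\textbf{Step 1 (test directions).} I lower bound the gradient norm by pairing it with the natural descent directions $\vct{z}_1=\vct{u}_1-\a$ and $\vct{z}_2=\vct{u}_2+\a$. Using $\phi(\vct{u}^T\x)=1_{S}\,\vct{u}^T\x$, a direct computation gives
\[
Q:=\ip{\vct{g}_1}{\vct{z}_1}-\ip{\vct{g}_2}{\vct{z}_2}=\E\big[r\,(r+\a^T\x(1-1_{S_1}-1_{S_2}))\big]=2\calL+\E\big[r\,\a^T\x\,(1_N-1_I)\big].
\]
Here $I=S_1\cap S_2$ and $N=S_1^c\cap S_2^c$. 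When $\theta>\pi/2$, both $I$ and $N$ are antipodal wedges of dihedral angle $\pi-\theta<\pi/2$. Cauchy–Schwarz then gives $Q\le \sqrt{\twonorm{\vct{g}_1}^2+\twonorm{\vct{g}_2}^2}\,\sqrt{\twonorm{\vct{z}_1}^2+\twonorm{\vct{z}_2}^2}$.

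\textbf{Step 2 (two quantitative bounds).} The claim follows from two inequalities:
\[
\text{(a)}\;\; Q\ge c_a\calL, \qquad \text{(b)}\;\; \calL\ge c_b\rbr{\twonorm{\vct{z}_1}^2+\twonorm{\vct{z}_2}^2}.
\]
Together they give $\twonorm{\vct{g}_1}^2+\twonorm{\vct{g}_2}^2\ge c_a^2c_b\,\calL$.

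Inequality (b) is a restricted strong convexity statement. It fails at the non-strict saddles of Theorem~\ref{thm:landscape}, but those all have $\vct{u}_1\parallel\vct{u}_2$ with the same orientation ($\theta=0$), and the hypothesis $\theta>\pi/2$ rules them out. To prove (b), I split $r$ on the four regions $S_1\setminus S_2$, $S_2\setminus S_1$, $I$, $N$. On $S_1\setminus S_2$, the residual equals $(\vct{u}_1-\a)^T\x$ when $\a^T\x\ge 0$ and similar expressions otherwise. I then use the fact that the two "single-neuron" regions each have angular measure at least $\pi/2$. A Gaussian restricted to a wedge of angle $\ge\pi/2$ has second-moment matrix bounded below by a constant times $\mtx{I}$ on the relevant 2D plane, and is isotropic orthogonally. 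This should give $\E[r^2]\gtrsim \twonorm{\vct{z}_1}^2+\twonorm{\vct{z}_2}^2$.

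\textbf{Step 3 (main obstacle: the wedge error term in (a)).} On $N$ we have $r=-\a^T\x$, so the $N$-part of the error is $-\E[(\a^T\x)^21_N]$, which is negative. On $I$ the contribution is $-\E[(\vct{u}_1-\vct{u}_2-\a)^T\x\,\a^T\x\,1_I]$. I plan to absorb both using the fact that $\E[(\a^T\x)^21_N]$ already appears inside $2\calL$ as the term $\E[r^2 1_N]$. This reduces the needed bound to controlling the $I$-term by a small multiple of $\E[r^21_{S_1\triangle S_2}]$ via Cauchy–Schwarz, together with the small mass of the wedge $I$.

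If this bookkeeping loses too much, I would first try reweighting the test directions, e.g.\ $\lambda\vct{z}_i$ plus a component along $\vct{u}_i$, to cancel the $N$ and $I$ terms exactly. As a last resort, I would compute $\vct{g}_1,\vct{g}_2$ in closed form using the arc-cosine kernel identities $\E[1_{S}1_{S'}\x\x^T]$, and verify directly that the ratio $(\twonorm{\vct{g}_1}^2+\twonorm{\vct{g}_2}^2)/\calL$ stays above $0.05$. That minimisation is over a low-dimensional, scale-invariant family of configurations restricted to $\theta>\pi/2$.
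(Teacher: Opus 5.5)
Your reduction to $\vct{u}_i=v_i\w_i$ is the paper's first step, and your identity for $Q$ is correct. Since $N=-I$ up to a null set, it even simplifies to $Q=2\calL-(\vct{u}_1-\vct{u}_2)^T\E[\x\x^T\1_I]\,\a$.

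\textbf{Main gap: the chain cannot reach $\alpha=0.05$.} Your main route, (a)+(b)+Cauchy--Schwarz, cannot deliver the stated constant, however carefully (a) and (b) are proved. It only yields $\alpha=c_a^2c_b$, where $c_a,c_b$ are the \emph{separate} worst-case constants over all configurations with $\theta>\pi/2$, and both are small near $\theta=\pi/2$.
\begin{itemize}
\item For (b), let $\theta\downarrow\pi/2$ with $\vct{u}_1=\vct{e}_1$, $\vct{u}_2=\vct{e}_2$, $\a=\tfrac12(\vct{e}_1-\vct{e}_2)$. Then $\twonorm{\vct{z}_1}^2+\twonorm{\vct{z}_2}^2=1$ while $\calL=\tfrac14-\tfrac1{2\pi}\approx 0.091$, so $c_b\le 0.091$.
\item For (a), let $\theta\downarrow\pi/2$ with $\vct{u}_1=s\vct{e}_1$, $\vct{u}_2=\epsilon\vct{e}_2$, $\epsilon\to 0$, and $\a=\cos\gamma\,\vct{e}_1+\sin\gamma\,\vct{e}_2$. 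Then $2\calL=\tfrac12 s^2-s\cos\gamma+1$ and $Q=2\calL-s(\tfrac14\cos\gamma+\tfrac{1}{2\pi}\sin\gamma)$. At $s=1.41$, $\gamma=0.18$ this gives $Q\approx 0.725\,\calL$, so $c_a\le 0.725$.
\end{itemize}
Hence $c_a^2c_b\lesssim 0.048<0.05$. Pairing the gradient with the single direction $(\vct{z}_1,-\vct{z}_2)$ and decoupling the worst cases of (a) and (b) loses too much. There is little slack to begin with: at $\vct{u}_1=\vct{e}_1$, $\vct{u}_2=\vct{e}_2$, $\a=0.6(\vct{e}_1-\vct{e}_2)$ the true ratio $(\twonorm{\vct{g}_1}^2+\twonorm{\vct{g}_2}^2)/\calL$ is already only about $0.083$.

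\textbf{Step 3 has a further hole.} Cancelling the $N$-term against $\E[r^2\1_N]$ removes the \emph{entire} $N$-part of $2\calL$ from $Q$. So (a) also requires the loss on $N$ to be dominated by the rest, and controlling the $I$-term does not supply this. Moreover, the ``small mass of $I$'' is unavailable near $\theta=\pi/2$, where $I$ is a quadrant of mass $\tfrac14$.

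Both points can be patched to obtain \emph{some} constant:
\begin{itemize}
\item On $S_1\setminus S_2$ the residual is exactly $\vct{z}_1^T\x$; the target is linear, so no case split on the sign of $\a^T\x$ is needed.
\item The wedge eigenvalue bound gives $\E[r^2\1_{S_1\triangle S_2}]\ge\frac{\theta-\sin\theta}{2\pi}(\twonorm{\vct{z}_1}^2+\twonorm{\vct{z}_2}^2)$.
\item Since ReLU is $1$-Lipschitz, $\calL\le\twonorm{\vct{z}_1}^2+\twonorm{\vct{z}_2}^2$, so the previous bound is at least $\frac{\theta-\sin\theta}{2\pi}\calL$.
\item Completing the square bounds the $I$-contribution below by $-\tfrac14\E[(\a^T\x)^2\1_I]$. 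This must then be shown strictly smaller than the wedge term, and the margin near $\theta=\pi/2$ is thin.
\end{itemize}
This would give a PL inequality with a small unspecified constant. That suffices for how $\alpha$ enters the proof of Theorem~\ref{thm:main}, but not for the lemma as stated.

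\textbf{Where $0.05$ actually comes from.} The value $0.05$ is reached only by your ``last resort'', which is exactly the paper's proof. At $v_1=v_2=1$, the paper
\begin{itemize}
\item writes $\twonorm{\nabla_{\w_1}\calL}^2+\twonorm{\nabla_{\w_2}\calL}^2-\alpha\calL$ in closed form as a quadratic in $\a$ and minimizes over $\a$ explicitly;
\item divides by $\twonorm{\w_2}^2$ and observes the result has the form $C_1(r^2+1)+C_2 r$ in $r=\twonorm{\w_1}/\twonorm{\w_2}$, so the worst case is $r=1$;
\item checks numerically, via a plot, that the remaining one-variable function of $\theta$ is nonnegative on $(\pi/2,\pi]$.
\end{itemize}
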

To prove this PL inequality we first show that it can be deduced by establishing the PL inequality when $v_1=v_2=1$ via a clever reduction argument. To prove the latter we define
\begin{align} \label{eq:correlation_h}
    h\rbr{\w_1,\w_2,\a} = \twonorm{\nabla_{\w_1} \calL\rbr{\W}}^2 + \twonorm{\nabla_{\w_2} \calL\rbr{\W}}^2 - \alpha \calL\rbr{\W}.
\end{align}
Note that since we set $v_1=v_2=1$ the loss is now only a function of $\w_1$ and $\w_2$. Also note that to prove the PL inequality it sufficies to show that $h$ is always positive. To do this, in our proof we show that $\Tilde{h}\rbr{\w_1,\w_2} = \min\limits_{\a}  h\rbr{\w_1,\w_2,\a}$ is always positive. The way we establish this is by showing that $\frac{1}{\|\w_2\|^2}\Tilde{h}\rbr{\w_1,\w_2}$ is only a function of $\theta$ (the angle between $\w_1$ and $\w_2$) and $\frac{\twonorm{\w_1}}{\twonorm{\w_2}}$. Since this is now only a function of two variables $\theta$ and $\frac{\twonorm{\w_1}}{\twonorm{\w_2}}$ it is easy to establish non-negativity as long as $\theta > \frac{\pi}{2}$.  The latter holds at the end of the growth phase and continues to remain large utilizing the stability of the dynamics established in Section \ref{stability} (concretely, Lemma \ref{lemma:angle_stay_small} shows the angles with planted directions remain small which implies the angle between the weight vectors remain large).

\noindent \textbf{Part 2 (Gradient smoothness)} -- While the population loss is not smooth in the entire domain, we show that in the region of the local refinement phase it is indeed smooth. Leaving the \emph{growth} phase, we have lower/upper bounds on $v_1, v_2, \twonorm{\w_1}, \twonorm{\w_2}$. Additionally, due to the stability analysis in Section \ref{stability} we continue to have lower/upper bounds on these quantities. In the next lemma we show that assuming such lower/upper bounds the population loss is indeed smooth. This lemma is proven in Section \ref{smoothnessbnd}.
\begin{lemma}[Smoothness of the population loss] \label{lemma:grad_smooth}  $\fnorm{\nabla^2 \mathcal{L}\rbr{\v, \W}} \leq L \twonorm{\a}$ holds for all $\v \in \R^{2}, \W\in\R^{2\times d}$ such that $c_1 \sqrt{\twonorm{\a}} \leq v_1, v_2, \twonorm{\w_1}, \twonorm{\w_2} \leq c_2 \sqrt{\twonorm{\a}}$ holds. Here $c_1, c_2, L$ are fixed constants.
\end{lemma}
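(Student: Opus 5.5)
The plan is to write the population loss \eqref{eq:pop_loss} in closed form using the standard Gaussian ReLU kernel identities. For $\vct{x}\sim\mathcal N(\vct 0,\mtx I_d)$ these are $\E[\phi(\w^T\x)\phi(\vct u^T\x)]=\frac{\twonorm{\w}\twonorm{\vct u}}{2\pi}\rbr{\sin\psi+(\pi-\psi)\cos\psi}$ with $\psi=\angle(\w,\vct u)$, together with $\E[\phi(\w^T\x)\,\a^T\x]=\frac12\a^T\w$. Expanding the square gives
\[
\calL=\tfrac12\Big[\tfrac{v_1^2\twonorm{\w_1}^2+v_2^2\twonorm{\w_2}^2}{2}-v_1v_2\,g(\w_1,\w_2)-v_1\a^T\w_1+v_2\a^T\w_2+\twonorm{\a}^2\Big],
\]
where $g$ is the arc-cosine kernel above. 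The quadratic and linear terms have explicit, bounded second derivatives. So all the work sits in the cross term $v_1v_2\,g(\w_1,\w_2)$.

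Next I would compute the Hessian blockwise, indexing blocks by $(v_1,v_2,\w_1,\w_2)$, and bound every block on the region $c_1\sqrt{\twonorm{\a}}\le v_i,\twonorm{\w_i}\le c_2\sqrt{\twonorm{\a}}$.
\begin{itemize}
\item The scalar blocks are $\partial^2_{v_i}\calL=\frac12\twonorm{\w_i}^2$ and $\partial_{v_1}\partial_{v_2}\calL=-\frac12 g$. Both are $O(\twonorm{\a})$ because $|g|\le\frac12\twonorm{\w_1}\twonorm{\w_2}$.
\item The mixed blocks $\partial_{v_i}\nabla_{\w_j}\calL$ are vectors. They combine $v_i\w_i$, $\a$, and $v_{3-i}\nabla_{\w_j}g$. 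Since $\twonorm{\nabla_{\w_1} g}\lesssim\twonorm{\w_2}$, each has norm $O(\twonorm{\a})$.
\item The weight blocks are $\nabla^2_{\w_i}\calL=\frac{v_i^2}{2}\mtx I-\frac{v_1v_2}{2}\nabla^2_{\w_i}g$ and $\nabla_{\w_1}\nabla_{\w_2}\calL=-\frac{v_1v_2}{2}\nabla_{\w_1}\nabla_{\w_2}g$.
\end{itemize}

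For the derivatives of $g$ I would use the formula $\nabla_{\w_1}g=\frac{1}{2\pi}\rbr{(\pi-\psi)\w_2+\sin\psi\,\twonorm{\w_2}\frac{\w_1}{\twonorm{\w_1}}}$ and differentiate once more. The derivative of $\psi$ is $\nabla_{\w_1}\psi=-\frac{1}{\twonorm{\w_1}\sin\psi}\mtx P_{\w_1^\perp}\frac{\w_2}{\twonorm{\w_2}}$, whose norm is $1/\twonorm{\w_1}$. This is the point where smoothness fails near the origin, and the lower bound $\twonorm{\w_i}\ge c_1\sqrt{\twonorm{\a}}$ is what neutralizes it. After cancellation, each second derivative of $g$ is a combination of the identity, projections, and rank-one terms. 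Each coefficient is bounded by a constant times $\twonorm{\w_2}/\twonorm{\w_1}$ or its inverse, hence by a constant depending on $c_2/c_1$. Multiplying by $v_1v_2\le c_2^2\twonorm{\a}$ gives $O(\twonorm{\a})$ for every weight block.

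The main obstacle is passing from these block bounds to the norm $\fnorm{\cdot}$ in the statement. The blocks $\nabla^2_{\w_i}\calL$ contain $\frac{v_i^2}{2}\mtx I_d$ plus projection terms, and their entrywise Frobenius norm carries a $\sqrt d$ factor. The estimate that is genuinely dimension-free is the one obtained by aggregating the blocks in Frobenius fashion over the $4\times4$ block structure, with each block measured in operator norm. This aggregate is the quantity I would actually bound: $\bigl(\sum_{p,q}\opnorm{[\nabla^2\calL]_{pq}}^2\bigr)^{1/2}\le 4\max_{p,q}\opnorm{[\nabla^2\calL]_{pq}}\le L\twonorm{\a}$. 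I would take this block-Frobenius quantity as the meaning of $\fnorm{\nabla^2\calL}$ in the lemma, since it is what the descent-lemma argument in Phase 3 needs, and state $L$ explicitly in terms of $c_1,c_2$. A remaining care point is the degenerate angles $\psi\in\{0,\pi\}$, where $\nabla\psi$ is singular. I would check that the $1/\sin\psi$ factors cancel against the $\sin\psi$ and $(\pi-\psi)$ prefactors in $\nabla g$, or else restrict to the open set where $\psi\neq 0,\pi$, which is where Phase 3 operates because $\theta>\pi/2$ there.
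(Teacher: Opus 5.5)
Your plan is essentially the paper's proof. The paper also writes out the population Hessian blockwise and bounds each block in operator norm on the given region, with $c_1$ entering only through the $v_\ell/\twonorm{\w_\ell}$ factor in the $\w_\ell,\w_\ell$ block, so $L$ depends on $c_2/c_1$. It then aggregates by sub-additivity, so what it actually proves is the spectral-norm bound; as you correctly observe, that is the only dimension-free reading of $\fnorm{\cdot}$, and it is all the descent-lemma step needs.

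One minor slip: after the overall factor $\tfrac12$, the cross term in your closed form should be $-v_1v_2\,g(\w_1,\w_2)$ rather than $-\tfrac12 v_1v_2\,g(\w_1,\w_2)$. This only changes constants.
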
 
Showing geometric decrease of the \emph{population} loss under a PL inequality and smoothness is a
classical optimization result. Our setting is more delicate because we run \emph{empirical} gradient
descent rather than its population counterpart. See the proof of Theorem~\ref{thm:main} in
Section \ref{pfmain} for how we combine the trajectory-uniform concentration bounds (next section) with the PL and smoothness properties of the population loss stated above to obtain a geometric decrease in the population loss.

\subsection{Uniform Concentration}\label{unifconcen}
In this section, we provide an overview of the novel uniform concentration result that we have established which is key to our near optimal sample complexity. In particular, the concentration holds along the entire trajectory of GD and is used across the three phases. We provide the setup next. Let $\x_1,\dots,\x_n \overset{\mathrm{iid}}{\sim} \mathcal{N}(0,\mtx{I}_d)$ in $\R^d$. For $\w,\w^*\in\Sph^{d-1}$ define
\begin{equation}\label{eq:defM}
M(\w,\w^*) := \frac1n\sum_{i=1}^n \1_{\{\ip{\x_i}{\w}\ge0\}} \1_{\{\ip{\x_i}{\w^*}\ge0\}} \x_i\x_i^T.
\end{equation}
We prove a high-probability bound on
\begin{align*}
    \sup_{\w,\w^*\in\Sph^{d-1}} \opnorm{M(\w,\w^*)-\E \bbr{M(\w,\w^*)}}.
\end{align*}

\begin{lemma}[Uniform Concentration] \label{lem:unif_concentration} Fix $\delta\in(0,1/2)$. There exist universal constants $C,c>0$ such that the following holds. If
\begin{equation}\label{eq:sample_complexity}
n \ \ge\ C \, d \, \frac{\log^2(1/\delta)}{\delta^2},
\end{equation}
then with probability at least $1-3e^{-cd}$,
\begin{align*}
    \sup_{\w,\w^*\in\Sph^{d-1}} \opnorm{M(\w,\w^*)-\E \bbr{M(\w,\w^*)}} \ \le\ \delta.
\end{align*}

\end{lemma}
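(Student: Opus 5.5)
We prove the bound by a net argument over pairs $(\w,\w^*)$ combined with the bilinear structure of the operator norm. The only real difficulty is that the indicators $\1_{\{\ip{\x_i}{\w}\ge 0\}}$ are discontinuous in $\w$. We therefore handle the indicator flips that occur when moving from a net point to a nearby $\w$ separately, via a uniform bound on the number of samples falling in thin wedges. This is where the $\log^2(1/\delta)/\delta^2$ factor arises.

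\textbf{Step 1: fixed pair.} Fix $\w,\w^*$ and a $1/4$-net $\mathcal{N}_{1/4}$ of $\Sph^{d-1}$ for the test vectors $\v$. For fixed $\v$, the quantity $\frac1n\sum_i \1\1\ip{\x_i}{\v}^2$ is an average of sub-exponential variables with $O(1)$ norm. Bernstein's inequality gives deviation $\le\delta/4$ with probability $1-2\exp(-cn\min(\delta^2,\delta))$. A union bound over $9^d$ test vectors, with the standard factor-2 net reduction for symmetric matrices, yields $\opnorm{M-\E M}\le\delta/2$ for the fixed pair whenever $n\gtrsim d/\delta^2$.

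\textbf{Step 2: net over pairs.} Let $\mathcal{N}_\epsilon$ be an $\epsilon$-net of $\Sph^{d-1}$ with $\epsilon=c\delta/\log(1/\delta)$, so $|\mathcal{N}_\epsilon|^2\le(3/\epsilon)^{2d}$. A union bound over all net pairs costs $2d\log(3/\epsilon)\approx d\log(1/\delta)$ in the exponent. Since we need $n\delta^2\gtrsim d\log(1/\delta)$, this is covered by the assumed $n\ge Cd\log^2(1/\delta)/\delta^2$.

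\textbf{Step 3: perturbation (main obstacle).} For arbitrary $\w,\w^*$, choose net points $\w_0,\w_0^*$ within $\epsilon$. The difference $M(\w,\w^*)-M(\w_0,\w_0^*)$ is dominated in Loewner order by
\[
\frac1n\sum_i \1\{\x_i\in S_{\w,\w_0}\cup S_{\w^*,\w_0^*}\}\x_i\x_i^T,
\]
where $S_{\w,\w_0}$ is the set on which the two half-space indicators disagree. That set lies inside the wedge $\{|\ip{\x}{\w_0}|\le\epsilon\twonorm{\x}\}$, which we refine to $\{|\ip{\x}{\w_0}|\le t\}\cup\{\twonorm{P\x}\ge t/\epsilon\}$.

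We therefore need to bound, uniformly over $\v\in\Sph^{d-1}$ and net points $\w_0$, the quantity
\[
\frac1n\sum_i\1\{|\ip{\x_i}{\w_0}|\le \eta\}\ip{\x_i}{\v}^2 ,
\]
with $\eta\approx\epsilon\sqrt{\log(1/\epsilon)}$ after truncating the rare large values of $|\ip{\x_i}{\u}|$. Its expectation is $O(\eta)=O(\delta)$, and its fluctuation is again controlled by Bernstein over the $\v$-net and the $\w_0$-net. The population counterpart $\opnorm{\E M(\w,\w^*)-\E M(\w_0,\w_0^*)}=O(\epsilon)$ follows from the explicit Gaussian computation, or simply from the same wedge-mass bound.

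The delicate point is the $\sqrt{\log}$ loss from truncating the Gaussian tails while keeping the union bound at size $e^{O(d\log(1/\delta))}$. I would first try bounding $\max_i\twonorm{\x_i}\le C\sqrt{d}$, but that is too lossy. Instead I would use the fact that, with high probability, for all $\u$ at most $n\delta$ samples have $|\ip{\x_i}{\u}|>C\sqrt{\log(1/\delta)}$, and control the contribution of those samples through the operator norm bound on $\frac1n\sum_{i\in S}\x_i\x_i^T$ for $|S|\le\delta n$. That bound is $O(\delta\log(1/\delta)+d/n)$, uniformly over such subsets $S$.

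Combining the three steps gives a total deviation of $O(\delta)$ with probability $1-3e^{-cd}$; rescaling the constants finishes the proof.
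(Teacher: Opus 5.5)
Your argument is correct in substance, but it takes a different route from the paper's.

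The paper never discretizes $(\w,\w^*)$. It fixes a test vector $\vct{u}$ from a $1/4$-net and truncates $\ip{\x}{\vct{u}}^2$ at level $\tau\asymp\log(1/\delta)$. Uniformity over all pairs then comes at once from the $O(d)$ VC-subgraph dimension of $\{\min(\ip{\x}{\vct{u}}^2,\tau)\,\1_{\{\ip{\x}{\w}\ge0\}}\1_{\{\ip{\x}{\w^*}\ge0\}}\}$, via symmetrization, Dudley's entropy integral and Bousquet's inequality. The tail $(\ip{\x}{\vct{u}}^2-\tau)_+$ needs no uniformity argument, since multiplied by the indicators it is dominated by itself. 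The $\log^2(1/\delta)$ in the sample size comes from requiring $\tau\sqrt{d/n}\le\delta$.

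You instead net the halfspace parameters at scale $\epsilon\asymp\delta/\log(1/\delta)$, apply Bernstein at net points, and control indicator flips deterministically. This is fully elementary (Bernstein plus norms of Gaussian submatrices, no empirical-process machinery). Tracked carefully, it needs only $n\gtrsim d\log(1/\delta)/\delta^2$, one logarithm less than stated. The paper's route is more modular and would carry over unchanged to any VC class of gates; yours relies on the specific geometry of halfspaces.

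Two points in your Step 3 must be written correctly for it to go through.

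\textbf{The wedge must depend on $\w$.} The wedge $\{|\ip{\x}{\w_0}|\le\epsilon\twonorm{\x}\}$ is useless by itself: its Gaussian mass is of order $\min(1,\epsilon\sqrt d)$. Splitting on $\twonorm{P\x}$ with a projection that does not depend on $\w$ either breaks the containment or brings back $\twonorm{P\x}\approx\sqrt d$. That would force $\epsilon\lesssim\delta/\sqrt d$ and a $\log d$ in $n$. The containment you need is this: if the indicators at $\w$ and $\w_0$ disagree at $\x_i$, then $|\ip{\x_i}{\w_0}|\le|\ip{\x_i}{\w-\w_0}|\le\epsilon|\ip{\x_i}{\vct{u}}|$ with $\vct{u}=(\w-\w_0)/\twonorm{\w-\w_0}$. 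So the flipped samples lie either in the fixed slab $\{|\ip{\x_i}{\w_0}|\le\epsilon s\}$ or in $B(\vct{u}):=\{i:|\ip{\x_i}{\vct{u}}|>s\}$. Your later paragraph suggests this is what you intend, and it is the crux of the argument.

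\textbf{Uniform control of $B(\vct{u})$, and the size of its contribution.} That $B(\vct{u})$ is small for all $\vct{u}$ simultaneously is not automatic, but it follows from the subset bound you cite. Let $\mtx{X}_S$ have rows $\x_i^T$, $i\in S$. A union bound over $\binom{n}{k}$ subsets gives $\max_{|S|=k}\opnorm{\mtx{X}_S}^2\lesssim k\log(en/k)+d$ with probability $1-e^{-cd}$. On the other hand, $k$ indices in $B(\vct{u})$ would force $\opnorm{\mtx{X}_S}^2>ks^2$. Hence $s^2\asymp\log(n/k)+d/k$ suffices.

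With $k=\delta n$, as you wrote, the contribution of $B(\vct{u})$ is $O(\delta\log(1/\delta)+d/n)$, not $O(\delta)$. Rescaling constants cannot absorb a logarithm, and reparametrizing $\delta$ would cost an extra $\log(1/\delta)$ in $n$, overshooting the stated sample complexity. Take $k=n\delta/\log(1/\delta)$ instead. Then $s\asymp\sqrt{\log(1/\delta)}$, the slab width satisfies $\epsilon s\lesssim\delta$, the $B(\vct{u})$ term is $O(\delta)$, and none of your union bounds change.
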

This lemma is proven in Section \ref{apx:unif_concentration}. Notably, this lemma allows us to establish separate high-probability bounds for the deviations of the gradient components with respect to $\w_1$ and $\w_2$. Concretely, it allows us to show the following lemma proven in Section \ref{apx:comp_deviation}.

\begin{lemma}[Component-wise Gradient Deviation Bounds] \label{lem:component_deviation}
Fix $\delta \in (0, 1/2)$. Define the error vectors $\mathbf{h}_1 = v_1 \w_1 - \a$ and $\mathbf{h}_2 = v_2 \w_2 + \a$. Under the sample complexity $n \ \ge\ C \, d \, \frac{\log^2(1/\delta)}{\delta^2}$, with probability at least $1 - 3e^{-cd}$, the following bounds hold simultaneously:
\begin{align}
     \twonorm{\nabla_{\w_1} \widehat{\mathcal{L}} - \nabla_{\w_1} \mathcal{L}} &\le v_1 \delta (\twonorm{\mathbf{h}_1} + \twonorm{\mathbf{h}_2}), \\
     \twonorm{\nabla_{\w_2} \widehat{\mathcal{L}} - \nabla_{\w_2} \mathcal{L}} &\le v_2 \delta (\twonorm{\mathbf{h}_1} + \twonorm{\mathbf{h}_2}).
\end{align}
\end{lemma}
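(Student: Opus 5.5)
The approach is to write both gradients explicitly and express their empirical-minus-population differences through the matrices $M(\cdot,\cdot)$ of \eqref{eq:defM}. Then Lemma~\ref{lem:unif_concentration}, used once on a single high-probability event, bounds every term uniformly over all directions. For $\w_1$,
\begin{align*}
\nabla_{\w_1}\widehat{\mathcal{L}} = \frac{v_1}{n}\sum_{i=1}^n \rbr{v_1\phi(\w_1^T\x_i) - v_2\phi(\w_2^T\x_i) - \a^T\x_i}\1_{\{\w_1^T\x_i\ge 0\}}\x_i .
\end{align*}
The idea is to rewrite the residual so that only the error vectors $\mathbf{h}_1,\mathbf{h}_2$ appear. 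We use $\a^T\x = \phi(\a^T\x) - \phi(-\a^T\x)$ and $v_1\phi(\w_1^T\x) = \1_{\{\w_1^T\x\ge0\}}(v_1\w_1)^T\x$, and similarly for the second neuron. The residual then becomes
\begin{align*}
\1_{\{\w_1^T\x\ge0\}}\mathbf{h}_1^T\x + \1_{\{\w_1^T\x\ge0\}}\a^T\x - \phi(\a^T\x) - \1_{\{\w_2^T\x\ge0\}}(\mathbf{h}_2-\a)^T\x - \phi(-\a^T\x).
\end{align*}
The terms not involving $\mathbf{h}_1,\mathbf{h}_2$ do not vanish pointwise, so a cruder regrouping is needed: write $v_1\w_1 = \a+\mathbf{h}_1$ and $-v_2\w_2 = \a-\mathbf{h}_2$.

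With this regrouping the gradient is $v_1$ times a sum of terms of the form $M(\bar\w_1,\bar{\bm u})\bm z$, where $\bm z\in\{\mathbf{h}_1,\mathbf{h}_2,\a\}$ and $\bar{\bm u}$ is a unit direction among $\bar\w_1,\bar\w_2,\pm\bar\a$ (the bar denoting normalization; the norm factors $v_i\|\w_i\|$ cancel inside the indicators by positive homogeneity). The $\a$-dependent terms collect into
\begin{align*}
\rbr{M(\bar\w_1,\bar\w_1) - M(\bar\w_1,\bar\w_2)}\a - \rbr{M(\bar\w_1,\bar\a) - M(\bar\w_1,-\bar\a)}\a .
\end{align*}
These must cancel exactly at the global optimum $\bar\w_1=\bar\a$, $\bar\w_2=-\bar\a$. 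Away from it, however, they are not obviously controlled by $\|\mathbf{h}_1\|+\|\mathbf{h}_2\|$ at the level of deviations.

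The main obstacle is showing that this $\a$-part of the empirical-minus-population deviation is at most $\delta(\|\mathbf{h}_1\|+\|\mathbf{h}_2\|)$, rather than merely $\delta\|\a\|$. My plan is to avoid generating such terms in the first place by never splitting $\a^T\x$ into ReLUs. Writing $v_1\phi(\w_1^T\x)=\1_1\,(v_1\w_1)^T\x$ and $v_2\phi(\w_2^T\x)=\1_2\,(v_2\w_2)^T\x$ (with $\1_j := \1_{\{\w_j^T\x\ge0\}}$), the residual becomes
\begin{align*}
\1_1\mathbf{h}_1^T\x - \1_2\mathbf{h}_2^T\x + \rbr{\1_1+\1_2-1}\a^T\x .
\end{align*}
The last term is the real difficulty. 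When $\w_1,\w_2$ are exactly antipodal, $\1_1+\1_2-1=0$ almost surely; otherwise it is $\pm$ the indicator of a wedge whose angular width is $\pi-\theta$. The key step is therefore to bound $(\pi-\theta)\|\a\|$ by a constant multiple of $\|\mathbf{h}_1\|+\|\mathbf{h}_2\|$. This should follow from elementary geometry: antipodality fails only through angular misalignment of $v_1\w_1$ with $\a$ or of $v_2\w_2$ with $-\a$, and $\|\a\|\sin\angle(v_1\w_1,\a)\le\|\mathbf{h}_1\|$, with the analogous bound for $\mathbf{h}_2$. Both the $\sin$-to-angle conversion and the passage from the wedge's angular width to its contribution need care.

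For the wedge term itself I will try first to write the wedge matrix $\frac1n\sum \1_{\text{wedge}}\x_i\x_i^T$ as a signed combination of $M$'s; for instance $\1_1\1_2 - (1-\1_1)(1-\1_2)$ expands into $M(\bar\w_1,\bar\w_2)$ and $M(-\bar\w_1,-\bar\w_2)$. I will also try to exploit that $\a$ is nearly orthogonal to the wedge's bisector. If uniform concentration at scale $\delta$ holds only for the whole $M$ and not proportionally to the wedge's width, I would fall back on a refined version of Lemma~\ref{lem:unif_concentration} for thin wedges, whose deviation scales with angular width, via the same chaining/VC argument. The $\mathbf{h}$-terms are immediate: they equal $v_1\rbr{M(\bar\w_1,\bar\w_1)\mathbf{h}_1 - M(\bar\w_1,\bar\w_2)\mathbf{h}_2}$ minus expectations, each bounded by $v_1\delta\|\mathbf{h}_j\|$. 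The $\w_2$ bound is identical by symmetry.
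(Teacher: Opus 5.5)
You have located the crux correctly, but the proposal does not close it. Since $\1_1(\1_1+\1_2-1)=\1_1\1_2$, the wedge contribution to the $\w_1$-deviation is exactly $v_1\rbr{M(\bar{\w}_1,\bar{\w}_2)-\E M(\bar{\w}_1,\bar{\w}_2)}\a$. Lemma~\ref{lem:unif_concentration} bounds this only by $v_1\delta\twonorm{\a}$.

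Your geometric step $(\pi-\theta)\twonorm{\a}\lesssim\twonorm{\mathbf{h}_1}+\twonorm{\mathbf{h}_2}$ is fine, given $v_1,v_2>0$ and a case split on whether $\twonorm{\mathbf{h}_j}\ge\twonorm{\a}/2$. The problem is the refined concentration you want to pair it with: a uniform deviation of order $\delta\cdot(\pi-\theta)$ for the thin-wedge matrix is false at this sample size.
\begin{itemize}
\item For a wedge of width $\epsilon$, quadratic forms of $\1_W\x\x^T$ have variance of order $\epsilon$. Even for a fixed wedge the operator-norm deviation is of order $\sqrt{\epsilon d/n}$, and this exceeds $\delta\epsilon$ once $\epsilon\lesssim d/(n\delta^2)\asymp 1/\log^2(1/\delta)$.
\item Worse, the bound must hold uniformly in $(\w_1,\w_2)$. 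You can pick a wedge of width $\epsilon\le 2\pi/n$ that contains no sample, and there the deviation is $\opnorm{\E M}\ge\epsilon/(2\pi)$.
\end{itemize}
These thin wedges are exactly the near-optimal regime in which the lemma is used. Your remark that $\a$ is nearly orthogonal to the bisector could perhaps be made rigorous, but only through a variance-localized, peeled empirical-process bound. That is not ``the same chaining/VC argument'': the Dudley/Bousquet bound behind Lemma~\ref{lem:unif_concentration} is driven by the envelope, not the variance.

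The missing idea is in the regrouping you abandoned. Keep $\a^T\x=\phi(\a^T\x)-\phi(-\a^T\x)$, and pair $\phi(v_1\w_1^T\x)$ with $\phi(\a^T\x)$ and $\phi(v_2\w_2^T\x)$ with $\phi(-\a^T\x)$. Then replace the endpoint identity $v_1\phi(\w_1^T\x)=\1_1(v_1\w_1)^T\x$ with the fundamental theorem of calculus along the segment from $\a$ to $v_1\w_1=\a+\mathbf{h}_1$, and likewise from $-\a$ to $v_2\w_2=-\a+\mathbf{h}_2$:
\begin{align*}
\phi\rbr{(\a+\mathbf{h}_1)^T\x}-\phi\rbr{\a^T\x}=\int_0^1\1_{\cbr{(\a+t\mathbf{h}_1)^T\x\ge 0}}\,\mathbf{h}_1^T\x\,dt .
\end{align*}
By Fubini, with bars denoting normalization, this gives
\begin{align*}
\nabla_{\w_1}\widehat{\mathcal{L}}=v_1\int_0^1 M\rbr{\overline{\a+t\mathbf{h}_1},\bar{\w}_1}\mathbf{h}_1\,dt-v_1\int_0^1 M\rbr{\overline{-\a+t\mathbf{h}_2},\bar{\w}_1}\mathbf{h}_2\,dt,
\end{align*}
and the same expression with $\E M$ gives $\nabla_{\w_1}\mathcal{L}$. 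No bare $\a$-term survives.

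The indicator directions move with $t$, but Lemma~\ref{lem:unif_concentration} is uniform over all direction pairs. So each integrand deviates by at most $\delta\twonorm{\mathbf{h}_j}$, and the claimed bound follows at once; the single $t$ at which $\a+t\mathbf{h}_1$ may vanish is a null set. This is the paper's argument. It also shows, a posteriori, that your wedge term is controlled, since it equals the total deviation minus your two $\mathbf{h}$-terms.
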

This lemma highlights a particularly favorable self-regularizing property of the dynamics: the
empirical--population gradient deviation scales \emph{linearly} with the current error
$\twonorm{\h_1}+\twonorm{\h_2}$. As the iterates approach the global optimum and the errors shrink, the
concentration bounds automatically tighten, yielding increasingly accurate gradient estimates
along the trajectory. In other words, concentration improves precisely when it is most needed in
the local refinement regime, enabling stable geometric convergence in this region.

%% file: sec/acknowledgments.tex
\section*{Acknowledgements}
This work was partially supported by AWS credits through an Amazon Faculty Research Award, a NAIRR Pilot Award, and generous funding by Coefficient Giving. M. Soltanolkotabi is supported by the Packard Fellowship in Science and Engineering, a Sloan Research Fellowship in Mathematics, NSF CAREER Award \#1846369, DARPA FastNICS program, NSF CIF Awards \#1813877 and \#2008443, and NIH Award DP2LM014564-01.

%% file: sec/appendix.tex
\section{Useful Calculations}
\input{sec/apx/useful_calculations}

\section{Proof of Key Lemmas in the Population Setting}
For the simplicity of notation, let $\w_1 = \w_1^{\rbr{\tau}}, \w_2 = \w_2^{\rbr{\tau}}$.

\subsection{Proof of Gradient Smoothness Towards the Global Optima in the Population Case (Lemma~\ref{lemma:popgrad_smooth})} \label{apx:proof_grad_smooth}
\input{sec/apx/old/proof_pop_directional_smoothness}

\subsection{Proof of the PL Inequality in the Population Case (Lemma \ref{lemma:pl_ineq})} \label{apx:proof_pl_ineq}
\input{sec/apx/proof_pop_pl_ineq}

\subsection{Bound on the Smoothness of the Population Loss (Lemma \ref{lemma:grad_smooth})}\label{smoothnessbnd}
\input{sec/apx/proof_pop_smoothness}

\subsection{Population Loss Lower Bound (Lemma \ref{lem:loss_lower_bound})} \label{apx:pop_lower_bound_lem}
\input{sec/apx/proof_pop_lower_bound_lemma}

\section{Proof of Key Lemmas in the Empirical Setting}

\subsection{Bound for Imbalance Term (Lemma \ref{lemma:imbalance_bound})} \label{apx:imbalance_bound}
\input{sec/apx/imbalance_bound}

\subsection{Proof of the Stability of Angles (Lemma \ref{lemma:angle_stay_small})} \label{apx:angle_stay_small}
\input{sec/apx/angle_stay_small}

\subsection{Proof of the Stability of Norms (Lemma \ref{lemma:norm_control})} \label{apx:norm_control}
\input{sec/apx/norm_control}

\subsection{Proof of Phase 1 (Lemma \ref{lemma:angle_alignment})} \label{apx:angle_alignment}
\input{sec/apx/angle_alignment}

\subsection{Proof of Phase 2 (Lemma \ref{lemma:norm_growth})} \label{apx:norm_growth}
\input{sec/apx/norm_growth}

\subsection{Uniform Concentration (Lemma \ref{lem:unif_concentration})} \label{apx:unif_concentration}
\input{sec/apx/uniform_concentration}

\subsection{Concentration of Gradient Component Deviations (Lemma \ref{lem:component_deviation})} \label{apx:comp_deviation}
\input{sec/apx/grad_concentration}

\section{Proof of Main Theorems}
\subsection{Proof of Theorem \ref{thm:landscape} for Landscape Characterization} \label{seq:landscape_calc}
\input{sec/apx/proof_landscape}

\subsection{Proof of Theorem \ref{thm:main} for Convergence of the GD Trajectory}\label{pfmain}
To prove this theorem first we note that after $T_1 = \lceil\frac{c_9}{\mu \twonorm{\a}}\rceil$ iterations of GD (i.e. \textit{alignment} phase), using Lemma \ref{lemma:angle_alignment} from Section \ref{phaseI} we have with high probability
\begin{align*}
 \theta_1^{(T_1)},\theta_2^{(T_1)} \le c_4, \quad \text{and} \quad c_1 \sigma \le v_1^{(T_1)},v_2^{(T_1)} \le c_2 \sqrt{\twonorm{\a}}.
\end{align*}
Using Lemma \ref{lemma:norm_growth}, after $T = T_1 + T_2$ iterations, we have
\begin{align*}
    \theta_1^{(T)},\theta_2^{(T)} \le c_4, \quad v_1^{(T)},v_2^{(T)} \ge c_7 \sqrt{\twonorm{\a}}, \quad \text{and} \quad \left|b_1^{(T)} \right|, \left|b_2^{(T)} \right| \leq c_{11} \twonorm{\a}.
\end{align*}
Using the definition of the imbalance term, $\twonorm{\w_i^{(T)}}^2 = (v_i^{(T)})^2 + b_i^{(T)}$, we evaluate the weights at the end of the growth phase ($T = T_1 + T_2$). From Lemma \ref{lemma:norm_growth}, we have $v_i^{(T)} \ge c_7 \sqrt{\twonorm{\a}}$ and $|b_i^{(T)}| \le c_{11} \twonorm{\a}$. Noting that $c_7^2 > c_{11}$ for $c_7= \frac{1}{4}, c_{11} = \frac{1}{50}$; this implies:
\begin{align} \label{ineq:w_bounds_at_T}
    (c_7^2 - c_{11}) \twonorm{\a} \le \twonorm{\w_1^{(T)}}^2, \twonorm{\w_2^{(T)}}^2 \le (c_2^2 + c_{11}) \twonorm{\a}.
\end{align}
We now establish that these bounds hold uniformly for all $\tau \geq T$. Lemma \ref{lemma:angle_stay_small} ensures that since the angles $\theta_1^{(T)}, \theta_2^{(T)}$ are small, they remain bounded by $c_4$ for all subsequent iterations. Lemma \ref{lemma:norm_control} ensures that if the norms $v_i$ start in the interval $[c_7\sqrt{\twonorm{\a}}, c_2\sqrt{\twonorm{\a}}]$, they remain within a fixed range $[c_7 \sqrt{\twonorm{\a}}, c_2\sqrt{\twonorm{\a}}]$ for all $\tau > T$. Finally, as established in the convergence analysis below, the error $\twonorm{\vct{h}_1}^2 + \twonorm{\vct{h}_2}^2$ decays geometrically. By Lemma \ref{lemma:imbalance_bound}, the total drift in the imbalance terms is summable, keeping $|b_i^{(\tau)}|$ uniformly bounded by a constant $\gamma \twonorm{\a}$ for all $\tau \ge T$.

Consequently, there exist universal constants $c_{\min}$ and $c_{\max}$ such that for all $\tau \ge T$:
\begin{align} \label{ineq:norm_remain_bounded}
    c_{\min} \sqrt{\twonorm{\a}} \le v_1^{(\tau)}, v_2^{(\tau)}, \twonorm{\w_1^{(\tau)}},\twonorm{\w_2^{(\tau)}} \le c_{\max} \sqrt{\twonorm{\a}}.
\end{align}
From \eqref{ineq:norm_remain_bounded}, the conditions for the PL inequality (Lemma \ref{lemma:pl_ineq}) and smoothness (Lemma \ref{lemma:grad_smooth}) hold uniformly for all $\tau \geq T$, where the constants $\alpha$ and $L$ now depend on $c_{\min}$ and $c_{\max}$. Specifically, we have:
\begin{align*}
    \twonorm{\nabla_{\w_1} \calL\rbr{\v^{(\tau)},\W^{(\tau)}}}^2 + \twonorm{\nabla_{\w_2} \calL\rbr{\v^{(\tau)},\W^{(\tau)}}}^2 \geq \alpha c_{\min}^2 \twonorm{\a} \calL\rbr{\v^{(\tau)},\W^{(\tau)}},
 \end{align*} 
and
\begin{align*}
    \fnorm{\nabla^2 \mathcal{L}\rbr{\v^{(\tau)}, \W^{(\tau)}}} \leq L \twonorm{\a}.
\end{align*}
Next, we keep track of the \textit{population} loss while performing GD updates on the \textit{empirical} loss. Let $\vct{\theta}$ denote $\begin{bmatrix}
    \v \\ \text{vect}\rbr{\W}
\end{bmatrix}$, and also define errors vectors $\vct{h}_1 = v_1 \w_1 - \a$, $\vct{h}_2 = v_2 \w_2 + \a$. We note that $\fnorm{\text{diag}\rbr{\begin{bmatrix}
        v_1 \\ v_2
    \end{bmatrix}}\W - \W^*}^2 = \twonorm{\vct{h}_1}^2 + \twonorm{\vct{h}_2}^2$. For all $\tau \geq T$ we have
{\allowdisplaybreaks
\begin{align*}
    \calL\rbr{\vct{\theta}^{(\tau+1)}} = & \calL\rbr{\vct{\theta}^{(\tau)} - \mu \nabla \widehat{\calL}\rbr{\vct{\theta}^{(\tau)}}} \\
    \stackrel{(a)}{\le} & \calL\rbr{\vct{\theta}^{(\tau)}} - \mu \inner{\nabla \calL \rbr{\vct{\theta}^{(\tau)}}}{\nabla \widehat{\calL}\rbr{\vct{\theta}^{(\tau)}}} + \frac{L \twonorm{\a}}{2}\mu^2 \twonorm{\nabla \widehat{\calL}\rbr{\vct{\theta}^{(\tau)}}}^2 \\
    =&  \calL\rbr{\vct{\theta}^{(\tau)}} - \mu \twonorm{\nabla \calL\rbr{\vct{\theta}^{(\tau)}}}^2 -  \mu \inner{\nabla \calL \rbr{\vct{\theta}^{(\tau)}}}{\nabla \widehat{\calL}\rbr{\vct{\theta}^{(\tau)}} - \nabla \calL \rbr{\vct{\theta}^{(\tau)}}} \\
    & \quad + \frac{L \twonorm{\a}}{2}\mu^2 \twonorm{\nabla \widehat{\calL}\rbr{\vct{\theta}^{(\tau)}}}^2 \\
    \stackrel{(b)}{\le} & \calL\rbr{\vct{\theta}^{(\tau)}} - \mu \twonorm{\nabla \calL\rbr{\vct{\theta}^{(\tau)}}}^2 + \eta\mu \twonorm{\nabla \calL \rbr{\vct{\theta}^{(\tau)}}}^2 + \frac{\mu}{\eta} \twonorm{\nabla \widehat{\calL}\rbr{\vct{\theta}^{(\tau)}} - \nabla \calL \rbr{\vct{\theta}^{(\tau)}}}^2 \\
    & \quad + \frac{L \twonorm{\a}}{2}\mu^2 \twonorm{\nabla \widehat{\calL}\rbr{\vct{\theta}^{(\tau)}}}^2 \\
    \stackrel{(c)}{\le} & \calL\rbr{\vct{\theta}^{(\tau)}} - \mu \rbr{1 - \eta} \twonorm{\nabla \calL\rbr{\vct{\theta}^{(\tau)}}}^2 + \frac{\mu}{\eta} \twonorm{\nabla \widehat{\calL}\rbr{\vct{\theta}^{(\tau)}} - \nabla \calL \rbr{\vct{\theta}^{(\tau)}}}^2 \\
    & \quad + L \twonorm{\a}\mu^2 \rbr{\twonorm{\nabla \calL\rbr{\vct{\theta}^{(\tau)}}}^2 + \twonorm{\nabla \widehat{\calL}\rbr{\vct{\theta}^{(\tau)}} - \nabla \calL\rbr{\vct{\theta}^{(\tau)}}}^2} \\
    =& \calL\rbr{\vct{\theta}^{(\tau)}} - \mu \rbr{1 - \eta - \mu L \twonorm{\a}} \twonorm{\nabla \calL\rbr{\vct{\theta}^{(\tau)}}}^2 \\
    &\quad\quad+ \rbr{\frac{\mu}{\eta} + \mu^2 L \twonorm{\a}}\twonorm{\nabla \widehat{\calL}\rbr{\vct{\theta}^{(\tau)}} - \nabla \calL \rbr{\vct{\theta}^{(\tau)}}}^2 \\
    \stackrel{(d)}{\le}& \calL\rbr{\vct{\theta}^{(\tau)}} - \alpha c_1^2 \twonorm{\a} \mu \rbr{1 - \eta - \mu L \twonorm{\a}} \calL\rbr{\vct{\theta}^{(\tau)}} \\
    &\quad\quad+ \rbr{\frac{\mu}{\eta} + \mu^2 L \twonorm{\a}}\twonorm{\nabla \widehat{\calL}\rbr{\vct{\theta}^{(\tau)}} - \nabla \calL \rbr{\vct{\theta}^{(\tau)}}}^2 \\
    \stackrel{(e)}{\le}& \rbr{1 - \alpha c_{\min}^2 \twonorm{\a} \mu \rbr{1 - \eta - \mu L \twonorm{\a}}}\calL\rbr{\vct{\theta}^{(\tau)}} \\ 
    & \quad + \rbr{\frac{\mu}{\eta} + \mu^2 L \twonorm{\a}} \rbr{\rbr{v_1^{(\tau)}}^2 + \rbr{v_2^{(\tau)}}^2 + \twonorm{\w_1^{(\tau)}}^2 + \twonorm{\w_2^{(\tau)}}^2}\delta^2 \rbr{\twonorm{\vct{h}_1^{(\tau)}}^2 + \twonorm{\vct{h}_2^{(\tau)}}^2} \\
    \stackrel{(f)}{\le}& \rbr{1 - \alpha c_{\min}^2 \twonorm{\a} \mu \rbr{1 - \eta - \mu L \twonorm{\a}}}\calL\rbr{\vct{\theta}^{(\tau)}} \\ 
    & \quad + \rbr{\frac{\mu}{\eta} + \mu^2 L \twonorm{\a} } 4 c_{\max}^2 \twonorm{\a} \delta^2 \rbr{\twonorm{\vct{h}_1^{(\tau)}}^2 + \twonorm{\vct{h}_2^{(\tau)}}^2} \\
     \stackrel{(g)}{\le}& \rbr{1 - \alpha c_{\min}^2 \twonorm{\a} \mu \rbr{1 - \eta - \mu L \twonorm{\a}} + 20 \rbr{\frac{\mu}{\eta} + \mu^2 L \twonorm{\a} } 4 c_{\max}^2 \twonorm{\a} \delta^2} \calL\rbr{\vct{\theta}^{(\tau)}}
\end{align*}
}
where (a) follows from the quadratic upper bound and smoothness of $\calL$, (b) follows from $\inner{\sqrt{\eta} \vct{a}}{\frac{1}{\sqrt{\eta}}\vct{b}} \leq \eta \twonorm{\a}^2 + \frac{1}{\eta} \twonorm{\b}^2$ for any $\eta > 0$, (c) follows from triangle inequality, (d) follows from PL inequality, (e) follows from Lemma \ref{lem:unif_concentration} and gradient identity \eqref{eq:grad_useful_identity}, (f) follows from upper bounds on the norms, and finally (g) follows from applying the population loss lower bound (Lemma \ref{lem:loss_lower_bound}).
Set $\mu = \frac{\bar{\mu}}{\twonorm{\a}}$, $\eta = \frac{1}{4}$. We have
\begin{align*}
    \calL\rbr{\vct{\theta}^{(\tau+1)}} \le& \rbr{1 - \alpha c_{\min}^2 \bar{\mu} \rbr{\frac{3}{4} - \bar{\mu}L} + \rbr{\frac{4\bar{\mu}}{3} + \bar{\mu}^2 L} 80 c_{\max}^2 \delta^2} \calL\rbr{\vct{\theta}^{(\tau)}} \\
    =& \rbr{1 - \rbr{\frac{3\alpha c_{\min}^2}{4} - \frac{320c_{\max}^2 \delta^2}{3}} \bar{\mu} + \rbr{\alpha c_{\min}^2 + 80 c_{\max}^2 \delta^2}L \bar{\mu}^2} \calL\rbr{\vct{\theta}^{(\tau)}}.
\end{align*}
We now choose $\delta$ and $\bar{\mu}$ so that the quadratic factor above yields a strict contraction. First require that the linear coefficient is positive, i.e.
\begin{align*}
    \delta^2 \leq \frac{9\alpha c_{\min}^2}{1280c_{\max}^2}.
\end{align*}
Fix any such $\delta$ (this is ensured by Lemma~\ref{lem:unif_concentration} by taking $n$ sufficiently large). Next, define,
\begin{align*}
    a := \frac{3\alpha c_{\min}^2}{4} - \frac{320c_{\max}^2 \delta^2}{3} \quad \text{and} \quad b:= \rbr{\alpha c_{\min}^2 + 80c_{\max}^2 \delta^2}L.
\end{align*}
If we further choose
\begin{align*}
    \bar{\mu} \le \frac{a}{2b},
\end{align*}
then the quadratic term is dominated by the linear term, and we have
\begin{align*}
    1 - a \bar{\mu} + b \bar{\mu}^2 \le 1 - \frac{a}{2}\bar{\mu}.
\end{align*}
Consequently, for all $\tau \ge T$,
\begin{align*}
    \calL\rbr{\vct{\theta}^{(\tau+1)}} \leq \rbr{1-c\bar{\mu}} \calL\rbr{\vct{\theta}^{(\tau)}},
\end{align*}
where $c := \frac{a}{2} > 0$ is a numerical constant depending only on $\alpha, c_{\min}, c_{\max}$. Iterating this inequality for all $\tau > T$ yields geometric decrease of the population loss
\begin{align*}
    \calL\rbr{\vct{\theta}^{(\tau)}} \leq \rbr{1-c\bar{\mu}}^{\rbr{\tau - T}} \calL\rbr{\vct{\theta}^{(T)}}.
\end{align*}
Finally, we apply the population loss lower bound (Lemma~\ref{lem:loss_lower_bound}) to lower bound the left-hand side:
\begin{align*}
\calL\rbr{\vct{\theta}^{(\tau)}} \ge \frac{1}{\tilde{c}} \fnorm{\text{diag}\rbr{\begin{bmatrix}
        v_1^{(\tau)} \\ v_2^{(\tau)}
    \end{bmatrix}}\W^{\rbr{\tau}} - \W}^2.
\end{align*}
By further upper bounding the right-hand side using the fact that the ReLU activation is 1-Lipschitz, we have $\calL\rbr{\vct{\theta}^{(T)}} \le \fnorm{\text{diag}\rbr{\begin{bmatrix}
        v_1^{(T)} \\ v_2^{(T)}
    \end{bmatrix}}\W^{\rbr{T}} - \W^*}^2$. Combining these yields:
\begin{align*}
\fnorm{\text{diag}\rbr{\begin{bmatrix}
        v_1^{(\tau)} \\ v_2^{(\tau)}
    \end{bmatrix}}\W^{\rbr{\tau}} - \W}^2 \leq \tilde{c}\rbr{1-c\bar{\mu}}^{\rbr{\tau - T}} \fnorm{\text{diag}\rbr{\begin{bmatrix}
        v_1^{(T)} \\ v_2^{(T)}
    \end{bmatrix}}\W^{\rbr{T}} - \W}^2
\end{align*}
for some numerical constant $\tilde{c} > 0$. This shows geometric convergence of the GD iterates, completing the proof of Theorem~\ref{thm:main}.

\section{Additional Experimental Results}
\subsection{Pairing-up Behavior for $r \geq 3$}
\label{apx:multidimension}
In this section, we present additional results on the pairing behavior of $\w_i$ and $\v_i$ for different values of $r$.  Although our theoretical analysis is limited to the scalar output setting, for our experiments we also consider multi-dimensional outputs. We only consider the case where the model is exactly parameterized i.e. $k=2r$. We first show that an interesting pattern arises if both the inner and outer layers of the neural network are initialized sufficiently small.
\begin{figure}[h]
    \centering
    \begin{minipage}[t]{0.48\textwidth}
        \centering
        \includegraphics[width=0.7\linewidth]{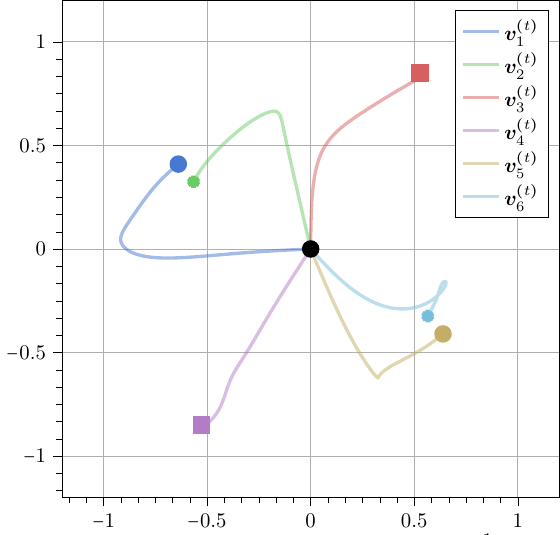}\\
        \textbf{(a)} Trajectory of $\vct{v}_i$'s and their pairing behavior.
    \end{minipage}
    \hfill
    \begin{minipage}[t]{0.48\textwidth}
        \centering
        \includegraphics[width=0.7\linewidth]{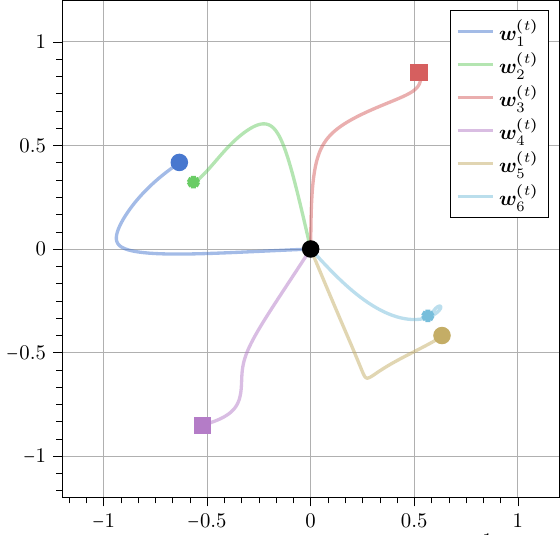}\\
        \textbf{(b)} Trajectory of $\vct{w}_i$'s and their pairing behavior.
    \end{minipage}
    \centering
    \caption{\textbf{Pairing pattern in multi-dimensional setting.} We train the network from small initialization when exactly parameterized ($k=6$ and $r=3$). On left (a), we depict the trajectories of individual weights in the outer layer ($\vct{v}_i$'s) across iterations. We observe that the weights at convergence can be grouped into three pairs such that one of the weights is approximately negative of the other. For instance, we observe that $\vct{v}_3^{\rbr{\infty}} \approx -\vct{v}_4^{\rbr{\infty}}$. Which neurons end up pairing with each other is indicated by the usage of same symbol (square, circle, etc.). A similar pairing is observed for the inner layer weights as well (b). While these vectors all lie in a higher dimensional space, we pick an arbitrary two dimensional axis to plot them in 2D.}
    \label{fig:pairing_pattern_V_W}
\end{figure}

For visualization purposes in Figure \ref{fig:pairing_pattern_V_W}, we pick $r=3$ and $k=6$. As for the target function, we pick $\vct{a}_1$, $\vct{a}_2$, $\vct{a}_3$ to be $\vct{e}_1$, $\vct{e}_2$, $\vct{e}_3$ respectively which correspond to the standard basis vectors in $\R^d$. We plot the trajectory of both the inner and outer layer weights of the network across iterations and observe a peculiar pattern in both $\vct{v}_i$'s and $\vct{w}_i$'s. At convergence, weights can be grouped into pairs such that one of the weights is approximately negative of the other. As a concrete example, in Figure \ref{fig:pairing_pattern_V_W}, we observe that $\vct{v}_3^{\rbr{\infty}} \approx -\vct{v}_4^{\rbr{\infty}}$, $\vct{v}_1^{\rbr{\infty}} \approx -\vct{v}_5^{\rbr{\infty}}$, and $\vct{v}_2^{\rbr{\infty}} \approx -\vct{v}_6^{\rbr{\infty}}$ which also holds similarly for $\vct{w}_i$'s as well. This suggests that after a permutation of the hidden units, we get
\begin{align*}
\mtx{V}^{\rbr{\infty}} \approx \begin{bmatrix}\mtx{I}_r, -\mtx{I}_r\end{bmatrix}^T\widetilde{\mtx{V}}, \quad \mtx{W}^{\rbr{\infty}} \approx \begin{bmatrix}\mtx{I}_r, -\mtx{I}_r\end{bmatrix}^T\widetilde{\mtx{W}}.
\end{align*}
which can be considered as a natural extension to the $\vct{v}_i=\pm 1$ pattern in the single output setting.

Beyond the $r=3$ case, we illustrate the same behavior for $r=5$ in Figure~\ref{fig:apx_pairing_pattern_V_W_dim_5} and for $r=10$ in Figure~\ref{fig:apx_pairing_pattern_V_W_dim_10}. While we also observe the pairing for $r>10$, we omit those results here for visual clarity. In general, we note that the weights at convergence (indicated with \textit{star} symbol in Figures ~\ref{fig:apx_pairing_pattern_V_W_dim_5} and ~\ref{fig:apx_pairing_pattern_V_W_dim_10}) can be grouped into $r$ pairs such that one of the weights is approximately negative of the other. To aid with detecting the pairs visually, we draw the line determined by each pair with dashed lines.

\begin{figure}[h]
    \centering
    \begin{minipage}[t]{0.48\textwidth}
        \centering
        \includegraphics[width=1.0\linewidth]{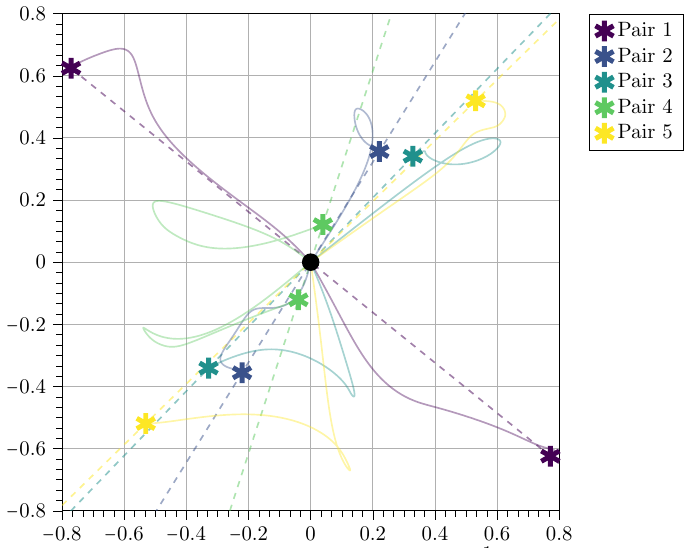}\\
        \textbf{(a)} Trajectory of $\vct{v}_i$'s and their pairing behavior.
    \end{minipage}
    \hfill
    \begin{minipage}[t]{0.48\textwidth}
        \centering
        \includegraphics[width=1.0\linewidth]{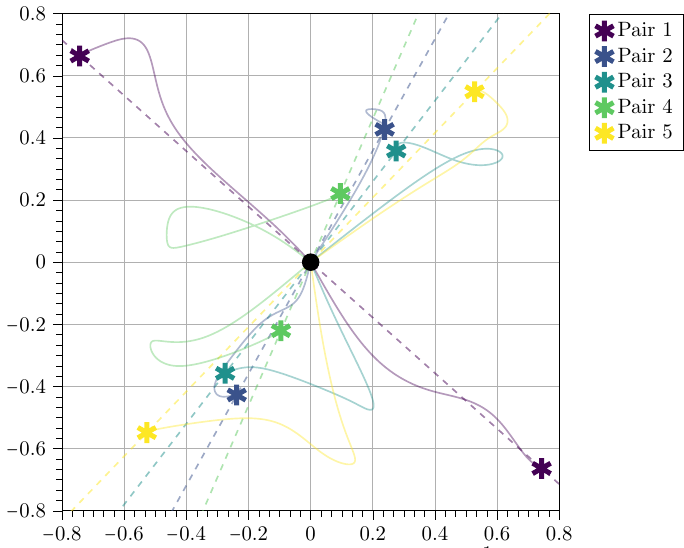}\\
        \textbf{(b)} Trajectory of $\vct{w}_i$'s and their pairing behavior.
    \end{minipage}
    \centering
    \caption{\textbf{Pairing pattern for $r=5$.} We train the network from small initialization when exactly parameterized ($k=10$ and $r=5$). On left (a), we depict the trajectories of individual weights in the outer layer ($\vct{v}_i$'s) across iterations. Each pair is indicated by the same color and the dashed line. A similar pairing is observed for the inner layer weights as well (b). While these vectors all lie in a higher dimensional space, we pick an arbitrary two dimensional axis to plot them in 2D.}
    \label{fig:apx_pairing_pattern_V_W_dim_5}
\end{figure}

\begin{figure}[h]
    \centering
    \begin{minipage}[t]{0.48\textwidth}
        \centering
        \includegraphics[width=1.0\linewidth]{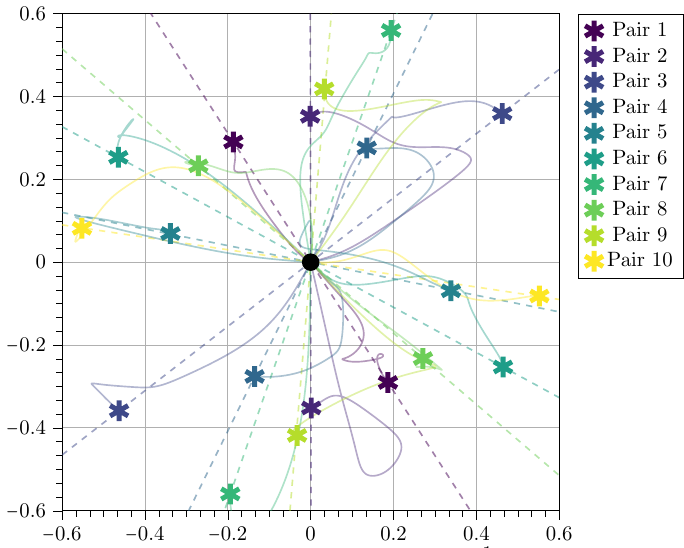}\\
        \textbf{(a)} Trajectory of $\vct{v}_i$'s and their pairing behavior.
    \end{minipage}
    \hfill
    \begin{minipage}[t]{0.48\textwidth}
        \centering
        \includegraphics[width=1.0\linewidth]{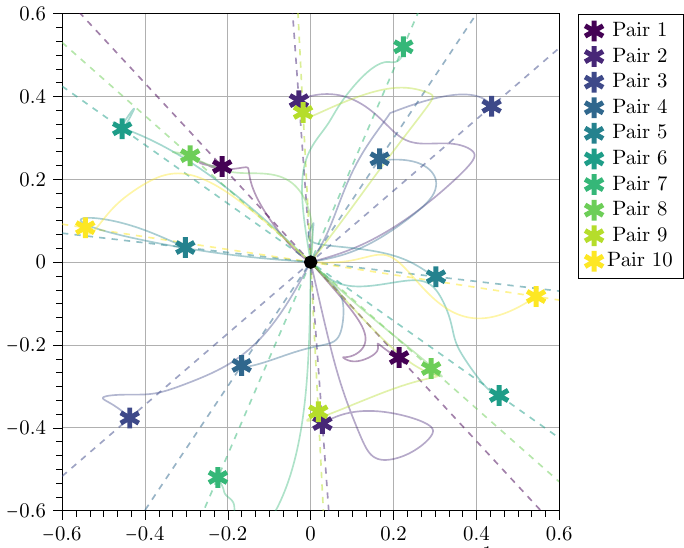}\\
        \textbf{(b)} Trajectory of $\vct{w}_i$'s and their pairing behavior.
    \end{minipage}
    \centering
    \caption{\textbf{Pairing pattern for $r=10$.} We train the network from small initialization when exactly parameterized ($k=20$ and $r=10$). On left (a), we depict the trajectories of individual weights in the outer layer ($\vct{v}_i$'s) across iterations. Each pair is indicated by the same color and the dashed line. A similar pairing is observed for the inner layer weights as well (b). While these vectors all lie in a higher dimensional space, we pick an arbitrary two dimensional axis to plot them in 2D.}
    \label{fig:apx_pairing_pattern_V_W_dim_10}
\end{figure}

%% file: sec/apx/useful_calculations.tex
In this section we provide the derivation of several useful identities.

\subsection{Population Loss} \label{apx:pop_loss_calc}
Let $\vct{a},\vct{b} \in \R^d$ be two arbitrary vectors. Define 
\begin{align}
    f \rbr{\vct{a},\vct{b}} &= \E_{\vct{x}} \bbr{ \bbr{\vct{a^Tx}}_+ \bbr{\vct{b^Tx}}_+} \nonumber \\
    &\stackrel{(a)}{=} \frac{1}{2\pi} \twonorm{\vct{a}} \twonorm{\vct{b}} \rbr{\sin \rbr{\theta_{\vct{a},\vct{b}}}+\rbr{\pi - \theta_{\vct{a},\vct{b}} } \cos \rbr{\theta_{\vct{a},\vct{b}}}} \label{eq:relu_dual}
\end{align}
where $\theta_{\vct{a},\vct{b}} = \cos^{-1} \rbr{ \frac{\vct{a^Tb}}{\twonorm{\vct{a}} \twonorm{\vct{b}}}}$, expectation is over $\vct{x} \sim \mathcal{N}\rbr{\vct{0}, \mtx{I}_d}$ and inequality (a) follows from the Table 1 in \citep{daniely2016toward}.
\\
\\
Using these we calculate the closed form for the population loss (\ref{eq:pop_loss}) as:
\begin{align}
    \calL\rbr{\mtx{\theta}} &= \frac{1}{2}\E_{\vct{x}}\bbr{\twonorm{\sum_{i=1}^k v_i \phi \rbr{\vct{w_i^T x}} - \a^T\x}^2} \nonumber \\
    &= \frac{1}{2} \sum_{i=1}^k \sum_{j=1}^k v_i v_j \E_{\vct{x}}\bbr{\phi \rbr{\vct{w_i^T x}} \phi \rbr{\vct{w_j^T x}}} - \sum_{i=1}^k v_i \a^T \E_{\vct{x}}\bbr{\phi \rbr{\vct{w_i^T x}} \vct{x}} + \frac{1}{2} \E_{\vct{x}}\bbr{\a^T\x \x^T \a} \nonumber \\
    &= \frac{1}{2} \sum_{i=1}^k \sum_{j=1}^k v_i v_j \E_{\vct{x}}\bbr{\phi \rbr{\vct{w_i^T x}} \phi \rbr{\vct{w_j^T x}}} - \sum_{i=1}^k v_i \a^T \E_{\vct{x}}\bbr{\phi \rbr{\vct{w_i^T x}} \vct{x}} + \frac{\twonorm{\a}^2}{2} \nonumber \\
    &\stackrel{(a)}{=} \frac{1}{2} \sum_{i=1}^k \sum_{j=1}^k v_i v_j f \rbr{\vct{w}_i,\vct{w}_j} - \sum_{i=1}^k v_i \a^T \E_{\vct{x}}\bbr{\phi \rbr{\vct{w_i^T x}} \vct{x}} + \frac{\twonorm{\a}^2}{2} \nonumber \\
    &\stackrel{(b)}{=} \frac{1}{2} \sum_{i=1}^k \sum_{j=1}^k v_i v_j f \rbr{\vct{w}_i,\vct{w}_j} - \sum_{i=1}^k v_i \a^T \E_{\vct{x}}\bbr{\nabla_{\vct{x}}\phi \rbr{\vct{w_i^T x}}} + \frac{\twonorm{\a}^2}{2} \nonumber \\
    &\stackrel{}{=} \frac{1}{2} \sum_{i=1}^k \sum_{j=1}^k v_i v_j f \rbr{\vct{w}_i,\vct{w}_j} - \sum_{i=1}^k v_i \a^T \vct{w}_i \E_{\vct{x}}\bbr{\phi' \rbr{\vct{w_i^T x}}} + \frac{\twonorm{\a}^2}{2} \nonumber \\   
    &\stackrel{(c)}{=} \frac{1}{2} \sum_{i=1}^k \sum_{j=1}^k v_i v_j f \rbr{\vct{w}_i,\vct{w}_j} - \frac{1}{2} \sum_{i=1}^k v_i \a^T \vct{w}_i + \frac{\twonorm{\a}^2}{2} \nonumber \\  
    &= \frac{1}{4\pi} \sum_{i=1}^k \sum_{j=1}^k v_i v_j \twonorm{\vct{w}_i} \twonorm{\vct{w}_j} \rbr{\sin\theta_{ij}+\rbr{\pi - \theta_{ij}} \cos \theta_{ij}} - \frac{1}{2} \sum_{i=1}^k v_i \a^T \vct{w}_i + \frac{\twonorm{\a}^2}{2}  \label{eq:pop_loss_closed_form_summation}
\end{align}
where equation (a) follows from the definition of $f\rbr{\vct{a},\vct{b}}$, (b) follows from the Stein's Lemma, and finally (c) follows from the fact that derivative of ReLU activation is the step function and $\vct{w}_i^T \vct{x} > 0$ with probability $\frac{1}{2}$.
\\
\\
We also write this in a more compact matrix form as follows:
\begin{equation*}
    \calL\rbr{\mtx{\theta}} = \frac{1}{4 \pi} \vct{u}^T \rbr{\sin\rbr{\mtx{\Theta}} + \rbr{\pi \mathbbm{1} \mathbbm{1}^T - \mtx{\Theta}} \odot \cos\rbr{\mtx{\Theta}}} \vct{u} - \frac{1}{2} \mtx{a^TW^Tv} + \frac{1}{2} \twonorm{\vct{a}}^2
\end{equation*}
where $\vct{\omega}_i = \twonorm{\vct{w}_i}$, $\vct{u} = diag\rbr{\vct{\omega}} \vct{v}$, and $\mtx{\theta}_{ij}$ is the angle between $\vct{w}_i$ and $\vct{w}_j$.

\subsection{Population Gradient}
\paragraph{Gradient w.r.t. $\W$:} Let us define,
\begin{align}
    g \rbr{\vct{a},\vct{b}} &= \frac{\partial}{\partial \vct{a}} f \rbr{\vct{a},\vct{b}} \nonumber \\
    &= \frac{1}{2 \pi} \rbr{\twonorm{\vct{b}}\sin \rbr{\theta_{\vct{a},\vct{b}}}\vct{\Bar{a}} + \rbr{\pi - \theta_{\vct{a},\vct{b}} }\vct{b} } \quad \rbr{\vct{\Bar{a}} = \frac{\vct{a}}{\twonorm{\vct{a}}}, \quad \vct{\Bar{b}} = \frac{\vct{b}}{\twonorm{\vct{b}}} } \nonumber \\ 
    &= \frac{\twonorm{\vct{b}}}{2 \pi} \rbr{\sin \rbr{\theta_{\vct{a},\vct{b}}}\vct{\Bar{a}} + \rbr{\pi - \theta_{\vct{a},\vct{b}} }\vct{\Bar{b}} }.
\end{align}
Taking the derivative of (\ref{eq:pop_loss_closed_form_summation}) with respect to $\vct{w}_i$, we get
\begin{align*}
    \nabla_{\vct{w_i}}\calL\rbr{\mtx{\theta}} &= \frac{1}{2} v_i^2 \vct{w_i} + \sum_{\substack{j=1 \\ i \neq j}}^k v_i v_j g\rbr{\vct{w_i},\vct{w_j}} - \frac{1}{2}v_i \a \\ 
    &= \sum_{j=1}^k v_i v_j g\rbr{\vct{w_i},\vct{w_j}} - \frac{1}{2} v_i \a \\ 
    &= \frac{1}{2\pi} \sum_{j=1}^k v_i v_j \twonorm{\vct{w_j}} \rbr{\sin\theta_{ij}\vct{\Bar{w_i}} + \rbr{\pi - \theta_{ij} }\vct{\Bar{w_j}}} - \frac{1}{2}v_i \a
\end{align*}
In matrix form:
\begin{align} \label{eq:pop_gradient_w}
    \nabla_{\mtx{W}}\calL\rbr{\mtx{\theta}} =& \frac{1}{2\pi} diag \rbr{\vct{v}} \rbr{\rbr{\pi \mathbbm{1}\mathbbm{1}^T - \mtx{\Theta}} diag\rbr{\vct{u}} + diag \rbr{\sin\rbr{\mtx{\Theta}} \vct{u}}}\mtx{\Bar{W}} - \frac{1}{2}\vct{v}\vct{a}^T
\end{align}
where $\vct{\omega}_i = \twonorm{\vct{w}_i}$, and $\vct{u} = diag\rbr{\vct{\omega}} \vct{v}$. 
\\
\\
\paragraph{Gradient w.r.t. $\v$:} Taking the derivative of (\ref{eq:pop_loss_closed_form_summation}) with respect to $v_i$, we get
\begin{align*}
    \nabla_{v_i}\calL\rbr{\mtx{\theta}} &= v_i f\rbr{\vct{w_i},\vct{w_i}} + \sum_{\substack{j=1 \\ i \neq j}}^k v_j f\rbr{\vct{w_i},\vct{w_j}} - \frac{1}{2}\a^T\vct{w_i} \\
    &= \sum_{j=1}^k v_j f\rbr{\vct{w_i},\vct{w_j}} - \frac{1}{2}\a^T\vct{w_i} \\
    &= \frac{1}{2\pi} \sum_{j=1}^k v_j \twonorm{\vct{w_i}} \twonorm{\vct{w_j}} \rbr{\sin\theta_{ij}+\rbr{\pi - \theta_{ij}} \cos\theta_{ij}} - \frac{1}{2}\a^T\vct{w_i}
\end{align*}
In matrix form:
\begin{align} \label{eq:pop_gradient_v}
    \nabla_{\v}Loss =& \frac{1}{2\pi} diag\rbr{\vct{\omega}}\rbr{\sin\mtx{\Theta} + \cos\mtx{\Theta}\odot\rbr{\pi\1\1^T - \mtx{\Theta}}}diag\rbr{\vct{\omega}} \v - \frac{1}{2}\mtx{W}\a
\end{align}
where $\vct{\omega}_i = \twonorm{\vct{w}_i}$. Finally, we note that the gradient w.r.t $\vct{w}_i$ and $v_i$ are related with the following simple identity:
\begin{align} \label{eq:grad_useful_identity}
    \vct{w}_i^T \nabla_{\vct{w_i}}\calL\rbr{\mtx{\theta}} = v_i \nabla_{v_i}\calL\rbr{\mtx{\theta}}.
\end{align}

\input{sec/apx/hessian_calculation}

%% file: sec/apx/hessian_calculation.tex
\subsection{Population Hessian}
The Hessian consists of four blocks (3 unique) due to interaction of $\v$ and $\W$ terms. We provide these individual blocks below and calculations in the following subsections. Define $\Bar{\vct{w}_l}=\frac{\vct{w}_l}{\twonorm{\vct{w}_l}}$, $\mtx{P}_{\vct{w}_l^\perp} = \rbr{I - \Bar{\vct{w}_l}\Bar{\vct{w}_l}^T}$, and $\vct{w}_{\ell,m^\perp} = \mtx{P}_{\vct{w}_m^\perp} \vct{w}_{\ell}$. Then we have,
\begin{align} \label{eq:pop_hessian}
    \nabla_{v_\ell, v_m}^2 \mathcal{L}\rbr{\vct{\theta}} &= \frac{\twonorm{\w_\ell}\twonorm{\w_m}}{2\pi} \rbr{\rbr{\pi - \theta_{\ell,m}}\cos\theta_{\ell,m} + \sin\theta_{\ell,m}}, \\
    \nabla_{v_\ell, \w_m}^2 \mathcal{L}\rbr{\vct{\theta}} &= \begin{cases}
     \frac{v_\ell \w_\ell^T - \a^T}{2} + \sum_{i=1}^k \frac{v_i \twonorm{\w_i}}{2\pi} \rbr{\rbr{\pi-\theta_{\ell,i}}\bar{\w}_i^T + \sin\theta_{\ell,i} \bar{\w}_\ell^T} & \ell=m\\
      \frac{v_m \twonorm{\w_\ell}}{2\pi} \rbr{\rbr{\pi - \theta_{\ell,m}} \bar{\w}_\ell^T + \sin\theta_{\ell,m} \bar{\w}_m^T} & \ell\neq m \nonumber
    \end{cases}, \\
    \nabla_{\vct{w}_\ell, \vct{w}_m}^2 \mathcal{L}\rbr{\vct{\theta}} &= \begin{cases}
      \frac{\vct{v}_\ell^2}{2}\mtx{I} + \frac{\vct{v}_\ell}{2\pi\twonorm{\vct{w}_\ell}} \sum_{i=1}^k \vct{v}_i \twonorm{\vct{w}_i} \sin\rbr{\theta_{\ell, i}} \rbr{\mtx{P}_{\vct{w}_l^\perp} + \frac{\mtx{P}_{\vct{w}_l^\perp} \bar{\vct{w}}_i \bar{\vct{w}}_i^T \mtx{P}_{\vct{w}_l^\perp}}{\twonorm{\mtx{P}_{\vct{w}_l^\perp} \bar{\vct{w}}_i }^2}} & \ell=m\\
      \frac{\vct{v}_\ell \vct{v}_m}{2\pi} \rbr{\Bar{\vct{w}}_\ell \Bar{\vct{w}}_{m,\ell^\perp}^T + \Bar{\vct{w}}_m \Bar{\vct{w}}_{\ell,m^\perp}^T + \rbr{\pi - \theta_{\ell, m}} \mtx{I}} & \ell\neq m \nonumber
    \end{cases}
\end{align}

\subsubsection{Calculating the $\texorpdfstring{\v,\v}{}$ block}
We have,
\begin{align*}
   \nabla_{v_\ell, v_m}^2 \mathcal{L}\rbr{\vct{\theta}} &= \E_{\x}\bbr{\rbr{f\rbr{\mtx{\theta};\x}-\a^T\x} \nabla_{v_{\ell}, v_m}^2 f\rbr{\mtx{\theta};\x} + \nabla_{v_{\ell}} f\rbr{\mtx{\theta};\x} \nabla_{v_{m}} f\rbr{\mtx{\theta};\x}^T} \\
   &= \E_{\x}\bbr{\nabla_{v_{\ell}} f\rbr{\mtx{\theta};\vct{x}} \nabla_{v_{m}} f\rbr{\mtx{\theta};\vct{x}}^T} \\
   &= \E_{\x} \bbr{\phi\rbr{\w_\ell^T \x} \phi\rbr{\w_m^T \x}} \\
   &= \frac{\twonorm{\w_\ell}\twonorm{\w_m}}{2\pi} \rbr{\rbr{\pi - \theta_{\ell,m}}\cos\theta_{\ell,m} + \sin\theta_{\ell,m}}
\end{align*}
where the last step follows from the Table 1 in \citep{daniely2016toward}.

\subsubsection{Calculating the $\texorpdfstring{\v,\text{vect}\rbr{\W}}{}$ block}
We have,
\begin{align*}
   \nabla_{v_\ell, \vct{w}_m}^2 \mathcal{L}\rbr{\vct{\theta}} &= \E_{\x}\bbr{\rbr{f\rbr{\mtx{\theta};\x}-\a^T\x} \nabla_{v_{\ell}, \w_m}^2 f\rbr{\mtx{\theta};\x} + \nabla_{v_{\ell}} f\rbr{\mtx{\theta};\x} \nabla_{\w_{m}} f\rbr{\mtx{\theta};\x}^T}.
\end{align*}
for calculation of individual terms refer down below.

\noindent \textbf{Calculating the $\E_{\vct{x}}\bbr{\nabla_{v_{\ell}} f\rbr{\mtx{\theta};\x} \nabla_{\w_{m}} f\rbr{\mtx{\theta};\x}^T}$ term:} We have,
\begin{align*}
    \E_{\x}\bbr{\nabla_{v_{\ell}} f\rbr{\mtx{\theta};\x} \nabla_{\w_{m}} f\rbr{\mtx{\theta};\x}^T} &= \E_{\x}\bbr{\phi\rbr{\w_\ell^T \x} v_m \phi'\rbr{\w_m^T \x} \x^T} \\
    &= v_m \twonorm{\w_\ell} \E_{\x}\bbr{\phi\rbr{\bar{\w}_\ell^T \x} \phi'\rbr{\bar{\w}_m^T \x} \x^T} \\
    &\stackrel{(a)}{=} v_m \twonorm{\w_\ell} \E_{\x}\bbr{\nabla_{\x}\rbr{\phi\rbr{\bar{\w}_\ell^T \x} \phi'\rbr{\w_m^T \x}}} \\ 
    &= v_m \twonorm{\w_\ell} \E_{\x}\bbr{\phi'\rbr{\bar{\w}_\ell^T \x} \phi'\rbr{\bar{\w}_m^T \x} \bar{\w}_\ell^T + \phi\rbr{\bar{\w}_\ell^T \x} \delta\rbr{\bar{\w}_m^T \x} \bar{\w}_m^T} \\
    &\stackrel{(b)}{=} v_m \twonorm{\w_\ell} \rbr{\rbr{\frac{\pi - \theta_{\ell, m}}{2\pi}}  \bar{\w}_\ell^T + \E_{\x}\bbr{\phi\rbr{\bar{\w}_\ell^T \x} \delta\rbr{\bar{\w}_m^T \x} \bar{\w}_m^T}}
\end{align*}
where equation (a) follows from Stein's Lemma, and (b) follows from the dual activation of a step function. The handle the remaining expectation, we first define $g = \bar{\w}_m^T\x \sim \mathcal{N}\rbr{0, 1}$. Then,
\begin{align*}
    \E_{\x}\bbr{\phi\rbr{\bar{\w}_\ell^T \x} \delta\rbr{\bar{\w}_m^T \x}} &= \E_{\x}\bbr{\phi\rbr{\bar{\w}_\ell^T \mtx{P}_{\w_m^{\perp}} \x + \bar{\w}_\ell^T \bar{\w}_m g} \delta\rbr{g}} \\
    &= \frac{1}{\sqrt{2\pi}} \E_{\x}\bbr{\phi\rbr{\bar{\w}_\ell^T \mtx{P}_{\w_m^{\perp}} \x }} \quad \rbr{\text{Delta integration}} \\
    &= \frac{\twonorm{\mtx{P}_{\w_m^{\perp}} \bar{\w}_\ell}}{\sqrt{2\pi}} \E_{u}\bbr{\phi\rbr{u}} \quad \rbr{u \sim N\rbr{0,1}} \\
    &= \frac{\sin\theta_{\ell,m}}{2\pi} \quad \rbr{\text{Expectation of rectified Gaussian $f_{\vct{x}}(0)=\frac{1}{\sqrt{2\pi}}$}}.
\end{align*}
Combining everything:
\begin{align*}
    \E_{\x}\bbr{\nabla_{v_{\ell}} f\rbr{\mtx{\theta};\x} \nabla_{\w_{m}} f\rbr{\mtx{\theta};\x}^T} = \frac{v_m \twonorm{\w_\ell}}{2\pi} \rbr{\rbr{\pi - \theta_{\ell, m}} \bar{\w}_\ell^T + \sin\theta_{\ell,m} \bar{\w}_m^T}.
\end{align*}

\noindent \textbf{Calculating the $\E_{\x}\bbr{\rbr{f\rbr{\mtx{\theta};\x}-\a^T\x} \nabla_{v_{\ell}, \w_m}^2 f\rbr{\mtx{\theta};\x}}$ term:} Note that 
\begin{align*}
    \nabla_{v_{\ell}, \w_m}^2 f\rbr{\mtx{\theta};\x} = \1\cbr{\ell = m} \phi'\rbr{\bar{\w}_\ell^T \x} \x^T.
\end{align*}
Hence we focus only on $\ell = m$ case.
\begin{align*}
    \E_{\x}\bbr{\rbr{f\rbr{\mtx{\theta};\x}-\a^T\x} \nabla_{v_{\ell}, \w_m}^2 f\rbr{\mtx{\theta};\x}} &= \E_{\x}\bbr{r\rbr{\x} \phi'\rbr{\bar{\w}_\ell^T \x} \x^T} \\
    &\stackrel{(a)}{=} \E_{\x}\bbr{\nabla_{\x}r\rbr{\x}\phi'\rbr{\bar{\w}_\ell^T\x} + r\rbr{\x}\delta\rbr{\bar{\w}_\ell^T \x} \bar{\w}_\ell^T} \\
    &= \E_{\x}\bbr{\rbr{\sum_{i=1}^k v_i \phi'\rbr{\w_i^T \x}\w_i - \a}^T\phi'\rbr{\bar{\w}_\ell^T\x} + r\rbr{\x}\delta\rbr{\bar{\w}_\ell^T \x} \bar{\w}_\ell^T} \\
    &\stackrel{(b)}{=} \sum_{i=1}^k v_i \w_i^T \rbr{\frac{\pi - \theta_{\ell,i}}{2\pi}} - \frac{\a}{2} + \E_{\x}\bbr{r\rbr{\x}\delta\rbr{\bar{\w}_\ell^T \x}}\bar{\w}_\ell^T,
\end{align*}
where (a) follows from the Stein's identity, and (b) follows from the dual activation of step function. To handle the remaining expectation term, define $g = \bar{\w}_{\ell}^T\x \sim \mathcal{N}\rbr{0, 1}$. Then,
\begin{align*}
    \E_{\x}\bbr{r\rbr{\x}\delta\rbr{\bar{\w}_\ell^T \x}} &= \E_{\x}\bbr{r\rbr{\mtx{P}_{\w_{\ell}^{\perp}} \x + \bar{\w}_\ell g} \delta\rbr{g}} \\
    &= \frac{1}{\sqrt{2\pi}} \E_{\vct{x}}\bbr{r\rbr{\mtx{P}_{\vct{w_l}^\perp}\vct{x}}} \quad \rbr{\text{Delta integration}} \\
    &= \frac{1}{\sqrt{2\pi}} \sum_{i=1}^k \vct{v}_i \E_{\vct{x}}\bbr{\phi\rbr{\vct{w_i}^T\mtx{P}_{\w_\ell^\perp} \vct{x}}} \\
    &= \frac{1}{\sqrt{2\pi}} \sum_{i=1}^k \vct{v}_i \twonorm{\mtx{P}_{\w_\ell^\perp}\w_i} \E_{u}\bbr{\phi\rbr{u}} \quad \rbr{u \sim N\rbr{0,1}} \\
    &= \frac{1}{2\pi}\sum_{i=1}^k \vct{v}_i \twonorm{\vct{w}_i} \sin\rbr{\theta_{\ell, i}} \quad \rbr{\text{Expectation of rectified Gaussian $f_{\vct{x}}(0)=\frac{1}{\sqrt{2\pi}}$}}
\end{align*}
Combining everything,
\begin{align*}
    \E_{\x}\bbr{\rbr{f\rbr{\mtx{\theta};\x}-\a^T\x} \nabla_{v_{\ell}, \w_m}^2 f\rbr{\mtx{\theta};\x}} = \sum_{i=1}^k \frac{v_i \twonorm{\w_i}}{2\pi} \rbr{\rbr{\pi - \theta_{\ell, i}} \bar{\w}_i^T + \sin\theta_{\ell,i}\bar{\w}_\ell^T} - \frac{\a^T}{2},
\end{align*}
when $\ell = m$. Otherwise, this term is $\vct{0}$.

\subsubsection{Calculating the $\texorpdfstring{\text{vect}\rbr{\W},\text{vect}\rbr{\W}}{}$ block}
We have,
\begin{align*}
   \nabla_{\vct{w}_\ell, \vct{w}_m}^2 \mathcal{L}\rbr{\vct{\theta}} &= \E_{\vct{x}}\bbr{\rbr{f\rbr{\mtx{\theta};\vct{x}}-\vct{a}^T\vct{x}} \nabla_{\vct{w}_{\ell}, \vct{w}_m}^2 f\rbr{\mtx{\theta};\vct{x}} + \nabla_{\vct{w}_{\ell}} f\rbr{\mtx{\theta};\vct{x}} \nabla_{\vct{w}_{m}} f\rbr{\mtx{\theta};\vct{x}}^T}.
\end{align*}
for calculation of individual terms refer down below.

\noindent \textbf{Calculating the $\E_{\vct{x}}\bbr{\nabla_{\vct{w}_{\ell}} f\rbr{\mtx{\theta};\vct{x}} \nabla_{\vct{w}_{m}} f\rbr{\mtx{\theta};\vct{x}}^T}$ term:} We have,
\begin{align*}
    \E_{\vct{x}}\bbr{\nabla_{\vct{w}_{\ell}} f\rbr{\mtx{\theta};\vct{x}} \nabla_{\vct{w}_{m}} f\rbr{\mtx{\theta};\vct{x}}^T} &= \E_{\vct{x}}\bbr{\vct{v}_\ell \phi'\rbr{\vct{w}_\ell^T \vct{x}} \vct{x} \vct{x}^T \phi'\rbr{\vct{w}_m^T \vct{x}} \vct{v}_m} \\
    &= \vct{v}_\ell \vct{v}_m \E_{\vct{x}}\bbr{\phi'\rbr{\Bar{\vct{w}}_\ell^T \vct{x}} \phi'\rbr{\Bar{\vct{w}}_m^T \vct{x}} \vct{x} \vct{x}^T }
\end{align*}

To tackle the expectation term, we use second order Stein's Lemma, $\E_{\vct{x}}\bbr{g(\vct{x})\vct{xx^T}} = \E_{\vct{x}}\bbr{\nabla_{\vct{x}}^2g\rbr{\vct{x}}} + \E_{\vct{x}}\bbr{g\rbr{\vct{x}}}\mtx{I}$. 

\begin{align*}
\E_{\vct{x}}\bbr{\phi'\rbr{\Bar{\vct{w}}_\ell^T \vct{x}} \phi'\rbr{\Bar{\vct{w}}_m^T \vct{x}} \vct{x} \vct{x}^T } &= \E_{\vct{x}}\bbr{\nabla_{\vct{x}}^2 \rbr{\phi'\rbr{\Bar{\vct{w}}_\ell^T \vct{x}} \phi'\rbr{\Bar{\vct{w}}_m^T \vct{x}}}} + \E_{\vct{x}}\bbr{\phi'\rbr{\Bar{\vct{w}}_\ell^T \vct{x}} \phi'\rbr{\Bar{\vct{w}}_m^T \vct{x}}} \mtx{I}
\end{align*}

\underline{First term is:}
\begin{align*}
    \E_{\vct{x}}\bbr{\nabla_{\vct{x}}^2 \rbr{\phi'\rbr{\Bar{\vct{w}}_\ell^T \vct{x}} \phi'\rbr{\Bar{\vct{w}}_m^T \vct{x}}}} &= \E_{\vct{x}}\bbr{\delta'\rbr{\Bar{\vct{w}}_\ell^T \vct{x}} \phi'\rbr{\Bar{\vct{w}}_m^T \vct{x}} \Bar{\vct{w}}_\ell \Bar{\vct{w}}_\ell^T} \\
    &\quad + \E_{\vct{x}}\bbr{\delta\rbr{\Bar{\vct{w}}_\ell^T \vct{x}} \delta\rbr{\Bar{\vct{w}}_m^T \vct{x}} \rbr{\Bar{\vct{w}}_\ell \Bar{\vct{w}}_m^T + \Bar{\vct{w}}_m \Bar{\vct{w}}_\ell^T}} \\
    &\quad\quad + \E_{\vct{x}}\bbr{\phi'\rbr{\Bar{\vct{w}}_\ell^T \vct{x}} \delta'\rbr{\Bar{\vct{w}}_m^T \vct{x}} \Bar{\vct{w}}_m \Bar{\vct{w}}_m^T} \\
\end{align*}
These terms can be grouped in two.
\begin{align*}
    \E_{\vct{x}}\bbr{\delta'\rbr{\Bar{\vct{w}}_\ell^T \vct{x}} \phi'\rbr{\Bar{\vct{w}}_m^T \vct{x}}} &= \E_{\vct{x}, g}\bbr{\delta'\rbr{g} \phi'\rbr{\Bar{\vct{w}}_m^T \projw{\ell} \vct{x} + \Bar{\vct{w}}_m^T \Bar{\vct{w}}_\ell g}} \\
    &= -\frac{\cos\rbr{\theta_{\ell,m}}}{\sqrt{2\pi}} \E_{\vct{x}}\bbr{\delta \rbr{\Bar{\vct{w}}_m^T \projw{\ell} \vct{x}}} \quad \rbr{\delta'\rbr{x}f\rbr{x} = -f'\rbr{0}\delta\rbr{x}} \\
    &= -\frac{\cos\rbr{\theta_{\ell,m}}}{2 \pi \twonorm{\projw{\ell} \Bar{\vct{w}}_m}} = -\frac{\cos\rbr{\theta_{\ell,m}}}{2 \pi \sin\rbr{\theta_{\ell,m}}}
\end{align*}
and the other one is
\begin{align*}
    \E_{\vct{x}}\bbr{\delta\rbr{\Bar{\vct{w}}_\ell^T \vct{x}} \delta\rbr{\Bar{\vct{w}}_m^T \vct{x}}} &= \E_{\vct{x}, g}\bbr{\delta\rbr{g} \delta\rbr{\Bar{\vct{w}}_m^T \projw{\ell} \vct{x} + \Bar{\vct{w}}_m^T \Bar{\vct{w}}_\ell g}} \\
    &= \frac{1}{2 \pi \twonorm{\projw{\ell} \Bar{\vct{w}}_m}} =  \frac{1}{2 \pi \sin\rbr{\theta_{\ell, m}}}
\end{align*}
Therefore we get:
\begin{align*}
    \E_{\vct{x}}\bbr{\nabla_{\vct{x}}^2 \rbr{\phi'\rbr{\Bar{\vct{w}}_\ell^T \vct{x}} \phi'\rbr{\Bar{\vct{w}}_m^T \vct{x}}}} &= \frac{\Bar{\vct{w}}_\ell \Bar{\vct{w}}_m^T + \Bar{\vct{w}}_m \Bar{\vct{w}}_\ell^T}{2\pi \sin\rbr{\theta_{\ell,m}}} - \frac{\cos\rbr{\theta_{\ell,m}} \rbr{\Bar{\vct{w}}_\ell \Bar{\vct{w}}_\ell^T + \Bar{\vct{w}}_m \Bar{\vct{w}}_m^T}}{2\pi \sin\rbr{\theta_{\ell,m}}}
\end{align*}
\underline{Second term is:}
\begin{align*}
\E_{\vct{x}}\bbr{\phi'\rbr{\Bar{\vct{w}}_\ell^T \vct{x}} \phi'\rbr{\Bar{\vct{w}}_m^T \vct{x}}} \mtx{I} = \rbr{\frac{\pi - \theta_{\ell, m}}{2\pi}} \mtx{I} \quad \rbr{\text{Dual activation of step function}}
\end{align*}

Combining everything:
\begin{align*}
\E_{\vct{x}}\bbr{\nabla_{\vct{w}_{\ell}} f\rbr{\mtx{\theta};\vct{x}} \nabla_{\vct{w}_{m}} f\rbr{\mtx{\theta};\vct{x}}^T} &= \vct{v}_\ell \vct{v}_m \rbr{\frac{\Bar{\vct{w}}_\ell \Bar{\vct{w}}_m^T + \Bar{\vct{w}}_m \Bar{\vct{w}}_\ell^T - \cos\rbr{\theta_{\ell,m}} \rbr{\Bar{\vct{w}}_\ell \Bar{\vct{w}}_\ell^T + \Bar{\vct{w}}_m \Bar{\vct{w}}_m^T}}{2\pi \sin\rbr{\theta_{\ell,m}}} + \rbr{\frac{\pi - \theta_{\ell, m}}{2\pi}} \mtx{I}}
\end{align*}

or alternatively (by substituting $\cos\rbr{\theta_{i,j}} = \Bar{w}_i^T\Bar{w}_j$): 
\begin{align*}
\E_{\vct{x}}\bbr{\nabla_{\vct{w}_{\ell}} f\rbr{\mtx{\theta};\vct{x}} \nabla_{\vct{w}_{m}} f\rbr{\mtx{\theta};\vct{x}}^T} &= \frac{\vct{v}_\ell \vct{v}_m}{2\pi} \rbr{\Bar{\vct{w}}_\ell \Bar{\vct{w}}_{m,\ell^\perp}^T + \Bar{\vct{w}}_m \Bar{\vct{w}}_{\ell,m^\perp}^T + \rbr{\pi - \theta_{\ell, m}} \mtx{I}}
\end{align*}

\noindent \textbf{Calculating the $\E_{\vct{x}}\bbr{\rbr{f\rbr{\mtx{\theta};\vct{x}}-\vct{a}^T\vct{x}} \nabla_{\vct{w}_{\ell}, \vct{w}_m}^2 f\rbr{\mtx{\theta};\vct{x}} }$ term:} Note that 
\begin{align*}
    \nabla_{\vct{w}_{\ell}, \vct{w}_m}^2 f\rbr{\mtx{\theta};\vct{x}} = diag\rbr{\vct{v} \odot \phi''\rbr{\mtx{Wx}}}_{\ell, m} \mtx{x x^T}.
\end{align*}
This expectation is $\mtx{0}$ when $\ell \neq m$. Define $g = \Bar{\vct{w_l}}^T\vct{x} \sim \mathcal{N}\rbr{0, 1}$.
\begin{align*}
    \E_{\vct{x}}\bbr{\rbr{f\rbr{\mtx{\theta};\vct{x}}-\vct{a}^T\vct{x}} \nabla_{\vct{w}_{\ell}, \vct{w}_\ell}^2 f\rbr{\mtx{\theta};\vct{x}} } &= \E_{\vct{x}}\bbr{r\rbr{\vct{x}} \vct{v}_\ell \delta\rbr{\vct{w}_\ell^T\vct{x}}\mtx{xx^T}} \\
    &=  \frac{\vct{v}_\ell}{\twonorm{\vct{w}_\ell}}  \E_{\vct{x}}\bbr{r\rbr{\vct{x}} \delta\rbr{\Bar{\vct{w}_{\ell}}^T\vct{x}}\mtx{xx^T}} \\
    &= \frac{\vct{v}_\ell}{\twonorm{\vct{w}_\ell}}  \E_{\vct{x}, g}\bbr{r\rbr{\mtx{P}_{\vct{w_l}^\perp}\vct{x} + \Bar{\vct{w_l}}g} \delta\rbr{g}\rbr{\mtx{P}_{\vct{w_l}^\perp}\vct{x} + \Bar{\vct{w_l}}g}\rbr{\mtx{P}_{\vct{w_l}^\perp}\vct{x} + \Bar{\vct{w_l}}g}^T} \\
    &= \frac{\vct{v}_\ell}{\sqrt{2 \pi}\twonorm{\vct{w}_\ell}}  \mtx{P}_{\vct{w_l}^\perp} \E_{\vct{x}}\bbr{r\rbr{\mtx{P}_{\vct{w_l}^\perp}\vct{x}}\mtx{xx^T}} \mtx{P}_{\vct{w_l}^\perp}\\
\end{align*}
To tackle the expectation term, we use second order Stein's Lemma, $\E_{\vct{x}}\bbr{g(\vct{x})\vct{xx^T}} = \E_{\vct{x}}\bbr{\nabla_{\vct{x}}^2g\rbr{\vct{x}}} + \E_{\vct{x}}\bbr{g\rbr{\vct{x}}}\mtx{I}$. 

\begin{align*}
\E_{\vct{x}}\bbr{r\rbr{\mtx{P}_{\vct{w_l}^\perp}\vct{x}}\mtx{xx^T}} &= \E_{\vct{x}}\bbr{\nabla_{\vct{x}}^2 r\rbr{\mtx{P}_{\vct{w_l}^\perp}\vct{x}}} + \E_{\vct{x}}\bbr{r\rbr{\mtx{P}_{\vct{w_l}^\perp}\vct{x}}}\mtx{I}
\end{align*}

First term is:
\begin{align*}
    \E_{\vct{x}}\bbr{\nabla_{\vct{x}}^2 r\rbr{\mtx{P}_{\vct{w_l}^\perp}\vct{x}}} &= \E_{\vct{x}}\bbr{\nabla_{\vct{x}}^2 \rbr{\sum_{i=1}^k \vct{v}_i \phi\rbr{\vct{w}_i^T\mtx{P}_{\vct{w_l}^\perp}\vct{x}}}} \quad \rbr{\vct{a}^T\vct{x} \ \text{vanishes.}} \\
    &= \sum_{i=1}^k \vct{v}_i \E_{\vct{x}}\bbr{\nabla_{\vct{x}}^2\phi\rbr{\vct{w}_i^T\mtx{P}_{\vct{w_l}^\perp}\vct{x}}} \\
    &= \sum_{i=1}^k \vct{v}_i \E_{\vct{x}}\bbr{\delta\rbr{\vct{w}_i^T\mtx{P}_{\vct{w_l}^\perp}\vct{x}}\mtx{P}_{\vct{w_l}^\perp} \vct{w}_i \vct{w}_i^T \mtx{P}_{\vct{w_l}^\perp}} \\
    &= \sum_{i=1}^k \frac{\vct{v}_i}{\twonorm{\mtx{P}_{\vct{w_l}^\perp}\vct{w_i}}} \E_{u}\bbr{\delta\rbr{u}} \mtx{P}_{\vct{w_l}^\perp} \vct{w}_i \vct{w}_i^T \mtx{P}_{\vct{w_l}^\perp} \\
    &= \frac{1}{\sqrt{2\pi}} \sum_{i=1}^k \frac{\vct{v}_i}{\twonorm{\vct{w}_i} \sin\rbr{\theta_{\ell, i}}} \mtx{P}_{\vct{w_l}^\perp} \vct{w}_i \vct{w}_i^T \mtx{P}_{\vct{w_l}^\perp} \quad \rbr{\text{Delta integration}} \\
    &= \frac{1}{\sqrt{2\pi}} \sum_{i=1}^k \frac{\vct{v}_i \twonorm{\vct{w}_i}}{\sin\rbr{\theta_{\ell, i}}} \mtx{P}_{\vct{w_l}^\perp} \bar{\vct{w}}_i \bar{\vct{w}}_i^T \mtx{P}_{\vct{w_l}^\perp}
\end{align*}

Second term is:
\begin{align*}
    \E_{\vct{x}}\bbr{r\rbr{\mtx{P}_{\vct{w_l}^\perp}\vct{x}}} &= \sum_{i=1}^k \vct{v}_i \E_{\vct{x}}\bbr{\phi\rbr{\vct{w_i}^T\mtx{P}_{\vct{w_l}^\perp} \vct{x}}} \\
    &= \sum_{i=1}^k \vct{v}_i \twonorm{\mtx{P}_{\vct{w_l}^\perp}\vct{w_i}} \E_{u}\bbr{\phi\rbr{u}} \quad \rbr{u \sim N\rbr{0,1}} \\
    &= \frac{1}{\sqrt{2\pi}}\sum_{i=1}^k \vct{v}_i \twonorm{\vct{w}_i} \sin\rbr{\theta_{\ell, i}} \quad \rbr{\text{Expectation of rectified Gaussian $f_{\vct{x}}(0)=\frac{1}{\sqrt{2\pi}}$}}
\end{align*}
Combining both terms we get
\begin{align*}
\E_{\vct{x}}\bbr{r\rbr{\mtx{P}_{\vct{w_l}^\perp}\vct{x}}\mtx{xx^T}} &= \frac{1}{\sqrt{2\pi}}\sum_{i=1}^k \vct{v}_i \twonorm{\vct{w}_i} \rbr{\sin\rbr{\theta_{\ell, i}} \mtx{I} + \frac{\mtx{P}_{\vct{w_l}^\perp} \bar{\vct{w}}_i \bar{\vct{w}}_i^T \mtx{P}_{\vct{w_l}^\perp}}{ \sin\rbr{\theta_{\ell, i}}}}.
\end{align*}

Finally we plug this back to get:
\begin{align*}
    \E_{\vct{x}}\bbr{r\rbr{\vct{x}} \nabla_{\vct{w}_{\ell}, \vct{w}_\ell}^2 f\rbr{\mtx{\theta};\vct{x}} } &= \frac{\vct{v}_\ell}{2 \pi\twonorm{\vct{w}_\ell}}  \mtx{P}_{\vct{w_l}^\perp} \rbr{\sum_{i=1}^k \vct{v}_i \twonorm{\vct{w}_i} \rbr{\sin\rbr{\theta_{\ell, i}} \mtx{I} + \frac{\mtx{P}_{\vct{w_l}^\perp} \bar{\vct{w}}_i \bar{\vct{w}}_i^T \mtx{P}_{\vct{w_l}^\perp}}{ \sin\rbr{\theta_{\ell, i}}}}} \mtx{P}_{\vct{w_l}^\perp} \\
    &= \frac{\vct{v}_\ell}{2\pi\twonorm{\vct{w}_\ell}} \sum_{i=1}^k \vct{v}_i \twonorm{\vct{w}_i} \rbr{\sin\rbr{\theta_{\ell, i}}  \mtx{P}_{\vct{w_l}^\perp} + \frac{\mtx{P}_{\vct{w_l}^\perp} \bar{\vct{w}}_i \bar{\vct{w}}_i^T \mtx{P}_{\vct{w_l}^\perp}}{\sin\rbr{\theta_{\ell, i}}}} \\
    &= \frac{\vct{v}_\ell}{2\pi\twonorm{\vct{w}_\ell}} \sum_{i=1}^k \vct{v}_i \twonorm{\vct{w}_i} \sin\rbr{\theta_{\ell, i}} \rbr{\mtx{P}_{\vct{w_l}^\perp} + \frac{\mtx{P}_{\vct{w_l}^\perp} \bar{\vct{w}}_i \bar{\vct{w}}_i^T \mtx{P}_{\vct{w_l}^\perp}}{\twonorm{\mtx{P}_{\vct{w_l}^\perp} \bar{\vct{w}}_i }^2}}.
\end{align*}

%% file: sec/apx/old/proof_pop_directional_smoothness.tex
To establish the imbalance bound (Lemma~\ref{lemma:imbalance_bound}), we first introduce a key lemma that characterizes the gradient smoothness toward the global optima in the population case. This result relates the norm of the population gradient to the relative distance between the current parameters and the global optima:

\begin{lemma} \label{lemma:popgrad_smooth}
Under the constraint $v_1=v_2=1$, the following inequality holds for all $\w_1,\w_2\in\R^{d}$:
\begin{align*}
    \twonorm{\nabla_{\w_1}\mathcal{L}}^2 + \twonorm{\nabla_{\w_2}\mathcal{L}}^2 \leq \frac{5}{2} \rbr{\twonorm{\w_1-\a}^2 + \twonorm{\w_2+\a}^2}.
\end{align*}
\end{lemma}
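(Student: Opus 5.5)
The plan is to avoid the closed-form population gradient \eqref{eq:pop_gradient_w} and work directly with the expectation representation. With $v_1=v_2=1$ the network is $x\mapsto \phi(\w_1^T\x)-\phi(\w_2^T\x)$. Define the residual $r(\x):=\phi(\w_1^T\x)-\phi(\w_2^T\x)-\a^T\x$. Differentiating under the expectation, which is legitimate since ReLU is Lipschitz and its kink is a null set, gives
\begin{align*}
\nabla_{\w_1}\calL = \E_{\x}\bbr{r(\x)\,\1_{\{\w_1^T\x>0\}}\,\x},\qquad \nabla_{\w_2}\calL = -\E_{\x}\bbr{r(\x)\,\1_{\{\w_2^T\x>0\}}\,\x}.
\end{align*}

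The key observation is that the planted point is an exact interpolant, because $\phi(t)-\phi(-t)=t$. This lets us rewrite the residual as
\begin{align*}
r(\x)=\bbr{\phi(\w_1^T\x)-\phi(\a^T\x)}-\bbr{\phi(\w_2^T\x)-\phi(-\a^T\x)}.
\end{align*}
Since ReLU is $1$-Lipschitz, $|r(\x)|\le |(\w_1-\a)^T\x|+|(\w_2+\a)^T\x|$. Using $(s+t)^2\le 2s^2+2t^2$ and $\E[(\vct{u}^T\x)^2]=\twonorm{\vct{u}}^2$, we obtain
\begin{align*}
\E[r(\x)^2]\le 2\rbr{\twonorm{\w_1-\a}^2+\twonorm{\w_2+\a}^2}.
\end{align*}

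Next I would bound each gradient norm variationally. For a unit vector $\vct{u}$, Cauchy--Schwarz gives
\begin{align*}
\rbr{\vct{u}^T\nabla_{\w_1}\calL}^2\le \E[r^2]\;\E\bbr{\1_{\{\w_1^T\x>0\}}(\vct{u}^T\x)^2}=\tfrac12\,\E[r^2].
\end{align*}
The last equality holds because the map $\x\mapsto-\x$ preserves $(\vct{u}^T\x)^2$ and swaps the two half-spaces, so the truncated second moment is exactly half of $\twonorm{\vct{u}}^2=1$. Taking the supremum over $\vct{u}$ gives $\twonorm{\nabla_{\w_1}\calL}^2\le\frac12\E[r^2]$, and identically $\twonorm{\nabla_{\w_2}\calL}^2\le\frac12\E[r^2]$. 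Summing the two bounds yields
\begin{align*}
\twonorm{\nabla_{\w_1}\calL}^2+\twonorm{\nabla_{\w_2}\calL}^2\le 2\rbr{\twonorm{\w_1-\a}^2+\twonorm{\w_2+\a}^2},
\end{align*}
which is stronger than the claimed constant $\frac52$.

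The only delicate step is justifying the gradient formula when $\w_1$ or $\w_2$ equals $\vct{0}$, where ReLU is not differentiable in the weight. I would handle this in one of two ways: interpret the gradient via the convention $\phi'(0)=0$, under which the formula above still holds and the bound is unaffected, or argue by continuity of both sides away from the origin. Apart from this degenerate case, the argument needs no Hessian or angle computations. The main thing to double-check is the half-space second-moment identity, which follows cleanly from the symmetry of the Gaussian.
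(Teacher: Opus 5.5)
Your proof is correct, and it takes a genuinely different route from the paper's, reaching the stronger constant $2$. The paper starts from the closed-form arc-cosine gradient $\nabla_{\w_1}\calL = \frac{\w_1-\a}{2} - \frac{1}{2\pi}\rbr{\sin\theta\twonorm{\w_2}\bar{\w}_1 + (\pi-\theta)\w_2}$. It then proves $\twonorm{\sin\theta\twonorm{\w_2}\bar{\w}_1 + (\pi-\theta)\w_2} \le \pi\twonorm{\w_1+\w_2}$ by splitting into the cases $\theta<\frac{\pi}{2}$ and $\theta\ge\frac{\pi}{2}$. Combining this with $(s+t)^2\le 2s^2+2t^2$ and $\w_1+\w_2=(\w_1-\a)+(\w_2+\a)$ gives $\twonorm{\nabla_{\w_1}\calL}^2\le \frac32\twonorm{\w_1-\a}^2+\twonorm{\w_2+\a}^2$, and symmetrizing yields $\frac52$.

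You never open up the Gaussian kernel. The interpolation identity $t=\phi(t)-\phi(-t)$ and $1$-Lipschitzness give $2\calL=\E[r^2]\le 2\rbr{\twonorm{\w_1-\a}^2+\twonorm{\w_2+\a}^2}$. Cauchy--Schwarz with the half-space second moment then gives the sharper intermediate fact $\twonorm{\nabla_{\w_i}\calL}^2\le\calL$ for each $i$.

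What your route buys: no trigonometry or case analysis, and it uses only second moments and symmetry of the data. For symmetric data with covariance $\Sigma$ you get the same bound with an extra factor $\opnorm{\Sigma}^2$. It is also the mechanism the paper itself uses later, in Lemma~\ref{lem:component_deviation} and in the Lipschitz bound on the population loss at time $T$ in the proof of Theorem~\ref{thm:main}. What the paper's route buys: it exposes the explicit split of the population gradient into a $\frac{\w_1-\a}{2}$ term and an interaction term controlled by $\twonorm{\w_1+\w_2}$. That closed-form structure recurs throughout its phase-wise analysis.

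On the degenerate case: the paper tacitly assumes $\w_1,\w_2\neq\vct{0}$, since it uses $\bar{\w}_i$ and $\theta$. Your $\phi'(0)=0$ convention handles $\w_1=\vct{0}$. The continuity alternative does not define a gradient there, because for $\w_2\neq\vct{0}$ the limit of $\nabla_{\w_1}\calL$ as $\w_1\to\vct{0}$ depends on the direction of approach through the interaction term.
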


\begin{proof}

We begin by demonstrating that:
\begin{align} \label{eq:bound_on_part_of_grad_w1}
\twonorm{\sin\theta\twonorm{\w_2}\Bar{\w_1} + \rbr{\pi-\theta}\w_2} \le \pi \twonorm{\w_1 + \w_2}.
\end{align}

Note that $0\le \theta \le \pi$ since it is the angle between $\w_1$ and $\w_2$. We proceed by case analysis on the value of $\theta$. When $0\le\theta< \frac{\pi}{2}$, we have 
\begin{align*}
\twonorm{\sin\theta\twonorm{\w_2}\Bar{\w_1} + \rbr{\pi-\theta}\w_2} \stackrel{(a)}{\le} & \twonorm{\sin\theta\twonorm{\w_2}\Bar{\w_1}} + \twonorm{\rbr{\pi-\theta}\w_2} \\
    = & \sin\theta\twonorm{\w_2} + \rbr{\pi-\theta}\twonorm{\w_2} \\
    \stackrel{(b)}{\le} & \pi \twonorm{\w_2} \\
    \stackrel{(c)}{\le} & \pi \twonorm{\w_1 + \w_2}.
\end{align*}
In Inequality (a) we use the triangle inequality. Inequality (b) follows from the fact that $\sin\theta\le \theta$ when $\theta\ge 0$. Inequality (c) follows from the fact that $\theta\le \frac{\pi}{2}$.

When $\theta\ge \frac{\pi}{2}$, we observe that
\begin{align*}
\twonorm{\sin\theta\twonorm{\w_2}\Bar{\w_1} + \rbr{\pi-\theta}\w_2} \stackrel{(a)}{\le} & \twonorm{\sin\theta\twonorm{\w_2}\Bar{\w_1}} + \twonorm{\rbr{\pi-\theta}\w_2} \\
    = & \sin\theta\twonorm{\w_2} + \rbr{\pi-\theta}\twonorm{\w_2} \\
    = & \rbr{1 + \frac{\pi-\theta}{\sin\theta}} \sin\theta \twonorm{\w_2} \\
    \stackrel{(b)}{\le} & \rbr{1 + \frac{\pi-\theta}{\sin\theta}} \twonorm{\w_1 + \w_2} \\
    \stackrel{(c)}{\le} & \rbr{1 + \frac{\pi}{2}} \twonorm{\w_1 + \w_2} \\
    \stackrel{(d)}{\le} & \pi \twonorm{\w_1 + \w_2}.
\end{align*}
In Inequality (a) we use the triangle inequality. Inequality (b) follows from the fact that $\w_1 + \w_2$ has a component with magnitude $\sin\theta \twonorm{\w_2}$ perpendicular to $\w_1$. Inequality (c) follows since $\frac{\pi-\theta}{\sin\theta}$ attains its maximum at $\theta=\frac{\pi}{2}$ when restricted to the range $\theta\geq \frac{\pi}{2}$. Finally, (d) follows because $1 \leq \frac{\pi}{2}$. This finishes the proof of Ineq. \ref{eq:bound_on_part_of_grad_w1}. Note that due to symmetry we get the following as a corollary:
\begin{equation} \label{eq:bound_on_part_of_grad_w2}
\twonorm{\sin\theta\twonorm{\w_1}\Bar{\w_2} + \rbr{\pi-\theta}\w_1} \le \pi \twonorm{\w_1 + \w_2}.
\end{equation}
Under the constraint $v_1=v_2=1$, the partial gradients with respect to $\w_1$ and $\w_2$ are given separately by:
\begin{align*}
    &\nabla_{\w_1}\mathcal{L} = -\frac{\a}{2}+\frac{1}{2\pi} \rbr{\pi \w_1 - \sin\theta\twonorm{\w_2}\Bar{\w_1} - \rbr{\pi-\theta}\w_2} \\
    &\nabla_{\w_2}\mathcal{L} = \frac{\a}{2}-\frac{1}{2\pi} \rbr{\rbr{\pi-\theta}\w_1 + \sin\theta\twonorm{\w_1}\Bar{\w_2} - \pi \w_2}.
\end{align*}
Using Ineq. \ref{eq:bound_on_part_of_grad_w1}, we can write
\begin{align*}
    \twonorm{\nabla_{\w_1}\mathcal{L}}^2 &= \twonorm{-\frac{\a}{2}+\frac{1}{2\pi} \rbr{\pi \w_1 - \sin\theta\twonorm{\w_2}\Bar{\w_1} - \rbr{\pi-\theta}\w_2}}^2 \\
    &= \twonorm{\frac{\w_1-\a}{2} - \frac{1}{2\pi} \rbr{\sin\theta\twonorm{\w_2}\Bar{\w_1} + \rbr{\pi-\theta}\w_2}}^2 \\
    &\leq 2\twonorm{\frac{\w_1-\a}{2}}^2 + 2\twonorm{\frac{1}{2\pi} \rbr{\sin\theta\twonorm{\w_2}\Bar{\w_1} + \rbr{\pi-\theta}\w_2}}^2 \\
    &\leq \frac{1}{2}\twonorm{\w_1-\a}^2 + \frac{1}{2}\twonorm{\w_1+\w_2}^2 \\
    &= \frac{1}{2}\twonorm{\w_1-\a}^2 + \frac{1}{2}\twonorm{\w_1-\a + \a + \w_2}^2 \\
    &\leq \frac{3}{2}\twonorm{\w_1-\a}^2 + \twonorm{\w_2+\a}^2.
\end{align*}
Similarly, using Eq. \ref{eq:bound_on_part_of_grad_w2} on the gradient for $\w_2$, we get
\begin{align*}
    \twonorm{\nabla_{\w_2}\mathcal{L}}^2 \leq \frac{3}{2} \twonorm{\w_2+\a}^2 + \twonorm{\w_1-\a}^2.
\end{align*}
Combining these, we obtain
\begin{align*}
    \twonorm{\nabla_{\w_1}\mathcal{L}}^2 + \twonorm{\nabla_{\w_2}\mathcal{L}}^2 \leq \frac{5}{2} \rbr{\twonorm{\w_1-\a}^2 + \twonorm{\w_2+\a}^2}.
\end{align*}
This completes the proof of Lemma \ref{lemma:popgrad_smooth}.
\end{proof}

%% file: sec/apx/proof_pop_pl_ineq.tex
First, we show it is sufficient to analyze $v_1=v_2=1$. For $v_1,v_2 >0$, we define $\tilde{\w}_i = v_i \w_i$. Then,
\begin{align*}
    \calL\rbr{\mtx{\theta}} = \E_{\x}\bbr{\rbr{v_1 \text{ReLU}\rbr{\w_1^T \x} - v_2 \text{ReLU} \rbr{\w_2^T \x} - \a^T \x}^2} = \E_{\x}\bbr{\rbr{\text{ReLU}\rbr{\tilde{\w}_1^T \x} - \text{ReLU} \rbr{\tilde{\w}_2^T \x} - \a^T \x}^2}.
\end{align*}

Let us focus on squared gradient norms:
\begin{align*}
    \twonorm{\nabla_{\w_1} \calL }^2 + \twonorm{\nabla_{\w_2} \calL }^2 =& \twonorm{v_1 \nabla_{\tilde{\w}_1} \calL }^2 + \twonorm{v_2 \nabla_{\tilde{\w}_2} \calL }^2 \\
    \geq& \min\rbr{v_1^2, v_2^2} \rbr{\twonorm{\nabla_{\tilde{\w}_1} \calL }^2 + \twonorm{\nabla_{\tilde{\w}_2} \calL }^2}
\end{align*}
This suggests that proving $\twonorm{\nabla_{\w_1} \calL }^2 + \twonorm{\nabla_{\w_2} \calL }^2 \geq \alpha \calL$ when $v_1=v_2=1$ implies that
\begin{align*}
    \twonorm{\nabla_{\w_1} \calL }^2 + \twonorm{\nabla_{\w_2} \calL }^2 \geq \min\rbr{v_1^2, v_2^2} \alpha \calL
\end{align*}
for arbitrary $v_1, v_2 > 0$. Now, we assume $v_1=v_2=1$. We define
\begin{align*}
    h\rbr{\w_1, \w_2, \a} &= \twonorm{\nabla_{\w_1}\calL}^2 + \twonorm{\nabla_{\w_2}\calL}^2 - \alpha\calL.
\end{align*}

Using the gradient calculations in \eqref{eq:pop_gradient_w}, we can write it equivalently as
\begin{align*}
    h\rbr{\w_1, \w_2, \a} &= \frac{1}{4}\rbr{1-\alpha + \frac{\rbr{\pi-\theta}^2 + 2\rbr{\pi-\theta}\sin\theta\cos\theta +\rbr{\sin\theta}^2}{\pi^2}}\rbr{\twonorm{\w_1}^2 + \twonorm{\w_2}^2} \\
    &\quad - \rbr{1-\frac{\alpha}{2}}\frac{\rbr{\pi-\theta}\cos\theta + \sin\theta}{\pi} \twonorm{\w_1}\twonorm{\w_2} \\
    &\quad - \frac{1}{2}\rbr{\rbr{\frac{\pi-\theta}{\pi} + 1 - \alpha} \twonorm{\w_1} - \frac{\sin\theta}{\pi}\twonorm{\w_2}} \Bar{\w}_1^T \a \\
    &\quad + \frac{1}{2}\rbr{\rbr{\frac{\pi-\theta}{\pi} + 1 - \alpha} \twonorm{\w_2} - \frac{\sin\theta}{\pi}\twonorm{\w_1}} \Bar{\w}_2^T \a \\
    &\quad + \rbr{\frac{1-\alpha}{2}} \twonorm{\a}^2 \\
    &= \frac{1-\alpha}{2} \twonorm{\a}^2 + \vct{b}^T \a + c 
\end{align*}
where $\vct{b}$ and $c$ are defined by the following terms for brevity,
\begin{align*}
    \alpha_1 &= - \frac{1}{2}\rbr{\rbr{\frac{\pi-\theta}{\pi} + 1 - \alpha} \twonorm{\w_1} - \frac{\sin\theta}{\pi}\twonorm{\w_2}} \\
    \alpha_2 &= + \frac{1}{2}\rbr{\rbr{\frac{\pi-\theta}{\pi} + 1 - \alpha} \twonorm{\w_2} - \frac{\sin\theta}{\pi}\twonorm{\w_1}} \\
    c &= \frac{1}{4}\rbr{1-\alpha + \frac{\rbr{\pi-\theta}^2 + 2\rbr{\pi-\theta}\sin\theta\cos\theta +\rbr{\sin\theta}^2}{\pi^2}}\rbr{\twonorm{\w_1}^2 + \twonorm{\w_2}^2} \\
    &\quad - \rbr{1-\frac{\alpha}{2}}\frac{\rbr{\pi-\theta}\cos\theta + \sin\theta}{\pi} \twonorm{\w_1}\twonorm{\w_2}  \\
    \vct{b} &= \alpha_1 \Bar{\w}_1 + \alpha_2 \Bar{\w}_2 
\end{align*}
Noting that the expression above is quadratic in $\a$, we compute $\Tilde{h}\rbr{\w_1,\w_2} = \min\limits_{\a}  h\rbr{\w_1,\w_2,\a}$. The choice of $\a$ that minimizes the expression is $\a = -\frac{\vct{b}}{1-\alpha}$. Plugging this in back we get,
\begin{align*}
    \Tilde{h}\rbr{\w_1,\w_2} = c - \frac{\twonorm{\vct{b}}^2}{2 \rbr{1-\alpha}} = c - \frac{\alpha_1^2 + 2 \alpha_1 \alpha_2 \cos\theta + \alpha_2^2}{2 \rbr{1-\alpha}}
\end{align*}

\paragraph{Taking the norm out:} Note that we are only interested in the positivity of $\Tilde{h}$, therefore dividing it by $\twonorm{\w_2}^2$ does not change the sign. Denote $\frac{\twonorm{\w_1}}{\twonorm{\w_2}} = r$. Then we still have
\begin{align*}
    \frac{\Tilde{h}\rbr{\w_1,\w_2}}{\twonorm{\w_2}^2} = c - \frac{\alpha_1^2 + 2 \alpha_1 \alpha_2 \cos\theta + \alpha_2^2}{2 \rbr{1-\alpha}}
\end{align*}
but the variables are modified as
\begin{align*}
    \alpha_1 &= - \frac{1}{2}\rbr{\rbr{\frac{\pi-\theta}{\pi} + 1 - \alpha} r - \frac{\sin\theta}{\pi}} \\
    \alpha_2 &= + \frac{1}{2}\rbr{\rbr{\frac{\pi-\theta}{\pi} + 1 - \alpha} - \frac{\sin\theta}{\pi}r} \\
    c &= \frac{1}{4}\rbr{1-\alpha + \frac{\rbr{\pi-\theta}^2 + 2\rbr{\pi-\theta}\sin\theta\cos\theta +\rbr{\sin\theta}^2}{\pi^2}}\rbr{r^2 + 1} - \rbr{1-\frac{\alpha}{2}}\frac{\rbr{\pi-\theta}\cos\theta + \sin\theta}{\pi} r
\end{align*}
We note that the expression is of the form $C_1 \rbr{r^2+1} + C_2 r$. Without changing the sign, we can take out $r$ outside. Then we notice that the minima is achieved at $r=1$. Therefore, it is sufficient for us to check the positivity of the expression at $r=1$. That is, we draw:

\begin{align*}
    \frac{\Tilde{h}\rbr{\w_1,\w_2}}{\twonorm{\w_2}^2} \Bigg|_{r=1} = c - \frac{\rbr{1 - \cos\theta}}{\rbr{1-\alpha}} \tilde{\alpha}^2 
\end{align*}
where
\begin{align*}
    \tilde{\alpha} &= \frac{1}{2}\rbr{\frac{\pi-\theta -\sin\theta}{\pi} + 1 - \alpha}, \\
    c &= \frac{1}{2}\rbr{1-\alpha + \frac{\rbr{\pi-\theta}^2 + 2\rbr{\pi-\theta}\sin\theta\cos\theta +\rbr{\sin\theta}^2}{\pi^2}} - \rbr{1-\frac{\alpha}{2}}\frac{\rbr{\pi-\theta}\cos\theta + \sin\theta}{\pi}.
\end{align*}

\begin{figure}[h]
    \centering
    \includegraphics[width=0.9\linewidth]{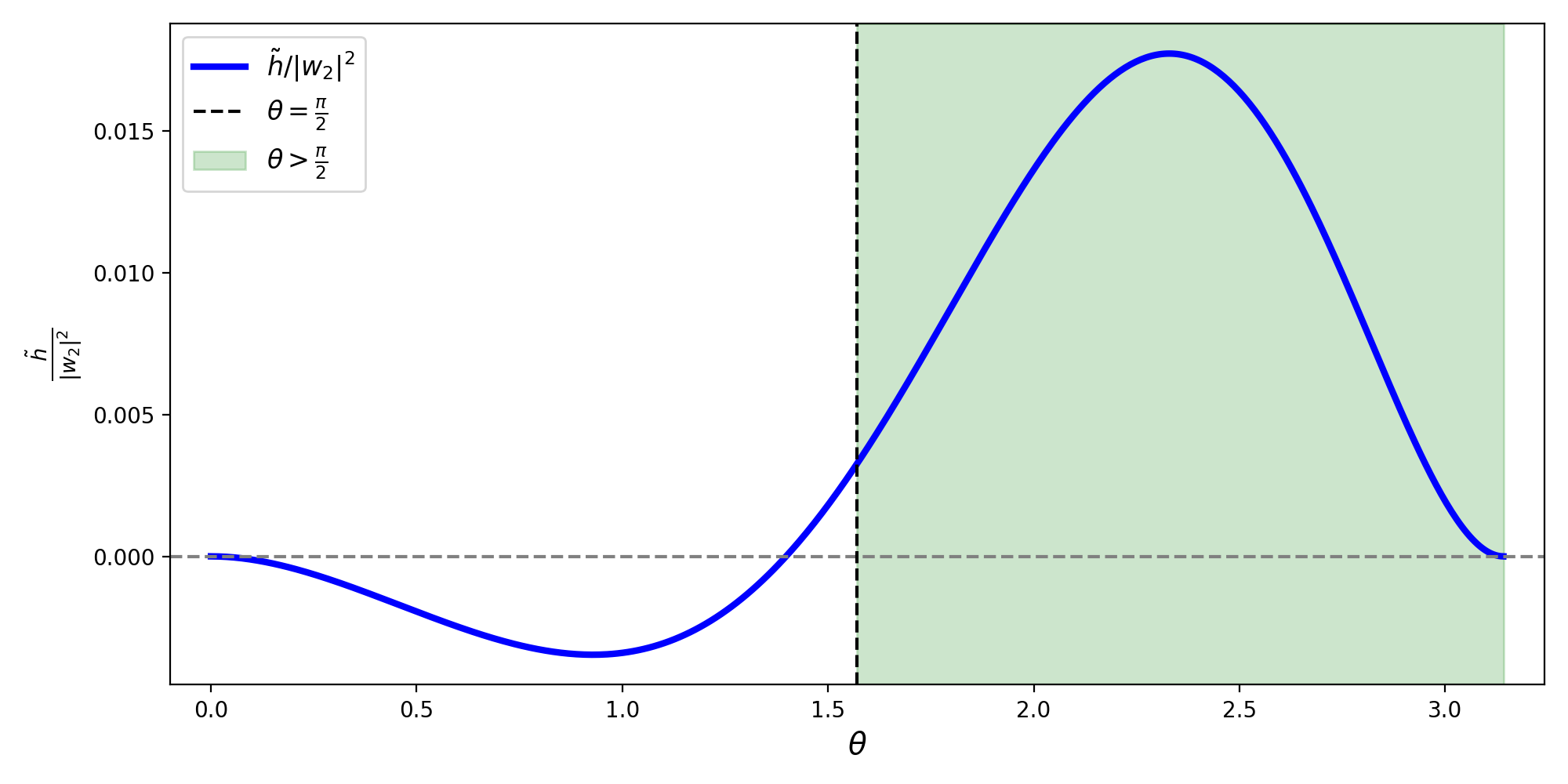}
    \caption{$\twonorm{\nabla_{\w_1}\calL}^2 + \twonorm{\nabla_{\w_2}\calL}^2 - \alpha\calL$ \textbf{is non-negative}. We set $\alpha=0.05$ and draw $\frac{1}{\twonorm{\w_2}}\Tilde{h}\rbr{\w_1,\w_2}$ for $\theta \in \bbr{0,\pi}$. We show that $\frac{1}{\twonorm{\w_2}}\Tilde{h}\rbr{\w_1,\w_2}$ is non-negative inside the shaded region ($\theta \ge \frac{\pi}{2}$).}
    \label{fig:apx_proof_pl_ineq}
\end{figure}

To complete the proof, in Figure \ref{fig:apx_proof_pl_ineq}, we set $\alpha = 0.05$ and draw $\frac{\Tilde{h}\rbr{\w_1,\w_2}}{\twonorm{\w_2}^2} \Bigg|_{r=1}$ as a 1D plot for $\theta \in \bbr{0,\pi}$. The plot demonstrates that $\Tilde{h}$ is non-negative for $\theta > \frac{\pi}{2}$. This finishes the proof.

%% file: sec/apx/proof_pop_smoothness.tex
We bound the population Hessian $\nabla^2 \calL\rbr{\v,\W}$ in the local refinement phase. That is, we assume $c_1 \sqrt{\twonorm{\a}} \leq v_1, v_2, \twonorm{\w_1}, \twonorm{\w_2} \leq c_2 \sqrt{\twonorm{\a}}$. By the sub-additivity properties of the spectral norm, we have
\begin{align*}
    \twonorm{\nabla^2 \calL\rbr{\v, \W}}_2 \le & \twonorm{\nabla_{\v,\v}^2 \calL\rbr{\v, \W}}_2 + 2\twonorm{\nabla_{\v,\text{vect}\rbr{\W}}^2 \calL\rbr{\v, \W}}_2 + \twonorm{\nabla_{\text{vect}\rbr{\W},\text{vect}\rbr{\W}}^2 \calL\rbr{\v, \W}}_2.
\end{align*}
We bound each term separately below.

\paragraph{$\nabla_{\v, \v}^2 \calL\rbr{\mtx{\theta}}$ term:} We have
\begin{align*}
    \twonorm{\nabla_{\v, \v}^2 \calL\rbr{\mtx{\theta}}}_2 \le & \sum_{\ell =1}^2\sum_{m =1}^2\left|\nabla_{v_{\ell},v_m}^2 \calL\rbr{\mtx{\theta}}\right|.
\end{align*}
where,
\begin{align*}
    \left|\nabla_{v_{\ell},v_m}^2 \calL\rbr{\mtx{\theta}}\right| = & \frac{\twonorm{\w_\ell}\twonorm{\w_m}}{2\pi} \rbr{\rbr{\pi - \theta_{\ell,m}}\cos\theta_{\ell,m} + \sin\theta_{\ell,m}} \\
    \le & \frac{c_2^2 \twonorm{\a}}{2\pi} \rbr{\rbr{\pi - \theta_{\ell,m}}\cos\theta_{\ell,m} + \sin\theta_{\ell,m}} \\
    \le & \frac{c_2^2 \twonorm{\a}}{2\pi} \rbr{\pi + 1} \\
    \le & c_2^2 \twonorm{\a} \frac{1+\pi}{2\pi}.
\end{align*}
Then,
\begin{align*}
    \twonorm{\nabla_{\v, \v}^2 \calL\rbr{\mtx{\theta}}}_2 \le & \sum_{\ell =1}^2\sum_{m =1}^2\left|\nabla_{v_{\ell},v_m}^2 \calL\rbr{\mtx{\theta}}\right| \\
    \le & 4 c_2^2 \twonorm{\a} \frac{1+\pi}{2\pi} = \rbr{2 + \frac{2}{\pi}} c_2^2 \twonorm{\a}.
\end{align*}

\paragraph{$\nabla_{\v, \text{vect}\rbr{\mtx{W}}}^2 \calL\rbr{\mtx{\theta}}$ term:} We have
\begin{align*}
    \twonorm{\nabla_{\v, \text{vect}\rbr{\mtx{W}}}^2 \calL\rbr{\mtx{\theta}}}_2 \le & \sum_{\ell =1}^2\sum_{m =1}^2\twonorm{\nabla_{v_{\ell},\w_m}^2 \calL\rbr{\mtx{\theta}}}.
\end{align*}
For $\ell \neq m$, we have
\begin{align*}
    \twonorm{\nabla_{v_{\ell},\w_m}^2 \calL\rbr{\mtx{\theta}}} = & \twonorm{ \frac{v_m \twonorm{\w_\ell}}{2\pi} \rbr{\rbr{\pi - \theta_{\ell,m}} \bar{\w}_\ell^T + \sin\theta_{\ell,m} \bar{\w}_m^T}} \\
    = & \frac{v_m\twonorm{\w_\ell}}{2\pi}\twonorm{\rbr{\rbr{\pi - \theta_{\ell,m}} \bar{\w}_\ell + \sin\theta_{\ell,m} \bar{\w}_m}} \\
    \le &\frac{v_m\twonorm{\w_\ell}}{2\pi}\rbr{\twonorm{\rbr{\pi - \theta_{\ell,m}} \bar{\w}_\ell} + \twonorm{\sin\theta_{\ell,m} \bar{\w}_m}} \\
    \le & \frac{v_m \twonorm{\w_\ell}}{2\pi}\rbr{\pi + 1} \\
    \le & c_2^2 \twonorm{\a} \frac{1+\pi}{2\pi}.
\end{align*}
For $\ell = m$, we have
\begin{align*}
    \twonorm{\nabla_{v_{\ell},\w_m}^2 \calL\rbr{\mtx{\theta}}} = & \twonorm{\frac{v_\ell \w_\ell^T - \a^T}{2} + \sum_{i=1}^2 \frac{v_i \twonorm{\w_i}}{2\pi} \rbr{\rbr{\pi-\theta_{\ell,i}}\bar{\w}_i^T + \sin\theta_{\ell,i} \bar{\w}_\ell^T}} \\
    \le & \frac{\twonorm{v_\ell \w_\ell}}{2} + \frac{\twonorm{\a}}{2} + \sum_{i=1}^2 \twonorm{\frac{v_i \twonorm{\w_i}}{2\pi} \rbr{\rbr{\pi-\theta_{\ell,i}}\bar{\w}_i^T + \sin\theta_{\ell,i} \bar{\w}_\ell^T}} \\
    \le & \frac{\twonorm{v_\ell \w_\ell}}{2} + \frac{\twonorm{\a}}{2} + \sum_{i=1}^2 \frac{v_i \twonorm{\w_i}}{2\pi} \rbr{\twonorm{\rbr{\pi-\theta_{\ell,i}}\bar{\w}_i} + \twonorm{\sin\theta_{\ell,i} \bar{\w}_\ell}} \\
    \le & \frac{c_2^2 \twonorm{\a}}{2} + \frac{\twonorm{\a}}{2} + \sum_{i=1}^2 \frac{c_2^2 \twonorm{\a}}{2\pi} \rbr{\pi + 1} = \rbr{\frac{1}{2} + \rbr{\frac{3}{2} + \frac{1}{\pi}}c_2^2}\twonorm{\a}.
\end{align*}
Combining both inequalities, we have 
\begin{align*}
    \twonorm{\nabla_{\v, \text{vect}\rbr{\mtx{W}}}^2 \calL\rbr{\mtx{\theta}}}_2 \le & \sum_{\ell =1}^2\sum_{m =1}^2\twonorm{\nabla_{v_{\ell},\w_m}^2 \calL\rbr{\mtx{\theta}}} \\
    \le & 2 \rbr{c_2^2 \twonorm{\a} \frac{1+\pi}{2\pi} + \rbr{\frac{1}{2} + \rbr{\frac{3}{2} + \frac{1}{\pi}}c_2^2}\twonorm{\a}} \\
    = & \rbr{1 + \rbr{4 + \frac{3}{\pi}}c_2^2}\twonorm{\a}.
\end{align*}

\paragraph{$\nabla_{\text{vect}\rbr{\mtx{W}}, \text{vect}\rbr{\mtx{W}}}^2 \calL\rbr{\mtx{\theta}}$ term:} We have
\begin{align*}
    \twonorm{\nabla_{\text{vect}\rbr{\mtx{W}}, \text{vect}\rbr{\mtx{W}}}^2 \calL\rbr{\mtx{\theta}}}_2 \le & \sum_{\ell =1}^2\sum_{m =1}^2\twonorm{\nabla_{\w_{\ell},\w_m}^2 \calL\rbr{\mtx{\theta}}}.
\end{align*}
For $\ell \neq m$, we have
\begin{align*}
    \twonorm{\nabla_{\w_{\ell},\w_m}^2 \calL\rbr{\mtx{\theta}}}_2 = & \twonorm{\frac{v_\ell v_m}{2\pi} \rbr{\Bar{\vct{w}}_\ell \Bar{\vct{w}}_{m,\ell^\perp}^T + \Bar{\vct{w}}_m \Bar{\vct{w}}_{\ell,m^\perp}^T + \rbr{\pi - \theta_{\ell, m}} \mtx{I}}}_2 \\
    = & \frac{v_\ell v_m}{2\pi}\twonorm{\Bar{\vct{w}}_\ell \Bar{\vct{w}}_{m,\ell^\perp}^T + \Bar{\vct{w}}_m \Bar{\vct{w}}_{\ell,m^\perp}^T + \rbr{\pi - \theta_{\ell, m}} \mtx{I}}_2 \\
    \le & \frac{v_\ell v_m}{2\pi}\rbr{\twonorm{\Bar{\vct{w}}_\ell \Bar{\vct{w}}_{m,\ell^\perp}^T}_2 + \twonorm{\Bar{\vct{w}}_m \Bar{\vct{w}}_{\ell,m^\perp}^T}_2 + \twonorm{\rbr{\pi - \theta_{\ell, m}} \mtx{I}}_2} \\
    \le & \frac{v_\ell v_m}{2\pi}\rbr{1 + 1 + \pi} \\
    = & v_\ell v_m \frac{2+\pi}{2\pi} \\
    \le & c_2^2 \twonorm{\a} \frac{2+\pi}{2\pi}.
\end{align*}
For $\ell = m$, we have
\begin{align*}
    \twonorm{\nabla_{\w_{\ell},\w_m}^2 \calL\rbr{\mtx{\theta}}}_2 = & \twonorm{\frac{v_\ell^2}{2}\mtx{I} + \frac{v_\ell}{2\pi\twonorm{\vct{w}_\ell}} \sum_{i=1}^2 v_i \twonorm{\vct{w}_i} \sin\rbr{\theta_{\ell, i}} \rbr{\mtx{P}_{\vct{w}_l^\perp} + \frac{\mtx{P}_{\vct{w}_l^\perp} \bar{\vct{w}}_i \bar{\vct{w}}_i^T \mtx{P}_{\vct{w}_l^\perp}}{\twonorm{\mtx{P}_{\vct{w}_l^\perp} \bar{\vct{w}}_i }^2}}}_2 \\
    \le & \twonorm{\frac{v_\ell^2}{2}\mtx{I}}_2 + \frac{v_\ell}{2\pi\twonorm{\vct{w}_\ell}} \sum_{i=1}^2 \twonorm{v_i \twonorm{\vct{w}_i} \sin\rbr{\theta_{\ell, i}} \rbr{\mtx{P}_{\vct{w}_l^\perp} + \frac{\mtx{P}_{\vct{w}_l^\perp} \bar{\vct{w}}_i \bar{\vct{w}}_i^T \mtx{P}_{\vct{w}_l^\perp}}{\twonorm{\mtx{P}_{\vct{w}_l^\perp} \bar{\vct{w}}_i }^2}}}_2 \\
    \le & \twonorm{\frac{v_\ell^2}{2}\mtx{I}}_2 + \frac{v_\ell}{2\pi\twonorm{\vct{w}_\ell}} \sum_{i=1}^2 v_i \twonorm{\vct{w}_i} \sin\rbr{\theta_{\ell, i}} \rbr{\twonorm{\mtx{P}_{\vct{w}_l^\perp}} + \twonorm{\frac{\mtx{P}_{\vct{w}_l^\perp} \bar{\vct{w}}_i \bar{\vct{w}}_i^T \mtx{P}_{\vct{w}_l^\perp}}{\twonorm{\mtx{P}_{\vct{w}_l^\perp} \bar{\vct{w}}_i }^2}}_2} \\
    \le & \frac{v_\ell^2}{2} + \frac{v_\ell}{2\pi\twonorm{\w_{\ell}}}\sum_{i=1}^2 v_i \twonorm{\w_i}(1 + 1) \\
    \le & \frac{c_2^2 \twonorm{\a}}{2} + \frac{c_2}{\pi c_1}\rbr{2 c_2^2 \twonorm{\a}} = c_2^2 \twonorm{\a} \rbr{\frac{1}{2} + \frac{2c_2}{\pi c_1}}.
\end{align*}
Combining both inequalities, we have 
\begin{align*}
    \twonorm{\nabla_{\text{vect}\rbr{\mtx{W}}, \text{vect}\rbr{\mtx{W}}}^2 \calL\rbr{\mtx{\theta}}}_2 \le & \sum_{\ell =1}^2\sum_{m =1}^2\twonorm{\nabla_{\w_{\ell},\w_m}^2 \calL\rbr{\mtx{\theta}}} \\
    \le & 2 \rbr{c_2^2 \twonorm{\a} \frac{2+\pi}{2\pi} + c_2^2 \twonorm{\a} \rbr{\frac{1}{2} + \frac{2c_2}{\pi c_1}}} \\
    = & \rbr{2 + \frac{2}{\pi} + \frac{4c_2}{\pi c_1}} c_2^2 \twonorm{\a}.
\end{align*}

\paragraph{Combining the terms:} Putting everything together,
\begin{align*}
    \twonorm{\nabla^2 \calL\rbr{\v, \W}}_2 \le & \twonorm{\nabla_{\v,\v}^2 \calL\rbr{\v, \W}}_2 + 2\twonorm{\nabla_{\v,\text{vect}\rbr{\W}}^2 \calL\rbr{\v, \W}}_2 + \twonorm{\nabla_{\text{vect}\rbr{\W},\text{vect}\rbr{\W}}^2 \calL\rbr{\v, \W}}_2 \\
    \le & \rbr{2 + \frac{2}{\pi}} c_2^2 \twonorm{\a} + 2 \rbr{1 + \rbr{4 + \frac{3}{\pi}}c_2^2}\twonorm{\a} + \rbr{2 + \frac{2}{\pi} + \frac{4c_2}{\pi c_1}} c_2^2 \twonorm{\a} \\
    = & \rbr{2 + \rbr{12 + \frac{10}{\pi} + \frac{4c_2}{\pi c_1}}c_2^2}\twonorm{\a} \\
    := & L \twonorm{\a}.
\end{align*}
This completes the proof of Lemma \ref{lemma:grad_smooth}.

%% file: sec/apx/proof_pop_lower_bound_lemma.tex
\begin{lemma}[Population Loss Lower Bound] \label{lem:loss_lower_bound}
For $v_1, v_2 > 0$ and $\theta > \frac{\pi}{2}$. We have 
\begin{align*}
    \twonorm{v_1\w_1 - \a}^2 + \twonorm{v_2\w_2 + \a}^2 \le 20 \calL\rbr{\v,\W}.
\end{align*}
\end{lemma}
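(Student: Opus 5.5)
The plan is to first remove the outer weights and then split the residual into an odd (linear) part and an even part, which are orthogonal in $L^2$. Setting $\vct{u}_1 := v_1\w_1$ and $\vct{u}_2 := v_2\w_2$ (allowed since $v_1,v_2>0$), we have $v_i\,\text{ReLU}(\w_i^T\x)=\text{ReLU}(\vct{u}_i^T\x)$. The angle between $\vct{u}_1$ and $\vct{u}_2$ is still $\theta$, and $\h_1=\vct{u}_1-\a$, $\h_2=\vct{u}_2+\a$. Using $\text{ReLU}(z)=\frac{z+|z|}{2}$, the residual becomes
\begin{align*}
\text{ReLU}(\vct{u}_1^T\x)-\text{ReLU}(\vct{u}_2^T\x)-\a^T\x = \tfrac12(\h_1-\h_2)^T\x + \tfrac12\rbr{|\vct{u}_1^T\x|-|\vct{u}_2^T\x|},
\end{align*}
since $\frac12(\vct{u}_1-\vct{u}_2)-\a=\frac12(\h_1-\h_2)$. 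The first term is odd in $\x$ and the second is even, so the cross term vanishes under the Gaussian measure. This gives the exact identity
\begin{align*}
\calL(\v,\W)=\tfrac18\twonorm{\h_1-\h_2}^2+\tfrac18\E\bbr{\rbr{|\vct{u}_1^T\x|-|\vct{u}_2^T\x|}^2}.
\end{align*}
Because $\h_1+\h_2=\vct{u}_1+\vct{u}_2$, the remaining task is to show that the even part controls $\twonorm{\vct{u}_1+\vct{u}_2}^2$ when $\theta>\frac{\pi}{2}$.

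For that step I would use the closed form of $\E|g_1g_2|$ for jointly Gaussian variables, the analogue of \eqref{eq:relu_dual} for $|\cdot|$. Write $\rho=\cos\theta\in(-1,0)$, $n_i=\twonorm{\vct{u}_i}$, and $k(s)=\frac{2}{\pi}\rbr{s\arcsin s+\sqrt{1-s^2}}$. Then
\begin{align*}
\E\bbr{\rbr{|\vct{u}_1^T\x|-|\vct{u}_2^T\x|}^2}=n_1^2+n_2^2-2n_1n_2\,k(|\rho|).
\end{align*}
The target inequality is $n_1^2+n_2^2-2n_1n_2k(|\rho|)\ge c\rbr{n_1^2+n_2^2+2\rho n_1n_2}$. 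Dividing by $n_1n_2$ and using $n_1/n_2+n_2/n_1\ge 2$ with $c<1$, this reduces to the scalar inequality $1-k(|\rho|)\ge c(1+\rho)$ for $\rho\in(-1,0]$. The reduction works because the left side grows faster in the norm ratio than the right side.

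To verify the scalar inequality, set $s=|\rho|\in[0,1)$, so the claim is $1-k(s)\ge c(1-s)$. We know $k(1)=1$ and $k'(s)=\frac{2}{\pi}\arcsin s\le 1$. Hence $1-k(s)=\int_s^1 k'(t)\,dt$, and $k$ is convex. Since $1-k(s)$ and $c(1-s)$ both vanish at $s=1$, it suffices to compare slopes together with the endpoint $s=0$, where $1-k(0)=1-\frac{2}{\pi}$. Convexity of $k$ makes $1-k$ concave, so $1-k(s)$ lies above its chord between $s=0$ and $s=1$. That chord is exactly $(1-\frac{2}{\pi})(1-s)$, so $c=1-\frac2\pi$ works.

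Combining the pieces, $\twonorm{\h_1-\h_2}^2\le 8\calL$ and $\twonorm{\h_1+\h_2}^2\le \frac{8}{c}\calL$. By the parallelogram law,
\begin{align*}
\twonorm{\h_1}^2+\twonorm{\h_2}^2=\tfrac12\rbr{\twonorm{\h_1-\h_2}^2+\twonorm{\h_1+\h_2}^2}\le \rbr{4+\tfrac{4}{1-2/\pi}}\calL\approx 15\,\calL\le 20\,\calL.
\end{align*}
I expect the main obstacle to be the even-part lower bound: getting the correct closed form for $\E|g_1g_2|$ and handling the norm-ratio reduction cleanly. The orthogonal decomposition itself is routine.
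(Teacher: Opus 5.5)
Your proof is correct. It reaches the same one-dimensional inequality as the paper, but by a more transparent route, and it proves that inequality analytically where the paper relies on a plot.

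The paper expands $\calL-\tilde\alpha(\|\vct{h}_1\|^2+\|\vct{h}_2\|^2)$ using the closed-form ReLU kernel. It treats this as a quadratic in $\vct{a}$ and minimizes over $\vct{a}$, with minimizer $\vct{a}=(\vct{u}_1-\vct{u}_2)/2$. It then reduces the norm ratio to $r=1$ and checks the resulting function of $\theta$ only by plotting it for $\tilde\alpha=0.05$.

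Your odd/even decomposition explains what that computation is doing. The odd part of the residual is exactly $\frac12(\vct{h}_1-\vct{h}_2)^T\x$, and the paper's worst-case $\vct{a}$ is precisely the one that kills it. What remains in both proofs is the even part. Concretely, with $s=-\cos\theta$ one has $(\frac{\pi}{2}-\theta)\cos\theta+\sin\theta=\frac{\pi}{2}k(s)$. So the quantity plotted in the paper equals $\frac14(1-k(s))-\tilde\alpha(1-s)$, which is your scalar inequality with $c=4\tilde\alpha=0.2$. Your convexity/chord argument ($k''(s)=\frac{2}{\pi}(1-s^2)^{-1/2}>0$) therefore replaces the paper's numerical check, and it gives the sharp value $c=1-\frac{2}{\pi}$.

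Two small remarks:
\begin{itemize}
\item Your range for $s$ should include the endpoint $\theta=\pi$ ($s=1$). This case is trivial, since both sides vanish there.
\item Instead of bounding the two pieces separately, use $\|\vct{h}_1-\vct{h}_2\|^2+c\|\vct{h}_1+\vct{h}_2\|^2\le 8\calL$ together with the parallelogram law. This gives $\|\vct{h}_1\|^2+\|\vct{h}_2\|^2\le\frac{4}{c}\calL\approx 11\,\calL$, which matches the best constant the paper's computation can give (taking $\tilde\alpha=\frac14(1-\frac{2}{\pi})$).
\end{itemize}
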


\begin{proof}
Define $\tilde{\w}_i = v_i \w_i$. For $v_1,v_2 >0$, both $\twonorm{v_1\w_1 - \a}^2 + \twonorm{v_2\w_2 + \a}^2$ and $\calL\rbr{\v,\W}$ are only functions of $\tilde{\w}_1, \tilde{\w}_2$. Next, we define
\begin{align*}
    h\rbr{\tilde{\w}_1, \tilde{\w}_2, \a} &= \calL\rbr{\tilde{\w}_1, \tilde{\w}_2} - \tilde{\alpha} \rbr{\twonorm{\tilde{\w}_1 - \a}^2 + \twonorm{\tilde{\w}_2 + \a}^2}.
\end{align*}
We can write it equivalently as
\begin{align*}
    h\rbr{\tilde{\w}_1, \tilde{\w}_2, \a} &= \rbr{\frac{1}{4}-\tilde{\alpha}} \rbr{\twonorm{\tilde{\w}_1}^2 + \twonorm{\tilde{\w}_2}^2} - \frac{\rbr{\pi-\theta}\cos\theta + \sin\theta}{2\pi} \twonorm{\tilde{\w}_1}\twonorm{\tilde{\w}_2}\\
    &\quad -  \rbr{\frac{1}{2}-2\tilde{\alpha}}\a^T \rbr{\tilde{\w}_1 - \tilde{\w}_2} + \rbr{\frac{1}{2} - 2\tilde{\alpha}} \twonorm{\a}^2
\end{align*}
Noting that the expression above is quadratic in $\a$, we compute $\Tilde{h}\rbr{\tilde{\w}_1,\tilde{\w}_2} = \min\limits_{\a}  h\rbr{\tilde{\w}_1,\tilde{\w}_2,\a}$. The choice of $\a$ that minimizes the expression is $\a = \frac{\tilde{\w}_1-\tilde{\w}_2}{2}$. Plugging this in back we get,
\begin{align*}
    \Tilde{h}\rbr{\w_1,\w_2} &= \rbr{\frac{1}{4}-\tilde{\alpha}} \rbr{\twonorm{\tilde{\w}_1}^2 + \twonorm{\tilde{\w}_2}^2} - \frac{\rbr{\pi-\theta}\cos\theta + \sin\theta}{2\pi} \twonorm{\tilde{\w}_1}\twonorm{\tilde{\w}_2}\\
    &\quad -  \rbr{\frac{1}{8}-\frac{\tilde{\alpha}}{2}}\twonorm{\tilde{\w}_1 - \tilde{\w}_2}^2 \\
    &= \rbr{\frac{1}{8}-\frac{\tilde{\alpha}}{2}} \rbr{\twonorm{\tilde{\w}_1}^2 + \twonorm{\tilde{\w}_2}^2} - \frac{\rbr{\pi\rbr{\frac{1}{2} + 2\tilde{\alpha}}-\theta}\cos\theta + \sin\theta}{2\pi} \twonorm{\tilde{\w}_1}\twonorm{\tilde{\w}_2}
\end{align*}
\paragraph{Taking the norm out:} Note that we are only interested in the positivity of $\Tilde{h}$, therefore dividing it by $\twonorm{\tilde{\w}_2}^2$ does not change the sign. Denote $\frac{\twonorm{\tilde{\w}_1}}{\twonorm{\tilde{\w}_2}} = r$. Then,
\begin{align*}
    \frac{\Tilde{h}\rbr{\tilde{\w}_1,\tilde{\w}_2}}{\twonorm{\tilde{\w}_2}^2} = \rbr{\frac{1}{8}-\frac{\tilde{\alpha}}{2}} \rbr{r^2+1} - \frac{\rbr{\pi\rbr{\frac{1}{2} + 2\tilde{\alpha}}-\theta}\cos\theta + \sin\theta}{2\pi} r
\end{align*}
We note that the expression is of the form $C_1 \rbr{r^2+1} + C_2 r$. Note that the minima of this expression is achieved at $r=1$. Therefore, it is sufficient for us to check the positivity of the expression at $r=1$. To this aim we draw
\begin{align*}
    \frac{\Tilde{h}\rbr{\tilde{\w}_1,\tilde{\w}_2}}{\twonorm{\tilde{\w}_2}^2} \Bigg|_{r=1} = \rbr{\frac{1}{4}-\tilde{\alpha}} - \frac{\rbr{\pi\rbr{\frac{1}{2} + 2\tilde{\alpha}}-\theta}\cos\theta + \sin\theta}{2\pi}
\end{align*}
\begin{figure}[h]
    \centering
    \includegraphics[width=0.9\linewidth]{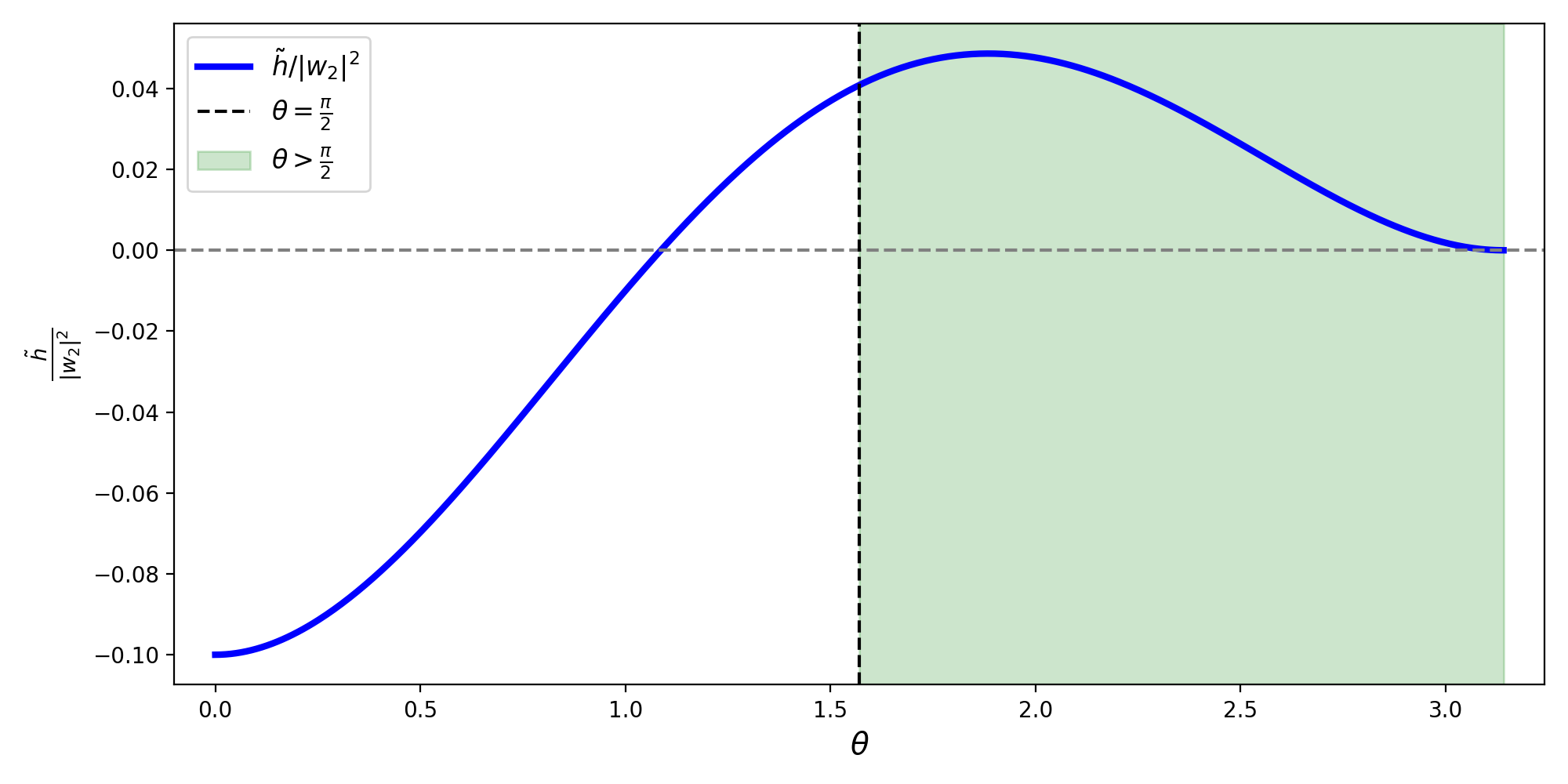}
    \caption{$\calL\rbr{\v,\W} - \tilde{\alpha} \rbr{\twonorm{v_1 \w_1 - \a}^2 + \twonorm{v_2 \w_2 + \a}^2}$ \textbf{is non-negative}. We set $\tilde{\alpha}=0.05$ and draw $\frac{1}{\twonorm{\tilde{\w}_2}^2}\Tilde{h}\rbr{\tilde{\w}_1,\tilde{\w}_2}$ for $\theta \in \bbr{0,\pi}$. We show that $\frac{1}{\twonorm{\tilde{\w}_2}^2}\Tilde{h}\rbr{\tilde{\w}_1,\tilde{\w}_2}$ is non-negative inside the shaded region ($\theta \ge \frac{\pi}{2}$).}
    \label{fig:apx_proof_pop_lower_bound}
\end{figure}
To complete the proof, in Figure \ref{fig:apx_proof_pop_lower_bound}, we set $\tilde{\alpha} = 0.05$ and draw $\frac{\Tilde{h}\rbr{\tilde{\w}_1,\tilde{\w}_2}}{\twonorm{\tilde{\w}_2}^2} \Bigg|_{r=1}$ as a 1D plot for $\theta \in \bbr{0,\pi}$. The plot demonstrates that $\Tilde{h}$ is non-negative for $\theta > \frac{\pi}{2}$. This finishes the proof of Lemma \ref{lem:loss_lower_bound}.
\end{proof}

%% file: sec/apx/imbalance_bound.tex
By symmetry, it suffices to prove the bound for $b_{1}^{(\tau+1)} - b_{1}^{(\tau)}$. We first evaluate the per-step change in the imbalance term $b_1^{(\tau)}$. By the update rule of gradient descent, we have
\begin{align*}
    b_{1}^{(\tau+1)} = & \twonorm{\w_1^{(\tau+1)}}^2 - \left(v_1^{(\tau+1)}\right)^2 \\
    = & \twonorm{\w_1 - \mu \nabla_{\w_1}\widehat{\mathcal{L}}}^2 - \left(v_1 - \mu\nabla_{v_1}\widehat{\mathcal{L}}\right)^2 \\
    = & \twonorm{\w_1}^2 - 2\mu \w_1^T\nabla_{\w_1}\widehat{\mathcal{L}} + \mu^2\twonorm{\nabla_{\w_1}\widehat{\mathcal{L}}}^2 - \left(v_1^2 -2 \mu v_1\nabla_{v_1}\widehat{\mathcal{L}} + \mu^2\left(\nabla_{v_1}\widehat{\mathcal{L}}\right)^2\right) \\
    \stackrel{(a)}{=} & \twonorm{\w_1}^2 - v_1^2 + \mu^2\twonorm{\nabla_{\w_1}\widehat{\mathcal{L}}}^2 - \mu^2\left(\nabla_{v_1}\widehat{\mathcal{L}}\right)^2 \\
    = & b_{1}^{(\tau)} + \mu^2\rbr{\twonorm{\nabla_{\w_1}\widehat{\mathcal{L}}}^2 - \left(\nabla_{v_1}\widehat{\mathcal{L}}\right)^2},
\end{align*}
where (a) follows from Eq. \ref{eq:grad_useful_identity}. It follows that
\begin{align}\label{ineq:imbalance_bound}
    \left|b_{1}^{(\tau+1)} - b_{1}^{(\tau)}\right| = & \mu^2\left|\twonorm{\nabla_{\w_1}\widehat{\mathcal{L}}}^2 - \left(\nabla_{v_1}\widehat{\mathcal{L}}\right)^2\right| \nonumber\\
    \le & \mu^2\rbr{\twonorm{\nabla_{\w_1}\widehat{\mathcal{L}}}^2 + \left(\nabla_{v_1}\widehat{\mathcal{L}}\right)^2}.
\end{align}

To bound the drift, we decompose the empirical gradients into their population counterparts and the associated estimation errors:
\begin{align}\label{ineq:imbalance_bound0}
    & \rbr{\nabla_{v_1}\widehat{\mathcal{L}}}^2 + \twonorm{\nabla_{\w_1}\widehat{\mathcal{L}}}^2 \nonumber\\
    = & \rbr{\nabla_{v_1}\mathcal{L} + \nabla_{v_1}\widehat{\mathcal{L}} - \nabla_{v_1}\mathcal{L}}^2 + \twonorm{\nabla_{\w_1}\mathcal{L} + \nabla_{w_1}\widehat{\mathcal{L}} - \nabla_{w_1}\mathcal{L}}^2 \nonumber\\
    \le & 2\rbr{\nabla_{v_1}\mathcal{L}}^2 + 2\rbr{\nabla_{v_1}\widehat{\mathcal{L}} - \nabla_{v_1}\mathcal{L}}^2 + 2\twonorm{\nabla_{\w_1}\mathcal{L}}^2 + 2\twonorm{\nabla_{w_1}\widehat{\mathcal{L}} - \nabla_{w_1}\mathcal{L}}^2 \nonumber\\
    = & 2\rbr{\rbr{\nabla_{v_1}\mathcal{L}}^2 + \twonorm{\nabla_{\w_1}\mathcal{L}}^2} + 2\rbr{\nabla_{v_1}\widehat{\mathcal{L}} - \nabla_{v_1}\mathcal{L}}^2 + 2\twonorm{\nabla_{w_1}\widehat{\mathcal{L}} - \nabla_{w_1}\mathcal{L}}^2 \nonumber\\
    = & 2\rbr{\rbr{\nabla_{v_1}\mathcal{L}}^2 + \twonorm{\nabla_{\w_1}\mathcal{L}}^2} + 2\rbr{\frac{\w_1^T}{v_1}\nabla_{\w_1}\widehat{\mathcal{L}} - \frac{\w_1^T}{v_1}\nabla_{\w_1}\mathcal{L}}^2 + 2\twonorm{\nabla_{w_1}\widehat{\mathcal{L}} - \nabla_{w_1}\mathcal{L}}^2 \nonumber\\
    \le & 2\rbr{\rbr{\nabla_{v_1}\mathcal{L}}^2 + \twonorm{\nabla_{\w_1}\mathcal{L}}^2} + 2\frac{\twonorm{\w_1}^2\twonorm{\nabla_{\w_1}\widehat{\mathcal{L}} - \nabla_{\w_1}\mathcal{L}}^2}{v_1^2} + 2\twonorm{\nabla_{w_1}\widehat{\mathcal{L}} - \nabla_{w_1}\mathcal{L}}^2 \nonumber\\
    = & 2\rbr{\rbr{\nabla_{v_1}\mathcal{L}}^2 + \twonorm{\nabla_{\w_1}\mathcal{L}}^2} + 2\rbr{1 + \frac{\twonorm{\w_1}^2}{v_1^2}}\twonorm{\nabla_{\w_1}\widehat{\mathcal{L}} - \nabla_{\w_1}\mathcal{L}}^2.
\end{align}

Regarding the population component, note that by considering a reparameterized set of weights $\tilde{v}_1 = \tilde{v}_2 =1, \tilde{\w}_1 = v_1\w_1, \tilde{\w}_2 = v_2\w_2$, we can leverage the smoothness properties of the population loss (Lemma~\ref{lemma:popgrad_smooth}): 
\begin{align*}
    \twonorm{\nabla_{\tilde{\w}_1}\mathcal{L}}^2 + \twonorm{\nabla_{\tilde{\w}_2}\mathcal{L}}^2 \le \frac{5}{2}\rbr{\twonorm{\tilde{\w}_1 -\a}^2 + \twonorm{\tilde{\w}_2 +\a}^2}.
\end{align*}
Given $v_1, v_2 >0$, it holds that $\nabla_{\tilde{\w}_1}\mathcal{L} = \frac{1}{v_1}\nabla_{\w_1}\mathcal{L}$. It then follows that: 
\begin{align}\label{ineq:imbalance_bound1}
    \rbr{\nabla_{v_1}\mathcal{L}}^2 + \twonorm{\nabla_{\w_1}\mathcal{L}}^2 = & \rbr{\frac{\w_1^T}{v_1}\nabla_{\w_1}\mathcal{L}}^2 + \twonorm{\nabla_{\w_1}\mathcal{L}}^2 \nonumber\\
    \le & \frac{\twonorm{\w_1}^2\twonorm{\nabla_{\w_1}\mathcal{L}}^2}{v_1^2} + \twonorm{\nabla_{\w_1}\mathcal{L}}^2 \nonumber\\
    = & \rbr{ \twonorm{\w_1}^2 + v_1^2}\twonorm{\nabla_{\tilde{\w}_1}\mathcal{L}}^2 \nonumber\\
    \le & \rbr{\twonorm{\w_1}^2 + v_1^2}\cdot\frac{5}{2}\rbr{\twonorm{\tilde{\w}_1 -\a}^2 + \twonorm{\tilde{\w}_2 +\a}^2} \nonumber\\
    = & \frac{5}{2}\rbr{\twonorm{\w_1}^2 + v_1^2}\rbr{\twonorm{v_1\w_1 -\a}^2 + \twonorm{v_2\w_2 +\a}^2}.
\end{align}
As for the second term, Lemma \ref{lem:component_deviation} provides the following concentration bound:
\begin{align*}
    \twonorm{\nabla_{w_1}\widehat{\mathcal{L}} - \nabla_{w_1}\mathcal{L}} \le v_1 \delta \rbr{\twonorm{v_1\w_1 -\a} + \twonorm{v_2\w_2 +\a}},
\end{align*}
where $\delta\le \frac{1}{2}$ is a constant. Squaring both sides and applying Jensen's inequality leads to:
\begin{align}\label{ineq:imbalance_bound2}
    \twonorm{\nabla_{w_1}\widehat{\mathcal{L}} - \nabla_{w_1}\mathcal{L}}^2 \le & v_1^2 \delta^2 \rbr{\twonorm{v_1\w_1 -\a} + \twonorm{v_2\w_2 +\a}}^2 \nonumber\\
    \le & 2 v_1^2 \delta^2 \rbr{\twonorm{v_1\w_1 -\a}^2 + \twonorm{v_2\w_2 +\a}^2}.
\end{align}

Substituting the bounds from \eqref{ineq:imbalance_bound1} and \eqref{ineq:imbalance_bound2} into \eqref{ineq:imbalance_bound0}, we obtain that
\begin{align*}
    & \rbr{\nabla_{v_1}\widehat{\mathcal{L}}}^2 + \twonorm{\nabla_{\w_1}\widehat{\mathcal{L}}}^2 \\
    \le & 2\rbr{\rbr{\nabla_{v_1}\mathcal{L}}^2 + \twonorm{\nabla_{\w_1}\mathcal{L}}^2} + 2\rbr{1 + \frac{\twonorm{\w_1}^2}{v_1^2}}\twonorm{\nabla_{\w_1}\widehat{\mathcal{L}} - \nabla_{\w_1}\mathcal{L}}^2 \\
    \le & 2\cdot \frac{5}{2}\rbr{\twonorm{\w_1}^2 + v_1^2}\rbr{\twonorm{v_1\w_1 -\a}^2 + \twonorm{v_2\w_2 +\a}^2}  \\
    & + 2\rbr{1 + \frac{\twonorm{\w_1}^2}{v_1^2}}\cdot 2 v_1^2 \delta^2 \rbr{\twonorm{v_1\w_1 -\a}^2 + \twonorm{v_2\w_2 +\a}^2} \\
    = & \rbr{5 + 4\delta^2}\rbr{\twonorm{\w_1}^2 + v_1^2}\rbr{\twonorm{v_1\w_1 -\a}^2 + \twonorm{v_2\w_2 +\a}^2}.
\end{align*}

Putting the above inequality into \eqref{ineq:imbalance_bound}, we complete the proof of the lemma with the constant $c_6 = 6$.

%% file: sec/apx/angle_stay_small.tex
We prove the lemma with the following constants: $c_0 \le \frac{1}{2}, c_2 = 2, c_5 = \frac{1}{50}, c_4 = \frac{\pi}{20}, \gamma = \frac{1}{4}$.

By symmetry, it suffices to prove the bound for $\theta_{1}^{(\tau+1)}$. By the update rule of gradient descent, we have
\begin{align*}
    & \w_1^{(\tau+1)} \\
    = &\w_1 - \mu\nabla_{\w_1} \widehat{\mathcal{L}} \\
    = & \w_1 - \mu\cdot\frac{v_1}{2}\left(v_1\w_1 - \left(1 - \frac{\theta}{\pi}\right)v_2\w_2 - \frac{\sin\theta}{\pi}v_2 \twonorm{\w_2}\dirw_1 -\a + \Delta\mathcal{G}_1\right) \\
    = & \left(\left(1 - \frac{\mu v_1^2}{2}\right)\twonorm{\w_1} +\frac{\mu v_1}{2}\cdot \frac{\sin\theta}{\pi}v_2 \twonorm{\w_2}\right)\dirw_1 + \frac{\mu v_1}{2} \left(\left(1 - \frac{\theta}{\pi}\right)v_2\w_2 +\a -\Delta\mathcal{G}_1\right).
\end{align*}

Given that $v_1,v_2 > 0$ and $1 - \frac{\mu v_1^2}{2}\ge 1 - \frac{c_0 c_2^2}{2} \ge 0$, the update vector $\w_1 - \mu\nabla_{\w_1} \widehat{\mathcal{L}}$ is a nonnegative linear combination of $\dirw_1$ and the vector
\[
    \vct{q} := \left(1 - \frac{\theta}{\pi}\right)v_2\w_2 +\a -\Delta\mathcal{G}_1.
\]
Recall that the angle of a positive linear combination with a reference vector $\a$ is bounded by the maximum angle of its components. Since we assume $\angle(\dirw_1,\a) \le c_4$, it suffices to show that $\angle(\vct{q},\a) \le c_4$ to conclude the proof.

Geometrically, to ensure this angle constraint on $\vct{q}$, we need to bound the perturbation magnitude $\twonorm{\Delta\mathcal{G}_1}$ by the Euclidean distance to the cone boundary. Specifically, under the assumption that $0 < v_2 \le c_2\sqrt{\twonorm{\a}}$ and $\left|b_2^{(\tau)}\right| \le \gamma\twonorm{\a}$, it follows that $\twonorm{v_2\w_2} \le c_2\sqrt{c_2^2+\gamma}\twonorm{\a}$. This implies that the condition:
\[
    \twonorm{\Delta\mathcal{G}_1} \le c_5 \twonorm{\a}\le \left(\frac{\twonorm{\a}}{2\cos c_4} - \frac{2c_4c_2\sqrt{c_2^2+\gamma}}{\pi}\twonorm{\a}\right)\sin(2c_4)
\]
is sufficient to guarantee $\angle(\vct{q},\a) \le c_4$. Numerical verification confirms that the chosen constants $c_2 = 2$, $c_4 = \frac{\pi}{20}$, $c_5 = \frac{1}{50}$, and $\gamma = \frac{1}{4}$ satisfy the required inequality. Consequently, we have $\angle(\w_1^{(\tau+1)},\a) \le c_4$, which completes the proof.

%% file: sec/apx/norm_control.tex
We prove the lemma with the following constants: $c_2 = 2, c_0 \le \frac{4}{25}, c_4 \le \frac{\pi}{10}, c_5 \le \frac{1}{3}, \gamma \le \frac{1}{2}$.

By symmetry, it suffices to prove the bound for $v_1^{(\tau+1)}$. We first show that $v_1^{(\tau+1)} \le 2\sqrt{\twonorm{\a}}$. Applying the gradient descent update rule, the partial derivative with respect to $v_1$ is bounded as follows:
\begin{align*}
    \nabla_{v_1}\widehat{\mathcal{L}} &= \frac{1}{2}\rbr{v_1\twonorm{\w_1}^2 - \frac{\rbr{\pi - \theta}\cos\theta + \sin\theta}{\pi}v_2 \twonorm{\w_1} \twonorm{\w_2} -\w_1^T\a + \w_1^T\Delta\mathcal{G}_1}  \\
    &\ge \frac{1}{2}\rbr{v_1\twonorm{\w_1}^2 - \frac{\rbr{\pi - \theta}\cos\theta + \sin\theta}{\pi} v_2 \twonorm{\w_1} \twonorm{\w_2} -\twonorm{\w_1} \twonorm{\a} - \twonorm{\w_1} \twonorm{\Delta\mathcal{G}_1}}  \\
    &= \frac{\twonorm{\w_1}}{2}\rbr{v_1\twonorm{\w_1} - \frac{\rbr{\pi - \theta}\cos\theta + \sin\theta}{\pi} v_2 \twonorm{\w_2} -\twonorm{\a} - \twonorm{\Delta\mathcal{G}_1}}  \\
    &\ge \frac{\twonorm{\w_1}}{2}\rbr{v_1\twonorm{\w_1} -\frac{1}{3}\twonorm{\a} - \twonorm{\a} -\frac{1}{3}\twonorm{\a}} \\
    &= \frac{\twonorm{\w_1}}{2}\rbr{v_1\twonorm{\w_1} -\frac{5}{3}\twonorm{\a}}.
\end{align*}
In the penultimate line, we use the assumption $\twonorm{\Delta\mathcal{G}_1} \le c_5\twonorm{\a} \le \frac{1}{3}\twonorm{\a}$ and the fact that
\begin{align*}
    \frac{\rbr{\pi - \theta}\cos\theta + \sin\theta}{\pi}v_2\twonorm{\w_2}\le \frac{\rbr{2c_4}\cos(\pi - 2c_4) + \sin(\pi - 2c_4))}{\pi}c_2\sqrt{c_2^2+\gamma}\twonorm{\a}\le \frac{1}{3}\twonorm{\a}.
\end{align*}
To establish the bound, we consider the following two cases based on the magnitude of $v_1$:
\begin{itemize}
    \item \textbf{Case 1:} $v_1 \ge \frac{5}{3}\sqrt{\twonorm{\a}}$. \\
    Since $\left|\twonorm{\w_1}^2 - v_1^2\right| =  \left|b_1\right| \le \gamma\twonorm{\a} \le \twonorm{\a}$, we have $\twonorm{\w_1}\ge \sqrt{\twonorm{\a}}$. Thus, we have
    \begin{align*}
        \nabla_{v_1}\widehat{\mathcal{L}} & \ge \frac{\twonorm{\w_1}}{2}\rbr{v_1\twonorm{\w_1} -\frac{5}{3}\twonorm{\a}} \\
        & \ge \frac{\twonorm{\w_1}}{2}\rbr{\frac{5}{3}\sqrt{\twonorm{\a}}\cdot\sqrt{\twonorm{\a}} -\frac{5}{3}\twonorm{\a}} \\
        & \ge 0,
    \end{align*}
    which means $v_1^{(\tau+1)} = v_1 - \mu \nabla_{v_1}\widehat{\mathcal{L}} \le v_1 \le 2\sqrt{\twonorm{\a}}$.
    \item \textbf{Case 2:} $v_1 < \frac{5}{3}\sqrt{\twonorm{\a}}$. \\
    We have
    \begin{align*}
        \nabla_{v_1}\widehat{\mathcal{L}} & \ge \frac{\twonorm{\w_1}}{2}\rbr{v_1\twonorm{\w_1} -\frac{5}{3}\twonorm{\a}} \\
        & \ge -\frac{5}{6}\twonorm{\w_1}\twonorm{\a},
    \end{align*}
    which means 
    \begin{align*}
        v_1^{(\tau+1)} = & v_1 - \mu \nabla_{v_1}\widehat{\mathcal{L}} \\
        \le & \frac{5}{3}\sqrt{\twonorm{\a}} + \frac{5}{6}\mu\twonorm{\w_1}\twonorm{\a} \\ 
        \le & 2\sqrt{\twonorm{\a}}.
    \end{align*}
    Here we use the fact that $\mu\twonorm{\w_1}\sqrt{\twonorm{\a}}\le c_0\sqrt{c_2^2 + \gamma} \le \frac{4}{25} \cdot \sqrt{5}\le\frac{2}{5}$.
\end{itemize}

Next, we establish the lower bound $v_1^{(\tau+1)} > \beta\sqrt{\twonorm{\a}}$. By the update rule of gradient descent, we have
\begin{align*}
    \nabla_{v_1}\widehat{\mathcal{L}} = &\frac{1}{2}\rbr{v_1\twonorm{\w_1}^2 - \frac{\rbr{\pi - \theta}\cos\theta + \sin\theta}{\pi}v_2 \twonorm{\w_1} \twonorm{\w_2} -\w_1^T\a + \w_1^T\Delta\mathcal{G}_1}  \\
    \le & \frac{1}{2}\rbr{v_1\twonorm{\w_1}^2 -\w_1^T\a + \twonorm{\w_1}\twonorm{\Delta\mathcal{G}_1}} \\
    = & \frac{\twonorm{\w_1}}{2}\rbr{v_1\twonorm{\w_1} -\cos\theta_1\twonorm{\a} + \twonorm{\Delta\mathcal{G}_1}} \\
    \le & \frac{\twonorm{\w_1}}{2}\rbr{v_1\twonorm{\w_1} -\frac{5}{6}\twonorm{\a} + \frac{1}{3}\twonorm{\a}} \\
    = & \frac{\twonorm{\w_1}}{2}\rbr{v_1\twonorm{\w_1} -\frac{1}{2}\twonorm{\a}}.
\end{align*}
In the penultimate line, we use the assumption $\twonorm{\Delta\mathcal{G}_1} \le c_5\twonorm{\a}\le\frac{1}{3}\twonorm{\a}$ and  $\theta_1 \le c_4\le \frac{\pi}{10} \le\arccos\rbr{\frac{5}{6}}$.

To establish the bound, we consider the following two cases based on the magnitude of $v_1\twonorm{\w_1}$:
\begin{itemize}
    \item \textbf{Case 1:} $v_1\twonorm{\w_1} \le \frac{1}{2}\twonorm{\a}$. \\
    we have
    \begin{align*}
        \nabla_{v_1}\widehat{\mathcal{L}} \le & \frac{\twonorm{\w_1}}{2}\rbr{v_1\twonorm{\w_1} -\frac{1}{2}\twonorm{\a}} \\
        \le & 0,
    \end{align*}
    which means $v_1^{(\tau+1)} = v_1 - \mu \nabla_{v_1}\widehat{\mathcal{L}} \ge \beta \sqrt{\twonorm{\a}}$.
    \item \textbf{Case 2:} $v_1\twonorm{\w_1} > \frac{1}{2}\twonorm{\a}$. \\
    Since $\left|\twonorm{\w_1}^2 - v_1^2\right| = \left|b_1\right| \le  \gamma\twonorm{\a} \le \frac{1}{2}\twonorm{\a}$, we have $v_1 > \frac{1}{2}\sqrt{\twonorm{\a}}$ and $\twonorm{\w_1}< \sqrt{\twonorm{\a}}$. Thus, we have
    \begin{align*}
        \nabla_{v_1}\widehat{\mathcal{L}} \le & \frac{\twonorm{\w_1}}{2}\rbr{v_1\twonorm{\w_1} -\frac{1}{2}\twonorm{\a}} \\
        \le & \frac{v_1\twonorm{\w_1}^2}{2} \\
        < & \frac{v_1\twonorm{\a}}{2}.
    \end{align*}
    It follows that $v_1^{(\tau+1)} = v_1 - \mu \nabla_{v_1}\widehat{\mathcal{L}} \ge v_1 - \frac{\mu v_1\twonorm{\a}}{2} \ge \frac{1}{2}v_1 > \frac{1}{4}\sqrt{\twonorm{\a}}\ge \beta \sqrt{\twonorm{\a}}$. Here we use the assumption $\mu\le \frac{c_0}{\twonorm{\a}} \le \frac{1}{\twonorm{\a}}$.
\end{itemize}
This concludes the proof of Lemma \ref{lemma:norm_control}.

%% file: sec/apx/angle_alignment.tex
We prove the lemma with the following constants: $c_0\le 1, c_4\le \frac{\pi}{4}, c_9 = \frac{64}{\tan c_4}, c_2 = 2, c_3 = \frac{1}{4}, c_{10} = \frac{1}{4e^{2\alpha}}, \sigma_0 = \frac{1}{8e^{2\alpha}}$, where $\alpha = \frac{65}{\tan c_4}$.

For notational simplicity, we assume without loss of generality that $\twonorm{\a} = 1$. Recall that our initialization scheme is given by
\[
\w_1^{(0)},\w_2^{(0)}\sim\mathcal N\!\Bigl(0,\tfrac{\sigma^2}{d}\mtx{I}_d\Bigr),
\qquad
v_1^{(0)},v_2^{(0)}\sim \frac{\sigma}{\sqrt{d}}\xi,\quad
\xi^2\sim\chi_d^2,
\]
where $\sigma\le\sigma_0 \sqrt{\twonorm{\a}}$ and $\chi_d^2$ denotes the chi-squared distribution with $d$ degrees of freedom.

By definition, both the squared scalar $\left(v_i^{(0)}\right)^2$ and the squared vector norm $\left\|\boldsymbol{w}_i^{(0)}\right\|^2$ follow the same Chi-squared distribution. By standard concentration inequalities, both variables concentrate sharply around the value $\sigma^2$ with probability at least $1 - O(e^{-cd})$. Furthermore, the projection $\boldsymbol{a}^T \boldsymbol{w}_i^{(0)}$ follows a Gaussian distribution $\mathcal{N}(0, \frac{\sigma^2}{d}\|\boldsymbol{a}\|^2)$, which concentrates around 0 with magnitude $O(1/\sqrt{d})$. 

Specifically, applying the Laurent-Massart concentration bounds for the Chi-squared distribution and standard Gaussian tail bounds, we have that with probability at least $1 - 8\exp\rbr{-\min\rbr{\frac{1}{16}, \frac{\hat{c}_4^2}{4}}d}$, the following inequalities hold simultaneously:
\begin{align*}
    & v_1^{(0)} \ge \frac{1}{2}\twonorm{\w_1^{(0)}}, v_2^{(0)} \ge \frac{1}{2} \twonorm{\w_2^{(0)}} \\
    & \frac{1}{2} \sigma \le v_1^{(0)}, v_2^{(0)} \le 2 \sigma \\
    & \a^T\w_1^{(0)} \ge -\hat{c}_4 v_1^{(0)}, \a^T\w_2^{(0)} \ge -\hat{c}_4 v_2^{(0)}.
\end{align*}
where $\hat{c}_4$ is a fixed positive constant.

Assume that $c_0 \le 1, c_4\le \frac{\pi}{4}$ and $T_1 = \left\lceil\frac{64}{\mu \tan c_4}\right\rceil$. We aim to establish the following properties for all iterations $\tau \le T$ via induction:
\begin{align}
    & v_1^{(\tau)} \ge \frac{1}{2}v_1^{(0)}, v_2^{(\tau)}\ge \frac{1}{2}v_2^{(0)} \label{phase1ineq1}\\
    & \a^T\w_1^{(\tau)} \ge -\hat{c}_4 v_1^{(0)}, \a^T\w_2^{(\tau)} \ge -\hat{c}_4 v_2^{(0)} \label{phase1ineq2}\\
    & v_1^{(\tau)},\twonorm{\w_1^{(\tau)}} \le 2(1+\mu)^{\tau}v_1^{(0)}\le \hat{c}_3 v_1^{(0)}\le \frac{1}{\hat{c}_5} \label{phase1ineq3}\\
    & v_2^{(\tau)},\twonorm{\w_2^{(\tau)}} \le 2(1+\mu)^{\tau}v_2^{(0)}\le \hat{c}_3 v_2^{(0)}\le \frac{1}{\hat{c}_5} \nonumber
\end{align}
with constant $\hat{c}_4 = \frac{1}{2\alpha}, \hat{c}_3 = \hat{c}_5 = 2e^{\alpha}, \sigma_0 = \frac{1}{8e^{2\alpha}}$
where $\alpha = \frac{65}{\tan c_4}$.

We prove \eqref{phase1ineq1}, \eqref{phase1ineq2}, \eqref{phase1ineq3} by induction. At initialization it is true with probability at least $1 - \tilde{C} e ^{-\tilde{c}d}$ as explained above. Assuming these hypotheses hold for some $\tau <T$, we proceed to show they remain valid for iteration $\tau+1$. By symmetry, we focus on $v_1$ and $\w_1$. For the sake of notation simplicity, we suppress the superscript $(\tau)$ where the context is clear.

We start with bound for $\a^T\w_1$. Specifically, the update rule for the alignment term yields
\begin{align*}
    & \a^T\w_1^{(\tau+1)} \\
    = & \a^T\w_1 - \mu\a^T\nabla_{\w_1}\widehat{\mathcal{L}} \\
    = & \a^T\w_1 + \frac{\mu}{2} v_1- \frac{\mu v_1}{2}\left(v_1 \a^T\w_1 - \left(1 - \frac{\theta}{\pi}\right)v_2 \a^T\w_2 - \frac{\sin\theta}{\pi}v_2 \twonorm{\w_2} \a^T\dirw_1 + \a^T\Delta\mathcal{G}_1\right).
\end{align*}

By Lemma \ref{lem:component_deviation}, with probability at least $1-3e^{-cd}$, we have $\twonorm{\mathcal{G}_1} \le \frac{1}{\hat{c}_5^2}\twonorm{\a}$. It follows that
\begin{align*}
    & \left|v_1 \a^T\w_1 - \left(1 - \frac{\theta}{\pi}\right)v_2 \a^T\w_2 - \frac{\sin\theta}{\pi}v_2 \twonorm{\w_2} \a^T\dirw_1 + \a^T\Delta\mathcal{G}_1\right| \\
    \le & v_1 \twonorm{\w_1} + v_2 \twonorm{\w_2} + v_2 \twonorm{\w_2} + \twonorm{\a^T\Delta\mathcal{G}_1} \\
    \le &\frac{4}{\hat{c}_5^2} \le \frac{1}{2}.
\end{align*}
Combining the above bounds, the term inside the parenthesis is bounded by $\frac{1}{2}$ in absolute value. Consequently, the update satisfies
\begin{align}\label{phase1ineq4}
    \a^T\w_1 + \frac{\mu}{4} v_1 \le \a^T\w_1^{(\tau+1)} \le \a^T\w_1 + \frac{3\mu}{4} v_1.
\end{align}
It follows that $\a^T\w_1^{(\tau + 1)} \ge \a^T\w_1$, By the inductive hypothesis $\a^T\w_1 \ge -\hat{c}_4 v_1^{(0)}$, we conclude that \eqref{phase1ineq2} holds for iteration $\tau+1$.

Next, we bound the orthogonal component $\twonorm{(\mtx{I} - \a\a^T)\w_1^{(\tau+1)}}$. Observe that:
\begin{align*}
    & (\mtx{I} - \a\a^T)\w_1^{(\tau+1)} \\
    = & (\mtx{I} - \a\a^T)\w_1 - \mu(\mtx{I} - \a\a^T)\nabla_{\w_1}Loss \\
    = & (\mtx{I} - \a\a^T)\w_1 - \mu\frac{v_1}{2}(\mtx{I} - \a\a^T)\left(v_1\w_1 - \left(1 - \frac{\theta}{\pi}\right)v_2\w_2 - \frac{\sin\theta}{\pi}v_2 \twonorm{\w_2}\dirw_1 -\a + \Delta\mathcal{G}_1\right).
\end{align*}

Since $\rbr{\mtx{I} - \a\a^T}\a = 0$, the update simplifies to:
\begin{align*}
    & (\mtx{I} - \a\a^T)\w_1^{(\tau+1)} \\
    = & (\mtx{I} - \a\a^T)\w_1 - \mu\frac{v_1}{2}(\mtx{I} - \a\a^T)\left(v_1\w_1 - \left(1 - \frac{\theta}{\pi}\right)v_2\w_2 - \frac{\sin\theta}{\pi}v_2 \twonorm{\w_2}\dirw_1 + \Delta\mathcal{G}_1\right).
\end{align*}
Applying the triangle inequality and substituting the bounds for $\twonorm{\w_1}$, $\twonorm{\w_2}$, and $\twonorm{\Delta\mathcal{G}_1}$, we have
\begin{align}
    & \twonorm{(\mtx{I} - \a\a^T)\w_1^{(\tau+1)}} \nonumber\\
    \le &\twonorm{(\mtx{I} - \a\a^T)\w_1} + \mu\twonorm{\frac{v_1}{2}(\mtx{I} - \a\a^T)\left(v_1\w_1 - \left(1 - \frac{\theta}{\pi}\right)v_2\w_2 - \frac{\sin\theta}{\pi}v_2 \twonorm{\w_2}\dirw_1 + \Delta\mathcal{G}_1\right)} \nonumber\\
    \le &\twonorm{(\mtx{I} - \a\a^T)\w_1} + \frac{\mu v_1}{2}(v_1 \twonorm{\w_1} + 2v_2 \twonorm{\w_2} +  \twonorm{\Delta\mathcal{G}_1}
    ) \nonumber\\
    \le &\twonorm{(\mtx{I} - \a\a^T)\w_1} + \frac{2\mu v_1}{\hat{c}_5^2}. \label{phase1ineq5}
\end{align}

Combining the upper bound for $\a^T\w_1^{(\tau+1)}$ (from inequality \eqref{phase1ineq4}) and the bound in \eqref{phase1ineq5}, we derive the upper bound for $\twonorm{\w_1^{(\tau+1)}}$ as follows:
\begin{align*}
    \twonorm{\w_1^{(\tau+1)}}^2 = &\left(\a^T\w_1^{(\tau+1)}\right)^2 + \twonorm{(\mtx{I} - \a\a^T)\w_1^{(\tau+1)}}^2 \\
    \le & \left(\a^T\w_1 + \frac{3\mu}{4} v_1\right)^2 + \left(\twonorm{(\mtx{I} - \a\a^T)\w_1} + \frac{2}{c_5^2}\mu v_1^{(0)}\right)^2 \\
    \le & \twonorm{\w_1}^2 + 2\left(\frac{3\mu}{4} v_1 + \frac{2}{c_5^2}\mu v_1^{(0)}\right)\twonorm{\w_1} + \left(\frac{3\mu}{4} v_1\right)^2+ \left(\frac{2}{c_5^2}\mu v_1\right)^2 \\
    \le & \left(\twonorm{\w_1} + \frac{3\mu}{4} v_1 + \frac{2}{c_5^2}\mu v_1^{(0)}\right)^2,
\end{align*}
Taking the square root and using the inductive hypothesis $v_1^{(\tau)},\twonorm{\w_1^{(\tau)}} \le 2(1+\mu)^{\tau}v_1^{(0)}$, along with the fact that $\frac{2}{\hat{c}_5^2}\le \frac{1}{4}$, we conclude $\twonorm{\w_1^{(\tau+1)}} \le 2(1+\mu)^{\tau+1}v_1^{(0)}$, which establishes the upper bound for $\twonorm{\w_1}$ in \eqref{phase1ineq3} for iteration $\tau+1$.

We now turn to the evolution of $v_1$. the update for $v_1$ is given by:
\begin{align*}
    &v_1^{\rbr{\tau+1}}  \\
    = &v_1 - \mu \nabla_{v_1}Loss \\
    = &v_1 - \mu \frac{\w_1^T}{v_1}\nabla_{\w_1}Loss \\
    = &v_1 - \frac{\mu}{2} \w_1^T\left(v_1\w_1 - \left(1 - \frac{\theta}{\pi}\right)v_2\w_2 - \frac{\sin\theta}{\pi}v_2 \twonorm{\w_2}\dirw_1 - \a + \Delta\mathcal{G}_1\right) \\
    = &v_1 + \frac{\mu}{2} \a^T\w_1 - \frac{\mu}{2} \w_1^T\w_1 v_1 - \frac{\mu}{2}\w_1^T\Delta\mathcal{G}_1 + \frac{\mu}{2} v_2\left(\left(1 - \frac{\theta}{\pi}\right)\w_1^T\w_2 + \frac{\sin\theta}{\pi}\twonorm{\w_1}\twonorm{\w_2}\right).
\end{align*}

By triangle inequalities, we have
\begin{align*}
    & \left| - \frac{\mu}{2} \w_1^T\w_1 v_1 - \frac{\mu}{2}\w_1^T\Delta\mathcal{G}_1 + \frac{\mu}{2} v_2\left(\left(1 - \frac{\theta}{\pi}\right)\w_1^T\w_2 + \frac{\sin\theta}{\pi}\twonorm{\w_1}\twonorm{\w_2}\right)\right| \\
    \le & \frac{\mu}{2}\twonorm{\w_1}^2v_1 + \frac{\mu}{2}\twonorm{\w_1}\twonorm{\mathcal{G}_1} + \frac{\mu}{2}v_2\twonorm{\w_1}\twonorm{\w_2} + \frac{\mu}{2}v_2\twonorm{\w_1}\twonorm{\w_2} \\
    \le & \frac{\mu}{2}\cdot 4\frac{\hat{c}_3}{\hat{c}_5^2}v_1^{(0)}.
\end{align*}

For the upper bound, we have
\begin{align*}
    v_1^{\rbr{\tau+1}} & \le v_1 + \frac{\mu}{2} \a^T\w_1 +  \frac{\mu}{2}\cdot 4\frac{\hat{c}_3}{\hat{c}_5^2}v_1^{(0)} \\
    & \le v_1 + \frac{\mu}{2} \a^T\w_1 + \frac{\mu}{2} v_1^{(0)} \\
    & \le \left(1+\frac{\mu}{2}+\frac{\mu}{2}\right)\cdot2(1+\mu)^{\tau}v_1^{(0)} \\
    & = 2(1+\mu)^{\tau +1}v_1^{(0)},
\end{align*}
which shows the bound of $v_1$ in \eqref{phase1ineq3} for iteration $\tau+1$.

For the lower bound, we have
\begin{align*}
    v_1^{\rbr{\tau+1}} & \ge v_1 + \frac{\mu}{2} \left(-\hat{c}_4 v_1^{(0)}\right)-  \frac{\mu}{2}\cdot 4\frac{\hat{c}_3}{\hat{c}_5^2}v_1^{(0)} \\
    & \ge v_1 - \frac{1}{2T_1} v_1^{(0)}.
\end{align*}
This implies that $v_1$ decrease at most one half in the first $T_1$ iterations, which shows \eqref{phase1ineq1} for iteration $\tau+1$. Here we use the fact that $\hat{c}_4 + 4\frac{\hat{c}_3}{\hat{c}_5^2} \le \frac{1}{\mu T_1} \le 1$.

Resuming the proof of Lemma \ref{lemma:angle_alignment}. Inequalities  \eqref{phase1ineq1} and \eqref{phase1ineq3} directly imply that at iteration $T_1$: 
\begin{align*}
    v_1^{(T_1)} \ge \frac{1}{2}v_1^{(0)} \ge \frac{1}{4} \sigma,
\end{align*}
and 
\begin{align*}
    v_1^{(T_1)} \le \frac{1}{\hat{c}_5} \le 2 \sqrt{\twonorm{\a}}.
\end{align*}
Additionally, \eqref{phase1ineq3} yields an upper bound on the norm of the imbalance term $\left|b_1^{(T_1)}\right|$:
\begin{align*}
    \left|b_1^{(T_1)}\right| = \left|\rbr{v_1^{(T_1)}}^2-\twonorm{\w_1^{(T_1)}}^2 \right| \le \frac{1}{\hat{c}_5^2}
\end{align*}
Next, we estimate the alignment angle $\theta_1^{(T_1)}$. Summing the updates in \eqref{phase1ineq4} and \eqref{phase1ineq5} over $T_1$ iterations, we have 
\begin{align*}
    \a^T\w_1^{(T_1)} & \ge \a^T\w_1^{(0)} + \frac{\mu}{8} v_1^{(0)}\cdot T_1 \\
    & \ge -\hat{c}_4 v_1^{(0)} + \frac{\mu}{8}T_1 v_1^{(0)}
\end{align*}
and 
\begin{align*}
    \twonorm{(\mtx{I} - \a\a^T)\w_1^{(T)}} & \le \twonorm{(\mtx{I} - \a\a^T)\w_1^{(0)}} + \frac{2c_3}{c_5^2}\mu v_1^{(0)} \cdot T_1 \\
    & \le 2v_1^{(0)} + \frac{2\hat{c}_3}{\hat{c}_5^2}\mu T_1 v_1^{(0)}.
\end{align*}
It follows that $\a^T\w_1^{(T)} \tan c_4 \ge \twonorm{(\mtx{I} - \a\a^T)\w_1^{(T)}}$, which confirms
\begin{align*}
    \theta_1^{(T_1)} = \angle(\w_1^{(T_1)},\a) \le c_4.
\end{align*}
Here we use the fact that $\mu T\left(\frac{\tan c_4}{8} - \frac{2\hat{c}_3}{\hat{c}_5^2}\right) \ge \hat{c}_4\tan c_4 +2$. By symmetry, identical bounds hold for $v_2^{(T_1)}, b_2^{(T_1)}, \theta_2^{(T_1)}$. This completes the proof of Lemma \ref{lemma:angle_alignment} with constants $c_0\le 1, c_4\le \frac{\pi}{4}, c_9 = \frac{64}{\tan c_4}, c_2 = 2, c_3 = \frac{1}{4}, c_{10} = \frac{1}{4e^{2\alpha}}, \sigma_0 = \frac{1}{8e^{2\alpha}}$, where $\alpha = \frac{65}{\tan c_4}$.

%% file: sec/apx/norm_growth.tex
We prove the lemma with the following constants: $c_0 \le \frac{1}{10^8}, c_7= \frac{1}{4}, c_8 = 32, c_{11} = \frac{1}{50}.$

We first show that for any iteration $\tau \in [T_1, T_1+T_2]$, the following bounds hold:
\begin{align}
    & 0 < v_1^{(\tau)}, v_2^{(\tau)} \le 2\sqrt{\twonorm{\a}} \label{normgrowth_normineq}\\
    & \theta_1^{(\tau)}, \theta_2^{(\tau)} \le c_4\le \frac{\pi}{10} \label{normgrowth_angleineq} \\
    & \left|b_1^{(\tau)}\right|, \left|b_2^{(\tau)}\right| \le c_{10}\twonorm{\a} + (\tau - T_1)\mu \twonorm{\a}^2 \le c_{11}\twonorm{\a}\label{normgrowth_imbalanceineq}
\end{align}
We proceed by induction. According to Lemma~\ref{lemma:angle_alignment}, equations
\eqref{normgrowth_normineq}, \eqref{normgrowth_angleineq} and \eqref{normgrowth_imbalanceineq} hold for $\tau = T_1$ with probability at least $1- Ce^{-cd}$. Now assume that we have \eqref{normgrowth_normineq}, \eqref{normgrowth_angleineq} and \eqref{normgrowth_imbalanceineq} for $\tau = T_1 + t$ with $t \in [0, T_2 - 1]$. For $\tau = T_1 + t + 1$, observe that
\begin{align*}
    \left|b_1^{(\tau)}\right|, \left|b_2^{(\tau)}\right| \le c_{10}\twonorm{\a} + t\mu \twonorm{\a}^2 \le \gamma \twonorm{\a}.
\end{align*}
By invoking Lemma~\ref{lemma:angle_stay_small}, Lemma~\ref{lemma:norm_control} with $\beta = 0$, we establish that \eqref{normgrowth_normineq} and \eqref{normgrowth_angleineq} hold for $\tau = T_1 + t + 1$. To prove \eqref{normgrowth_imbalanceineq}, we apply the Lemma~\ref{lemma:imbalance_bound} with the constant $c_6 = 6$, which yields:
\begin{align*}
    \left|b_{1}^{(\tau+1)} - b_{1}^{(\tau)}\right| & \le c_6\mu^2\rbr{\rbr{v_1^{(\tau)}}^2 + \twonorm{\w_1^{(\tau)}}^2}\rbr{\twonorm{v_1^{(\tau)}\w_1^{(\tau)} -\a}^2 + \twonorm{v_2^{(\tau)}\w_2^{(\tau)} +\a}^2} \\
    & \le c_6\mu^2\rbr{4\twonorm{\a} + 5\twonorm{\a}}\rbr{\rbr{2\sqrt{5}+1}^2\twonorm{\a}^2 + \rbr{2\sqrt{5}+1}^2\twonorm{\a}^2} \\
    & \le 3300\mu^2\twonorm{\a}^3.
\end{align*}
Here, we use the assumptions $v_1^{(\tau)}, v_2^{(\tau)} \le 2\sqrt{\twonorm{\a}}$ and the fact that $\twonorm{\w_1^{(\tau)}}^2 \le \rbr{v_1^{(\tau)}}^2 + b_1^{(\tau)} \le (4 + c_{11})\twonorm{\a} \le 5\twonorm{\a}$, $\twonorm{\w_2^{(\tau)}}^2 \le \rbr{v_2^{(\tau)}}^2 + b_2^{(\tau)} \le (4 + c_{11})\twonorm{\a} \le 5\twonorm{\a}$. Substituting these into the recursive relation for $b_1$ and using the Inequality \eqref{normgrowth_imbalanceineq} for $\tau = T_1 + t$ we have:
\begin{align*}
    \left|b_1^{(\tau + 1)}\right| \le \left|b_1^{(\tau)}\right| + \left|b_1^{(\tau + 1)} - b_1^{(\tau)}\right| \le c_{10}\twonorm{\a} + (\tau + 1 - T_1)\cdot 3300\mu^2\twonorm{\a}^3 \le c_{11}\twonorm{\a}.
\end{align*}
Here we use the assumption that $\tau + 1 - T_1 \le T_2 = \lceil\frac{c_8}{\mu \twonorm{\a}}\ln\rbr{\frac{\sqrt{\twonorm{\a}}}{\sigma}}\rceil$ and $\mu \le \frac{c_0}{\twonorm{\a}\ln\rbr{\frac{\sqrt{\twonorm{\a}}}{\sigma}}}$. By substituting the constants $c_8 = 32$, $c_{10} \leq \frac{1}{100}$, $c_{11} = \frac{1}{50}$, and $c_0 \leq \frac{1}{10^8}$, one can verify that the requirement for $c_{11}$ is satisfied. 
By symmetry, we also have $\left|b_2^{(\tau + 1)}\right| \le c_{11}\twonorm{\a}$. Thus, we have \eqref{normgrowth_imbalanceineq} for $\tau = T_1 + t + 1$.

By symmetry, we only need to focus on $v_1, \w_1$. We first establish a lower bound on $\twonorm{\w_1^{(\tau+1)}}$. Since $\w_1^T\nabla_{\w_1}\widehat{\mathcal{L}} = v_1\nabla_{v_1}\widehat{\mathcal{L}}$ (due to Eq. \ref{eq:grad_useful_identity}), we have
\begin{align*}
    \twonorm{\w_1^{(\tau+1)}}^2 = & \twonorm{\w_1 - \mu\nabla_{\w_1}\widehat{\mathcal{L}}}^2\\
    = & \twonorm{\w_1}^2 - 2\mu \w_1^T\nabla_{\w_1}\widehat{\mathcal{L}} + \mu^2\twonorm{\nabla_{\w_1}\widehat{\mathcal{L}}}^2\\
    = & \twonorm{\w_1}^2 - 2\mu v_1\nabla_{v_1}\widehat{\mathcal{L}} + \mu^2\twonorm{\nabla_{\w_1}\widehat{\mathcal{L}}}^2,
\end{align*}
and
\begin{align*}
    \twonorm{\nabla_{\w_1}\widehat{\mathcal{L}}} &\ge \frac{|\w_1^T\nabla_{\w_1}\widehat{\mathcal{L}}|}{\twonorm{\w_1}} \\
    &= \frac{|v_1\nabla_{v_1}\widehat{\mathcal{L}}|}{\twonorm{\w_1}}.
\end{align*}
Thus, we have
\begin{align*}
    \twonorm{\w_1^{(\tau+1)}}^2 & = \twonorm{\w_1}^2 - 2\mu v_1\nabla_{v_1}\widehat{\mathcal{L}} + \mu^2\twonorm{\nabla_{\w_1}\widehat{\mathcal{L}}}^2\\
    & \ge \twonorm{\w_1}^2 - 2\mu v_1\nabla_{v_1}\widehat{\mathcal{L}} + \mu^2\left(\frac{|v_1\nabla_{v_1}\widehat{\mathcal{L}}|}{\twonorm{\w_1}}\right)^2 \\
    & = \left(\twonorm{\w_1} - \mu\frac{v_1\nabla_{v_1}\widehat{\mathcal{L}}}{\twonorm{\w_1}}\right)^2.
\end{align*}
It follows that
\begin{align}\label{ineqnormgrowth1}
    \twonorm{\w_1^{(\tau+1)}} & \ge \twonorm{\w_1} - \mu\frac{v_1\nabla_{v_1}\widehat{\mathcal{L}}}{\twonorm{\w_1}}.
\end{align}
We continue by estimating $\nabla_{v_1}\widehat{\mathcal{L}}$. Note that
\begin{align*}
    \nabla_{v_1}\widehat{\mathcal{L}} &= \frac{1}{2}\rbr{v_1\twonorm{\w_1}^2 - v_2 \frac{\rbr{\pi - \theta}\cos\theta + \sin\theta}{\pi} \twonorm{\w_1} \twonorm{\w_2} -\w_1^T\a + \w_1^T\Delta\mathcal{G}_1}  \\
    &\le \frac{1}{2}\rbr{v_1\twonorm{\w_1}^2 -\w_1^T\a + \w_1^T\Delta\mathcal{G}_1} \\
    &\le \frac{1}{2}\rbr{v_1\twonorm{\w_1}^2 -\cos c_4\twonorm{\w_1}\twonorm{\a} + \twonorm{\w_1}\twonorm{\Delta\mathcal{G}_1}} \\
    &\le \frac{1}{2}\rbr{v_1\twonorm{\w_1}^2 -\frac{5}{6}\twonorm{\w_1}\twonorm{\a} + \frac{1}{3}\twonorm{\w_1}\twonorm{\a}} \\
    &\le \frac{1}{2}\rbr{v_1\twonorm{\w_1}^2 -\frac{1}{2}\twonorm{\w_1}\twonorm{\a}}.
\end{align*}
In the penultimate line, we use the assumptions $c_4 = \frac{\pi}{10} \le \arccos \left(\frac{5}{6}\right)$ and $\twonorm{\Delta\mathcal{G}_1} \le \frac{1}{3} \twonorm{\a}$.
To establish the lower bound for $v_1^{(T_1+T_2)}$, we will show that in phase 2, $v_1\twonorm{\w_1}$ increases when it is small and will never decrease too much when it is large. Specifically, we consider the following three cases based on the magnitude of $v_1\twonorm{\w_1}$:
\begin{itemize}
    \item \textbf{Case 1:} $v_1\twonorm{\w_1}\le \frac{1}{4}\twonorm{\a}$.\\
    We have
    \begin{align*}
    \nabla_{v_1}\widehat{\mathcal{L}} \le \frac{1}{2}\rbr{v_1\twonorm{\w_1}^2 -\frac{1}{2}\twonorm{\w_1}\twonorm{\a}}\le -\frac{1}{8}\twonorm{\a}\twonorm{\w_1}.
    \end{align*}
    By Inequality \eqref{ineqnormgrowth1}, we have
        \begin{align*}
        \twonorm{\w_1^{(\tau+1)}} \ge & \twonorm{\w_1} - \mu\frac{v_1\nabla_{v_1}\widehat{\mathcal{L}}}{\twonorm{\w_1}} \\
        \ge & \twonorm{\w_1} - \mu\frac{v_1(-\frac{1}{8}\twonorm{\w_1}\twonorm{\a})}{\twonorm{\w_1}} \\
        = & \twonorm{\w_1} + \frac{1}{8}\mu \twonorm{\a}v_1.
        \end{align*}

    Summing the two inequalities we conclude that
    \begin{align*}
        v_1^{(\tau+1)} + \twonorm{\w_1^{(\tau+1)}} \ge \left(1+\frac{1}{8}\mu\twonorm{\a}\right)\left(v_1^{(\tau)} + \twonorm{\w_1^{(\tau)}}\right).
    \end{align*}
    \item \textbf{Case 2:} $\frac{1}{4}\twonorm{\a} <v_1\twonorm{\w_1}\le \frac{1}{2}\twonorm{\a}$. \\
    We have
    \begin{align*}
        \nabla_{v_1}\widehat{\mathcal{L}} \le & \frac{1}{2}\rbr{v_1\twonorm{\w_1}^2 -\frac{1}{2}\twonorm{\w_1}\twonorm{\a}} \\
        \le & 0,
    \end{align*}
    which means $v_1^{(\tau+1)} = v_1 - \mu \nabla_{v_1}\widehat{\mathcal{L}} \ge v_1$. By Inequality \eqref{ineqnormgrowth1}, we have
    \begin{align*}
        \twonorm{\w_1^{(\tau+1)}} \ge & \twonorm{\w_1} - \mu\frac{v_1\nabla_{v_1}\widehat{\mathcal{L}}}{\twonorm{\w_1}} \\
    \ge & \twonorm{\w_1}.
    \end{align*}
    Thus, we have $v_1^{(\tau+1)}\twonorm{\w_1}^{(\tau+1)} \ge v_1\twonorm{\w_1} \ge \frac{1}{4}\twonorm{\a}.$

    \item \textbf{Case 3:} $v_1\twonorm{\w_1}> \frac{1}{2}\twonorm{\a}$. \\
    We have
\begin{align*}
    \nabla_{v_1}\widehat{\mathcal{L}} \le & \frac{1}{2}\rbr{v_1\twonorm{\w_1}^2 -\frac{1}{2}\twonorm{\w_1}\twonorm{\a}} \\
    \le & \frac{1}{2}v_1\twonorm{\w_1}^2,
\end{align*}
which means $v_1^{(\tau+1)} = v_1 - \mu \nabla_{v_1}\widehat{\mathcal{L}} \ge v_1 - \frac{\mu v_1}{2}\twonorm{\w_1}^2 \ge v_1 - \frac{1}{4}v_1 = \frac{3}{4}v_1$. Here we use the fact that $\mu\twonorm{\w_1}^2 \le c_0 (c_2^2+\gamma)\le \frac{1}{2}$. Using Inequality \eqref{ineqnormgrowth1}, we have
\begin{align*}
    \twonorm{\w_1^{(\tau+1)}} \ge & \twonorm{\w_1} - \mu\frac{v_1\nabla_{v_1}\widehat{\mathcal{L}}}{\twonorm{\w_1}} \\
    \ge & \twonorm{\w_1} - \frac{\mu v_1^2}{2}\twonorm{\w_1} \\
    \ge & \twonorm{\w_1} - \frac{1}{4}\twonorm{\w_1} \\
    = & \frac{3}{4}\twonorm{\w_1}.
\end{align*}
Here we use the fact that $\mu v_1^2 \le c_0 c_2^2\le \frac{1}{2}$. Thus, we have $v_1^{(\tau+1)}\twonorm{\w_1}^{(\tau+1)} \ge \frac{3}{4}v_1\cdot\frac{3}{4}\twonorm{\w_1} \ge \frac{1}{2}v_1\twonorm{\w_1} \ge \frac{1}{4}\twonorm{\a}.$
\end{itemize}
As a result, the sum $v_1+\twonorm{\w_1}$ will increase by a factor $\left(1+\frac{1}{8}\mu\twonorm{\a}\right)$ as long as $v_1\twonorm{\w_1} < \frac{1}{4}\twonorm{\a}$. Once we have $v_1\twonorm{\w_1} \ge \frac{1}{4}\twonorm{\a}$ at some iteration, it remains bounded below by $\frac{1}{4}\twonorm{\a}$. By Lemma \ref{lemma:angle_alignment}, $v_1^{(T_1)}+\twonorm{\w_1^{(T_1)}} \ge c_3\sigma \ge\frac{1}{4}\sigma$. Consequently, after $T_2 = \lceil\frac{c_8}{\mu \twonorm{\a}}\ln\rbr{\frac{\sqrt{\twonorm{\a}}}{\sigma}}\rceil$ iterations, where we have used the constant $c_8 =32$, we have $v_1^{(T_1+T_2)}\twonorm{\w_1^{(T_1+T_2)}} \ge \frac{1}{4}\twonorm{\a}$. Combining this with the imbalance bound $\left|\rbr{v_1^{(T_1+T_2)}}^2 - \twonorm{\w_1^{(T_1+T_2)}}^2 \right| \le c_{11}\twonorm{\a} \le \frac{1}{2}\twonorm{\a}$, we have $v_1^{(T_1+T_2)} \ge \frac{1}{4}\sqrt{\twonorm{\a}}$. This completes the proof with the constant $c_{11}= \frac{1}{2}$.

%% file: sec/apx/uniform_concentration.tex
We prove the result in 5 steps.

\paragraph{Step 1: Standard net reduction for operator norm.}

\begin{lemma}[Operator norm on a net]\label{lem:net}
Let $A\in\R^{d\times d}$ be symmetric and let $U\subset \Sph^{d-1}$ be an $\varepsilon$-net with $\varepsilon\in(0,1/2)$.
Then
\[
\opnorm{A}\ \le\ \frac{1}{1-2\varepsilon}\max_{u\in \mathcal{U}}|u^\top A u|.
\]
Moreover, there exists a $1/4$-net $U$ with $|U|\le 9^d$.
\end{lemma}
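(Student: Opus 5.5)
The plan is to prove the bound by comparing the quadratic form of $A$ at an arbitrary unit vector with its value at the nearest net point; the existence of a small net will come from a volume argument.

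\textbf{Reduction to quadratic forms.} Since $A$ is symmetric, $\opnorm{A}=\sup_{x\in\Sph^{d-1}}|x^\top A x|$. Fix a maximizer $x$, which exists by compactness of the sphere, and choose $u\in U$ with $\twonorm{x-u}\le\varepsilon$.

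\textbf{Comparing the forms.} Using symmetry of $A$, write
\[
x^\top A x-u^\top A u=(x-u)^\top A x+u^\top A(x-u).
\]
Each term is bounded in absolute value by $\opnorm{A}\,\twonorm{x-u}\le\varepsilon\opnorm{A}$, since $\twonorm{x}=\twonorm{u}=1$. Hence
\[
\opnorm{A}=|x^\top A x|\le \max_{u\in U}|u^\top A u|+2\varepsilon\opnorm{A}.
\]
Because $\varepsilon<1/2$, the factor $1-2\varepsilon$ is positive, and rearranging gives the claimed inequality.

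\textbf{Existence of a small net.} Take $U$ to be a maximal $\varepsilon$-separated subset of $\Sph^{d-1}$. Maximality forces $U$ to be an $\varepsilon$-net. The balls of radius $\varepsilon/2$ centred at the points of $U$ are pairwise disjoint and all lie inside the ball of radius $1+\varepsilon/2$. Comparing volumes gives
\[
|U|\le\left(\frac{1+\varepsilon/2}{\varepsilon/2}\right)^d=(1+2/\varepsilon)^d .
\]
With $\varepsilon=1/4$ this equals $9^d$.

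\textbf{Expected difficulty.} There is no real obstacle. The only care needed is in the comparison step: the cross term must be split symmetrically, which yields the factor $2\varepsilon$. A cruder bound of the form $\twonorm{x-u}(\twonorm{x}+\twonorm{u})$ yields the same $2\varepsilon$ factor and would also work.
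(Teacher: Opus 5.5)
Your proof is correct. It is the standard argument: compare quadratic forms at the nearest net point to get the factor $1-2\varepsilon$, then use the volume bound $(1+2/\varepsilon)^d$ for a maximal $\varepsilon$-separated set. The paper gives no proof of this lemma, invoking it only as a standard net reduction, so your write-up supplies exactly what is implicitly assumed there. One cosmetic remark: the identity $x^\top A x-u^\top A u=(x-u)^\top A x+u^\top A(x-u)$ holds for any matrix, so symmetry is used only in the step $\opnorm{A}=\sup_{x\in\Sph^{d-1}}|x^\top A x|$.
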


Hence it suffices to control, uniformly over $\w,\w^*$,
\[
\max_{u\in \mathcal{U}}\left|u^\top\Big(M(\w,\w^*)-\E M(\w,\w^*)\Big)u\right|.
\]

\paragraph{Step 2: Truncation decomposition.}

Fix $\tau\ge 1$ and define truncation and remainder for $t\ge 0$:
\[
T_\tau(t):=\min\{t,\tau\},\qquad R_\tau(t):=(t-\tau)_+.
\]
Fix $u\in\Sph^{d-1}$ and define $W_u(x):=\ip{x}{u}^2$ and
\[
W_{u}^{(\tau)}(x):=T_\tau(W_u(x)),\qquad G_{u}^{(\tau)}(x):=R_\tau(W_u(x)).
\]
For any $(\w,\w^*)$ and any $x$,
\begin{equation}\label{eq:trunc_pointwise}
W_u(x)\,\1_{\{\ip{x}{\w}\ge0\}}\1_{\{\ip{x}{\w^*}\ge0\}}
\ \le\
W_u^{(\tau)}(x)\,\1_{\{\ip{x}{\w}\ge0\}}\1_{\{\ip{x}{\w^*}\ge0\}}
\ +\ G_u^{(\tau)}(x),
\end{equation}
since $W_u = W_u^{(\tau)}+G_u^{(\tau)}$ and indicators are $\le 1$.

For fixed $u$, define the \emph{bounded} function class
\[
\mathcal{F}_{u,\tau}
:=
\Big\{
f_{\w,\w^*}(x)
:=
W_u^{(\tau)}(x)\,\1_{\{\ip{x}{\w}\ge0\}}\1_{\{\ip{x}{\w^*}\ge0\}}
\ :\ \w,\w^*\in\Sph^{d-1}
\Big\}.
\]
Then every $f\in\mathcal{F}_{u,\tau}$ satisfies $0\le f\le \tau$ pointwise.

\paragraph{Step 3: Tail remainder bound}

\begin{lemma}[Chi-square tail moments]\label{lem:chisq_tail}
Let $Z\sim\mathcal N(0,1)$ and $Y=(Z^2-\tau)_+$.
There exist universal constants $c,C>0$ such that for all $\tau\ge 1$,
\[
\E Y \le C e^{-c\tau},
\qquad
\E Y^2 \le C e^{-c\tau}.
\]
\end{lemma}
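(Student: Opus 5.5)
We bound the two moments of $Y=(Z^2-\tau)_+$ directly from the Gaussian density, using the tail-integral (layer-cake) formula. For a nonnegative random variable,
\[
\E Y=\int_0^\infty \Pbb(Y>s)\,ds=\int_0^\infty \Pbb(Z^2>\tau+s)\,ds,
\qquad
\E Y^2=\int_0^\infty 2s\,\Pbb(Z^2>\tau+s)\,ds .
\]

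\textbf{Step 1: tail estimate.} We use the standard chi-square (one degree of freedom) tail bound
\[
\Pbb(Z^2>t)=\Pbb(|Z|>\sqrt t)\le 2e^{-t/2},
\]
valid for all $t\ge0$ by the Chernoff/Mills bound $\Pbb(Z>u)\le e^{-u^2/2}$.

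\textbf{Step 2: first moment.} Plugging the tail bound into the first formula gives
\[
\E Y\le \int_0^\infty 2e^{-(\tau+s)/2}\,ds=4e^{-\tau/2}.
\]

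\textbf{Step 3: second moment.} Similarly,
\[
\E Y^2\le \int_0^\infty 4s\,e^{-(\tau+s)/2}\,ds=4e^{-\tau/2}\int_0^\infty s\,e^{-s/2}\,ds=16\,e^{-\tau/2}.
\]
So both claims hold with $c=1/2$ and $C=16$. The bounds in fact hold for all $\tau\ge0$; the restriction $\tau\ge1$ is only needed for later use.

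An alternative route is Cauchy--Schwarz, $\E Y^2\le \E[Z^4\1\{Z^2>\tau\}]\le \sqrt{3}\,\Pbb(Z^2>\tau)^{1/2}$. This yields a rate of $e^{-\tau/4}$, which is also acceptable.

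\textbf{Where care is needed.} The only delicate point is getting a clean tail bound that is valid for all $t$, without the $1/u$ factor of the Mills ratio, which blows up near $u=0$. The Chernoff form $e^{-u^2/2}$ avoids this issue. Apart from that, the argument is a pair of elementary integrals.
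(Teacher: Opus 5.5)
Your proof is correct and is the paper's argument: insert the tail bound $\Pbb(Z^2>t)\le 2e^{-t/2}$ into the layer-cake formulas for $\E Y$ and $\E Y^2$; you carry out the integrals explicitly and get $c=1/2$, $C=16$. One small slip in your aside: Cauchy--Schwarz gives $\E[Z^4\1\{Z^2>\tau\}]\le (\E Z^8)^{1/2}\,\Pbb(Z^2>\tau)^{1/2}=\sqrt{105}\,\Pbb(Z^2>\tau)^{1/2}$, not $\sqrt{3}$, though the $e^{-\tau/4}$ rate is unaffected.
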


\begin{proof}
Standard: $\Pbb(Z^2>t)\le 2e^{-t/2}$ and $\E(Z^2-\tau)_+=\int_\tau^\infty \Pbb(Z^2>t)\,dt$,
$\E(Z^2-\tau)_+^2 = 2\int_\tau^\infty (t-\tau)\Pbb(Z^2>t)\,dt$.
\end{proof}

\begin{lemma}[Uniform control of the truncation tail over a $1/4$-net]\label{lem:tail_net}
Let $U$ be a $1/4$-net with $|U|\le 9^d$. There exist universal $c,C>0$ such that for all $\tau\ge 1$,
with probability at least $1-2e^{-cd}$,
\[
\max_{u\in \mathcal{U}}\left|\frac1n\sum_{i=1}^n G_u^{(\tau)}(x_i)-\E G_u^{(\tau)}(X)\right|
\ \le\ C\sqrt{\frac{e^{-c\tau}d}{n}} + C\frac{d}{n}.
\]
Moreover, $\max_{u\in \mathcal{U}}\E G_u^{(\tau)}(X)\le C e^{-c\tau}$.
\end{lemma}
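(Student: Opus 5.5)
We will prove Lemma~\ref{lem:tail_net} by a pointwise-in-$u$ Bernstein bound followed by a union bound over $\mathcal{U}$. The key observation is that for each fixed $u\in\Sph^{d-1}$, the random variables $G_u^{(\tau)}(x_i)=(\ip{x_i}{u}^2-\tau)_+$ are i.i.d.\ copies of $Y=(Z^2-\tau)_+$ with $Z\sim\mathcal N(0,1)$, because $\ip{x_i}{u}\sim\mathcal N(0,1)$ by rotational invariance. In particular, their law does not depend on $u$. The ``moreover'' statement $\max_u \E G_u^{(\tau)}(X)\le Ce^{-c\tau}$ is then immediate from Lemma~\ref{lem:chisq_tail}.

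For the deviation bound, we first check that $Y$ is sub-exponential. Indeed $0\le Y\le Z^2$, so $\|Y\|_{\psi_1}\le \|Z^2\|_{\psi_1}\le K$ for a universal constant $K$; centering keeps the $\psi_1$ norm at most $2K$. Lemma~\ref{lem:chisq_tail} also gives the variance bound $\mathrm{Var}(Y)\le \E Y^2\le Ce^{-c\tau}$. We then apply the variance-sensitive (Bernstein) form of the sub-exponential tail bound, rather than the plain $\psi_1$ version, since we need the variance $e^{-c\tau}$ to appear in the Gaussian part. Concretely, for i.i.d.\ centered $\xi_i$ with $\|\xi_i\|_{\psi_1}\le K$ and $\E\xi_i^2\le v$,
\[
\Pbb\Big(\Big|\tfrac1n\textstyle\sum_i \xi_i\Big|\ge t\Big)\le 2\exp\Big(-c\,n\min\Big(\tfrac{t^2}{v},\tfrac{t}{K}\Big)\Big).
\]
This form follows from the moment (Bernstein) condition $\E|\xi|^p\le p!\,v K^{p-2}/2$. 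To justify that condition, use Hölder or Cauchy--Schwarz to write $\E Y^p\le \sqrt{\E Y^2}\sqrt{\E Y^{2p-2}}$, and combine this with the Gaussian moment growth $\E Z^{4p-4}\le (Cp)^{2p-2}$. This gives $\E Y^p\le p!\,(Ce^{-c\tau/2})\,C^{p}$. Absorbing the square root into the constants $c$ (which may shrink by a factor of 2) yields a variance proxy of $v=Ce^{-c\tau}$.

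Next choose $t=C\sqrt{e^{-c\tau}d/n}+C d/n$ with $C$ large. If the first term dominates, then $n t^2/v\gtrsim C^2 d$. Otherwise, $nt/K\gtrsim Cd$. In either case the failure probability for a fixed $u$ is at most $2e^{-c' C d}$. A union bound over $|\mathcal U|\le 9^d$ gives total failure probability $2\cdot 9^d e^{-c'Cd}\le 2e^{-cd}$ once $C$ is large enough that $c'C>\ln 9+c$. This is exactly the claimed bound.

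The main obstacle is the variance-sensitive Bernstein step. The naive $\psi_1$ bound would give $\sqrt{d/n}$ with no $e^{-c\tau}$ factor, which would defeat the purpose of the truncation. So the real work is verifying the Bernstein moment condition with variance proxy $e^{-c\tau}$ via the Cauchy--Schwarz splitting above. Everything else in the argument is routine.
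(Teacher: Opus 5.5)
Your proposal is correct and follows the same route as the paper. For each fixed $u$ you identify $G_u^{(\tau)}(x_i)$ with i.i.d.\ copies of $(Z^2-\tau)_+$, apply a variance-sensitive Bernstein inequality with variance proxy $Ce^{-c\tau}$ and a constant scale parameter, take the deviation level $C\sqrt{e^{-c\tau}d/n}+Cd/n$, and union bound over the $9^d$ net points. Your Cauchy--Schwarz verification of the Bernstein moment condition supplies the justification for the $(Ce^{-c\tau},b)$ sub-exponential parameters, which the paper only asserts.
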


\begin{proof}
Fix $u\in \mathcal{U}$. Then $G_u^{(\tau)}(X)\stackrel{d}{=}(Z^2-\tau)_+$ with $Z\sim\mathcal N(0,1)$.
By Lemma \ref{lem:chisq_tail}, $\E G_u^{(\tau)}\lesssim e^{-c\tau}$ and $\E(G_u^{(\tau)})^2\lesssim e^{-c\tau}$,
and $G_u^{(\tau)}$ is sub-exponential (dominated by $Z^2$).
More specifically, it is a $(\sigma^2,b)$ subexponential with $\sigma^2\le Ce^{-c\tau}$ and $b$ a constant.
\begin{definition}[Sub-exponential random variable with parameters $(\sigma^2,b)$]
A real-valued random variable $X$ is said to be \emph{sub-exponential} with parameters
$(\sigma^2,b)$ if, for all $|\lambda|\le \frac{1}{b}$,
\begin{equation}\label{eq:subexp-mgf}
\mathbb{E}\exp\!\big(\lambda (X-\mathbb{E}X)\big)
\;\le\;
\exp\!\left(\frac{\lambda^2\sigma^2}{2}\right).
\end{equation}
\end{definition}
For such subexponential random variables we have the following standard refined Bernstein-type inequality.
\begin{theorem}[Bernstein's-type inequality for sub-exponential sums]\label{thm:bernstein-subexp}
Let $X_1,\dots,X_n$ be independent mean-zero random variables, where each $X_i$ are
sub-exponential with parameters $(\sigma^2,b)$ in the sense of
\eqref{eq:subexp-mgf}. Then, for every $s\ge 0$,
\begin{equation}\label{eq:bernstein-subexp}
\mathbb{P}\!\left(\frac{1}{n}\sum_{i=1}^n X_i \ge s\right)
\;\le\;
2\exp\!\left(
-\frac{1}{2}n\cdot\min\left\{
\frac{s^2}{\sigma^2},\;
\frac{s}{b}
\right\}
\right).
\end{equation}
\end{theorem}
Thus using using this Bernstein's inequality for sub-exponential variables above with $s=C\sqrt{\frac{e^{-c\tau}t}{n}}+C\frac{t}{n}$ yields for all $t\ge 1$,
\[
\Pbb\!\left(\frac1n\sum_{i=1}^n G_u^{(\tau)}(x_i) - \E G_u^{(\tau)}(x) \ge
C\sqrt{\frac{e^{-c\tau}t}{n}}+C\frac{t}{n}\right)\le e^{-t}.
\]
Set $t=c_1 d$ and union bound over $|\mathcal{U}|\le 9^d$; choosing $c_1$ large enough makes the union-bound failure probability $\le e^{-cd}$.
\end{proof}

\paragraph{Step 4: Uniform control of the truncated term.} We will use the following result on  covering numbers for VC-subgraph classes.

\begin{theorem}[Theorem 2.6.7 in \cite{vanderVaart1996}]
\label{lem:VCentropy}
Let $\mathcal H$ be a VC-subgraph class of real-valued functions on $\R^d$ with VC-subgraph dimension at most $V$ and envelope bound $|h|\le B$ pointwise.
Then there exist absolute constants $A,C>0$ such that for every probability measure $Q$ and every $0<\eta\le B$,
\[
\log N(\eta,\mathcal H,L_2(Q))\ \le\ C V \log\!\Bigl(\frac{AB}{\eta}\Bigr).
\]
\end{theorem}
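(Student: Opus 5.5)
We would prove this classical entropy bound by the standard route: reduce covering in $L_2(Q)$ to packing of the subgraph sets in $L_1$ of a product measure, then bound that packing number with a random-sampling argument and the Sauer--Shelah lemma. Throughout, for $h\in\mathcal H$ let $C_h:=\{(x,t): t<h(x)\}\subset \R^d\times[-B,B]$ denote its subgraph. By assumption the class $\{C_h\}$ has VC dimension at most $V$. Since $N(\eta,\mathcal H,L_2(Q))\le D(\eta,\mathcal H,L_2(Q))$ (a maximal $\eta$-packing is an $\eta$-cover), it suffices to bound the packing number $D$.

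\textbf{Step 1 (from $L_2$ to sets).} Let $P:=Q\otimes \mathrm{Unif}[-B,B]$. By Fubini, for $f,g\in\mathcal H$ we have $\|f-g\|_{L_1(Q)}=2B\,P(C_f\,\triangle\,C_g)$. Since $|f-g|\le 2B$ pointwise, $\|f-g\|_{L_2(Q)}^2\le 2B\|f-g\|_{L_1(Q)}=4B^2P(C_f\triangle C_g)$. Hence an $\eta$-separated family in $L_2(Q)$ yields a family of subgraphs that is pairwise $\varepsilon$-separated in $L_1(P)$, with $\varepsilon:=\eta^2/(4B^2)\le 1/4$.

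\textbf{Step 2 (packing bound for a VC class of sets).} Take an $\varepsilon$-separated family $C_1,\dots,C_D$ in $L_1(P)$. Draw $m$ i.i.d.\ points $z_1,\dots,z_m\sim P$. For a fixed pair $i\ne j$, the probability that no $z_k$ falls in $C_i\triangle C_j$ is at most $(1-\varepsilon)^m\le e^{-\varepsilon m}$. Choose $m=\lceil 2\log D/\varepsilon\rceil$; a union bound over the fewer than $D^2/2$ pairs shows that with positive probability every pair is distinguished by the sample. Therefore the $D$ sets induce $D$ distinct traces on $\{z_1,\dots,z_m\}$. By Sauer--Shelah, $D\le (em/V)^V$, so
\[
D\le \Bigl(\tfrac{e}{V}\bigl(1+\tfrac{2\log D}{\varepsilon}\bigr)\Bigr)^V .
\]

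\textbf{Step 3 (solving the implicit inequality).} We expect this to be the only delicate point. Write $L:=\log D$. The inequality reads $L\le V\log\bigl(\tfrac{3e L}{V\varepsilon}\bigr)$, after absorbing the $+1$ using $L\ge 1$; the case $D$ bounded by a constant is trivial. Set $s:=L/V$, so that $s\le \log(3e/\varepsilon)+\log s$. Using $\log s\le s/2$, we get $s\le 2\log(3e/\varepsilon)$, hence $\log D\le 2V\log(3e/\varepsilon)$.

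\textbf{Conclusion.} Substituting $\varepsilon=\eta^2/(4B^2)$ gives
\[
\log N(\eta,\mathcal H,L_2(Q))\le 2V\log\bigl(12eB^2/\eta^2\bigr)\le CV\log(AB/\eta)
\]
for suitable absolute constants, for example $A=\sqrt{12e}$ and $C=4$. This bound holds uniformly in $Q$ and in $0<\eta\le B$.
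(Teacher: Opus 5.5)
Your proof is correct and is essentially the standard argument behind the cited result. The paper gives no proof of its own and only cites van der Vaart--Wellner, whose Theorem 2.6.7 likewise uses Fubini to turn $L_1(Q)$ distances into $P$-measures of symmetric differences of subgraphs, and then bounds packings of a VC class of sets by random sampling plus Sauer's lemma. Your exponent $2V$ is cruder than Haussler's sharp $V-1$, but that is harmless for the logarithmic form stated.

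The one detail you skip is that $\sum_{i\le V}\binom{m}{i}\le (em/V)^V$ requires $m\ge V$. If instead $m<V$, then $\log D<\varepsilon V/2$ directly, so the claimed bound holds trivially in that case.
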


We will also use the following Dudley-type bound for Rademacher averages. 

\begin{theorem}[Equation (5.48) in \cite{Wainwright_book}]
\label{lem:dudley}
There exists an absolute constant $C>0$ such that, for any function class $\mathcal F$,
conditionally on $x_1,\dots,x_n$,
\[
\E_\varepsilon\Big[\sup_{f\in\mathcal F} \Big|\frac1n\sum_{i=1}^n \varepsilon_i f(x_i)\Big|\Big]
\le
\frac{C}{\sqrt n}\int_{0}^{2r}\sqrt{\log N(\eta,\mathcal F, L_2(\mathbb{P}_n))}\,d\eta,
\]
where $\mathbb{P}_n$ is the empirical measure of $x_1, \ldots, x_n$ and $r:=\sup_{f\in\mathcal F}\|f\|_{L_2(\mathbb{P}_n)}$.
\end{theorem}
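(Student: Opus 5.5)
The plan is the standard chaining argument, carried out conditionally on $x_1,\dots,x_n$. The only randomness is then in the Rademacher signs $\varepsilon_i$.

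\textbf{Step 1: sub-Gaussian increments.} Define the process $X_f:=\frac{1}{\sqrt n}\sum_{i=1}^n\varepsilon_i f(x_i)$ for $f\in\mathcal F$. For any $f,g$, Hoeffding's inequality shows that $X_f-X_g$ is sub-Gaussian with variance proxy
\[
\frac1n\sum_i (f(x_i)-g(x_i))^2=\|f-g\|_{L_2(\mathbb P_n)}^2 .
\]
So $(X_f)$ is a sub-Gaussian process with respect to the empirical $L_2$ metric. It then suffices to show
\[
\E_\varepsilon\sup_f|X_f|\lesssim\int_0^{2r}\sqrt{\log N(\eta,\mathcal F,L_2(\mathbb P_n))}\,d\eta ,
\]
and divide by $\sqrt n$.

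\textbf{Step 2: chaining.} Set $\eta_k:=r2^{-k}$ for $k\ge0$ and let $\mathcal N_k$ be a minimal $\eta_k$-cover, with projections $\pi_k f$. At level $0$ I use the zero function as the center; this is legitimate because $\|f\|_{L_2(\mathbb P_n)}\le r$ and $X_0=0$.

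For each $f$ write the telescoping sum
\[
X_f=\sum_{k\ge1}\big(X_{\pi_k f}-X_{\pi_{k-1}f}\big).
\]
The sum converges because $\mathcal F$ restricted to the $n$ sample points is a subset of $\R^n$, so $\pi_K f\to f$ in $L_2(\mathbb P_n)$.

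Each link satisfies $\|\pi_kf-\pi_{k-1}f\|\le 3\eta_k$, and there are at most $|\mathcal N_k|^2$ distinct links. The finite maximal inequality for sub-Gaussian variables then gives
\[
\E\max\big|X_{\pi_kf}-X_{\pi_{k-1}f}\big|\le 3\eta_k\sqrt{2\log(2|\mathcal N_k|^2)}.
\]

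\textbf{Step 3: sum to integral.} Summing over $k$, I use $\eta_k=2(\eta_k-\eta_{k+1})$ and monotonicity of $N$ to compare the series with the integral:
\[
\sum_k \eta_k\sqrt{\log N(\eta_k)}\le 2\int_0^{r}\sqrt{\log N(\eta)}\,d\eta .
\]
This yields the claim with the upper limit $r$, which is at most $2r$, with an absolute constant $C$.

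\textbf{Main obstacle.} The delicate point is the additive constants inside $\log(2|\mathcal N_k|^2)$, together with the level-$0$ term. As literally stated, the bound fails for degenerate classes. For example, if $\mathcal F$ is a single nonzero function, the entropy integral is $0$ but the left-hand side is positive. So I would prove the version with $\log N$ replaced by $\log(1+N)$, equivalently with an extra $Cr/\sqrt n$ term. I would then note that in the application (VC-subgraph classes, via Theorem~\ref{lem:VCentropy}) one has $\log N\gtrsim V\ge1$ on the relevant range $\eta\le r$. In that regime the constants are absorbed, and the stated form holds after enlarging $C$.
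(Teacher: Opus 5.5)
The paper gives no proof of this statement; it is imported from Wainwright's book, where the underlying Dudley bound is proved by the same dyadic chaining you describe. Your chaining steps are all correct:
\begin{itemize}
\item Hoeffding increments in the $L_2(\mathbb{P}_n)$ metric;
\item anchoring the chain at the zero function;
\item links of length at most $3\eta_k$, with at most $|\mathcal N_k|^2$ of them at level $k$;
\item the two-sided finite maximal inequality;
\item the sum-to-integral comparison via $\eta_k=2(\eta_k-\eta_{k+1})$ and monotonicity of $N$.
\end{itemize}

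Your counterexample is also correct. For $\mathcal F=\{f\}$ with $f(x_i)\neq 0$ for some $i$, the right-hand side is $0$ but the left-hand side is positive, so the statement as literally written needs an extra hypothesis. What your argument proves is the $\log(1+N)$ version. Alternatively, the literal form holds when $0\in\mathcal F$. In that case write $\sup_f|X_f|=\sup_f|X_f-X_0|\le\sup_{f,g\in\mathcal F}(X_f-X_g)$. Then chain each one-sided supremum using $\E\max_{j\le M}Y_j\le\sigma\sqrt{2\log M}$, which carries no additive constant and vanishes when $M=1$.

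The step that does not work as written is your closing remark about the application. Theorem~\ref{lem:VCentropy} is an \emph{upper} bound on $\log N$, so it cannot give $\log N\gtrsim V$. No such lower bound holds for general classes: a class of large VC-subgraph dimension whose functions all lie within a tiny ball has $N(\eta)=1$ at every scale above that radius. There are two correct ways to close this.
\begin{itemize}
\item[(i)] Lemma~\ref{lem:Esup} only needs an \emph{upper} bound on the entropy integral. Since $N\ge 1$,
\[
\log(1+N)\le\log 2+\log N\le \log 2+CV\log(A\tau/\eta)\le C'V\log(A\tau/\eta).
\]
The last step uses that $V\log(A\tau/\eta)$ is bounded below by an absolute constant on the range of integration, after enlarging $A$ if needed.
\item[(ii)] $\mathcal F_{u,\tau}$ actually contains the zero function on the sample. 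Take $\w^*=-\w$ with $\w$ not orthogonal to any $x_i$. Then the product of indicators equals $\1\{\ip{x_i}{\w}=0\}=0$ at every sample point, and the $0\in\mathcal F$ version above applies verbatim, with diameter at most $2r$.
\end{itemize}
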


\begin{lemma}[Expected supremum for the truncated class]\label{lem:Esup}
There exists a universal constant $C>0$ such that for each fixed $u$,
\[
\E\Big[\sup_{f\in\mathcal{F}_{u,\tau}} |(\mathbb{P}_n-\mathbb{P})f|\Big]
\ \le\
C\tau\sqrt{\frac{ d}{n}}.
\]
\end{lemma}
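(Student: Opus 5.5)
We will follow the standard route of symmetrization, chaining, and VC entropy.

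\textbf{Step 1 (symmetrization).} Let $\varepsilon_1,\dots,\varepsilon_n$ be i.i.d.\ Rademacher signs independent of the data. By the symmetrization inequality,
\[
\E\sup_{f\in\mathcal{F}_{u,\tau}}|(\mathbb{P}_n-\mathbb{P})f|\le 2\,\E_{x}\E_\varepsilon\sup_{f\in\mathcal{F}_{u,\tau}}\Big|\frac1n\sum_{i=1}^n\varepsilon_i f(x_i)\Big|.
\]
This reduces the problem to bounding the conditional Rademacher average for fixed samples $x_1,\dots,x_n$. We will bound it via Theorem \ref{lem:dudley}.

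\textbf{Step 2 (envelope).} Every $f\in\mathcal{F}_{u,\tau}$ satisfies $0\le f\le\tau$. Hence the radius $r=\sup_f\|f\|_{L_2(\mathbb{P}_n)}$ is at most $\tau$, and the upper limit of the Dudley integral is at most $2\tau$.

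\textbf{Step 3 (VC-subgraph dimension).} This is the key combinatorial step. Write $f_{\w,\w^*}(x)=W_u^{(\tau)}(x)\,\1_{\{\ip{x}{\w}\ge0\}}\1_{\{\ip{x}{\w^*}\ge0\}}$. Here $W_u^{(\tau)}$ is a single fixed function, and the only varying part is a product of two homogeneous halfspace indicators.

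The subgraph of $f$ is
\[
\{(x,t): t< f(x)\}=\big(\{t<0\}\big)\cup\big(\{t<W_u^{(\tau)}(x)\}\cap H_{\w}\cap H_{\w^*}\big),
\]
where $H_{\w}=\{x:\ip{x}{\w}\ge0\}$. The sets $H_{\w}$, viewed in $(x,t)$-space, form a class with VC dimension at most $d$, since they are homogeneous halfspaces that ignore $t$. The set $\{t<W_u^{(\tau)}(x)\}$ is one fixed set, and $\{t<0\}$ is another fixed set.

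By the standard bound on VC dimension of intersections and unions (e.g.\ van der Vaart--Wellner Lemma 2.6.17), intersecting two halfspace classes and then taking a union with a fixed set keeps the dimension at most $Cd$. So the subgraph class has VC dimension $V\le Cd$.

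An equivalent argument counts Sauer-lemma dichotomies. On $m$ points, each halfspace class produces at most $(em/d)^d$ labelings, so the product of two produces at most $(em/d)^{2d}$. This is less than $2^m$ once $m\ge Cd$.

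\textbf{Step 4 (entropy and integration).} Theorem \ref{lem:VCentropy} with $B=\tau$, applied with $Q=\mathbb{P}_n$, gives
\[
\log N\big(\eta,\mathcal{F}_{u,\tau},L_2(\mathbb{P}_n)\big)\le Cd\log(A\tau/\eta)\qquad\text{for }0<\eta\le\tau.
\]
The Dudley integral then satisfies
\[
\int_0^{2\tau}\sqrt{Cd\log(A\tau/\eta)}\,d\eta=\tau\sqrt{Cd}\int_0^2\sqrt{\log(A/s)}\,ds\le C'\tau\sqrt d.
\]
On the range $\eta\in(\tau,2\tau]$ a single center suffices, so the log covering number is $0$ there.

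Combining this with the $1/\sqrt n$ factor in Theorem \ref{lem:dudley} and the factor $2$ from Step 1 gives
\[
\E\sup_{f\in\mathcal{F}_{u,\tau}}|(\mathbb{P}_n-\mathbb{P})f|\le C\tau\sqrt{d/n}.
\]
This bound is uniform in $u$.

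The main obstacle is Step 3: getting a VC-subgraph dimension of order $d$ for the product class, rather than something larger. The fixed truncated quadratic factor must not add complexity, and the argument handles this by treating it as a fixed set in the subgraph decomposition.
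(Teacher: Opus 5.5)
Your proposal is correct and takes the same route as the paper: symmetrization, then the Dudley entropy integral of Theorem~\ref{lem:dudley} with $r\le\tau$, then the VC-subgraph covering bound of Theorem~\ref{lem:VCentropy} with $Q=\mathbb{P}_n$ and $V\le Cd$. The paper simply asserts $V\le Cd$ and does not treat the range $\eta\in(\tau,2\tau]$; your subgraph decomposition with the Sauer-lemma count, and your observation that the entropy vanishes on that range, supply these details without changing the argument.
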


\begin{proof}
By symmetrization,
\[
\E\sup_{f\in\mathcal{F}_{u,\tau}} |(\mathbb{P}_n-\mathbb{P})f|
\le 2\E\sup_{f\in\mathcal{F}_{u,\tau}}\Big|\frac1n\sum_{i=1}^n \varepsilon_i f(x_i)\Big|.
\]
Let $r:=\sup_{f\in\mathcal{F}_{u,\tau}}\|f\|_{L_2(\mathbb{P}_n)}$. Since $0\le f\le \tau$ we have $r\le \tau$. Applying Dudley's entropy integral bound for Rademacher averages per Theorem \ref{lem:dudley} and the covering bound from Theorem \ref{lem:VCentropy} with $Q=\mathbb{P}_n$ yields
\[
\E\Big[\sup_{f\in \mathcal{F}}\Big|\frac1n\sum_{i=1}^n \varepsilon_i f(X_i)\Big|\Big]
\ \le\
\frac{C_3}{\sqrt n}\int_0^{2r}\sqrt{\log N\big(\eta,\mathcal{F},L_2(\mathbb{P}_n)\big)}\,d\eta
\ \le\
\frac{C_3}{\sqrt n}\int_0^{2r}\sqrt{V\log\Big(\frac{A\tau}{\eta}\Big)}\,d\eta.
\]
Using the change of variables $\eta=re^{-s}$ and the elementary bound
\[
\int_0^{2r}\sqrt{\log\Big(\frac{A\tau}{\eta}\Big)}\,d\eta
\ \le\
C_4\, r\,\sqrt{\log\Big(\frac{eA\tau}{r}\Big)},
\]
we obtain
\[
\E\Big[\sup_{f\in F}\Big|\frac1n\sum_{i=1}^n \varepsilon_i f(X_i)\Big|\Big]
\ \le\
C_5\,\frac{r}{\sqrt n}\,\sqrt{V}\,
\sqrt{\log\Big(\frac{eA\tau}{r}\Big)}.
\]
Substituting $r\le \tau$ and $V\le c_2 d$, and absorbing constants (including $A$ and $C_0$) into a single absolute constant $C$, gives
\[
\E\Big[\sup_{f\in F}\Big|\frac1n\sum_{i=1}^n \varepsilon_i f(X_i)\Big|\Big]
\ \le\
C\tau\,\sqrt{\frac{ d}{n}}.
\]
Finally, multiplying by $2$ from symmetrization proves the claim.
\end{proof}
Now we focus on establishing a high-probability bound. For this we will use Bousquet's concentration inequality for suprema of bounded empirical processes. 

\begin{theorem}[Theorem 2.3 in \cite{Bousquet2002}]
\label{lem:bousquet}
Let $\mathcal{F}$ be a class of measurable functions with $0\le f\le b$.
Let
\[
Z:=\sup_{f\in\mathcal{F}} \sum_{i=1}^n\big(f(X_i)-\E f(X)\big),
\qquad
\sigma^2 := \sup_{f\in\mathcal{F}} \mathrm{Var}(f(X)).
\]
Then for all $t\ge 0$, with probability at least $1-e^{-t}$,
\begin{equation}\label{eq:Bous1}
Z \le \E Z + \sqrt{2t(n\sigma^2+2b\E Z)} + \frac{b\,t}{3}.
\end{equation}
\end{theorem}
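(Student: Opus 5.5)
This is a cited result (Theorem 2.3 of \cite{Bousquet2002}), and I would reprove it by the entropy method: a modified logarithmic Sobolev inequality combined with a Herbst-type differential inequality for the log-moment generating function of $Z$.

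\textbf{Reductions.} By rescaling $f\mapsto f/b$ I may assume $b=1$. Since $0\le f\le 1$, the centered functions $\bar f:=f-\E f$ satisfy $\bar f\le 1$, $\E \bar f(X)=0$ and $\mathrm{Var}(\bar f(X))\le \sigma^2$, which is the setting of Bousquet's theorem. By monotone convergence I may also take $\mathcal F$ finite, so suprema are attained.

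\textbf{Leave-one-out variables.} For each $k$ I define
\[
Z_k:=\sup_{f\in\mathcal F}\sum_{i\neq k}\bar f(X_i),
\]
and let $\hat f$ attain $Z$ and $f_k$ attain $Z_k$. These give three structural facts:
\begin{itemize}
\item $Z-Z_k\le \bar{\hat f}(X_k)\le 1$;
\item $\sum_k (Z-Z_k)\le \sum_k \bar{\hat f}(X_k)=Z$;
\item $Z-Z_k\ge Y_k:=\bar f_k(X_k)$, where $Y_k\le 1$, $\E_k Y_k=0$ (expectation over $X_k$ only, since $f_k$ does not depend on $X_k$), and $\E_k Y_k^2\le \sigma^2$.
\end{itemize}

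\textbf{Entropy step.} Tensorization of entropy gives the modified log-Sobolev inequality
\[
\lambda\E[Ze^{\lambda Z}]-\E e^{\lambda Z}\log \E e^{\lambda Z}\le \sum_k \E\big[e^{\lambda Z}\phi(-\lambda(Z-Z_k))\big],\qquad \phi(x)=e^x-x-1 .
\]
I would bound $\phi(-\lambda(Z-Z_k))$ using the upper bound $Z-Z_k\le 1$ and the lower bound $Z-Z_k\ge Y_k$. To control the terms involving $Y_k$, I would use $\E_k Y_k=0$ together with a change-of-measure (duality) argument, which brings in $\E_k Y_k^2\le\sigma^2$. Combined with $\sum_k(Z-Z_k)\le Z$, this should yield a differential inequality for $F(\lambda)=\log\E e^{\lambda(Z-\E Z)}$ of the form
\[
(\lambda-e^{\lambda}+1)F'(\lambda)-F(\lambda)\le v\,\psi(\lambda),\qquad v=n\sigma^2+2\E Z,
\]
with $\psi$ an explicit convex function.

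\textbf{Solving and inverting.} Dividing by the appropriate integrating factor and integrating from $0$, using $F(0)=F'(0)=0$, should give the Bennett-type bound $F(\lambda)\le v\,\phi(\lambda)$ for $\lambda\ge0$. Chernoff's bound then gives $\Pbb(Z-\E Z\ge x)\le \exp(-v\,h(x/v))$ with $h(u)=(1+u)\log(1+u)-u$. The elementary inequality $h(u)\ge \frac{u^2}{2(1+u/3)}$ inverts this to $Z\le \E Z+\sqrt{2tv}+t/3$ with probability at least $1-e^{-t}$. Undoing the scaling by $b$ turns $v$ into $n\sigma^2+2b\E Z$ and $t/3$ into $bt/3$, which is \eqref{eq:Bous1}.

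\textbf{Main obstacle.} The hard step is obtaining the sharp constants ($2$ in front of $\E Z$, $1/3$ on the linear term). Crude bounds on $\phi(-\lambda(Z-Z_k))$ only give Talagrand-type inequalities with worse constants. The decisive point is to use the centered lower bound $Y_k$ through the duality argument, rather than the crude bound $(Z-Z_k)^2\le 1$, and then to solve the resulting linear ODE inequality exactly instead of bounding it loosely.
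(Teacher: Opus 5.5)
The paper does not prove this statement; it imports Bousquet's Theorem~2.3. Your plan (rescale to $b=1$, leave-one-out $Z_k$, modified log-Sobolev inequality, Herbst-type ODE) is the entropy-method route Bousquet himself uses, and your rescaling and three leave-one-out facts are correct. The gap is the decisive step, which is missing, and the inequality you say it should produce is the wrong one.

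For $\lambda>0$ the coefficient $\lambda-e^{\lambda}+1$ is negative. At the same time $F\ge 0$ (Jensen) and $F'\ge 0$ (convexity and $F'(0)=0$). So the left side of your displayed inequality is always $\le 0$, and the inequality is vacuous. Dividing by that coefficient flips the inequality, so it could only bound $F$ from below. What integrates to $F\le v\,\phi(\lambda)$ is
\[
(1-e^{-\lambda})F'(\lambda)-F(\lambda)\le v\,\phi(-\lambda),\quad\text{equivalently}\quad \mathrm{Ent}\bigl(e^{\lambda Z}\bigr)\le \phi(-\lambda)\bigl(\E[Ze^{\lambda Z}]+(n\sigma^2+\E Z)\,\E e^{\lambda Z}\bigr).
\]
To integrate it, differentiate $F/(e^{\lambda}-1)$ and use $F(\lambda)/(e^{\lambda}-1)\to F'(0)=0$. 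Multiplied by $e^\lambda$ it reads $(e^\lambda-1)F'-e^\lambda F\le v(\lambda e^\lambda-e^\lambda+1)$, which may be what you meant.

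Getting this bound from $\sum_k\E[e^{\lambda Z}\phi(-\lambda(Z-Z_k))]$ is the entire content of Bousquet's theorem, and your proposal only says it ``should'' follow. It is not routine. If you peel off $\phi(-\lambda)(Z-Z_k)$ in order to use $\sum_k(Z-Z_k)\le Z$, every coordinate with $Z<Z_k$ leaves a remainder of size at least $\phi(-\lambda)\,|Z-Z_k|$. That remainder is first order and cannot be absorbed by $\E_kY_k^2\le\sigma^2$. These linear contributions must be cancelled using $\E_kY_k=0$. A natural handle is the identity $e^{\lambda Z}\phi(-\lambda D_k)=e^{\lambda Z_k}\bigl(1-e^{\lambda D_k}+\lambda D_ke^{\lambda D_k}\bigr)$ with $D_k=Z-Z_k$, whose prefactor does not depend on $X_k$. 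What is left must then be bounded by $\phi(-\lambda)(n\sigma^2+\E Z)\,\E e^{\lambda Z}$, and the extra $\E Z$ is where $Y_k\le 1$ is used. None of this is supplied.

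Your final inversion is also wrong as stated. The Bernstein-type bound $h(u)\ge u^2/(2(1+u/3))$ turns Bennett into $\Pbb(Z-\E Z\ge x)\le\exp\bigl(-x^2/(2(v+x/3))\bigr)$. Its $e^{-t}$-quantile is $t/3+\sqrt{t^2/9+2vt}$, which is strictly larger than $\sqrt{2vt}+t/3$. It only yields $\sqrt{2vt}+2t/3$, i.e.\ $2bt/3$ instead of the stated $bt/3$.

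To get the constant $1/3$, bound the log-moment generating function directly. Since $k!\ge 2\cdot 3^{k-2}$, you have $\phi(\lambda)\le \lambda^2/(2(1-\lambda/3))$ for $0\le\lambda<3$, so $F(\lambda)\le v\lambda^2/(2(1-\lambda/3))$. The sub-gamma Chernoff computation then gives $\Pbb(Z\ge \E Z+\sqrt{2vt}+t/3)\le e^{-t}$. Equivalently, use $h(u)\ge 9\bigl(1+u/3-\sqrt{1+2u/3}\bigr)$ instead of the Bernstein bound.
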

In particular, we will further upper bound the RHS of \eqref{eq:Bous1} as 
\begin{equation}\label{eq:Bous2}
\E Z + \sqrt{2t(n\sigma^2+2b\E Z)} + \frac{b\,t}{3}\le 2\E Z + \sqrt{2tn\sigma^2} + \frac{4b\,t}{3},
\end{equation}
where we have used that $\sqrt{a+b}\le \sqrt{a}+\sqrt{b}$.

\begin{lemma}[Uniform high-probability deviation for $\mathcal{F}_{u,\tau}$]\label{lem:dev_trunc}
There exist universal constants $C,c>0$ such that for each fixed $u$ and each $t\ge 1$,
with probability at least $1-e^{-t}$,
\[
\sup_{f\in\mathcal{F}_{u,\tau}} |(\mathbb{P}_n-\mathbb{P})f|
\le
C\tau\sqrt{\frac{ d}{n}}
+ C\sqrt{\frac{\tau t}{n}}
+ C\frac{\tau t}{n}.
\]
\end{lemma}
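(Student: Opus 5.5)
The plan is to apply Bousquet's inequality (Theorem~\ref{lem:bousquet}) to the bounded class $\mathcal{F}_{u,\tau}$ with envelope $b=\tau$. We then insert the expectation bound of Lemma~\ref{lem:Esup} and control the variance proxy $\sigma^2$ directly. The absolute value needs care, since Bousquet's inequality only controls the upper deviation $\sup_f \sum_i (f(x_i)-\E f)$. We handle this by applying the same theorem to the class $\{\tau - f : f\in\mathcal{F}_{u,\tau}\}$, which also takes values in $[0,\tau]$, has the same variances, and whose centered sums are exactly the negatives of those of $\mathcal{F}_{u,\tau}$. Symmetrization in Lemma~\ref{lem:Esup} already bounds $\E\sup_f|(\mathbb{P}_n-\mathbb{P})f|$, so the expectation term for both classes is at most $C\tau\sqrt{d/n}$. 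A union bound over the two one-sided events costs a factor $2$ in probability. We absorb it by replacing $t$ with $t+\log 2$ and using $t\ge 1$ to keep constants universal.

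Next we bound the variance. For $f=f_{\w,\w^*}$ we have $0\le f\le W_u^{(\tau)}\le \tau$, so
\begin{align*}
\mathrm{Var}(f(X))\le \E f(X)^2\le \tau\,\E f(X)\le \tau\,\E W_u(X)=\tau\,\E\ip{X}{u}^2=\tau .
\end{align*}
Thus $\sigma^2\le\tau$ uniformly over the class. This is the source of the $\sqrt{\tau t/n}$ term rather than a cruder $\tau\sqrt{t/n}$. We could alternatively use $\E f^2\le \E \ip{X}{u}^4=3$, giving the even better $\sqrt{t/n}$ term, but $\tau$ already matches the claimed form.

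Plugging into the simplified form \eqref{eq:Bous2}, with $Z$ the unnormalized supremum, gives with probability $1-e^{-t}$
\begin{align*}
Z\le 2\E Z+\sqrt{2tn\tau}+\tfrac{4\tau t}{3}.
\end{align*}
Dividing by $n$ and using $\E Z/n\le C\tau\sqrt{d/n}$ yields $C\tau\sqrt{d/n}+\sqrt{2\tau t/n}+\tfrac{4\tau t}{3n}$. Combining the two sides gives the stated bound.

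The only subtle point, and the main obstacle, is whether Lemma~\ref{lem:Esup} legitimately gives $\E Z\lesssim n\tau\sqrt{d/n}$. This requires the VC-subgraph dimension of $\mathcal{F}_{u,\tau}$ to be $O(d)$. That holds because each $f$ is a fixed nonnegative function multiplied by the indicator of an intersection of two halfspaces through the origin, and intersections of two halfspace classes have VC dimension $O(d)$. It also requires the logarithmic factor in the Dudley integral to be absorbed. Since the statement takes Lemma~\ref{lem:Esup} as given, I would simply invoke it, noting that a measurability or countable-dense-subset argument (restricting $\w,\w^*$ to a countable dense subset of $\Sph^{d-1}$) justifies applying Bousquet to this uncountable class.
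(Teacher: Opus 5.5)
Your proposal is correct and follows the paper's proof. Both apply Bousquet's inequality in the form \eqref{eq:Bous2} with $b=\tau$, use a variance bound of order $\tau$, and bound $\E Z$ by Lemma~\ref{lem:Esup} before dividing by $n$. The paper states $\sigma^2\le\tau/2$, while you get $\sigma^2\le\tau$ from $\E f^2\le\tau\,\E f\le\tau$; the difference is only a constant. Two of your details are left implicit in the paper's one-line proof: handling the absolute value via the reflected class $\{\tau-f : f\in\mathcal{F}_{u,\tau}\}$ with a union bound, and the countable-dense-subset remark for measurability.
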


\begin{proof}
Apply \eqref{eq:Bous2} to $\mathcal{F}=\mathcal{F}_{u,\tau}$ with $b=\tau$.
We have $\sigma^2\le \sup_f \E f^2 \le \tau/2$ as above.
Also, $\E Z = n\cdot \E\sup_{f\in\mathcal{F}_{u,\tau}} |(\mathbb{P}_n-\mathbb{P})f|$ which is bounded by Lemma \ref{lem:Esup}.
Divide by $n$ to conclude.
\end{proof}

\paragraph{Step 5: Complete the operator-norm bound} Fix $\tau>0$ and recall the decomposition
\[
W_u(x) \;=\; W_u^{(\tau)}(x) + G_u^{(\tau)}(x),
\qquad
W_u^{(\tau)}(x):=\min\{\langle x,u\rangle^2,\tau\},
\qquad
G_u^{(\tau)}(x):=(\langle x,u\rangle^2-\tau)_+.
\]
For $(\w,\w^\star)$ define the ReLU sign indicator
\[
\1_{\w,\w^\star}(x)
:=\1\{\langle x,\w\rangle\ge 0\}\,\1\{\langle x,\w^\star\rangle\ge 0\}\in\{0,1\}.
\]
Then for every $x$,
\[
W_u(x)\, \1_{\w,\w^\star}(x)
=
W_u^{(\tau)}(x)\, \1_{\w,\w^\star}(x)
+
G_u^{(\tau)}(x)\, \1_{\w,\w^\star}(x),
\]
and hence, by the triangle inequality,
\begin{equation}
\label{eq:step34-triangle}
\sup_{\w,\w^\star}\big|(\mathbb{P}_n-\mathbb{P})\big(W_u \1_{\w,\w^\star}\big)\big|
\;\le\;
\sup_{\w,\w^\star}\big|(\mathbb{P}_n-\mathbb{P})\big(W_u^{(\tau)} \1_{\w,\w^\star}\big)\big|
\;+\;
\sup_{\w,\w^\star}\big|(\mathbb{P}_n-\mathbb{P})\big(G_u^{(\tau)} \1_{\w,\w^\star}\big)\big|.
\end{equation}

We control the second term in \eqref{eq:step34-triangle} uniformly over $(\w,\w^\star)$ without any VC/covering argument.
Using the identity
\[
G_u^{(\tau)} \1_{\w,\w^\star}
=
G_u^{(\tau)} - G_u^{(\tau)}(1-\1_{\w,\w^\star}),
\]
we have for every $(\w,\w^\star)$,
\[
(\mathbb{P}_n-\mathbb{P})\!\big(G_u^{(\tau)} \1_{\w,\w^\star}\big)
=
(\mathbb{P}_n-\mathbb{P})\!\big(G_u^{(\tau)}\big)
-
(\mathbb{P}_n-\mathbb{P})\!\big(G_u^{(\tau)}(1-\1_{\w,\w^\star})\big),
\]
so
\begin{equation}
\label{eq:GI-split}
\big|(\mathbb{P}_n-\mathbb{P})\!\big(G_u^{(\tau)} \1_{\w,\w^\star}\big)\big|
\;\le\;
\big|(\mathbb{P}_n-\mathbb{P})\!\big(G_u^{(\tau)}\big)\big|
+
\big|(\mathbb{P}_n-\mathbb{P})\!\big(G_u^{(\tau)}(1-\1_{\w,\w^\star})\big)\big|.
\end{equation}
Since $G_u^{(\tau)}\ge 0$ and $0\le 1-\1_{\w,\w^\star}\le 1$, we have pointwise
\[
0 \le G_u^{(\tau)}(1-\1_{\w,\w^\star}) \le G_u^{(\tau)},
\]
and therefore
\[
\mathbb{P}_n\!\big(G_u^{(\tau)}(1-\1_{\w,\w^\star})\big)\le \mathbb{P}_n\!\big(G_u^{(\tau)}\big),
\qquad
\mathbb{P}\!\big(G_u^{(\tau)}(1-\1_{\w,\w^\star})\big)\le \mathbb{P}\!\big(G_u^{(\tau)}\big).
\]
Consequently,
\[
\big|(\mathbb{P}_n-\mathbb{P})\!\big(G_u^{(\tau)}(1-\1_{\w,\w^\star})\big)\big|
\le
\mathbb{P}_n\!\big(G_u^{(\tau)}\big)+\mathbb{P}\!\big(G_u^{(\tau)}\big)
=
(\mathbb{P}_n-\mathbb{P})\!\big(G_u^{(\tau)}\big)+2\,\mathbb{P}\!\big(G_u^{(\tau)}\big),
\]
and hence
\begin{equation}
\label{eq:GI-uniform-tail}
\sup_{\w,\w^\star}\big|(\mathbb{P}_n-\mathbb{P})\!\big(G_u^{(\tau)} \1_{\w,\w^\star}\big)\big|
\;\le\;
2\,\big|(\mathbb{P}_n-\mathbb{P})\!\big(G_u^{(\tau)}\big)\big|
\;+\;
2\,\mathbb{P}\!\big(G_u^{(\tau)}\big).
\end{equation}

\medskip
\noindent Plugging \eqref{eq:GI-uniform-tail} into \eqref{eq:step34-triangle} yields the corrected completion bound:
\begin{equation}
\label{eq:step34-final}
\sup_{\w,\w^\star}\big|(\mathbb{P}_n-\mathbb{P})\big(W_u \1_{\w,\w^\star}\big)\big|
\;\le\;
\sup_{\w,\w^\star}\big|(\mathbb{P}_n-\mathbb{P})\big(W_u^{(\tau)} \1_{\w,\w^\star}\big)\big|
\;+\;
2\,\big|(\mathbb{P}_n-\mathbb{P})\!\big(G_u^{(\tau)}\big)\big|
\;+\;
2\,\mathbb{P}\!\big(G_u^{(\tau)}\big).
\end{equation}
Now apply Lemma \ref{lem:dev_trunc} and Lemma \ref{lem:tail_net} and union bound over $u\in \mathcal{U}$.
Taking $t=c_0 d$ with $c_0$ large enough absorbs the $9^d$ factor, giving with probability $\ge 1-3e^{-cd}$:
\[
\max_{u\in \mathcal{U}}\sup_{\w,\w^*}\Big|u^\top (M-\E M)u\Big|
\le
C\tau\sqrt{\frac{ d}{n}}
+C\sqrt{\frac{\tau d}{n}}+ C\frac{\tau d}{n}
+ C\sqrt{\frac{e^{-c\tau}d}{n}} +Ce^{-c\tau}+ C\frac{d}{n}.
\]
Multiplying by $2$ from the net lemma gives the same bound for
$\sup_{\w,\w^*}\opnorm{M-\E M}$.

Finally choose $\tau=C\log(e/\delta)$ so that $e^{-c\tau}\ll \delta$.
If $n\ge C d\,\frac{\log^2(1/\delta)}{\delta^2}$, then each term on the right is $\le \delta$ (after increasing the universal constants),
which proves Theorem \ref{lem:unif_concentration}.
\qed

%% file: sec/apx/grad_concentration.tex
For the proof, we utilize the variational characterization of the Euclidean norm: $\twonorm{\vct{z}} = \sup_{\twonorm{\mathbf{u}}=1} \inner{\mathbf{u}}{\vct{z}}$.

\paragraph{Bound for $\w_1$:}
Let $\mathbf{u} \in \mathbb{S}^{d-1}$ be an arbitrary unit vector. Using the Fundamental Theorem of Calculus, the definition of the ReLU gradient, and the residual $r(\x_i) = v_1\phi(\w_1^\top \x_i) - v_2\phi(\w_2^\top \x_i) - (\phi(\a^\top \x_i) - \phi(-\a^\top \x_i))$, we observe:
\begin{align*}
    \inner{\mathbf{u}}{\nabla_{\w_1} \widehat{\mathcal{L}} - \nabla_{\w_1} \mathcal{L}} &= v_1 \int_0^1 \mathbf{u}^\top \left( M(t(v_1\w_1) + (1-t)\a, \w_1) - \mathbb{E}M(\dots) \right) \mathbf{h}_1 \, dt \\
    &\quad + v_1 \int_0^1 \mathbf{u}^\top \left( M(-t\a + (1-t)(v_2\w_2), \w_1) - \mathbb{E}M(\dots) \right) (-\mathbf{h}_2) \, dt,
\end{align*}
where $M(\tilde{\mathbf{u}}, \tilde{\mathbf{v}})$ is defined as in Eq. \ref{eq:defM}. The term $v_1$ appears due to the chain rule derivative with respect to $\w_1$. By Lemma \ref{lem:unif_concentration}, given the sample complexity $n \ge C d \frac{\log(1/\delta)^2}{\delta^2}$, the following spectral deviation bound holds with probability at least $1-3e^{-cd}$:
\begin{equation}
    \sup_{\tilde{\mathbf{u}}, \tilde{\mathbf{v}} \in \mathbb{S}^{d-1}} \twonorm{M(\tilde{\mathbf{u}}, \tilde{\mathbf{v}}) - \mathbb{E}\bbr{M(\tilde{\mathbf{u}}, \tilde{\mathbf{v}})}} \le \delta.
\end{equation}
Since the indicator functions $\1_{\{\langle \x, \mathbf{u} \rangle \ge 0\}}$ are scale-invariant, this uniform bound applies to all directions $t\mathbf{u} + (1-t)\mathbf{v}$ appearing in the integrals. Then,
\begin{align*}
    \left|\inner{\mathbf{u}}{\nabla_{\w_1} \widehat{\mathcal{L}} - \nabla_{\w_1} \mathcal{L}} \right| &\le v_1 \delta \twonorm{\mathbf{u}} \twonorm{\mathbf{h}_1} + v_1 \delta \twonorm{\mathbf{u}} \twonorm{\mathbf{h}_2} \\
    &= v_1 \delta (\twonorm{\mathbf{h}_1} + \twonorm{\mathbf{h}_2}).
\end{align*}
Taking the supremum over $\mathbf{u}$ proves the first bound.

\paragraph{Bound for $\w_2$:}
Similarly, for an arbitrary unit vector $\mathbf{v} \in \mathbb{S}^{d-1}$:
\begin{align*}
    \inner{\mathbf{v}}{\nabla_{\w_2} \widehat{\mathcal{L}} - \nabla_{\w_2} \mathcal{L}} &= v_2 \int_0^1 \mathbf{v}^\top \left( M(t(v_2\w_2) - (1-t)\a, \w_2) - \mathbb{E}M(\dots) \right) \mathbf{h}_2 \, dt \\
    &\quad + v_2 \int_0^1 \mathbf{v}^\top \left( M(t\a + (1-t)(v_1\w_1), \w_2) - \mathbb{E}M(\dots) \right) (-\mathbf{h}_1) \, dt.
\end{align*}
Applying the same uniform spectral bound yields:
\begin{align*}
    \left|\inner{\mathbf{v}}{\nabla_{\w_2} \widehat{\mathcal{L}} - \nabla_{\w_2} \mathcal{L}}\right| &\le v_2 \delta (\twonorm{\mathbf{h}_1} + \twonorm{\mathbf{h}_2}).
\end{align*}
This completes the proof of Lemma \ref{lem:component_deviation}.

\begin{corollary}[Bounds in terms of $\|\a\|$]
Further assume that $\twonorm{v_1 \w_1} \le C \twonorm{\a}$ and $\twonorm{v_2 \w_2} \le C \twonorm{\a}$. Then:
\begin{align}
    \twonorm{\nabla_{\w_1} \widehat{\mathcal{L}} - \nabla_{\w_1} \mathcal{L}} &\le \tilde{c} v_1 \twonorm{\a}, \\
    \twonorm{\nabla_{\w_2} \widehat{\mathcal{L}} - \nabla_{\w_2} \mathcal{L}} &\le \tilde{c} v_2 \twonorm{\a},
\end{align}
where $\tilde{c} = 2\delta(C+1)$.
\end{corollary}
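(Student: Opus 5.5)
This corollary follows directly from Lemma~\ref{lem:component_deviation}. The only work is to bound the error vectors $\h_1=v_1\w_1-\a$ and $\h_2=v_2\w_2+\a$ in terms of $\twonorm{\a}$ using the additional assumption.

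First, work on the same high-probability event as Lemma~\ref{lem:component_deviation}, namely the event of Lemma~\ref{lem:unif_concentration}. This event has probability at least $1-3e^{-cd}$ under $n\ge Cd\log^2(1/\delta)/\delta^2$. On it we have, simultaneously for all parameters,
\[
\twonorm{\nabla_{\w_i}\widehat{\calL}-\nabla_{\w_i}\calL}\le v_i\delta\rbr{\twonorm{\h_1}+\twonorm{\h_2}},\qquad i\in\{1,2\}.
\]
Because this event is uniform over parameters, no further probabilistic argument is needed, and the corollary holds along the whole trajectory on the same event.

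Second, apply the triangle inequality together with the hypothesis $\twonorm{v_i\w_i}\le C\twonorm{\a}$:
\[
\twonorm{\h_1}\le\twonorm{v_1\w_1}+\twonorm{\a}\le (C+1)\twonorm{\a},
\qquad
\twonorm{\h_2}\le\twonorm{v_2\w_2}+\twonorm{\a}\le (C+1)\twonorm{\a}.
\]
Hence $\twonorm{\h_1}+\twonorm{\h_2}\le 2(C+1)\twonorm{\a}$. Substituting this into the two deviation bounds gives $\twonorm{\nabla_{\w_i}\widehat{\calL}-\nabla_{\w_i}\calL}\le 2\delta(C+1)v_i\twonorm{\a}=\tilde c\, v_i\twonorm{\a}$, which is the claim.

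None of these steps is a real obstacle. The only subtlety is that $v_i$ enters linearly on the right-hand side of Lemma~\ref{lem:component_deviation}. One should therefore keep it as a factor, rather than also bounding it, so that the result can be read as $\twonorm{\Delta\mathcal{G}_i}\le 2\tilde c\twonorm{\a}$. That is the form needed in Lemmas~\ref{lemma:angle_stay_small} and~\ref{lemma:norm_control}, by choosing $\delta$ small enough that $2\tilde c\le c_5$.
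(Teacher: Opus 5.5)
Your proposal is correct and follows the paper's proof exactly. You invoke Lemma~\ref{lem:component_deviation}, bound $\twonorm{\h_1},\twonorm{\h_2}\le (C+1)\twonorm{\a}$ by the triangle inequality, and substitute to obtain $2\delta(C+1)v_i\twonorm{\a}$. Your extra remarks are also accurate: the argument works on the single uniform event, and the bound reads as $\twonorm{\Delta\mathcal{G}_i}\le 2\tilde c\twonorm{\a}$, which is how it is used in the stability lemmas.
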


\begin{proof}
We bound the norms of the error vectors $\mathbf{h}_1$ and $\mathbf{h}_2$ using the triangle inequality:
\begin{align*}
    \twonorm{\mathbf{h}_1} &= \twonorm{v_1 \w_1 - \a} \le \twonorm{v_1 \w_1} + \twonorm{\a} \le (C+1)\twonorm{\a}, \\
    \twonorm{\mathbf{h}_2} &= \twonorm{v_2 \w_2 + \a} \le \twonorm{v_2 \w_2} + \twonorm{\a} \le (C+1)\twonorm{\a}.
\end{align*}
Substituting these into the theorem's bounds:
\begin{align*}
    \twonorm{\nabla_{\w_1} \widehat{\mathcal{L}} - \nabla_{\w_1} \mathcal{L}} &\le v_1 \delta \rbr{2(C+1)\twonorm{\a}} = 2\delta(C+1) v_1 \twonorm{\a}, \\
    \twonorm{\nabla_{\w_2} \widehat{\mathcal{L}} - \nabla_{\w_2} \mathcal{L}} &\le v_2 \delta \rbr{2(C+1)\twonorm{\a}} = 2\delta(C+1) v_2 \twonorm{\a}.
\end{align*}
\end{proof}

%% file: sec/apx/proof_landscape.tex
To prove this theorem, we first show that $v_1 \w_1 = \a$, $\v_2 \w_2 = -\a$ is the global optima. Since $v_1, v_2 > 0$,
\begin{align*}
    v_1 \phi\rbr{\w_1^T \x} - v_2 \phi \rbr{\w_2^T \x} = \phi\rbr{v_1 \w_1^T \x} - \phi\rbr{v_2 \w_2^T \x} = \phi\rbr{\a^T \x} - \phi\rbr{-\a^T \x} = \a^T \x.
\end{align*}
Hence, the given weights implement the planted model exactly. Next, we verify that all $v_1, v_2>0$, and $\w_1, \w_2$ that satisfy
\begin{align*}
v_1 \w_1 - v_2 \w_2 = \a, \quad \text{and} \quad \theta = 0
\end{align*}
are indeed non-strict saddle points of our optimization problem when $k=2$. We first show that the gradient vanishes. Plugging such $v_1, v_2, \w_1, \w_2$ into (\ref{eq:pop_loss}):
\begin{align*} 
    \nabla_{\mtx{W}}\calL\rbr{\mtx{\theta}} =& \frac{1}{2\pi} diag \rbr{\vct{v}} \rbr{\rbr{\pi \mathbbm{1}\mathbbm{1}^T - \mtx{\Theta}} diag\rbr{\vct{u}} + diag \rbr{\sin\rbr{\mtx{\Theta}} \vct{u}}}\mtx{\Bar{W}} - \frac{1}{2}\vct{v}\vct{a}^T \\
    &\stackrel{(a)}{=} \frac{1}{2} diag \rbr{\vct{v}} \mathbbm{1}\mathbbm{1}^T diag\rbr{\vct{v}}\mtx{\W} - \frac{1}{2}\vct{v}\vct{a}^T \\
    &= \frac{1}{2} \vct{v} \rbr{\mtx{W}^T\vct{v} - \vct{a}}^T \\
    &= \frac{1}{2} \vct{v} \rbr{v_1\vct{w}_1 - v_2\vct{w}_2 - \vct{a}}^T = \frac{1}{2} \vct{v} \rbr{\vct{a} - \vct{a}}^T = \mtx{0}.
\end{align*}
where (a) follows from the fact that $\mtx{\Theta} = \mtx{0}$ at these points. Furthermore, due to \eqref{eq:grad_useful_identity}, $\nabla_{\v} \calL\rbr{\mtx{\theta}}$ is also $\vct{0}$. Next we show that the Hessian at these points are PSD. Plugging the values into (\ref{eq:pop_hessian}) we get:
\begin{align*} 
   \nabla^2 \mathcal{L}\rbr{\vct{\theta}} = \frac{1}{2} \begin{bmatrix}
       \twonorm{\w_1}^2 & -\twonorm{\w_1}\twonorm{\w_2} & v_1 \w_1^T & -v_2 \w_1^T \\
       -\twonorm{\w_1}\twonorm{\w_2} & \twonorm{\w_2}^2 & -v_1 \w_2^T & v_2 \w_2^T \\
       v_1 \w_1 & -v_1 \w_2 & v_1^2 \mtx{I} & -v_1 v_2 \mtx{I} \\
       -v_2 \w_1 & v_2 \w_2 & -v_1 v_2 \mtx{I} & v_2^2 \mtx{I}  \end{bmatrix}
\end{align*}
which follows from the fact that $\theta_{\ell,i} = 0$ and $\Bar{\vct{w}}_{m,\ell^\perp} = \Bar{\vct{w}}_{\ell,m^\perp} = \vct{0}$ for any choice of $\ell, m, i \in \bbr{2}$. This $\rbr{2d + 2} \times \rbr{2d+2}$ matrix has eigenvalues $0$, $\frac{v_1^2 + v_2^2}{2}$, and $\frac{\twonorm{\w_1}^2 + \twonorm{\w_2}^2 + v_1^2 + v_2^2}{2}$ (all non-negative) with multiplicities $d+2, d-1$, and $1$ respectively. Therefore, all the stationary points are in fact non-strict saddle points of the problem.

Finally, we show that there are no other stationary points besides the ones identified above. A necessary condition for $\nabla_{\mtx{W}}\calL\rbr{\mtx{\theta}} = \mtx{0}$ is that any linear combination of the gradient rows must vanish. Specifically, for $v_1, v_2 > 0$, we have:
\begin{align*}
    \begin{bmatrix}
        \frac{1}{v_1}, & \frac{1}{v_2} 
    \end{bmatrix} \nabla_{\mtx{W}}\calL\rbr{\mtx{\theta}} = \mtx{0}.
\end{align*}
By substituting the gradient expression, this implies:
\begin{align*}
    \theta \rbr{v_1 \w_1 + v_2 \w_2} = \sin\theta \rbr{v_1 \twonorm{\w_1} \bar{\w}_2 + v_2 \twonorm{\w_2} \bar{\w}_1}.
\end{align*}
Note that the vectors $v_1 \w_1 + v_2 \w_2$ and $v_1 \twonorm{\w_1} \bar{\w}_2 + v_2 \twonorm{\w_2} \bar{\w}_1$ have identical norms. Taking the norm of both sides, the equality holds only if $|\theta| = |\sin\theta|$, which implies $\theta = 0$, or if the vectors themselves are zero ($v_1 \w_1 + v_2 \w_2 = \mtx{0}$). 

The case $\theta = 0$ corresponds to the non-strict saddle points previously identified. The case $v_1 \w_1 + v_2 \w_2 = \mtx{0}$ corresponds to the global optima where the two neurons are anti-aligned ($\theta = \pi$) such that their combined contribution exactly implements the target $\a$. Consequently, there are no other stationary points in the optimization landscape. This completes the proof of the theorem.